\documentclass[twoside]{article}
\usepackage{amsmath,amsthm,amssymb,calc,xcolor}
\usepackage{amsfonts}
\usepackage{latexsym}
\usepackage{mathtools}
\usepackage{stmaryrd}
\usepackage{multicol}
\usepackage{pdfsync}
\usepackage{dsfont}
\usepackage{comment}
\usepackage{algorithmic}
\usepackage{algorithm}
\usepackage[margin=1in,footskip=0.25in]{geometry}
\usepackage{graphicx}
\usepackage{subcaption}
\usepackage{hyperref}

\usepackage{fancyhdr}
\pagestyle{fancy}
\thispagestyle{empty}
\fancyhead[LO]{Normalization effects on neural networks}
\fancyhead[RE]{Yu and Spiliopoulos}

\renewcommand{\P}{\mathbb{P}}
\newcommand{\R}{\mathbb{R}}

\newcommand{\CX}{\mathcal{X}}
\newcommand{\CY}{\mathcal{Y}}
\newcommand{\CM}{\mathcal{M}}

\newcommand{\CF}{\mathcal{F}}
\newcommand{\abs}[1]{\left\lvert#1\right\rvert}
\newcommand{\brac}[1]{\left\{#1\right\}}
\newcommand{\ip}[1]{\left\langle#1\right\rangle}
\newcommand{\norm}[1]{\left\lVert#1\right\rVert}
\newcommand{\paren}[1]{\left(#1\right)}
\newcommand{\floor}[1]{\left\lfloor#1\right\rfloor}

\DeclareMathOperator{\E}{\mathbb{E}}
\DeclarePairedDelimiter\ceil{\lceil}{\rceil}

\newtheorem{theorem}{Theorem}[section]
\newtheorem{lemma}[theorem]{Lemma}
\newtheorem{proposition}[theorem]{Proposition}

\newtheorem{remark}{Remark}[section]
\newtheorem{assumption}{Assumption}[section]

\newcommand{\la}{\left \langle}
\newcommand{\ra}{\right\rangle}

\newcounter{listctr}
\newenvironment{custlist}[1][Case]%
	{\vspace{-5pt}\begin{list}{}{\usecounter{listctr}%
		\renewcommand\makelabel[1]{\hfil\itshape #1\ \arabic{listctr}.}%
		\settowidth\labelwidth{\makelabel{\quad}}%
		\setlength\leftmargin{0pt}}%
		\setlength\itemindent{\labelwidth+\labelsep}}
	{\end{list}}

%
%
%
%

\title{Normalization effects on shallow neural networks and related asymptotic expansions}

\author{Jiahui Yu\footnote{Department of Mathematics and Statistics, Boston University, Boston, E-mail: jyu32@bu.edu} \phantom{.}  and Konstantinos Spiliopoulos\footnote{Department of Mathematics and Statistics, Boston University, Boston, E-mail: kspiliop@math.bu.edu}
\thanks{K.S. was partially supported by the National Science Foundation (DMS 1550918) and Simons Foundation Award  672441. Code used for the numerical section of this paper is available at \url{https://github.com/kspiliopoulos/NormalizationEffectsNeuralNetworks} }\\
}

\begin{document}


\date{\today}
\maketitle

\begin{abstract}
We consider shallow (single hidden layer) neural networks and characterize their performance when trained with stochastic gradient descent as the number of hidden units $N$ and gradient descent steps grow to infinity. In particular, we investigate the effect of different scaling schemes, which lead to different normalizations of the neural network, on the network's statistical output, closing the gap between the $1/\sqrt{N}$ and the mean-field $1/N$ normalization. We develop an asymptotic expansion for the neural network's statistical output pointwise with respect to the scaling parameter as the number of hidden units grows to infinity. Based on this expansion, we demonstrate mathematically that to leading order in $N$, there is no bias-variance trade off, in that both bias and variance (both explicitly characterized) decrease as the number of hidden units increases and time grows. In addition, we show that to leading order in $N$, the variance of the neural network's statistical output decays as the implied normalization by the scaling parameter approaches the mean field normalization.
Numerical studies on the MNIST and CIFAR10 datasets show that test and train accuracy monotonically improve as the neural network's normalization gets closer to the mean field normalization.
\end{abstract}

{\bf Keywords. } machine learning, neural networks, normalization effect, asymptotic expansions, out-of-sample performance. \\
{\bf Subject classifications. 60F05, 68T01, 60G99}

\section{Introduction}\label{S:Introduction}

Neural networks are nonlinear parametric models whose coefficients can be estimated from data using stochastic gradient descent (SGD) methods. Neural networks have received a lot of attention in recent years due to their success in many applications. Neural networks have revolutionized fields such as speech, text and image recognition, see for example  \cite{LeCun,Goodfellow,DriverlessCar,FacialRecognition,DeepVoice,GoogleDuplex,SpeechRecognition3} to name a few. In addition, there is a growing interest in using neural network models in scientific fields such as medicine, robotics, finance and engineering, see for example \cite{Ling1,Ling2,Robotics2,Robotics3,NatureMedicine1,NatureMedicine2,Finance1,Finance2,Finance3}. Due to their success in applications, a deep understanding of their mathematical properties becomes more and more important.

The noticeable impact on applications has led to an increased interest in studying their mathematical properties. Despite the progress in recent years \cite{Telgarsky1,Mallat,Telgarsky2} and more recently
\cite{Chizat2018,NTK,JasonLee,Du1, Montanari,RVE,SirignanoSpiliopoulosNN1,SirignanoSpiliopoulosNN2,SirignanoSpiliopoulosNN3,SirignanoSpiliopoulosNN4} in regards to optimization properties and limiting theorems as well as earlier classical results regarding the approximation power of neural networks \cite{Barron,Hornik1,Hornik2}, many questions are still open.

In this paper we characterize the performance of neural networks trained with SGD as number of hidden units and gradient descent steps grow to infinity. Our contribution is four-fold. First, we provide a stochastic Taylor kind of approximation of the neural network's statistical output, see (\ref{Eq:FormalExpansion}) below,  as the number of hidden units $N$ grow to infinity. Second, using the stochastic Taylor expansion, we identify the main contributions to the neural network's bias and variance and we find that to leading order in $N$, there is no bias-variance trade-off in the sense that both decay as the number of hidden units increases and time grows. Third, we identify how the learning rate should be scaled with $N$, in order to have meaningful statistical behavior of the neural network's output. Fourth, we explore different normalization (equivalently initialization) schemes and demonstrate numerically that train and test accuracy monotonically improve as the neural network's normalization approaches the mean field normalization.

Our starting point is the single-layer neural network with varying scale initialization factor $\gamma \in \left[\frac{1}{2},1\right]$:
\begin{equation}
g^N(x;\theta) = \frac{1}{N^{\gamma}} \sum_{i=1}^{N} C^i\sigma(W^i x),\label{Eq:NN1}
\end{equation}
where $C^i \in \R$, $W^i \in \R^d$, $x \in \R^d$, and $\sigma(\cdot):\R \to \R$. For convenience, we write $W^ix = \ip{W^i,x}_{l^2}$ as the standard $l^2$ inner product for the vectors. The neural network model has parameters $\theta = (C^1,\ldots, C^N, W^1, \ldots, W^N) \in \R^{(1+d)N}$, which are to be estimated from data $(X,Y) \sim \pi(dx,dy)$. The objective function is
\begin{equation}
\mathcal{L}(\theta) = \frac{1}{2} \E_{X,Y} \left[(Y-g^N(X;\theta))^2\right],\label{Eq:ObjFunction}
\end{equation}
and the model parameters $\theta$ are trained by the stochastic gradient descent:
\begin{equation}\label{SGD_iteration}
\begin{aligned}
C^i_{k+1} &= C^i_{k} + \frac{\alpha^N_k}{N^{\gamma}}\paren{y_k-g^N_k(x_k)}\sigma(W^i_k  x_k),\\
W^i_{k+1} &= W^i_{k} + \frac{\alpha^N_k}{N^{\gamma}}\paren{y_k-g^N_k(x_k)}C^i_k \sigma'(W^i_k  x_k)x_k,
\end{aligned}
\end{equation}
for $k \in \{ 0, 1, 2, \cdots\}$, $\alpha^N_k$ is the learning rate that in principle may depend on both the number of hidden units  $N$ and on the iteration $k$ and $(x_k,y_k)$ are i.i.d data samples from the distribution $\pi(dx,dy)$. In this paper we take $\alpha^N_k=\alpha^N$ to only depend on the number of hidden units $N$. Let us also define
\begin{align*}
g^N_k(x) &= g^N(x;\theta_k) = \frac{1}{N^{\gamma}} \sum_{i=1}^{N} C^i_k\sigma(W^i_k x),
\end{align*}
and the empirical measure
\begin{equation*}
\nu_k^N = \frac{1}{N}\sum_{i=1}^N \delta_{C^i_k,W^i_k}.
\end{equation*}

The neural network output can be re-written as
$g^N_k(x) = \ip{c\sigma(w x), N^{1-\gamma} \nu_k^N},$
where $\ip{f,h}$ denotes the inner product of $f$ and $h$, i.e. $\ip{f, \nu_k^N} = \int f \nu_k^N(dc,dw)$. It turns out that in order to correctly pass from discrete time to continuous time we need to re-scale the training time $k$ in the discrete stochastic gradient descent algorithm as $k=\lfloor Nt \rfloor$. Here, $t$ is interpreted as training time in the related continuous time limiting algorithm as $N\rightarrow\infty$. Then, we define 
\begin{align*}
\mu_t^N &= \nu^N_{\lfloor Nt \rfloor}, \quad h_t^N= g^N_{\lfloor Nt \rfloor},
\end{align*}
where $g_k^N$ is an M-dimensional vector whose $i$-th entry is $g_k^N(x^{(i)})$ for a fixed data set $(x^{(i)},y^{(i)})_{i=1}^{M}$. Hence, $h_t^N$ is the M-dimensional vector whose $i$-th entry is $h_t^N(x^{(i)}) = g_{\floor{Nt}}^N(x^{(i)})$.  If $t\leq T$ then one can interpret $h_t^N$ as the output of the network after $({t}/{T})100\%$ of the training has been completed. Notice that the neural network's output $h_t^N$ depends on $\gamma\in[1/2,1]$ and we will write $h_t^{N,\gamma}$ when we want to emphasize this.

The behavior of the neural network has been studied in recent years in the cases $\gamma=1/2$ and $\gamma=1$. When  $\gamma=1/2$, one has the  Xavier initialization which is widely used in applications, see \cite{Xavier}. The subsequent works \cite{NTK,HuangYau2020, JasonLee} analyze the evolution of the network by studying the behavior of the so called neural tangent kernel, and \cite{Du1,SirignanoSpiliopoulosNN4} analyze the limiting behavior of $h_t^{N,1/2}$ and of related reinforcement learning algorithms as $N\rightarrow\infty$. On the other side of the aisle, when $\gamma=1$, one has the so called mean field normalization, which has also received considerable attention in recent years, see for example
\cite{Chizat2018,Montanari,RVE,SirignanoSpiliopoulosNN1,SirignanoSpiliopoulosNN2,SirignanoSpiliopoulosNN3} for a representative but not exhaustive list. In both cases, i.e., when $\gamma=1/2$ and when $\gamma=1$, the aforementioned papers study the behavior of the neural network $h_t^{N,\gamma}$ and of the related empirical measure of trained parameters $\mu_t^N$ and under various sets of assumptions prove convergence to the global minimum.

In this paper, we pose three questions:
\begin{enumerate}
\item{What is the behavior of the neural network's output $h_t^{N,\gamma}$ for $\gamma\in(1/2,1)$ and for fixed but large number of  hidden units $N$ and training time $t$?}
\item{What about variance-bias trade off in terms of the number of hidden units $N$ and training time $t$?}
\item{How can one compare the different scalings $\gamma\in[1/2,1]$ in terms of performance and test accuracy?}
\end{enumerate}

To answer these questions, we look at $h_t^{N,\gamma}$ and for given $\gamma$ we investigate a stochastic Taylor kind of expansion of $h_t^{N,\gamma}$ around its limiting behavior as $N\rightarrow\infty$ and its interaction with time $t$ as $t\rightarrow\infty$. For each fixed $\gamma\in(1/2,1)$ we demonstrate heuristically that as $N\rightarrow\infty$, and  when $\gamma \in \left(\frac{2\nu-1}{2\nu}, \frac{2\nu+1}{2\nu+2}\right)$ for fixed $\nu\in\{1,2,3,\cdots\}$:
\begin{align}
h_t^{N,\gamma}&\approx h_{t}+\sum_{j=1}^{\nu-1} N^{-j(1-\gamma)}Q^j_t + N^{-(\gamma-1/2)}e^{-At} \mathcal{G}+ \textrm{ lower order terms in }N,\label{Eq:FormalExpansion}
\end{align}
where $h_{t}$ is the limit of  $h_t^{N,\gamma}$ as $N\rightarrow\infty$, $Q^j_t$ are deterministic quantities, $A$ is a positive definite matrix and $\mathcal{G}$ is a Gaussian vector of mean zero and known variance-covariance structure. All of $h_{t}$, $Q^j_t$, $A$ and $\mathcal{G}$ are independent of $N<\infty$ and $\gamma>0$.  For all $\gamma \in (1/2,1)$, the limit of the network output recovers the global minimum as $t \to \infty$, i.e.  $h_t \to \hat{Y}$, where $\hat{Y} = \paren{y^{(1)}, \ldots, y^{(M)}}$ (note that this is also true for $\gamma=1/2$, see \cite{SirignanoSpiliopoulosNN4}).
For fixed $j\in\mathbb{N}$, one can also show that $Q^{j}_{t}\rightarrow 0$ exponentially fast as $t\rightarrow\infty$. The Gaussian vector $\mathcal{G}$ is related to the variance of the network at initialization which then propagates forward, see (\ref{limit_gaussian}). The behavior at the boundary points $\gamma \in \left\{\frac{2\nu-1}{2\nu}, \frac{2\nu+1}{2\nu+2}\right\}$ is more subtle and is presented in detail in Section \ref{sec::main_results}.

An expansion such as (\ref{Eq:FormalExpansion}) suggests that for fixed $N<\infty$ and $t<\infty$, to leading order in $N$, the variance of the neural network's statistical output is smaller when $\gamma\rightarrow 1$. Indeed, the variance of the leading random correction is of the order $N^{-2(\gamma-1/2)}$ which goes to zero faster when $\gamma\rightarrow 1$. On the other hand, when $\nu\geq 2$, i.e., when $\gamma>3/4$, there is, to leading order in $N$, a bias effect captured by the term $\sum_{j=1}^{\nu-1} N^{-j(1-\gamma)}Q^j_t$, and  it asymptotically vanishes as $N,t\rightarrow\infty$. The formal expansion equation (\ref{Eq:FormalExpansion}) shows that, to leading order in $N$, there is no bias-variance trade-off, in the sense that as the number of hidden units $N$ and the time $t$ increase, both the variance and the bias decrease for all $\gamma\in[1/2,1)$.

To further investigate the performance and generalization properties of the different scaled networks, various numerical studies were conducted and the results are shown in Section \ref{S:Numerics}. We see that  test and train accuracy of the trained neural networks behave better as $\gamma\rightarrow 1$. In particular, as $\gamma$ increases from $1/2$ to $1$,  test and train accuracy increase monotonically for both the MNIST \cite{MNIST} and the CIFAR10  \cite{Krizhevsky} data sets. 

Lastly, we mention that our conclusions are also related to recent (mostly empirical) observations in the literature which supports the idea that generalization error can decrease with over parametrization. In \cite{Geiger,Neal}, ideas based on bias-variance characterization of the neural network output for the $\gamma=1/2$ case are being utilized to make the case that both bias and variance decay as training moves forward, which indeed is also seen to be the case from (\ref{Eq:FormalExpansion}). These papers use scaling arguments to study out-of-sample performance of the neural network's output based on its leading order behavior as the number of hidden layers increases. We also point out that the results presented in our paper are also related to function approximation. For exmaple, this can be seen by replacing the data $y$ in (\ref{Eq:ObjFunction}) and (\ref{SGD_iteration}) by an unknown target function, say  $\phi(x)$, that we seek to learn.

The rest of the paper is organized as follows. In Section \ref{sec::main_results}, we state our assumptions, notation and main results that lead to the expansion (\ref{Eq:FormalExpansion}). Section \ref{S:Review} reviews the relevant known results for the so called Xavier normalization ($\gamma=1/2$) and for the mean field normalization ($\gamma=1$) and connects them to the results of this paper. In Section \ref{S:Numerics}, we discuss our theoretical results and present our numerical studies comparing the different scaling schemes with respect to $\gamma\in[1/2,1]$. In Section \ref{S:Conclusions}, we present our conclusions. The proofs of this paper are in the appendix, which is presented as follows. Section \ref{sec::LLN} discusses the law of large numbers, i.e the convergence of $h^{N,\gamma}_{t}\rightarrow h_{t}$ as $N\rightarrow\infty$ for $\gamma\in(1/2,1)$. Sections \ref{sec::CLT} and \ref{sec::Psi} discuss the fluctuation behaviors of $h^{N,\gamma}_{t}$ around $h_{t}$. Section \ref{sec::higher order} presents the derivation of the asymptotic expansion of $h^{N,\gamma}_t$ for general $\gamma\in(1/2,1)$, which is based on an induction argument. 

\section{Assumptions, notation and main results}\label{sec::main_results}
The purpose of this section is to present our main assumption for the optimization problem (\ref{Eq:NN1})-(\ref{SGD_iteration}), establish notation and present the main theoretical results that make the expansion (\ref{Eq:FormalExpansion}) rigorous.
\begin{assumption}
\begin{enumerate}
\item The activation function $\sigma \in C^{\infty}_b(\R)$, i.e. $\sigma$ is infinitely differentiable and bounded.
\item There is a fixed dataset $\CX \times \CY = (x^{(i)}, y^{(i)})_{i=1}^M$, and set $\pi(dx,dy) = \frac{1}{M} \sum_{i=1}^M \delta_{(x^{(i)},y^{(i)})}(dx,dy)$.
\item The initialized parameters $(C_0^i, W_0^i)$ are i.i.d.,generated from a mean-zero random variable with a distribution $\mu_0(dc,dw)$. Furthermore, the random variables $C_0$ and $W_0$ are independent.
\item The random variables $C^i_0$ have compact support, $\E\paren{C^i_0} = 0$, and $\E \paren{\norm{W^i_0}} < \infty$.
\end{enumerate}
\label{assumption}
\end{assumption}

For some of our results we would need to further assume the following.
\begin{assumption}\label{assumption1}
\begin{enumerate}
\item The activation function $\sigma$ is smooth, non-polynomial and slowly increasing\footnote{A function $\sigma(x)$ is called slowly increasing if $\lim_{x\rightarrow\infty}\frac{\sigma(x)}{x^{a}}=0$ for every $a>0$.}.
\item The fixed dataset $(x^{(i)}, y^{(i)})_{i=1}^M$ from part (ii) of Assumption \ref{assumption} has  data points that are in distinct directions (per definition on page $192$ of \cite{yIto}).
\end{enumerate}
\end{assumption}
Examples of activation functions that are non-polynomials and slowly increasing are tanh and sigmoid activation function.
Note that by Assumption \ref{assumption}, as $N \to \infty$, we have $\nu^N_0 \xrightarrow{p} \mu_0$, and for $x \in \mathcal{X}$,
by the Central Limit Theorem (CLT),
\begin{equation}\label{limit_gaussian}
\begin{aligned}
&N^{\gamma-\frac{1}{2}}h^N_0(x) = \ip{c\sigma(w x), \sqrt{N} \nu_0^N} \xrightarrow{d} \mathcal{G}(x),
\end{aligned}
\end{equation}
where $\mathcal{G}(x)$ is a Gaussian random variable with mean zero and variance $\tau^{2}(x)=\ip{|c\sigma(wx)|^{2},\mu_{0}}$. We will always denote $\mathcal{G}(x)\sim N(0,\tau^{2}(x))$. For notational convenience, we define the following three quantities
\begin{align}
A_{x,x'}&=\left<B_{x,x'}(c,w),\mu_{0}\right>\nonumber,\\
B_{x,x'}(c,w)&=\sigma(wx')\sigma(wx)+c^2\sigma'(wx')\sigma'(wx)xx'\nonumber,\\
C_{x'}^{f}(c,w) &= \nabla f(c,w)\cdot\nabla (c\sigma(wx'))= \partial_cf(c,w)\sigma(w x') + c\sigma'(wx')\nabla_w f(c,w)x', \text{ for }f \in C^2_b(\R^{1+d}). \label{Eq:KernelDef}
\end{align}

We will study the limiting behavior of the network output as well as its first and second order fluctuation processes as the width $N$ and the stochastic gradient descent steps $\floor{TN}$ approaches infinity simultaneously. To be precise, we study the behavior of $\{(\mu_t^N,h_t^N), t\in[0,T]\}_{N\in\mathbb{N}}$ as $N \to \infty$ in the Skorokhod space $D_E([0,T])$\footnote{$D_E([0,T])$ is the set of maps from $[0,T]$ into $E$ which are right-continuous and which have left-hand limits.}, where $E = \CM(\R^{1+d}) \times \R^M$ and $\CM(\R^{1+d})$ is the space of probability measures on $\R^{1+d}$. The first order convergence, i.e. the typical behavior as $N\rightarrow\infty$, is given in Theorem \ref{LLN:theorem} below.
\begin{theorem}\label{LLN:theorem}
Under Assumption \ref{assumption}, for fixed $\gamma \in (1/2,1)$ and learning rate $\alpha^{N} = \alpha/N^{2(1-\gamma)}$ with constant $0<\alpha<\infty$, as $N \to \infty$, the process $(\mu_t^N,h_t^N)$ converges in probability in the space $D_E([0,T])$ to $(\mu_t,h_t)$, which satisfies the evolution equation
\begin{equation}\label{LLN:limit_evolution}
\begin{aligned}
h_t(x) &= h_0(x) + \alpha \int^t_0 \int_{\CX \times \CY} \paren{y-h_s(x')} \ip{B_{x,x'}(c,w),\mu_0} \pi(dx',dy) ds,\\
\end{aligned}
\end{equation}
where $h_0(x) = 0$. Furthermore, for any $f \in C_b^2(\R^{1+d})$, $\ip{f,\mu_t} = \ip{f,\mu_0}$.
\end{theorem}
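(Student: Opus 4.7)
The plan is to follow a standard martingale/relative compactness argument adapted to the SGD dynamics. First I would derive discrete pre-limit evolution equations. For a test function $f \in C_b^2(\R^{1+d})$, a first-order Taylor expansion of $f(C^i_{k+1},W^i_{k+1})-f(C^i_k,W^i_k)$ using the SGD updates (\ref{SGD_iteration}) yields
\begin{align*}
\ip{f,\nu^N_{k+1}}-\ip{f,\nu^N_k} &= \frac{\alpha^N}{N^{\gamma+1}}(y_k-g^N_k(x_k))\ip{C^f_{x_k},\nu^N_k}\cdot N + \text{Taylor remainder}.
\end{align*}
Since $\alpha^N/N^{\gamma}=\alpha N^{\gamma-2}$, each increment is of order $N^{\gamma-2}$, and summing over $\lfloor Nt\rfloor$ iterations gives a total displacement of order $N^{\gamma-1}\to 0$; this is the reason the empirical measure is frozen at $\mu_0$ in the limit. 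Analogously, a Taylor expansion of $g^N_{k+1}(x)-g^N_k(x)$ around $\theta_k$, exploiting that the prefactor $1/N^{\gamma}$ combines with $\alpha^N/N^{\gamma}=\alpha N^{\gamma-2}$ to produce the correct $1/N$ scaling, yields
\begin{align*}
g^N_{k+1}(x)-g^N_k(x) &= \frac{\alpha}{N}(y_k-g^N_k(x_k))\ip{B_{x,x_k},\nu^N_k} + R^N_k(x),
\end{align*}
where the remainder $R^N_k(x)$ is $O(N^{\gamma-3})$ pointwise thanks to the boundedness of $\sigma,\sigma',\sigma''$.

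Next I would split each pre-limit equation into a drift obtained by conditioning on $\mathcal{F}_k$ (and averaging over the data $(x_k,y_k)\sim\pi$) plus a martingale difference. After time-rescaling $t=k/N$, the drift part of $h^N_t$ becomes a Riemann sum converging, by continuity and the bounded-coefficient structure, to the right-hand side of (\ref{LLN:limit_evolution}); the drift part of $\ip{f,\mu^N_t}$ vanishes because of the extra $N^{\gamma-1}$ factor. For the initial condition I would invoke (\ref{limit_gaussian}): $h^N_0(x)=N^{-(\gamma-1/2)}\cdot N^{\gamma-1/2}h^N_0(x)\Rightarrow 0$ in probability since $\gamma>1/2$ and the CLT limit is a finite Gaussian. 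To show the martingale terms vanish, I would compute their quadratic variations: each jump is $O(N^{\gamma-2})$ for $\mu^N$ and $O(N^{-1})$ for $h^N$, so summing $N$ squared jumps per unit time gives quadratic variation $O(N^{2\gamma-3})$ and $O(N^{-1})$ respectively, both tending to zero.

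The remaining structural step is tightness of $\{(\mu^N_{\cdot}, h^N_{\cdot})\}$ in $D_E([0,T])$. I would use the Aldous-Kurtz tightness criterion: uniform boundedness of increments (from the $O(N^{\gamma-2})$ step sizes and Assumption \ref{assumption}(iv)) gives containment of compact sets, and the increment moment bounds above give the stochastic equicontinuity condition. Having established tightness and identified all limit points as solutions of the coupled system $\mu_t=\mu_0$ and (\ref{LLN:limit_evolution}), uniqueness is immediate: the limiting equation for $h_t$ is a linear ODE in $\R^M$ with bounded, time-independent coefficient matrix $\alpha\,\E_{X}[\ip{B_{x,X},\mu_0}]/M$, so by Gronwall the solution is unique, and convergence in probability of the full sequence follows.

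The technically delicate step, in my view, is controlling the cumulative Taylor remainder $\sum_{k<\lfloor Nt\rfloor} R^N_k(x)$ uniformly: a priori, the remainder involves $(y_k-g^N_k(x_k))^2$ and products of $C^i_k$, neither of which is obviously bounded uniformly in $k$. To address this I would establish, by an inductive gronwall-type estimate on the SGD recursion combined with Assumption \ref{assumption}(iv), that $\max_{k\leq \lfloor NT\rfloor}\E[(C^i_k)^2 + \|W^i_k\|^2]$ and $\max_{k\leq \lfloor NT\rfloor}\E[|g^N_k(x)|^2]$ stay bounded uniformly in $N$, which then feeds back to give the desired remainder control and closes the argument.
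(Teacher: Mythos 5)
Your proposal follows essentially the same route as the paper's proof: a Taylor expansion of the pre-limit dynamics, decomposition into drift plus martingale difference, verification that the martingale quadratic variation vanishes, tightness via the Ethier--Kurtz (Aldous) criterion, identification of limit points as the deterministic pair $(\mu_0,h_t)$ solving the stated linear equation, and uniqueness by linearity, with the a priori moment bounds on $(C^i_k,W^i_k)$ and $g^N_k$ (the paper's Lemmas 6.1 and 6.2) closing the remainder control exactly as you anticipate in your final paragraph. The only minor slip is the claimed pointwise $O(N^{\gamma-3})$ for the Taylor remainder $R^N_k$: deterministically one only has $|g^N_k|=O(N^{1-\gamma})$, giving $O(N^{-(\gamma+1)})$ per step (the paper's rate), but the cumulative sum vanishes either way so the argument is unaffected.
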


The proof of Theorem \ref{LLN:theorem} is in Section \ref{sec::LLN} of the appendix. We now discuss some of the implications of this result. Notice that (\ref{LLN:limit_evolution}) can simply be written as
\begin{align}\label{LLN:limit_evolution2}
h_t(x) &=  \alpha \int^t_0 \int_{\CX \times \CY} \paren{y-h_s(x')} A_{x,x'} \pi(dx',dy) ds.
\end{align}
In \cite{SirignanoSpiliopoulosNN4}, the case $\gamma=1/2$ was studied and it was proven there that $h_{t}^{N,1/2}\rightarrow h_{t}(x)$ with the only difference from (\ref{LLN:limit_evolution}) being that  if $\gamma=1/2$ then $h_0(x)=\mathcal{G}(x)$ whereas $h_0(x)=0$ when $\gamma\in(1/2,1)$. This means that even though in the $\gamma=1/2$ case the limit is random (due to the randomness at initialization), when $\gamma>1/2$ the first order limit is deterministic.

In analogy with \cite{SirignanoSpiliopoulosNN4} where the $\gamma=1/2$ case was studied, if we further assume Assumption \ref{assumption1}, then the matrix $A \in \mathbb{R}^{M \times M}$, whose elements are $\alpha A_{x,x'}$ with $x, x' \in \mathcal{X}$, is positive definite (see Lemma 3.3 of \cite{SirignanoSpiliopoulosNN4} and Proposition 2 in \cite{NTK}). The latter immediately implies that we have convergence to the global minimum, see Theorem 4.3 in \cite{SirignanoSpiliopoulosNN4}: 
 \begin{align}
h_t \rightarrow \hat Y \phantom{....} \textrm{as} \phantom{....} t \rightarrow\infty.\label{Eq:ConverenceGM}
\end{align}
 where $h_t = (h_t(x^{(1)}), \ldots, h_t(x^{(M)}))$ and $\hat{Y} = ( y^{(1)}, \ldots,  y^{(M)} )$.

The fact that for $\gamma\in(1/2,1)$ the first order convergence, as given by Theorem \ref{LLN:theorem}, is deterministic, motivates us to look at the second order convergence, i.e., at the fluctuations around the typical behavior: 
\begin{align}
K^N_t &= N^{\varphi} \left(h^N_t - h_t\right)\nonumber
\end{align}
and study its behavior as $N\rightarrow\infty$ for the appropriate value of $\varphi$. In Theorem \ref{CLT:theorem}, we prove a central limit theorem for the network, which shows how the finite neural network fluctuates around its limit for large $N$. It also quantifies the speed of convergence of the finite neural network to its limit. To study the limiting behavior of $K^N_t$, we need to study the convergence of $l_t^N(f) = \ip{f,\eta^N_t}$ for a fixed $f\in C^2_b(\R^{1+d})$, where $\eta^N_t = N^{\varphi}\paren{\mu^N_t -\mu_0}$. One can also show that the limit of $K^N_t$ goes to 0 exponentially fast as $t\to \infty$. The relevant results are given in Proposition \ref{prop::l_t}, Theorem \ref{CLT:theorem} and Theorem \ref{R:BiasToZero1} below and proven in Section \ref{sec::CLT} of the appendix.

\begin{proposition}\label{prop::l_t}
Under Assumption \ref{assumption}, for fixed $\gamma \in (1/2,1)$, learning rate $\alpha^{N} = \alpha/N^{2(1-\gamma)}$ with constant $0<\alpha<\infty$ and fixed $f \in C^2_b(\R^{1+d})$, if $\varphi \le 1-\gamma$, the processes $\{l_t^N(f) = \ip{f,\eta^N_t}, t\in[0,T]\}_{N\in\mathbb{N}}$ converges in probability in the space $D_{\R}([0,T])$ as $N \to \infty$, and
\begin{custlist}[Case]
\item If $\varphi < 1-\gamma$, $\ip{f,\eta^N_t} \rightarrow 0$.
\item If $\varphi = 1-\gamma$, $l^N_t(f) = \ip{f,\eta^N_t} \rightarrow l_t(f)$, where $l_t(f)$ is given by
\begin{equation}\label{l_t limit}
\begin{aligned}
l_{t}(f)
&= \int_0^t \int_{\CX \times \CY} \alpha \paren{y-h_s(x')}  \ip{  C^f_{x'}(c,w),\mu_0} \pi(dx',dy) ds.\\
\end{aligned}
\end{equation}
\end{custlist}
 \end{proposition}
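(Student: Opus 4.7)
My plan is to derive a one-step update for $\ip{f,\nu^N_k}$ by Taylor expansion, rescale by $N^\varphi$, telescope, and identify the surviving drift via Theorem \ref{LLN:theorem}. Using the SGD updates and the scaling $\alpha^N/N^\gamma=\alpha/N^{2-\gamma}$, a first-order Taylor expansion of $f\in C^2_b(\R^{1+d})$ yields
\begin{align*}
\ip{f,\nu^N_{k+1}-\nu^N_k}=\frac{\alpha}{N^{2-\gamma}}\paren{y_k-g^N_k(x_k)}\ip{C^f_{x_k},\nu^N_k}+R^N_k,
\end{align*}
where the Taylor remainder is $R^N_k=O(N^{-(3-2\gamma)})$ uniformly in $k\le\lfloor NT\rfloor$, using $f\in C^2_b$, boundedness of $\sigma,\sigma'$, and standard a priori uniform-in-$N$ control of the SGD trajectories.

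Multiplying by $N^\varphi$ and summing over $k$ expresses $l^N_t(f)$ as three contributions: (a) the initial discrepancy $N^\varphi\ip{f,\nu^N_0-\mu_0}=N^{\varphi-1/2}\cdot N^{1/2}\ip{f,\nu^N_0-\mu_0}\to 0$ in probability, because $\varphi-1/2\le 1/2-\gamma<0$ (using $\gamma>1/2$) while the second factor is tight by the classical CLT; (b) the accumulated Taylor remainder of order $N^{\varphi+2\gamma-2}$, which is at most $O(N^{\gamma-1})\to 0$; and (c) the main sum $\frac{\alpha N^\varphi}{N^{2-\gamma}}\sum_{k=0}^{\lfloor Nt\rfloor-1}(y_k-g^N_k(x_k))\ip{C^f_{x_k},\nu^N_k}$, which I handle by martingale decomposition.

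Conditioning on $\mathcal{F}_k=\sigma((x_j,y_j)_{j<k})$, the summand splits as the conditional drift $\int_{\CX\times\CY}(y-g^N_k(x'))\ip{C^f_{x'},\nu^N_k}\pi(dx',dy)$ plus a bounded martingale difference. The martingale piece's quadratic variation, after the full scaling, is of order $N^{2\varphi+2\gamma-3}$, which is $O(N^{-1})$ whenever $\varphi\le 1-\gamma$, so by Doob's maximal inequality it vanishes in probability uniformly on $[0,T]$. The drift contribution equals
\begin{align*}
\alpha N^{\varphi+\gamma-1}\cdot\frac{1}{N}\sum_{k=0}^{\lfloor Nt\rfloor-1}\int_{\CX\times\CY}\paren{y-g^N_k(x')}\ip{C^f_{x'},\nu^N_k}\pi(dx',dy),
\end{align*}
and by Theorem \ref{LLN:theorem} the pair $(h^N_\cdot,\mu^N_\cdot)\to(h_\cdot,\mu_0)$ in probability on $[0,T]$, so the Riemann sum converges to $\alpha N^{\varphi+\gamma-1}\int_0^t\int_{\CX\times\CY}(y-h_s(x'))\ip{C^f_{x'},\mu_0}\pi(dx',dy)ds$. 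The prefactor $N^{\varphi+\gamma-1}$ equals $1$ when $\varphi=1-\gamma$, producing $l_t(f)$, and tends to $0$ when $\varphi<1-\gamma$, producing the zero limit.

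Promotion from the pointwise identification of the limit to convergence in probability in $D_\R([0,T])$ will follow from tightness via Aldous' criterion using the per-step increment bound together with uniqueness of the continuous deterministic limit. The main technical obstacle is that the Riemann-sum substitution $(h^N_{k/N},\nu^N_k)\to(h_{k/N},\mu_0)$ is being inserted inside a sum with $O(N)$ terms, so to preserve the correct limit one needs the uniform-in-time strength of Theorem \ref{LLN:theorem} rather than mere finite-dimensional or pointwise convergence; this is what allows one to replace $g^N_k$ and $\ip{C^f_{x'},\nu^N_k}$ by their limits without losing control over the length-$\lfloor Nt\rfloor$ cumulative error.
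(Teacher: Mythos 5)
Your proposal is correct and rests on the same core ingredients the paper uses — Taylor expansion of $f$ along the SGD trajectory, the a priori bounds of Lemma \ref{lemma_bound}, a drift–martingale decomposition of the one-step update, Doob's inequality for the martingale part, and an Aldous-type tightness criterion — but it is organized differently. You decompose $l^N_t(f)$ explicitly into initial-condition, Taylor-remainder, martingale, and drift contributions and pass to the limit in each term separately; the paper instead first establishes relative compactness of $\{\ip{f,\eta^N_t}\}$ (and, for Case 2, of the triple $(\mu^N,h^N,l^N(f))$ on a product Skorokhod space), then identifies the limit of any convergent subsequence by showing that a bounded continuous functional encoding the limit equation has vanishing expectation under $\pi^N$. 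The functional route has the advantage of sidestepping explicit uniform-in-time Riemann-sum substitutions and of scaling systematically to the higher-order fluctuations of Sections \ref{sec::CLT}--\ref{sec::Psi}; your direct route is shorter here but needs the additional observation (which you correctly flag) that, since the limits $h_t$ and $\mu_0$ are deterministic and continuous, Skorokhod convergence in Theorem \ref{LLN:theorem} upgrades to uniform-in-$[0,T]$ convergence in probability, which is what justifies replacing $(h^N_{k/N},\nu^N_k)$ by $(h_{k/N},\mu_0)$ inside a sum of $\lfloor Nt\rfloor$ terms. One small slip: the per-step Taylor remainder is $O(N^{-2})$, not $O(N^{-(3-2\gamma)})$ — by Lemma \ref{lemma_bound} the parameter increments are $\sup_{i,k}|C^i_{k+1}-C^i_k|=O(N^{-1})$ and similarly for $W$, so the second-order remainder in $\ip{f,\nu^N_{k+1}}-\ip{f,\nu^N_k}$ is $\frac{1}{N}\sum_i O(N^{-2})=O(N^{-2})$; the exponent $3-2\gamma$ looks like it dropped the $1/N$ prefactor in the empirical average. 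Both bounds give a vanishing accumulated contribution $N^{\varphi}\cdot\lfloor Nt\rfloor\cdot R^N_k\to 0$ for $\varphi\le 1-\gamma$, so the conclusion is unaffected.
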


\begin{theorem}\label{CLT:theorem}
Let $\mathcal{G}(x)$ be the Gaussian random variable defined in (\ref{limit_gaussian}). Under Assumption \ref{assumption}, for fixed $\gamma \in (1/2,1)$ and learning rate $\alpha^{N} = \alpha/N^{2(1-\gamma)}$ with constant $0<\alpha<\infty$,
as $N \to \infty$, the sequence of processes $\{K^N_t, t\in[0,T]\}_{N\in\mathbb{N}}$ converges in distribution in the space $D_{\R^M}([0,T])$ to $K_t$, which satisfies one of the following evolution equations, depending on the values of $\gamma$ and $\phi$:
\begin{custlist}[Case]
\item When $\gamma \in \paren{\frac{1}{2}, \frac{3}{4}}$ and $\varphi \le \gamma - \frac{1}{2}$, or when $\gamma \in \left[\frac{3}{4}, 1\right)$ and $\varphi < 1-\gamma \le \gamma - \frac{1}{2}$,

\begin{equation}\label{CLT:evolution}
\begin{aligned}
K_t(x) &= K_0(x)-\alpha \int^t_0 \int_{\CX \times \CY} K_s(x') A_{x,x'} \pi(dx',dy) ds
\end{aligned}
\end{equation}
where $K_0(x) = 0$ if $\varphi < \gamma - \frac{1}{2}$, and $K_0(x)=\mathcal{G}(x)$  if $\varphi = \gamma-\frac{1}{2}$.
\item When $\gamma \in \left[\frac{3}{4}, 1\right)$ and $\varphi = 1-\gamma$,
\begin{equation}\label{K_t for 1-gamma}
\begin{aligned}
K_t(x)
&= K_0(x) + \int^t_0 \int_{\CX \times \CY} \alpha\paren{y-h_s(x')} l_s(B_{x,x'}(c,w)) \pi(dx',dy) ds-  \alpha \int^t_0 \int_{\CX \times \CY}  K_s(x') A_{x,x'} \pi(dx',dy) ds,\\
\end{aligned}
\end{equation}
where $K_0(x) = 0$ if $\gamma \in \paren{\frac{3}{4},1}$,  $K_0(x) = \mathcal{G}(x)$ if $\gamma = \frac{3}{4}$, and $l_t(f)$ is given by equation \eqref{l_t limit} for any $f\in C^2_b(\R^{1+d})$.
\end{custlist}
\end{theorem}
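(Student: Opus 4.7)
The plan is to derive a pre-limit semi-martingale decomposition for $K^N_t$ from the SGD recursion \eqref{SGD_iteration}, pass each component to its limit using Theorem \ref{LLN:theorem}, Proposition \ref{prop::l_t}, and the initial-data central limit theorem \eqref{limit_gaussian}, establish tightness, and then identify the unique solution of \eqref{CLT:evolution} or \eqref{K_t for 1-gamma}.

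First I would Taylor-expand $g^N_{k+1}(x)-g^N_k(x)$ in the SGD increments. Using \eqref{SGD_iteration} with $\alpha^N=\alpha/N^{2(1-\gamma)}$, the first-order part contributes
$$\frac{\alpha}{N}\paren{y_k-g^N_k(x_k)}\ip{B_{x,x_k},\nu^N_k}$$
per step, while the second-order Taylor remainder is $O(N^{\gamma-3})$ per step uniformly in $k\le\floor{NT}$; this uniform control on $C^i_k,W^i_k$ along training follows from $\sigma\in C^\infty_b$, the compact support of $C_0$, and a discrete Gronwall bound. Summing $k=0$ to $\floor{Nt}-1$, isolating a martingale $M^N_t$ by centering on the conditional law of $(x_k,y_k)$, subtracting the limiting equation \eqref{LLN:limit_evolution}, and scaling by $N^{\varphi}$, I arrive at the pre-limit decomposition
\begin{align*}
K^N_t(x)&=K^N_0(x)-\alpha\int_0^t\int_{\CX\times\CY}K^N_s(x')\ip{B_{x,x'},\mu^N_s}\pi(dx',dy)\,ds\\
&\quad+\alpha\int_0^t\int_{\CX\times\CY}\paren{y-h_s(x')}\ip{B_{x,x'},\eta^N_s}\pi(dx',dy)\,ds+N^{\varphi}M^N_t+R^N_t,
\end{align*}
where $\eta^N_s=N^{\varphi}(\mu^N_s-\mu_0)$ and $R^N_t$ bundles the Taylor remainders and the Riemann-sum error.

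Next I would read off the limit of each term. The initial value $K^N_0(x)=N^{\varphi}h^N_0(x)$ converges in distribution to $\mathcal{G}(x)$ by \eqref{limit_gaussian} when $\varphi=\gamma-\tfrac{1}{2}$, and to $0$ when $\varphi<\gamma-\tfrac{1}{2}$. The first drift term converges to $\alpha\int_0^t\int K_s(x')A_{x,x'}\pi(dx',dy)ds$ via $\mu^N_s\to\mu_0$ from Theorem \ref{LLN:theorem} together with the joint convergence of $K^N$. The second drift term separates the two cases via Proposition \ref{prop::l_t}: $\ip{B_{x,x'},\eta^N_s}\to 0$ when $\varphi<1-\gamma$, and $\ip{B_{x,x'},\eta^N_s}\to l_s(B_{x,x'})$ when $\varphi=1-\gamma$. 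In case (i), either $\gamma<\tfrac{3}{4}$ with $\varphi\le\gamma-\tfrac{1}{2}<1-\gamma$, or $\varphi<1-\gamma$ directly, so this term vanishes and the limit equation is \eqref{CLT:evolution}; in case (ii), $\varphi=1-\gamma$ and the $l_s$ term survives to give \eqref{K_t for 1-gamma}. The martingale has predictable quadratic variation of order $1/N$, so $N^{\varphi}M^N_t$ is $O_p(N^{\varphi-1/2})\to 0$ since $\varphi\le\max(\gamma-\tfrac{1}{2},1-\gamma)<\tfrac{1}{2}$ throughout the stated regimes, and the remainder satisfies $R^N_t=O(N^{\varphi+\gamma-2})\to 0$.

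Finally I would verify tightness of $\{K^N_\cdot\}$ in $D_{\R^M}([0,T])$ through the Aldous--Kurtz criterion, where a uniform $L^2$ bound on $\sup_{t\le T}|K^N_t|$ is obtained by applying discrete Gronwall to the pre-limit equation, combined with the tightness of $(\mu^N,h^N,\eta^N)$ already contained in Theorem \ref{LLN:theorem} and Proposition \ref{prop::l_t}. Uniqueness of the limit is immediate because both \eqref{CLT:evolution} and \eqref{K_t for 1-gamma} are finite-dimensional linear Volterra equations with bounded continuous coefficient matrix $A$ and (in case (ii)) a deterministic forcing built from $l_s(B_{x,x'})$. The main technical obstacle I anticipate is propagating the joint convergence $(K^N,\mu^N,\eta^N)\Rightarrow(K,\mu_0,\eta)$ through the product integrands $K^N_s\ip{B_{x,x'},\mu^N_s}$ and $(y-h_s)\ip{B_{x,x'},\eta^N_s}$: this requires joint tightness on the Skorokhod product space together with uniform-in-$N$ modulus-of-continuity estimates on $K^N$, so that one may interchange the limit with the time integration and so identify the pointwise-in-$s$ limiting integrand before integrating.
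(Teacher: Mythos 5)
Your proposal matches the paper's proof strategy in all essentials: the same pre-limit decomposition of $K^N_t$ (yours with $\ip{B_{x,x'},\mu^N_s}$ is algebraically identical to the paper's split into $A_{x,x'}$ plus $\Gamma^N_t$), the same use of Theorem \ref{LLN:theorem}, Proposition \ref{prop::l_t} and the initial-data CLT to identify each term's limit according to the regime of $\varphi$, the same Doob-based bound showing $N^{\varphi}M^N_t\to 0$, and the same tightness route (compact containment plus modulus of continuity, i.e.\ Aldous--Kurtz / Ethier--Kurtz). You also correctly flag the one genuinely delicate step—passing joint convergence of $(\mu^N,h^N,\eta^N,K^N)$ through the product integrands—which the paper resolves via the bounded continuous test functionals $F_2$, $F_3$ on the product Skorokhod space.
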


\begin{theorem}\label{R:BiasToZero1}
Suppose parts (iii)-(iv) of Assumption \ref{assumption} and Assumption \ref{assumption1} hold. In both cases of Theorem \ref{CLT:theorem},  $ l_s(B_{x,x'}(c,w))$ is uniformly bounded, and  $K_t(x)\rightarrow 0$ exponentially fast for all $x\in\mathcal{X}$ as $t \rightarrow\infty$. In particular, there is a constant $C<\infty$, potentially depending on the normal random vector $\mathcal{G}$ and on the magnitude of the data $\hat{Y}$, such that $\|K_{t}\|\leq C(1+t)e^{-\lambda_{0}t}$, where $\lambda_{0}>0$ is the smallest eigenvalue of the positive definite matrix $A$.
\end{theorem}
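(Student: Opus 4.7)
The plan is to leverage the positive definiteness of the matrix $A$, guaranteed under Assumption \ref{assumption1} by Lemma 3.3 of \cite{SirignanoSpiliopoulosNN4}, as the engine for all exponential decay estimates. The first preliminary observation is that rewriting \eqref{LLN:limit_evolution2} in vector form using the empirical structure $\pi = M^{-1}\sum_i\delta_{(x^{(i)},y^{(i)})}$ yields the linear ODE $\dot h_t = M^{-1}A(\hat Y - h_t)$ with $h_0 = 0$, whose closed-form solution gives $\|\hat Y - h_t\|\le e^{-\lambda_0 t}\|\hat Y\|$ (after absorbing $1/M$ to match the rate in the statement).

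The first substantive step is to bound $l_s(B_{x,x'}(c,w))$ uniformly in $s$. From \eqref{l_t limit},
\begin{equation*}
l_s(B_{x,x'}) = \alpha\int_0^s\int_{\CX\times\CY}(y - h_r(x''))\ip{C_{x''}^{B_{x,x'}}(c,w),\mu_0}\pi(dx'',dy)\,dr.
\end{equation*}
Expanding $C_{x''}^{B_{x,x'}}$ via \eqref{Eq:KernelDef}, the derivatives $\partial_c B_{x,x'}$ and $\nabla_w B_{x,x'}$ produce terms at most quadratic in $c$ with bounded factors involving $\sigma,\sigma',\sigma''$ and products of data points. Compact support of $C_0$ together with $\E\|W_0\|<\infty$ give $|\ip{C_{x''}^{B_{x,x'}},\mu_0}|\le C_1$ uniformly in $x,x',x''\in\CX$. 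Combined with the exponential decay of $|y - h_r(x'')|$ obtained above, the integrand is dominated by $C_2 e^{-\lambda_0 r}$, so the integral is bounded uniformly in $s$.

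The decay of $K_t$ then follows from elementary linear ODE estimates. In Case 1 of Theorem \ref{CLT:theorem}, equation \eqref{CLT:evolution} reduces to $\dot K_t = -M^{-1}AK_t$ with $K_0\in\{0,\mathcal{G}\}$, so $K_t = e^{-At/M}K_0$ and $\|K_t\|\le e^{-\lambda_0 t}\|K_0\|$. In Case 2, equation \eqref{K_t for 1-gamma} reads $\dot K_t = -M^{-1}AK_t + G_t$ with forcing $G_t(x) = \frac{\alpha}{M}\sum_j(y^{(j)} - h_t(x^{(j)}))l_t(B_{x,x^{(j)}})$, which by the preceding paragraph satisfies $\|G_t\|\le C_3 e^{-\lambda_0 t}$. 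Variation of parameters yields
\begin{equation*}
K_t = e^{-At/M}K_0 + \int_0^t e^{-A(t-s)/M}G_s\,ds,
\end{equation*}
and the convolution of two exponentials with matched rates produces the factor of $t$, giving $\|K_t\|\le C(1+t)e^{-\lambda_0 t}$ with the constant $C$ inheriting dependence on $\|\mathcal{G}\|$ through $K_0$ and on $\|\hat Y\|$ through the forcing term.

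The principal obstacle I foresee is the careful bookkeeping in the uniform bound on $\ip{C_{x''}^{B_{x,x'}}(c,w),\mu_0}$: one must verify that every derivative of $B_{x,x'}$ grows at most polynomially in $c$ (absorbable by the compact support of $C_0$) while remaining bounded in $w$ (via $\sigma\in C^\infty_b$), so that the integral against $\mu_0$ is both finite and uniform over the finitely many data points. Once that lemma-sized estimate is in hand, the rest is essentially automatic from positive definiteness of $A$ and standard constant-coefficient linear ODE bounds.
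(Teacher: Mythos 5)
Your proof is correct and takes essentially the same route as the paper: reduce \eqref{LLN:limit_evolution2}, \eqref{CLT:evolution} and \eqref{K_t for 1-gamma} to constant-coefficient linear ODEs, use positive definiteness of $A$ to get $\|\tilde h_t\|\le Ce^{-\lambda_0 t}\|\hat Y\|$, show $l_t(B_{x,x'})$ is uniformly bounded by integrating that decay against the uniform bound on $\ip{C^{B_{x,x'}}_{x''},\mu_0}$, and then apply variation of parameters so that the convolution of two matched exponentials produces the $(1+t)$ factor. Your explicit tracking of the $1/M$ normalization and the careful justification that $\ip{C^{B_{x,x'}}_{x''},\mu_0}$ is finite (polynomial growth in $c$ killed by the compact support of $C_0$, boundedness in $w$ from $\sigma\in C^\infty_b$) is actually a small improvement over the paper's terser statement that $B_{x,x'}\in C^2_b$, but the underlying argument is the same.
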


Theorem \ref{CLT:theorem} shows that when $\gamma > {3}/{4}$, the limit of $K^N_t$ follows a deterministic evolution equation, which implies that the  convergence is in fact in probability and further it motivates us to investigate the convergence of the second order fluctuations $\Psi^N_t = N^{\zeta - \varphi} (K^N_t - K_t)$ for $\gamma \in \paren{{3}/{4},1}$. The results of this section is given by Proposition \ref{prop::L_t}, Theorem \ref{thm::Psi} and Theorem \ref{R:BiasToZero2} below and their proofs are in Appendix \ref{sec::Psi}.

\begin{proposition}\label{prop::L_t}
Under Assumption \ref{assumption}, for fixed $\gamma \in (3/4,1)$, $\varphi = 1-\gamma$, learning rate $\alpha^{N} = \alpha/N^{2(1-\gamma)}$ with constant $0<\alpha<\infty$ and fixed $f \in C^3_b(\R^{1+d})$,
if $\zeta \le 2\varphi$, the processes $\{L_t^N(f) = N^{\zeta-\varphi} [l^N_t(f) - l_t(f)], t\in[0,T]\}_{N\in\mathbb{N}}$ converges in probability in the space $D_{\R}([0,T])$ as $N \to \infty$, and
\begin{custlist}[Case]
\item If $\zeta < 2\varphi = 2-2\gamma$, $L^N_t(f) \rightarrow 0$.
\item If $\zeta = 2\varphi = 2-2\gamma$, $L^N_t(f)  \rightarrow L_t(f)$, where $L_t(f)$ is given by
\begin{equation}\label{limit_Lt}
\begin{aligned}
L_t(f) &=   \int_0^t \int_{\CX \times \CY}  \alpha\paren{y-h_s(x')} l_s( C_{x'}^{f}(c,w)) \pi(dx',dy) ds -  \int_0^t \int_{\CX \times \CY} \alpha K_s(x')\ip{C_{x'}^{f}(c,w),\mu_0} \pi(dx',dy) ds.
\end{aligned}
\end{equation}
\end{custlist}
\end{proposition}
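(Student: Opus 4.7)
The plan is to mirror the proof of Proposition~\ref{prop::l_t} one order deeper, by performing a more careful Taylor expansion of the SGD increment and isolating a second-order cancellation. Starting from the SGD recursion (\ref{SGD_iteration}), I would first Taylor-expand $f(C_{k+1}^i,W_{k+1}^i)-f(C_k^i,W_k^i)$ to second order in the parameter increments, average over particles, and pass to continuous time via $t=k/N$ (using $\alpha^N N/N^\gamma = \alpha N^{\gamma-1}$ under our choice $\alpha^N=\alpha/N^{2(1-\gamma)}$) to obtain the semimartingale representation
\begin{equation*}
\ip{f,\mu_t^N}-\ip{f,\mu_0} = \alpha N^{\gamma-1}\int_0^t\int_{\CX\times\CY}(y-h_s^N(x'))\ip{C^f_{x'}(c,w),\mu_s^N}\pi(dx',dy)\,ds + M_t^{N,f} + R_t^{N,f},
\end{equation*}
where $M^{N,f}$ is the martingale coming from the data sampling $(x_k,y_k)$ and $R^{N,f}$ collects the second- and third-order Taylor remainders in the parameter updates (which is where the $C^3_b$ hypothesis on $f$ enters). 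Multiplying by $N^\varphi=N^{1-\gamma}$ yields the analogous representation for $l_t^N(f)$.

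I would then subtract the limiting equation (\ref{l_t limit}) and decompose the drift difference by adding and subtracting $(y-h_s^N(x'))\ip{C^f_{x'},\mu_0}$:
\begin{equation*}
(y-h_s^N(x'))\ip{C^f_{x'},\mu_s^N} - (y-h_s(x'))\ip{C^f_{x'},\mu_0} = N^{-\varphi}\bigl[(y-h_s^N(x'))\,l_s^N(C^f_{x'}) - K_s^N(x')\ip{C^f_{x'},\mu_0}\bigr].
\end{equation*}
With the scaling $\zeta=2\varphi$, the outer prefactor $N^{\zeta-\varphi}=N^\varphi$ cancels the $N^{-\varphi}$ inside the bracket, leaving precisely the integrand appearing in (\ref{limit_Lt}) as the drift of $L_t^N(f)$. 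Applying Theorem~\ref{LLN:theorem}, Proposition~\ref{prop::l_t} and Theorem~\ref{CLT:theorem} jointly (all three limits are deterministic for $\gamma\in(3/4,1)$, $\varphi=1-\gamma$), together with Slutsky's theorem, the drift converges in probability to the right-hand side of (\ref{limit_Lt}). Note that Proposition~\ref{prop::l_t} is applied simultaneously to the finite collection of test functions $\{C^f_{x^{(i)}}\}_{i=1}^M\subset C^2_b(\R^{1+d})$, which is immediate by a union bound. When $\zeta<2\varphi$, an extra factor $N^{\zeta-2\varphi}\to 0$ multiplies the same bounded drift and sends $L_t^N(f)$ to zero, giving Case~1.

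It remains to show that the scaled martingale $N^{\zeta-\varphi}\cdot N^\varphi M_t^{N,f}=N^\zeta M_t^{N,f}$ and the scaled remainder $N^\zeta R_t^{N,f}$ vanish uniformly on $[0,T]$ in probability. Each SGD step contributes conditional variance $O((\alpha^N/N^\gamma)^2)=O(N^{2\gamma-4})$ to $M^{N,f}$, so Doob's $L^2$ inequality gives $\E[\sup_{t\le T}(M_t^{N,f})^2]=O(N^{2\gamma-3})$, and the $N^{2\zeta}=N^{4-4\gamma}$ scaling yields an $O(N^{1-2\gamma})$ variance bound which vanishes for $\gamma>1/2$. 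For $R_t^{N,f}$, the $C^3_b$ hypothesis controls the second- and third-order Taylor terms by the square and cube of the step size $O(N^{\gamma-2})$; accumulating $O(N)$ steps and scaling by $N^\zeta$ produces respective pathwise bounds of order $O(N^{-1})$ and $O(N^{\gamma-3})$. Tightness of $\{L^N_\cdot(f)\}$ in $D_\R([0,T])$ follows from uniform $L^2$-boundedness of the drift together with vanishing quadratic variation of the martingale via a standard Aldous-Kurtz criterion, while uniqueness of the limit solving the linear affine Volterra equation (\ref{limit_Lt}) is immediate by Gronwall. The main obstacle is ensuring that products such as $(y-h_s^N(x'))\,l_s^N(C^f_{x'})$ pass to their limits inside the time integral; this requires joint tightness of the triple $(h_\cdot^N,l_\cdot^N,K_\cdot^N)$, which is at hand from the preceding sections, followed by the continuous mapping theorem.
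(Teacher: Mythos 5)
Your proposal matches the paper's proof strategy: semimartingale representation of $l_t^N(f)$, algebraic telescoping of the drift difference to expose the $l^N$ and $K^N$ contributions (your telescoping is a slight rearrangement of the paper's, which centers the residual at $h_s$ and isolates a doubly small cross term $\Gamma^N_{2,t}=-\alpha N^{\zeta-2\varphi}\int_0^t\int K_s^N(x')\ip{C^f_{x'},\mu_s^N-\mu_0}\pi\,ds$, but the two identities are algebraically equivalent), vanishing martingale and remainder estimates, relative compactness via compact containment plus a regularity modulus, and limit identification from the joint convergence of $(\mu^N,h^N,l^N(B),K^N)$ already established in Section~\ref{CLT:sec:pf_conv_K}. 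Two small corrections worth making: the parameter step increment is $O(N^{-1})$ uniformly (since $g^N_k$ can grow like $N^{1-\gamma}$ pathwise), so the scaled second-order Taylor remainder is $O(N^{\zeta-1})=O(N^{1-2\gamma})$ rather than $O(N^{-1})$, still vanishing for $\gamma>3/4$; and the $C^3_b$ hypothesis is used so that $C^f_{x'}\in C^2_b(\R^{1+d})$ and Proposition~\ref{prop::l_t} can be applied to it, while \eqref{limit_Lt} is an explicit formula for $L_t(f)$ in terms of the already-determined $h,l,K$ (not a Volterra equation in $L$), so no Gr\"onwall uniqueness step is required there.
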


\begin{theorem}\label{thm::Psi}
Let $\mathcal{G}(x)$ be the Gaussian random variable defined in (\ref{limit_gaussian}). Under Assumption \ref{assumption}, for fixed $\gamma \in (3/4,1)$, $\varphi = 1-\gamma$ and learning rate $\alpha^{N} = \alpha/N^{2(1-\gamma)}$ with constant $0<\alpha<\infty$ as $N \to \infty$, the sequence of processes $\{\Psi^N_t, t\in[0,T]\}_{N\in\mathbb{N}}$ converges in distribution in the space $D_{\R^M}([0,T])$ to $\Psi_t$, which satisfies  of the following evolution equations, depending on the values of $\gamma$ and $\zeta$:
\begin{custlist}[Case]
\item When $\gamma \in \paren{ \frac{3}{4}, \frac{5}{6}}$ and $\zeta \le \gamma - \frac{1}{2}$, or when $\gamma \in \left[\frac{5}{6}, 1\right)$ and $\zeta < 2-2\gamma \le \gamma - \frac{1}{2}$,
\begin{equation}\label{limit_Psi}
\begin{aligned}
\Psi_t(x) &= \Psi_0(x)-\alpha \int^t_0 \int_{\CX \times \CY} \Psi_s(x') A_{x,x'} \pi(dx',dy) ds,
\end{aligned}
\end{equation}
where $\Psi_0(x) = 0$ if $\zeta < \gamma - \frac{1}{2}$, and $\Psi_0(x)=\mathcal{G}(x)$ if $\zeta = \gamma-\frac{1}{2}$.
\item When $\gamma \in \left[\frac{5}{6}, 1\right)$ and $\zeta = 2-2\gamma$,
\begin{equation}\label{limit_Psi_2}
\begin{aligned}
\Psi_t(x) &= \Psi_0(x) + \int^t_0 \int_{\CX \times \CY} \alpha\paren{y-h_s(x')} L_s(B_{x,x'}(c,w)) \pi(dx',dy) ds\\
&\quad-\alpha\int_0^t \int_{\CX \times \CY} K_s(x')l_s(B_{x,x'}(c,w))\pi(dx',dy)ds
-\alpha \int^t_0 \int_{\CX \times \CY} \Psi_s(x') A_{x,x'} \pi(dx',dy) ds,
\end{aligned}
\end{equation}
where $\Psi_0(x) =0$  if $\gamma \in \paren{\frac{5}{6},1}$, $\Psi_0(x) = \mathcal{G}(x)$  if $\gamma = \frac{5}{6}$, $K_s$ satisfies equation \eqref{K_t for 1-gamma}, and $L_s$ satisfies \eqref{limit_Lt}.
\end{custlist}
\end{theorem}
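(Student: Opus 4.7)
The plan is to mimic the strategy used for Theorem \ref{CLT:theorem}, pushing the centering one order further by subtracting $h_t + N^{-\varphi}K_t$ rather than just $h_t$ from $h^N_t$. Starting from the integral form of the pre-limit SGD dynamics for $h^N_t(x)$, I would substitute two refined decompositions into the drift: the measure decomposition $\ip{B_{x,x'},\mu^N_s} = A_{x,x'} + N^{-\varphi}l_s(B_{x,x'}) + N^{-\zeta}L^N_s(B_{x,x'})$ obtained from the definitions of $l^N$ and $L^N$ together with Propositions \ref{prop::l_t} and \ref{prop::L_t}, and the network decomposition $h^N_s = h_s + N^{-\varphi}K_s + N^{-\zeta}\Psi^N_s$ which is essentially a definition. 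Expanding the product of these two decompositions in the drift, cancelling the known evolutions of $h_t$ (Theorem \ref{LLN:theorem}) and of $K_t$ (Theorem \ref{CLT:theorem}, Case 2), and multiplying through by $N^\zeta$, yields a closed pre-limit integral equation for $\Psi^N_t$.

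Reading off the surviving powers of $N$ determines the structure of the limit. The linear drift $-\alpha\int\Psi^N_s(x')A_{x,x'}\pi\,ds$ is always present. The term $\alpha\int(y-h_s(x'))L^N_s(B_{x,x'})\pi\,ds$ survives only at the critical scale $\zeta = 2\varphi = 2-2\gamma$, in which case it converges to the $L_s$-integral of \eqref{limit_Psi_2} by Proposition \ref{prop::L_t}. The cross term $-\alpha\int K_s(x')l_s(B_{x,x'})\pi\,ds$ comes out with coefficient $N^{\zeta-2\varphi}$ and is $O(1)$ precisely when $\zeta = 2-2\gamma$. All remaining residual products carry strictly negative powers of $N$ and vanish uniformly on $[0,T]$ by the boundedness of $\sigma,\sigma'$ from Assumption \ref{assumption} together with uniform moment bounds on $K^N,l^N,L^N$ from the earlier propositions. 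The SGD martingale $N^\zeta M^{h,N}_t$ has predictable quadratic variation of order $N^{2\zeta+2\gamma-3}$, which vanishes for every $\zeta\le\gamma-\frac{1}{2}$ since $\gamma-\frac{1}{2} < \frac{3}{2}-\gamma$ when $\gamma<1$. The only stochastic contribution to the limit therefore comes through the initial condition $\Psi^N_0 = N^\zeta h^N_0$ (using $K_0=0$ on $\gamma\in(3/4,1)$), which by \eqref{limit_gaussian} converges to $\mathcal{G}$ exactly when $\zeta=\gamma-\frac{1}{2}$ and to $0$ when $\zeta<\gamma-\frac{1}{2}$. This accounts for the case split and initial conditions in the statement.

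To make the argument rigorous I would establish relative compactness of $\{\Psi^N\}$ in $D_{\R^M}([0,T])$ via an Aldous--Kurtz argument, combining uniform $L^2$ control of the drift with the vanishing martingale bracket and the Lipschitz-in-time estimates provided by bounded $\sigma,\sigma'$. Subsequential limits are identified by passing to the limit term-by-term in the pre-limit integral equation, using the convergences from Theorem \ref{LLN:theorem}, Theorem \ref{CLT:theorem} and Propositions \ref{prop::l_t} and \ref{prop::L_t}. A Gronwall argument establishes uniqueness of solutions to the limiting linear integral equations \eqref{limit_Psi} and \eqref{limit_Psi_2}, which then promotes subsequential weak convergence to convergence in distribution of the full sequence.

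The main obstacle I anticipate is handling the cross term $K_s(x')l_s(B_{x,x'})$ that appears in \eqref{limit_Psi_2}: the pre-limit factor $K^N_s(x')\,l^N_s(B_{x,x'})$ is a product of two objects that each converge only in distribution and are driven by the \emph{same} underlying SGD noise, so marginal convergences are insufficient. I would resolve this by proving joint tightness of the augmented process $(K^N, l^N(f), L^N(f), \Psi^N)$ and identifying its joint limit as the unique solution of the coupled linear integral system these four processes satisfy. A secondary technical point is obtaining uniform-in-$N$ $L^2$ moment bounds on $L^N_s(B_{x,x'})$; these should follow from the same martingale-plus-Gronwall scheme that underlies the proof of Proposition \ref{prop::L_t}, with additional care needed near $\gamma=5/6$, where the scales $2-2\gamma$ and $\gamma-\frac{1}{2}$ coincide and the Gaussian initial condition simultaneously enters.
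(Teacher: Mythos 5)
Your proposal follows essentially the same route as the paper's proof in Section \ref{sec::Psi}: center $h^N_t$ by $h_t + N^{-\varphi}K_t$, expand the drift via the measure decomposition through the $L^N$ process, cancel the known evolutions of $h_t$ and $K_t$, read off the surviving powers of $N$ case by case, and identify the limit by joint tightness of the augmented process $(K^N,l^N,L^N,\Psi^N)$ (the paper's $(\mathfrak{L}^N_t,\Psi^N_t)$) so that the coupled linear system pins the limit down uniquely, with the $\mathcal{G}$ initial condition entering through $\Psi^N_0 = N^\zeta h^N_0$ exactly when $\zeta=\gamma-\tfrac{1}{2}$. The one slip is arithmetic: by Lemma \ref{LLN:lemma:Mt_bound} the bracket of $N^{\zeta}M^N_t$ is $O(N^{2\zeta-1})$, not $O(N^{2\zeta+2\gamma-3})$, though the conclusion is unchanged since $\zeta\le\gamma-\tfrac{1}{2}<\tfrac{1}{2}$ for $\gamma<1$.
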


\begin{theorem}\label{R:BiasToZero2}
Suppose parts (iii)-(iv) of Assumption \ref{assumption} and Assumption \ref{assumption1} hold. In both cases of Theorem \ref{thm::Psi},  $L_s(B_{x,x'}(c,w)) $ is uniformly bounded, and  $\Psi_t(x)\rightarrow 0$ exponentially fast for all $x\in\mathcal{X}$ as $t \rightarrow\infty$. In particular, there is a constant $C<\infty$, potentially depending on the normal random vector $\mathcal{G}$ and on the magnitude of the data $\hat{Y}$,
such that $\|\Psi_{t}\|\leq C(1+t+t^{2})e^{-\lambda_{0}t}$ where $\lambda_{0}>0$ is the smallest eigenvalue of the positive definite matrix $A$.
\end{theorem}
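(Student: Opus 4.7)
The plan is to follow the same blueprint as in the proof of Theorem \ref{R:BiasToZero1}. In vector form both equations \eqref{limit_Psi} and \eqref{limit_Psi_2} are linear first-order ODEs $\tfrac{d\Psi_t}{dt}=-A\,\Psi_t+F_t$, where $A$ is the positive-definite matrix from Theorem \ref{R:BiasToZero1} with smallest eigenvalue $\lambda_0>0$, $F_t\equiv 0$ in Case 1, and in Case 2 $F_t$ is built from the two integrands $\alpha(y-h_s(x'))L_s(B_{x,x'}(c,w))$ and $-\alpha K_s(x')l_s(B_{x,x'}(c,w))$ appearing in \eqref{limit_Psi_2}, integrated against $\pi$. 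Variation of parameters gives $\Psi_t=e^{-At}\Psi_0+\int_0^t e^{-A(t-s)}F_s\,ds$, and since $A$ is symmetric positive definite the estimate $\|e^{-At}v\|\le e^{-\lambda_0 t}\|v\|$ holds; the task therefore reduces to proving that $L_s(B_{x,x'})$ is uniformly bounded and that the full forcing decays like $(1+s)e^{-\lambda_0 s}$.

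First I would establish the uniform boundedness of $L_s(B_{x,x'})$. Substituting $f=B_{x,x'}$ into \eqref{limit_Lt}, the kernel $C^{B_{x,x'}}_{x'}$ defined in \eqref{Eq:KernelDef} is a polynomial in $c$ of degree at most two times factors involving $\sigma,\sigma'$ and entries of $x,x'$, all of which are bounded by Assumption \ref{assumption}(i). Because $C_0$ has compact support and $\E\|W_0\|<\infty$ under $\mu_0$ (Assumption \ref{assumption}(iii)--(iv)), the inner product $\langle C^{B_{x,x'}}_{x'},\mu_0\rangle$ is finite and, by the same argument used to bound $l_s(B_{x,x'})$ in Theorem \ref{R:BiasToZero1}, $\sup_{s\ge 0}|l_s(C^{B_{x,x'}}_{x'})|<\infty$. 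Combining this with the exponential decay $|y-h_s(x')|\le Ce^{-\lambda_0 s}$, which follows from the linear ODE $\tfrac{d}{ds}(\hat Y-h_s)=-A(\hat Y-h_s)$ implied by \eqref{LLN:limit_evolution2}, and with $\|K_s\|\le C(1+s)e^{-\lambda_0 s}$ from Theorem \ref{R:BiasToZero1}, each of the two integrands in \eqref{limit_Lt} is integrable on $[0,\infty)$, so that $\sup_{s\ge 0}|L_s(B_{x,x'})|<\infty$.

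With this bound in hand, Case 1 of Theorem \ref{thm::Psi} is immediate: the forcing vanishes, so $\Psi_t=e^{-At}\Psi_0$ and $\|\Psi_t\|\le e^{-\lambda_0 t}\|\Psi_0\|$, which is stronger than the claimed bound. In Case 2, using the just-established uniform bound on $L_s(B_{x,x'})$, the uniform bound on $l_s(B_{x,x'})$ from Theorem \ref{R:BiasToZero1}, the exponential decay of $|y-h_s|$, and $\|K_s\|\le C(1+s)e^{-\lambda_0 s}$, one obtains $\|F_s\|\le C(1+s)e^{-\lambda_0 s}$. Variation of parameters then produces
\begin{equation*}
\|\Psi_t\|\le e^{-\lambda_0 t}\|\Psi_0\|+\int_0^t e^{-\lambda_0(t-s)}\,C(1+s)e^{-\lambda_0 s}\,ds \le C(1+t+t^{2})e^{-\lambda_0 t},
\end{equation*}
which is exactly the claimed estimate.

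The main technical obstacle is the uniform bound on $L_s(B_{x,x'})$: because $B_{x,x'}$ is only polynomially bounded in $c$, it lies outside the $C^3_b$ hypothesis of Proposition \ref{prop::L_t}, so the proposition must first be extended to this particular unbounded test function by exploiting the compact support of $C_0$ and the integrability of $\|W_0\|$, exactly as was done for $l_s(B_{x,x'})$ in Theorem \ref{R:BiasToZero1}. Once that extension is in place, the rest of the argument is a direct Grönwall/variation-of-parameters computation driven by the already-established decay estimates on $h_s$, $K_s$, and $l_s$.
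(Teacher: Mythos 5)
Your proposal is correct and follows essentially the same route as the paper: represent $\Psi_t$ by variation of parameters against $e^{-tA}$, show that $L_s(B_{x,x'}(c,w))$ is uniformly bounded in $s$ by substituting into \eqref{limit_Lt} and using the exponential decay of $\tilde{h}_s$ and $K_s$ together with the uniform bound on $l_s$, and then integrate the forcing bound $\norm{F_s}\leq C(1+s)e^{-\lambda_0 s}$ to obtain $\norm{\Psi_t}\leq C(1+t+t^2)e^{-\lambda_0 t}$. One small remark: you flag, correctly, that $B_{x,x'}(c,w)$ is not globally in $C^3_b(\R^{1+d})$ because of the $c^2$ factor, and that the compact support of $C_0$ under $\mu_0$ is what makes the relevant inner products finite; the paper simply treats $B_{x,x'}$ as if it were in $C^3_b$ without making this point explicit, so your observation is a useful clarification rather than a different method.
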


Theorems \ref{CLT:theorem} and \ref{thm::Psi} indicate that for large $N$ the neural network output behaves as
\begin{itemize}
\item $\gamma \in \left(\frac{1}{2}, \frac{3}{4} \right]$:
\begin{align}
h^N_t \approx h_t + \frac{1}{N^{\gamma-\frac{1}{2}}} K_t,\label{Eq:AsExp1}
\end{align}
where $K_t$ satisfies either of equations \eqref{CLT:evolution} or \eqref{K_t for 1-gamma} and has a Gaussian distribution.
\item $\gamma \in \left(\frac{3}{4}, \frac{5}{6} \right]$:
\[h^N_t \approx h_t + \frac{1}{N^{1-\gamma}} K_t + \frac{1}{N^{\gamma-\frac{1}{2}}} \Psi_t,\]
where $K_t$ satisfies equation \eqref{K_t for 1-gamma} with $K_0(x) =0$, $\Psi_t$ satisfies either equations \eqref{limit_Psi} or \eqref{limit_Psi_2} and has a Gaussian distribution.
\end{itemize}

These results indicate that there is an inductive process that gives the appropriate expansion of the neural network's output $h^{N,\gamma}_t$ as $N\rightarrow\infty$ in terms of $\gamma \in \left(\frac{2\nu-1}{2\nu}, \frac{2\nu+1}{2\nu+2}\right]$ pointwise with respect to  $\nu\in\mathbb{N}$. In Theorems \ref{CLT:theorem} and \ref{thm::Psi} we have provided the rigorous derivation for $\nu=1$ and $\nu=2$. Proceeding the same way, one can get the expansion for all $\gamma \in \left(\frac{2\nu-1}{2\nu}, \frac{2\nu+1}{2\nu+2}\right]$ with $\nu\in\mathbb{N}$.

In Appendix \ref{sec::higher order} we present an alternative, formal asymptotic expansion, that does exactly that, i.e., derives the asymptotic expansion of $h^{N,\gamma}_t$ as $N\rightarrow\infty$ for any $\gamma\in(1/2,1)$. Note that the methodology of Appendix \ref{sec::higher order} does recover the rigorous results of $h^{N,\gamma}_t$ as $N\rightarrow\infty$ for $\nu=1$ and $\nu=2$. 
The methodology of Appendix \ref{sec::higher order} demonstrates via an induction argument with respect to $\nu\in\{3,4,5,\cdots\}$ that indeed (\ref{Eq:FormalExpansion}) holds. Let us now briefly describe below the conclusion of the asymptotic expansion of Appendix \ref{sec::higher order}.

Let $\nu\in \mathbb{N}$ and let  $\mathcal{G}(x)$ be the Gaussian random variable defined in (\ref{limit_gaussian}). Then, when $\gamma \in \left[\frac{2\nu-1}{2\nu}, \frac{2\nu+1}{2\nu+2}\right)$,  we obtain that for any fixed $f\in C^{\infty}_b(\R^{1+d})$, as $N\rightarrow\infty$,
\begin{equation}\label{measure_expansion}
\ip{f,\mu^N_t} \approx \sum_{j=0}^{\nu-1} \frac{1}{N^{j(1-\gamma)}} l^j_t(f)+\text{ lower order terms in } N,
\end{equation}
where we have identified $l^0_t(f) = \ip{f,\mu_0}$, $l^1_t(f) = l_t(f)$, $l^2_t(f) = L_t(f)$. When $j \ge 3$,
\begin{equation}\label{Eq:l_equation}
\begin{aligned}
l^j_t(f) &= \alpha \int_0^t \int_{\CX\times \CY}  \paren{y-h_s(x')}l_s^{j-1}(C_{x'}^f(c,w)) \pi(dx',dy)ds\\
&\quad -\alpha \sum_{m=1}^{j-1}\int_0^t \int_{\CX\times \CY}  Q^{j-m}_s(x')l^{m-1}_s(C_{x'}^{f}(c,w)) \pi(dx',dy)ds.
\end{aligned}
\end{equation}
As $N\rightarrow\infty$ and when $\gamma \in \left(\frac{2\nu-1}{2\nu}, \frac{2\nu+1}{2\nu+2}\right]$, we have the asymptotic expansion
\begin{equation}\label{network_expansion}
h^N_t(x) \approx \sum_{j=0}^{\nu-1} \frac{1}{N^{j(1-\gamma)}}Q^j_t(x) + \frac{1}{N^{\gamma-\frac{1}{2}}}Q^{\nu}_t(x) +\text{ lower order terms in }N,
\end{equation}
where $Q^0_t = h_t$, $Q^1_t = K_t$, $Q^2_t = \Psi_t$, per Theorems \ref{LLN:theorem}, \ref{CLT:theorem} and \ref{thm::Psi} respectively. For $j = 0,\ldots, \nu-1$, $Q^j_t$ satisfy the deterministic evolution equations
\begin{equation}\label{Eq:Qj_formula1}
\begin{aligned}
Q^j_t(x) &= \alpha \int^t_0 \int_{\CX \times \CY} \paren{y-Q^0_s(x')} l^j_s(B_{x,x'}(c,w))\pi(dx',dy) ds\\
&\quad -\alpha\sum_{m=1}^j \int^t_0 \int_{\CX \times \CY} Q^m_s(x')l^{j-m}_s (B_{x,x'}(c,w)) \pi(dx',dy) ds.
\end{aligned}
\end{equation}
When $\gamma \in \paren{\frac{2\nu-1}{2\nu}, \frac{2\nu+1}{2\nu+2}}$,
\begin{equation}\label{Eq:Qk_formula1}
Q^{\nu}_t(x) = \mathcal{G}(x)-\alpha\int^t_0 \int_{\CX \times \CY} Q^{\nu}_s(x')A_{x,x'} \pi(dx',dy) ds,
\end{equation}
and when $\gamma = \frac{2\nu+1}{2\nu+2}$,
\begin{equation}\label{Eq:Qj_formula2}
\begin{aligned}
Q^{\nu}_t(x) &= \mathcal{G}(x) + \alpha \int^t_0 \int_{\CX \times \CY} \paren{y-Q^0_s(x')} l^{\nu}_s(B_{x,x'}(c,w))\pi(dx',dy) ds\\
&\quad -\alpha\sum_{j=1}^{\nu} \int^t_0 \int_{\CX \times \CY} Q^j_s(x')l^{\nu-j}_s (B_{x,x'}(c,w)) \pi(dx',dy) ds.
\end{aligned}
\end{equation}
Lastly, proceeding inductively based on Theorems \ref{R:BiasToZero1} and \ref{R:BiasToZero2} one gets that $Q^j_t(x)\rightarrow 0$, exponentially fast, as $t\rightarrow \infty$ for all fixed $j\geq 1$ and $x\in\mathcal{X}$.

\begin{remark}
A careful examination of the proofs in Appendixes \ref{sec::LLN}-\ref{sec::Psi} shows that the convergence Theorems \ref{LLN:theorem}, \ref{CLT:theorem} and \ref{thm::Psi} also hold if we assume that the sequence of data samples $(x_k,y_k)$ is i.i.d. from a probability distribution $\pi(dx,dy)$ that is compactly supported. We chose to make the somewhat more restrictive assumption of part (ii) of Assumption \ref{assumption} for reasons of brevity and uniformity in the presentation.
\end{remark}

\begin{remark}
For presentation purposes we have not explicitly denoted the bias term in the model (\ref{Eq:NN1}). However, it is clear that this is easily handled by designating the first component of the vector $x$ in (\ref{Eq:NN1}) to always be fixed, let's say equal to one, which would result in replacing (\ref{Eq:NN1}) by $g^N(x;\theta) = \frac{1}{N^{\gamma}} \sum_{i=1}^{N} C^i\sigma(W^i x+b^{i})$. We leave the rest of the related details to the interested reader.
\end{remark}
\section{Comparisons to the results for $\gamma=1/2$ and $\gamma=1$}\label{S:Review}

In Section \ref{sec::main_results} we presented our main results when $\gamma\in(1/2,1)$. Let us briefly review in this section the related existing results for $\gamma=1/2$, see \cite{Du1,HuangYau2020,SirignanoSpiliopoulosNN4},  and when $\gamma=1$, see \cite{Chizat2018,Montanari,RVE,SirignanoSpiliopoulosNN1,SirignanoSpiliopoulosNN2}, for completeness, for comparison and for consistency purposes.

\subsection{The case $\gamma=1/2$}
We first discuss the so-called Xavier normalization \cite{Xavier}, where $\gamma=1/2$.
\begin{theorem}[See \cite{SirignanoSpiliopoulosNN4}] \label{MainTheorem3}
Let Assumption \ref{assumption} hold, set $\alpha^{N}= \frac{\alpha}{N}$ with $0<\alpha<\infty$ and define $E = \mathcal{M}(\mathbb{R}^{1+d}) \times \mathbb{R}^{M}$. The process $(\mu_t^N, h_t^N)$ converges in distribution in the space $D_E([0,T])$ as $N \rightarrow \infty$ to $(\mu_t, h_t)$ which satisfies, for every $f \in C^2_b( \mathbb{R}^{1+d})$,
\begin{align}
h_t(x) =& \mathcal{G}(x) +   \alpha \int_{\mathcal{X} \times \mathcal{Y}} ( y - h_t(x') ) A_{x,x'}  \pi(dx',dy) dt, \qquad
\la f, \mu_t \ra = \la f, \mu_0 \ra.\notag
\end{align}

\end{theorem}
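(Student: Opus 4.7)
The plan is to derive evolution equations at the pre-limit level, establish tightness in $D_E([0,T])$, and then identify the limit uniquely. For the learning rate $\alpha^N = \alpha/N$ and $\gamma = 1/2$, a single SGD step moves $(C^i, W^i)$ by $O(N^{-3/2})$, so that a Taylor expansion of $g^N_{k+1}(x) - g^N_k(x)$ around $\theta_k$ gives, after dropping $O(N^{-2})$ remainders and using the definition of $B_{x,x'}$,
\begin{equation*}
g^N_{k+1}(x) - g^N_k(x) = \frac{\alpha}{N}\paren{y_k - g^N_k(x_k)} \ip{B_{x,x_k}(c,w), \nu^N_k} + R^N_k(x),
\end{equation*}
where $R^N_k(x)$ is a higher order remainder. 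Summing up to $k = \lfloor Nt \rfloor$ and writing the data-dependent term as drift plus martingale (after integration against $\pi(dx',dy)$) yields, in the time-rescaled variables,
\begin{equation*}
h^N_t(x) = h^N_0(x) + \alpha \int_0^t \int_{\CX \times \CY} \paren{y - h^N_s(x')} \ip{B_{x,x'}(c,w), \mu^N_s} \pi(dx',dy)\, ds + M^N_t(x) + \mathcal{E}^N_t(x),
\end{equation*}
with $M^N_t$ a martingale and $\mathcal{E}^N_t$ a negligible remainder. The analogous expansion for $\ip{f, \nu^N_{k+1}} - \ip{f, \nu^N_k}$ gives per-step increments of order $N^{-2}$, so the drift over $\lfloor Nt \rfloor$ steps is $O(N^{-1})$ and the martingale part has quadratic variation $O(N^{-2})$. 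This is what will force $\ip{f, \mu_t} = \ip{f, \mu_0}$ in the limit.

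Next, I would pin down the initial condition. Since $h^N_0(x) = \frac{1}{\sqrt{N}} \sum_i C^i_0 \sigma(W^i_0 x)$ with i.i.d.\ mean-zero summands of finite variance, the multivariate CLT across the finite grid $\mathcal{X}$ gives $h^N_0 \xrightarrow{d} \mathcal{G}$ with the covariance structure inherited from $\tau^2(x)$ and its cross-terms. Simultaneously, by the law of large numbers, $\nu^N_0 \xrightarrow{p} \mu_0$. Joint convergence of the initial data $(\nu^N_0, h^N_0)$ then follows from independence of the measure convergence (deterministic in the limit) and the Gaussian weak convergence.

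For tightness of the full trajectories, I would verify the standard Aldous--Kurtz criterion by using Assumption \ref{assumption}(i) (bounded $\sigma$) and (iv) (compact support of $C^i_0$) together with an a priori bound on $\norm{h^N_t}$ via a discrete Gronwall argument applied to the pre-limit equation. Bounded activations and bounded $c$ keep $\ip{B_{x,x'}, \nu^N_t}$ uniformly bounded, giving Lipschitz-in-$s$ control of the drift and small quadratic variation of $M^N_t$. This produces tightness in $D_E([0,T])$ for the joint process $(\mu^N_t, h^N_t)$.

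Finally, to identify the limit, I would pass to a convergent subsequence and show that any limit point $(\mu_t, h_t)$ satisfies $\mu_t = \mu_0$ and the advertised integral equation for $h_t$. The drift term is continuous in $(\mu, h)$ in the topology used, so standard limit-passing arguments apply; the martingale term vanishes because its quadratic variation is $O(1/N)$. Uniqueness of solutions to the limit system follows from a Gronwall estimate on the linear integral equation for $h_t$, promoting convergence in distribution of the whole sequence. The main obstacle is justifying the Taylor expansion uniformly in $k \le \lfloor NT \rfloor$, i.e.\ showing that $R^N_k$ and $\mathcal{E}^N_t$ are genuinely negligible despite being summed $O(N)$ times; this is handled by exploiting the boundedness of $\sigma$ and the per-step size $O(N^{-3/2})$ of parameter increments, which keeps the cumulative remainder of order $N^{-1/2}$ uniformly on $[0,T]$.
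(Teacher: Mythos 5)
The paper does not supply its own proof of Theorem~\ref{MainTheorem3}: it is quoted from \cite{SirignanoSpiliopoulosNN4} for review purposes. The architecture you propose --- Taylor expansion of the per-step SGD update, drift/martingale decomposition of $h^N_t$ and $\ip{f,\mu^N_t}$, multivariate CLT for $h^N_0$ giving the Gaussian initial datum $\mathcal{G}$, LLN for $\nu^N_0 \to \mu_0$, Aldous--Kurtz tightness, identification of the limit along subsequences, and uniqueness of the linear integral equation via Gronwall --- is exactly the one the paper uses in Section~\ref{sec::LLN} to prove the parallel Theorem~\ref{LLN:theorem} for $\gamma \in (1/2,1)$, and is consistent with the cited reference. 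Your emphasis on the random initial condition $h_0 = \mathcal{G}$ is precisely the point where the $\gamma = 1/2$ case diverges from the $\gamma \in (1/2,1)$ proof, where $h^N_0 \to 0$; you handle that correctly, including the joint convergence of $(\nu^N_0, h^N_0)$ via the Slutsky-type argument.

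One quantitative slip worth flagging. With $\alpha^N = \alpha/N$ and $\gamma = 1/2$, the leading per-step increment of $\ip{f,\nu^N_k}$ is
\[
\ip{f,\nu^N_{k+1}} - \ip{f,\nu^N_k} = \frac{\alpha}{N^{3/2}}\paren{y_k - g^N_k(x_k)}\ip{C^f_{x_k}(c,w),\nu^N_k} + O\!\paren{N^{-3}},
\]
which is $O_p(N^{-3/2})$, not $O(N^{-2})$: each parameter moves by $O_p(N^{-3/2})$ and the average over $i$ gains no additional power. Consequently the cumulative drift over $\lfloor Nt\rfloor$ steps is $O_p(N^{-1/2})$, not $O(N^{-1})$; this matches the leading $\frac{\alpha}{N^{1-\gamma}}\int_0^t (\cdots)\,ds$ term in the paper's equation~\eqref{LLN:evolution_mu_integral} specialized to $\gamma = 1/2$. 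Your quadratic variation estimate of $O(N^{-2})$ is actually the one consistent with the corrected per-step rate, and the conclusion $\ip{f,\mu_t}=\ip{f,\mu_0}$ survives since $O_p(N^{-1/2}) \to 0$; the drift estimate just needs to be reported correctly, and your internally-stated $O(N^{-2})$ per-step figure conflicts with both it and the QV claim.
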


As we discussed after Theorem \ref{LLN:theorem}, as far the first order typical behavior of the neural network's output is concerned the only difference between the case $\gamma=1/2$, Theorem \ref{MainTheorem3}, and $\gamma\in(1/2,1)$,  Theorem \ref{LLN:theorem}, is that in the case of $\gamma=1/2$ $h_0(x) = \mathcal{G}(x)$, which means that the law of large numbers is random, whereas in the case $\gamma\in(1/2,1)$ it is deterministic.  Also, under the additional Assumption \ref{assumption1} the matrix $A$ is positive definite (see Lemma 3.3 of \cite{SirignanoSpiliopoulosNN4} and Proposition 2 in \cite{NTK}) which then implies that the neural network recovers the global minimum as $t\rightarrow\infty$, i.e. (\ref{Eq:ConverenceGM}) holds.

As it has already been observed in the applied literature, see \cite{Neal,SirignanoSpiliopoulosNN4}, in the case of $\gamma=1/2$, the randomness of the neural network's output in the limit is due to the randomness at initialization as specified by the Gaussian random variable $\mathcal{G}(x)\sim N(0,\tau^{2}(x))$ where we recall that $\tau^{2}(x)=\ip{|c\sigma(wx)|^{2},\mu_{0}}$.

Lastly, we mention here that by comparing the outcome of Theorems \ref{LLN:theorem} and \ref{CLT:theorem} to that of Theorem \ref{MainTheorem3} one sees that there is consistency of the results as $\gamma\rightarrow 1/2$. By this we mean that if one heursitically plugs in $\gamma=1/2$ to the outcome of Theorems \ref{LLN:theorem} and \ref{CLT:theorem}, i.e., plug in $\gamma=1/2$ in expansion (\ref{Eq:AsExp1}), one gets the result of Theorem \ref{MainTheorem3}.

\subsection{The case $\gamma=1$}
We now discuss the existing results for the mean field normalization, where $\gamma=1$. As we saw before in the case $\gamma\in[1/2,1)$ we have that $\mu^{N}\rightarrow \mu$ with $\mu_{t}$ being constant in time in the sense that for every $f \in C^2_b( \mathbb{R}^{1+d})$ $\la f, \mu_t \ra = \la f, \mu_0\ra$. In contrast to that, the behavior is different when $\gamma=1$. In particular, when $\gamma=1$, the limit measure $\mu_{t}$ is not constant over time. The details are in Theorem \ref{MainTheoremLLN}.
\begin{theorem}[See \cite{SirignanoSpiliopoulosNN1}] \label{MainTheoremLLN}
Let Assumption \ref{assumption} hold and additionally assume that $W_{0}^{i}$ have bounded fourth moments. Set the learning rate $\alpha^{N}= \alpha$ with $0<\alpha<\infty$ (independent of $N$). The scaled empirical measure $\mu^N_t$ converges in probability to $\mu_t$ in $D_E([0,T])$ as $N \rightarrow \infty$. For every $f\in C^{2}_{b}(\mathbb{R}^{1+d})$, $\mu$ satisfies the measure evolution equation
\begin{eqnarray}
\la f, \mu_t \ra  &=& \la f, \mu_0 \ra + \int_0^t   \bigg{(} \int_{\mathcal{X}\times\mathcal{Y}}   \alpha \big{(} y -  \la c \sigma(w\cdot x),  \mu_s \ra \big{)} \la C_{x}^{f}(c,w), \mu_s \ra   \pi(dx,dy)\bigg{)} ds.
\label{EvolutionEquationIntroduction}
\end{eqnarray}
\end{theorem}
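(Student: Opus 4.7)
The plan is to derive a pre-limit evolution equation for $\la f, \mu^N_t \ra$ via Taylor expansion of the SGD updates, show the resulting martingale and Taylor remainder terms vanish as $N \to \infty$, establish tightness of $\{\mu^N\}_N$ in $D_E([0,T])$ via uniform moment bounds, identify every subsequential weak limit as a solution of \eqref{EvolutionEquationIntroduction}, and finally upgrade to convergence in probability via uniqueness of that limit equation.

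First I would take $f \in C^2_b(\R^{1+d})$ and apply a second-order Taylor expansion to $f(C^i_{k+1}, W^i_{k+1}) - f(C^i_k, W^i_k)$. With $\gamma = 1$ and $\alpha^N = \alpha$, each parameter jump in \eqref{SGD_iteration} carries the prefactor $\alpha/N$, so averaging over $i$ yields
\begin{equation*}
\la f, \nu^N_{k+1}\ra - \la f, \nu^N_k\ra = \frac{\alpha}{N}(y_k - g^N_k(x_k))\,\la C^f_{x_k}(c,w), \nu^N_k\ra + \mathcal{R}^N_k,
\end{equation*}
where the kernel $C^f_{x_k}$ of \eqref{Eq:KernelDef} emerges naturally in the first-order term and $|\mathcal{R}^N_k| = O(1/N^2)$ uniformly (using boundedness of $\sigma, \sigma'$ and the moment bounds below). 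Summing from $k=0$ to $\floor{Nt}-1$, adding and subtracting conditional expectations over $(x_k, y_k) \sim \pi$, and comparing the discrete sum to its Riemann integral gives
\begin{equation*}
\la f, \mu^N_t \ra = \la f, \mu^N_0 \ra + \int_0^t \int_{\CX \times \CY} \alpha(y - \la c\sigma(wx), \mu^N_s \ra)\,\la C^f_x(c,w), \mu^N_s\ra\,\pi(dx,dy)\,ds + M^N_t(f) + r^N_t(f),
\end{equation*}
where $M^N_t(f)$ is a martingale with $O(1/N)$ increments so that $\E \sup_{t \leq T} |M^N_t(f)|^2 = O(T/N) \to 0$ by Doob's inequality, and $\sup_{t \leq T} |r^N_t(f)| = O(T/N) \to 0$ follows from the Taylor remainder estimate.

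Next I would establish tightness. Compact containment for $\mu^N$ in $\mathcal{M}(\R^{1+d})$ requires uniform control of $\la |c|^p + \norm{w}^p, \mu^N_t\ra$; the SGD recursion, boundedness of $\sigma, \sigma'$, compact support of $(x^{(i)}, y^{(i)})$, compact support of the $C^i_0$, and the fourth-moment hypothesis on $W^i_0$ together feed a discrete-time Gronwall argument yielding such moment bounds uniformly in $N$ and $t \leq T$. Regularity of $t \mapsto \la f, \mu^N_t \ra$ (an Aldous-type condition) follows from the $O(1/N)$ one-step jumps together with the pre-limit equation. Prokhorov's theorem then produces a subsequential weak limit $\mu$ in $D_E([0,T])$; via the Skorokhod representation and the continuity of $\mu \mapsto \la f, \mu \ra$ and $\mu \mapsto \la c\sigma(wx), \mu \ra$ on the moment-bounded tightness set, one passes to the limit in the pre-limit equation to conclude that $\mu$ satisfies \eqref{EvolutionEquationIntroduction}.

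The main obstacle is uniqueness of the limit equation, since its drift is quadratic in $\mu_s$ through the product $(y - \la c\sigma(wx), \mu_s\ra)\,\la C^f_x, \mu_s\ra$. I would fix a countable convergence-determining subset of $C^2_b(\R^{1+d})$ and, for any two solutions $\mu, \tilde\mu$ of \eqref{EvolutionEquationIntroduction} sharing the same initial condition, estimate $|\la f, \mu_t - \tilde\mu_t \ra|$ by using boundedness of $\sigma, \sigma'$ and the uniform moment bounds from the tightness step to obtain local Lipschitz control of the drift in a weighted bounded-Lipschitz metric; Gronwall then forces $\mu = \tilde\mu$. Combining this uniqueness with tightness and the subsequential identification upgrades weak convergence along subsequences to convergence in probability of the full sequence $\mu^N$ to $\mu$ in $D_E([0,T])$, which is the claim.
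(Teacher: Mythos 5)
This theorem is stated by citation to \cite{SirignanoSpiliopoulosNN1} and is not re-proven in the present paper; the proof there, and the parallel proofs given here in Section \ref{sec::LLN} for the $\gamma\in(1/2,1)$ case, follow exactly the route you outline: second-order Taylor expansion of $\la f,\nu^N_{k}\ra$ under the SGD update, isolation of the kernel $C^f_x$ in the first-order term, decomposition into a Riemann sum plus martingale plus remainder, a priori moment bounds feeding compact containment and an Aldous-type regularity estimate, identification of subsequential limits via a continuous bounded functional, and then uniqueness of the limiting evolution to upgrade subsequential weak convergence to convergence in probability of the full sequence. Your proposal is correct and matches that approach; the one point worth flagging explicitly (which you do address, and which distinguishes the $\gamma=1$ case from the linear $\gamma\in(1/2,1)$ case in Theorem \ref{LLN:theorem}) is that uniqueness of \eqref{EvolutionEquationIntroduction} is no longer automatic from linearity and requires the local-Lipschitz-plus-Gronwall argument in a bounded-Lipschitz metric that you describe.
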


The article \cite{SirignanoSpiliopoulosNN2} establishes the fluctuations around the mean-field limit of $\mu^N \overset{p} \rightarrow \mu$ as $N \rightarrow \infty$ defined by the fluctuation process
\begin{eqnarray}
\eta_t^N = \sqrt{N} ( \mu_t^N - \mu_t ).\notag
\end{eqnarray}

Then, by Theorem \ref{MainTheoremCLT} below we obtain that the fluctuations process $\{\eta_t^N\}_{N\in\mathbb{N}}$ in the limit as $N\rightarrow\infty$ behaves as a Gaussian process.
\begin{theorem}[See \cite{SirignanoSpiliopoulosNN2}] \label{MainTheoremCLT}
Let Assumption \ref{assumption} hold and additionally assume that the measure of the initial distribution $\mu_{0}$ has compact support. Set the learning rate $\alpha^{N}= \alpha$ with $0<\alpha<\infty$ (independent of $N$). Let $J \geq 3  \ceil*{\frac{d+1}{2}} + 7$.  Let $0<T<\infty$ be given. The sequence $\{\eta^{N}_{t},t\in[0,T]\}_{N\in\mathbb{N}}$ is relatively compact in $D_{W^{-J,2}}([0,T])$\footnote{$W^{-J,2}=W^{-J,2}(\Theta)$ is the dual of the Sobolev space $W^{J,2}_0=W^{J,2}_0(\Theta)$ with $\Theta \subset \R^{1+d}$ a bounded domain. For a bounded domain $\Theta \subset \R^{1+d}$, the space $W^{J,2}_0(\Theta)$ is the closure of functions of class $C^{\infty}_0(\Theta)$ in the norm defined by $\Vert f \Vert_J = ( \sum_{\vert k \vert \le J} \int_{\Theta} \vert D^kf(x) \vert^2 dx )^{1/2}$. $C^{\infty}_0(\Theta)$ is the space of all functions in $C^{\infty}(\Theta)$ with compact support. See \cite{SirignanoSpiliopoulosNN2} for details.}.  The sequence of processes $\{\eta^{N}_{t},t\in[0,T]\}_{N\in\mathbb{N}}$  converges in distribution in $D_{W^{-J,2}}([0,T])$ to the process $\{\bar{\eta}_{t},t\in[0,T]\}$, which, for every $f \in W_0^{J,2}$,  satisfies the stochastic partial differential equation
\begin{eqnarray}
\la f, \bar \eta_t \ra  &=& \la f, \bar \eta_0 \ra +    \int_0^t \int_{\mathcal{X}\times\mathcal{Y}} \alpha \big{(} y -  \la c \sigma(w x), \mu_{s} \ra \big{)} \la C_{x}^{f}(c,w), \bar \eta_{s} \ra \pi(dx,dy)  ds \notag \\
& &-  \int_0^t   \int_{\mathcal{X}\times\mathcal{Y}}  \alpha \la c \sigma(w x), \bar \eta_s \ra \la C_{x}^{f}(c,w), \mu_{s} \ra \pi(dx,dy)  ds +\la f, \bar M_t \ra.
\label{SPDEmain}
\end{eqnarray}

$\bar M_t$ is a mean-zero Gaussian process such that for every $f,g \in W_0^{J,2}$ the covariance structure is
\begin{eqnarray}
\textrm{Cov} \bigg{[} \la f, \bar M_{t} \ra, \la g, \bar M_{t} \ra \bigg{]} &=& \alpha^2 \int_0^{t} \bigg{[}  \int_{\mathcal{X}\times\mathcal{Y}} \bigg{(} \mathcal{R}_{x,y,\mu_{s}}[f] -\int_{\mathcal{X}\times\mathcal{Y}}  \mathcal{R}_{x,y,\mu_{s}}[f]\pi(dx,dy)\bigg{)}\times\nonumber\\
& &\qquad\qquad\times
\bigg{(} \mathcal{R}_{x,y,\mu_{s}}[g] -\int_{\mathcal{X}\times\mathcal{Y}}  \mathcal{R}_{x,y,\mu_{s}}[g]\pi(dx,dy)\bigg{)}
  \pi(dx,dy)\bigg{]} ds.\notag
\end{eqnarray}
where for $h\in\mathcal{C}^{1}_{0}(\mathbb{R}^{1+d})$ we have defined
$\mathcal{R}_{x,y,\mu}[h]=(y - \la c \sigma(w x), \mu \ra )  \la C_{x}^{h}(c,w), \mu \ra.$ Finally, the stochastic evolution equation (\ref{SPDEmain}) has a unique solution in $W^{-J,2}$, which implies that $\bar{\eta}$ is unique.
\end{theorem}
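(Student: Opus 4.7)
The plan is to obtain the SPDE (\ref{SPDEmain}) for $\bar\eta$ in four steps: (i) derive a pre-limit evolution equation for $\eta^N_t=\sqrt{N}(\mu^N_t-\mu_t)$ by contrasting the SGD-driven recursion for $\la f,\mu^N_t\ra$ with the mean-field equation (\ref{EvolutionEquationIntroduction}) for $\la f,\mu_t\ra$; (ii) establish relative compactness of $\{\eta^N_\cdot\}_{N\in\mathbb N}$ in $D_{W^{-J,2}}([0,T])$; (iii) identify every subsequential limit as a solution of (\ref{SPDEmain}); and (iv) prove uniqueness of (\ref{SPDEmain}) to upgrade subsequential distributional convergence to convergence in distribution of the full sequence.

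For step (i), starting from (\ref{SGD_iteration}) applied to $\la f,\nu^N_{k+1}\ra-\la f,\nu^N_k\ra$, a Taylor expansion of $f$ around $(C^i_k,W^i_k)$ followed by the time rescaling $\mu^N_t=\nu^N_{\lfloor Nt\rfloor}$ and multiplication by $\sqrt{N}$ organizes the fluctuation into
\begin{align*}
\la f,\eta^N_t\ra = \la f,\eta^N_0\ra + \int_0^t \Gamma_s[f,\eta^N_s;\mu_s,\mu^N_s]\,ds + \la f,M^N_t\ra + R^N_t(f),
\end{align*}
where $\Gamma_s$ is affine in $\eta^N_s$ whose linearization around $\mu_s$ yields the two drift terms of (\ref{SPDEmain}), $M^N_t$ is a mean-zero martingale encoding the sampling noise in $(x_k,y_k)$, and $R^N_t(f)$ is a second-order Taylor remainder of size $O(1/\sqrt{N})$. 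For step (ii), the requirement $J\ge 3\lceil(d+1)/2\rceil+7$ guarantees via Sobolev embedding that $W^{J,2}_0\hookrightarrow C^3_b$ with enough buffer to absorb all derivative factors appearing in $C^f_x$ and in the remainder, and that $W^{J,2}_0$ is separable so that Mitoma's theorem applies. I would first bound $\sup_N\E[\|\eta^N_0\|_{-J,2}^2]$ via a Hoeffding/CLT estimate using the i.i.d.\ initialization and the compact support of $\mu_0$, propagate this bound to $\sup_{t\in[0,T]}\E[\|\eta^N_t\|_{-J,2}^2]$ through Gronwall applied to the pre-limit equation, and verify the Aldous-Kurtz tightness criterion for each scalar semimartingale $\la f,\eta^N_\cdot\ra$, $f\in W_0^{J,2}$; Mitoma's theorem then lifts this to tightness of $\{\eta^N_\cdot\}$ in $D_{W^{-J,2}}([0,T])$.

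For step (iii), I would fix a subsequential weak limit $\bar\eta$ and pass $N\to\infty$ in the pre-limit martingale problem. Continuity of $\sigma$ and its derivatives, together with $\mu^N\to\mu$ in probability from Theorem \ref{MainTheoremLLN}, identifies the drift as exactly the linear functional on the right-hand side of (\ref{SPDEmain}). The covariance of the limiting martingale $\bar M$ is read off from the predictable quadratic variation of $M^N$: a single SGD step contributes $N^{-1}\alpha^2 \mathcal{R}_{x_k,y_k,\mu_s}[f]\mathcal{R}_{x_k,y_k,\mu_s}[g]$ with $(x_k,y_k)\sim\pi$, and summing $\lfloor Nt\rfloor$ steps while centering against $\pi$ produces exactly the announced covariance with the centered integrand $\mathcal{R}_{x,y,\mu_s}[\cdot]-\int\mathcal{R}_{x,y,\mu_s}[\cdot]\pi(dx,dy)$. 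The Gaussianity of $\bar M$ follows from the martingale CLT under the Lindeberg condition, which is automatic because single-step jumps are of order $N^{-1/2}$.

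For step (iv) uniqueness, the SPDE is linear in $\bar\eta$ with coefficients that are smooth and bounded uniformly on $[0,T]$ (using $\sigma\in C^\infty_b$ and the moment control on $\mu_s$), so the difference of two solutions with the same initial datum and the same realization of $\bar M$ satisfies a deterministic linear Volterra equation in $W^{-J,2}$, and Gronwall in the Sobolev norm closes the argument. The main obstacle is step (iii): pinning down the exact covariance structure with the proper centering against $\pi$ and controlling the Taylor remainder $R^N_t(f)$ uniformly in $t$ and in $f\in W_0^{J,2}$, which is precisely what dictates the specific lower bound on $J$.
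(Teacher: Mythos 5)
This theorem is not proved in the present paper: the theorem header reads ``[See \cite{SirignanoSpiliopoulosNN2}]'', and Section~\ref{S:Review} is explicitly a review of prior results for the boundary cases $\gamma=1/2$ and $\gamma=1$. There is therefore no in-paper proof for your proposal to be compared against; you would need to consult \cite{SirignanoSpiliopoulosNN2} for the actual argument.

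That said, your four-step outline is a faithful description of the strategy used in \cite{SirignanoSpiliopoulosNN2}: derive the pre-limit semimartingale decomposition of $\la f,\eta^N_t\ra$ by Taylor expansion and comparison against the mean-field evolution \eqref{EvolutionEquationIntroduction}; obtain tightness in $D_{W^{-J,2}}([0,T])$ by bounding $\sup_{t\le T}\E\|\eta^N_t\|^2_{-J,2}$ with Gr\"onwall and then invoking an Aldous-type criterion together with Mitoma's theorem to pass from scalar to Hilbert-space tightness; identify subsequential limits as solutions of the martingale problem for \eqref{SPDEmain} using $\mu^N\to\mu$, the martingale CLT, and the predictable quadratic variation computation you describe; and close with a Gr\"onwall uniqueness argument for the linear SPDE. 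Two points you could sharpen if you were to flesh this out. First, the specific threshold $J\ge 3\ceil{\frac{d+1}{2}}+7$ is not merely to get $W^{J,2}_0\hookrightarrow C^3_b$: the argument in \cite{SirignanoSpiliopoulosNN2} runs through a chain of nested Sobolev spaces and exploits that the inclusion $W^{J,2}_0\hookrightarrow W^{J',2}_0$ is Hilbert--Schmidt when $J-J'>(d+1)/2$, which is what yields a trace-class covariance for $\bar M$ and a summable basis expansion for the uniform-in-$t$ second-moment bound; your phrase ``with enough buffer'' papers over this. Second, the covariance read-off from a single SGD step needs the centering against $\pi$ already at the pre-limit level (conditioning on $\mathcal{F}^N_{k}$ and using that $(x_k,y_k)\sim\pi$ i.i.d.), and one must verify that replacing $\mu^N_{k/N}$ by $\mu_{k/N}$ inside $\mathcal{R}_{x,y,\cdot}$ costs only $o(1)$ in the quadratic variation, which uses the LLN rate rather than just $\mu^N\to\mu$ in probability. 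Neither issue is a gap in the outline per se, but both are where the real technical work lies.
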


Let us choose now in Theorem \ref{MainTheoremLLN}, the test function to be $f(c,w;x)=c\sigma(wx)$. Then we have that the neural network's output in the limit as $N\rightarrow\infty$ satisfy the equation
\begin{eqnarray}
h_{t}(x)  &=&  \int_0^t   \bigg{(} \int_{\mathcal{X}\times\mathcal{Y}}   \alpha \big{(} y -  h_{s}(x') \big{)} \la B_{x,x'}(c,w), \mu_s \ra   \pi(dx',dy)\bigg{)} ds,
\label{Eq:LLN_gamma1}
\end{eqnarray}
where the measure $\mu_s$ is not constant any more in time and is given as the unique weak solution of the stochastic evolution equation (\ref{EvolutionEquationIntroduction}). Comparing now (\ref{Eq:LLN_gamma1}) with the result of Theorems \ref{LLN:theorem} and \ref{MainTheorem3} we notice that the main difference is that in the case of $\gamma=1$, $h_{t}$ is driven by the measure $\mu_{s}$ for $s\in[0,t]$ which in the case of $\gamma=1$ is not a constant, whereas when $\gamma\in[1/2,1)$ the limiting neural network's output $h_{t}$ is driven by the constant measure $\mu_{0}$ at initialization.

In regards to the first order fluctuations (see Theorem \ref{MainTheoremCLT}) we have that in the case $\gamma=1$, $\phi=1/2$ and that the limit of $K^N_t = \sqrt{N} \left(h^N_t - h_t\right)$ as $N\rightarrow\infty$, $K_{t}$ satisfies
\begin{eqnarray}
K_{t}(x)  &=& \mathcal{G}(x) +    \int_0^t \int_{\mathcal{X}\times\mathcal{Y}} \alpha \big{(} y -  h_{s}(x') \big{)} \la \nabla(c\sigma(w  x)) \cdot \nabla(c\sigma(w x')), \bar \eta_{s} \ra \pi(dx,dy)  ds \notag \\
& &-  \int_0^t   \int_{\mathcal{X}\times\mathcal{Y}}  \alpha \la c \sigma(w x), \bar \eta_s \ra \la \nabla(c\sigma(w  x))\cdot \nabla(c\sigma(w  x')), \mu_{s} \ra \pi(dx,dy)  ds +\la c\sigma(wx), \bar M_t \ra.
\label{SPDEmainEx}
\end{eqnarray}
where the limiting $\mu_{s}$ and $\eta_s$ are given as the unique weak solutions to the evolution equations (\ref{EvolutionEquationIntroduction}) and (\ref{SPDEmain}) respectively.

Notice that in the case $\gamma=1$, the fluctuations are of the order $\frac{1}{\sqrt{N}}$ as opposed to the order $\frac{1}{N^{\gamma-1/2}}$ for $\gamma\in(1/2,1)$ as we saw in Theorem \ref{CLT:theorem}. This means that for fixed finite $N$, the order of the variance monotonically decreases in $\gamma\in[1/2,1]$ with the smallest order observed as $\gamma\rightarrow 1$. In addition, the  limiting fluctuations $K_{t}$ as described by (\ref{SPDEmainEx}) have two sources of randomness, one is the Gaussian randomeness due to the initialization given by the Gaussian field $\mathcal{G}(x)$ and the other one being the zero mean Gaussian noise given by the martingale term $\la c\sigma(wx), \bar M_t \ra$ with variance stricture given by Theorem \ref{MainTheoremCLT} with $f(c,w;x)=c\sigma(wx)$.

Lastly, even though we do not show this here, as there is consistency in the results as $\gamma\rightarrow 1/2$, there is consistency in the results as $\gamma\rightarrow 1$ as well. In particular, by putting together (\ref{measure_expansion})-(\ref{Eq:l_equation}) and (\ref{network_expansion})-(\ref{Eq:Qj_formula1})-(\ref{Eq:Qk_formula1}) and then taking $\gamma\rightarrow 1$ and $N\rightarrow\infty$ one recovers, at least at a heuristic level, the $\gamma=1$ results of Theorem \ref{MainTheoremLLN} and (\ref{Eq:LLN_gamma1}). Given that the theoretical analysis for the $\gamma=1$ case is considerably more complicated than the theoretical analysis for the $\gamma<1$ case, see for example \cite{Chizat2018,Montanari,RVE,SirignanoSpiliopoulosNN1,SirignanoSpiliopoulosNN2,SirignanoSpiliopoulosNN3}, the latter observation may be useful in studying the $\gamma=1$ case by viewing it as an approximation of the $\gamma<1$ case as $\gamma\rightarrow 1$. Studying this here is beyond the scope of this work and thus we leave this for future work.

\section{Further discussion of the theoretical results and numerical studies}\label{S:Numerics}

The results of Section \ref{sec::main_results} establish that the neural network's output $h^{N,\gamma}_t$ has an expansion as $N\rightarrow\infty $ given by \eqref{Eq:FormalExpansion}. Beyond the observations made in Section \ref{S:Review} we can also draw the following general conclusions.
\begin{itemize}
\item{The theoretical results, and thus expansion \eqref{Eq:FormalExpansion}, hold if the learning rate is chosen to be of the order of $\alpha^N = \alpha/N^{2-2\gamma}$ for $0<\alpha<\infty$ a constant of order one.}
\item{Expansion \eqref{Eq:FormalExpansion} shows that, to leading order in $N$, there is no bias-variance trade-off, in the sense that as the number of hidden units $N \to \infty$ and the training time $t \to \infty$, both the variance and the bias decrease exponentially fast for all $\gamma \in (1/2,1)$. Indeed, to leading order in $N$, the expression \eqref{Eq:FormalExpansion} shows that the norm of the variance-covariance matrix of the vector $h_t^{N,\gamma}$ is of the order of $N^{-2(\gamma-1/2)}\Vert e^{-At}\text{Var}(\mathcal{G})e^{-A^{\top}t}\Vert$ and the bias is of the order of $\sum_{j=1}^{\nu-1} N^{-j(1-\gamma)}\Vert Q^j_t \Vert$. This directly implies that for fixed $\gamma$ both variance and bias decay as $N,t\rightarrow\infty$. Recall that, under Assumption \ref{assumption1},  the matrix $A$ is positive definite and for each $j=1,\ldots, \nu-1$, and fixed $\nu\in\mathbb{N}$, $\Vert Q^j_t \Vert$ decay exponentially fast to zero as $t\rightarrow\infty$. }
\item{For fixed (but large) $N<\infty$ and $t<\infty$, we see that the magnitude of the variance of the network output $N^{-2(\gamma-1/2)}\Vert e^{-At}\text{Var}(\mathcal{G})e^{-A^{\top}t}\Vert$ is monotonically decreasing  as $\gamma \to 1$. In addition, as also implied by Theorem \ref{MainTheorem3}, when $\gamma=1/2$, the variance, to leading order, is of order $\Vert e^{-At}\text{Var}(\mathcal{G})e^{-A^{\top}t}\Vert$, i.e., is independent of $N$.}
\item{When $\gamma > 3/4$ (which corresponds to $\nu>1$), there is, to leading order in $N$, a bias effect, which for fixed $N$ is larger when $\gamma \to 1$. Notice that for $\gamma\in(1/2,3/4)$ there is no bias involving $N$ to leading order. In particular, $\nu=1$ in this case and there is no contribution from the term  $\sum_{j=1}^{\nu-1} N^{-j(1-\gamma)}\Vert Q^j_t\Vert$.}
\item{As the following numerical results demonstrate, both train and test accuracy of the fitted neural networks increases monotonically as  $\gamma$ approaches the value 1 which corresponds to the mean field normalization.}
\end{itemize}

In this section, we investigate numerically the performance of neural networks scaled by $1/N^{\gamma}$ with $\gamma\in[1/2,1]$. Apart from characterizing performance based on the behavior of bias and variance of the neural network's output from the leading order terms, one would also like to characterize performance based on train and test accuracy. For this reason, we have performed a number of numerical studies to compare train and test accuracy for neural networks scaled by the normalization $1/N^{\gamma}$. Our numerical studies involve the well known {MNIST \cite{MNIST} and the CIFAR10  \cite{Krizhevsky} data sets}. To be more precise, for each numerical study in this section, we train scaled neural networks with different hidden units $N=100$, $N=500$, $N=1,000$, and $N=3,000$, and for each $N$, with $\gamma = 0.5$, $\gamma = 0.6$, $\gamma = 0.7$, $\gamma = 0.8$, $\gamma = 0.9$, and $\gamma = 1$. We compare the test accuracy for different $\gamma$-scaled networks for each hidden unit $N$.
We find numerically that both train and test accuracy of the fitted neural networks increases monotonically in $\gamma \in [1/2,1]$. This suggests that the mean-field normalization $1/N$ that corresponds to $\gamma=1$ may have certain advantages over scalings $1/N^{\gamma}$ for $\gamma\in[1/2,1)$ when it comes to test accuracy.

The result of the first numerical study is presented in Figure \ref{Fig:numeric_mnist_ce}. Scaled single layer neural networks are trained via cross entropy loss function to classify images from the MNIST dataset \cite{MNIST}, which includes 70,000 images of handwritten integers from 0 to 9. The learning rate is taken to be $\alpha^N = 1/N^{2-2\gamma}$, as suggested by our theoretical analysis. The neural networks are trained to identify the handwritten numbers using the image pixels as an input. In the MNIST dataset, each image has 784 pixels, 60,000 images are used as train images and 10,000 images are test images. We train scaled single layer neural networks with different hidden units for each $\gamma$, and the test accuracy for each network as the number of epochs increases are shown in Figure \ref{Fig:numeric_mnist_ce}. While for each number of hidden units, all scaled networks eventually achieve reasonably high test accuracy, we can see that as $\gamma$ increases from 0.5 to 1,  the test accuracy increases. As the number of hidden units increases, the dispersion in test accuracy for different values of $\gamma$ increases.
\begin{figure}[ht!]
  \centering
  \begin{subfigure}[b]{0.45\linewidth}
    \includegraphics[width=\linewidth]{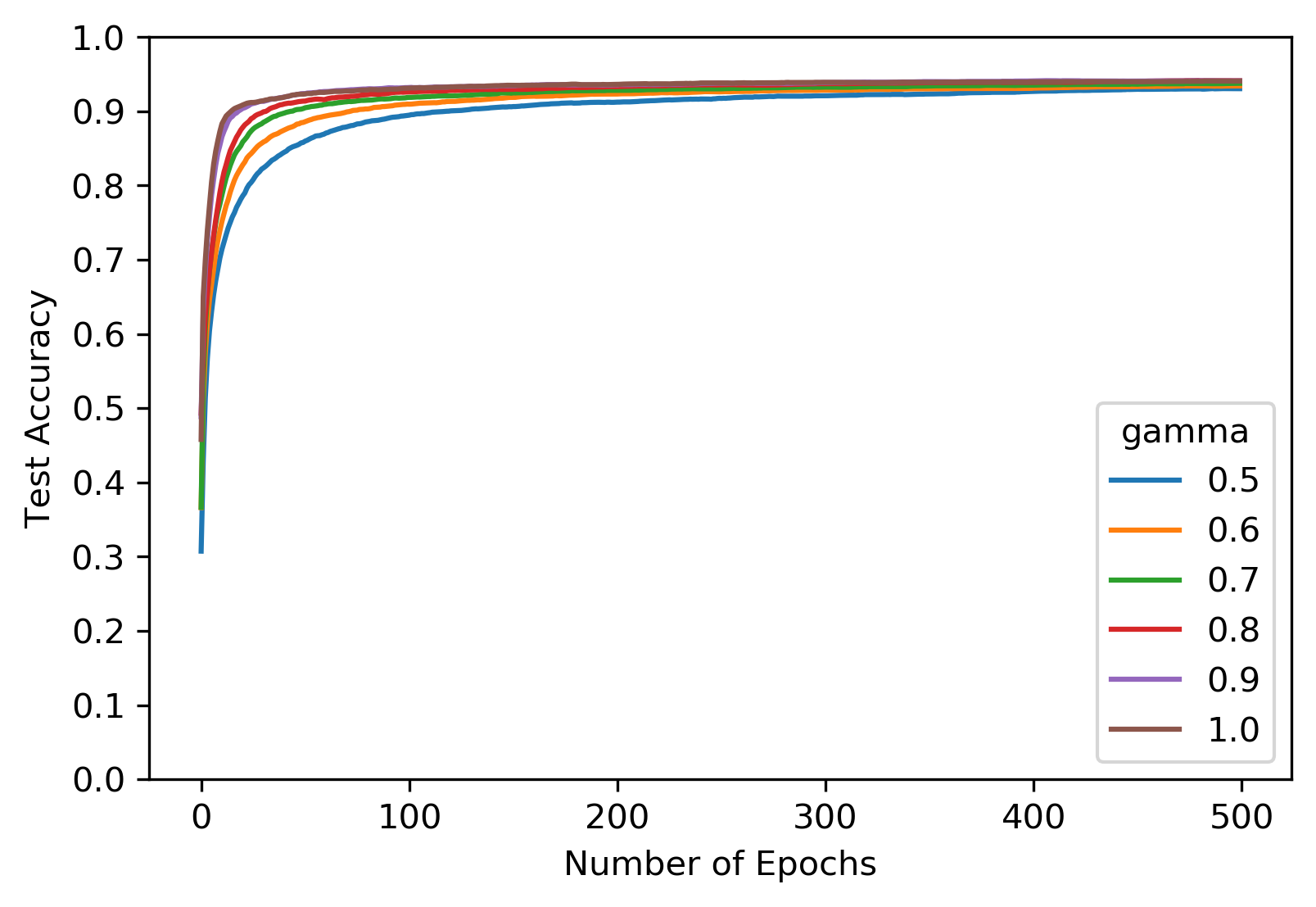}
     \caption{$N=100$ hidden units.}
  \end{subfigure}
  \begin{subfigure}[b]{0.45\linewidth}
    \includegraphics[width=\linewidth]{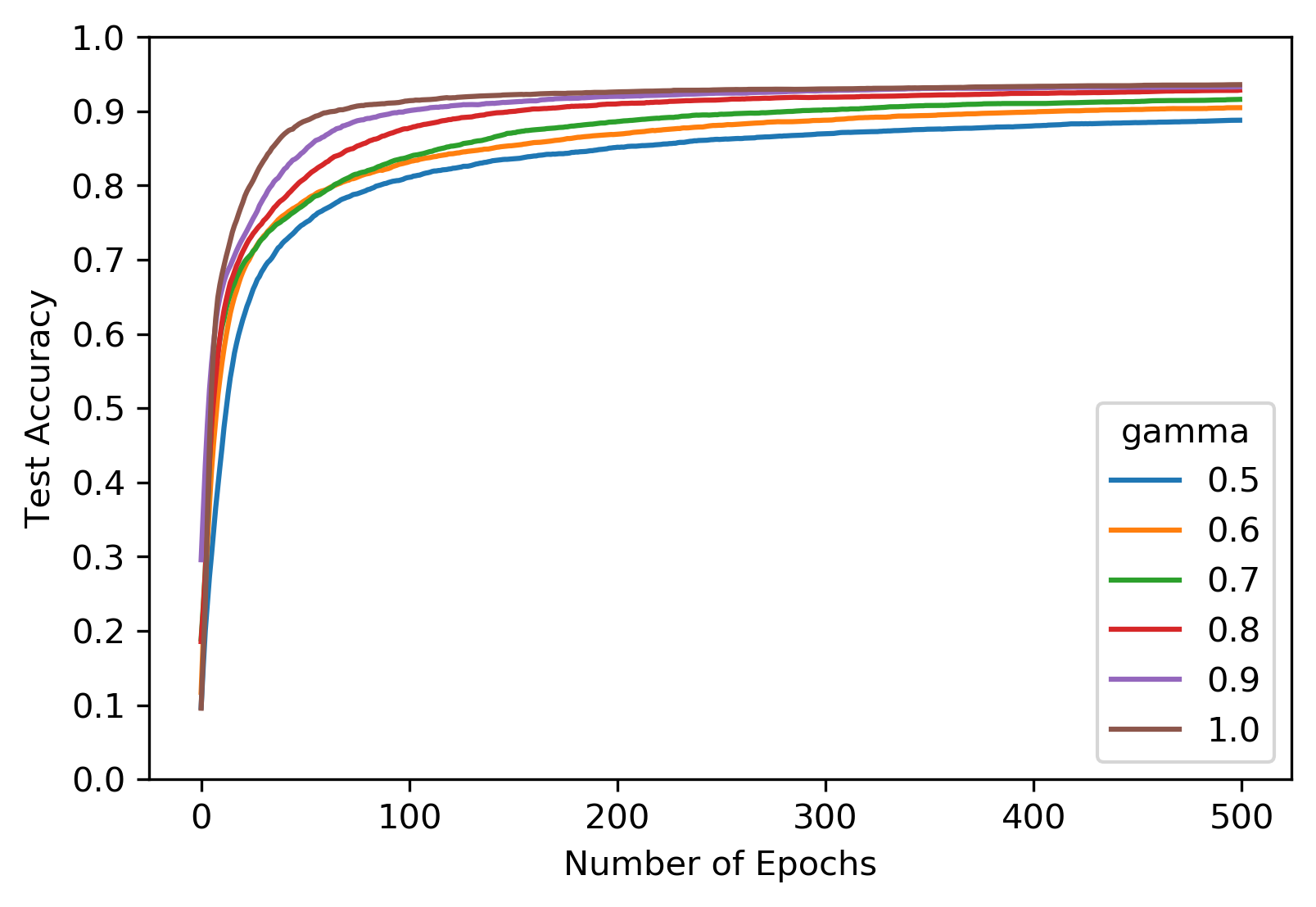}
    \caption{$N=500$ hidden units.}
  \end{subfigure}
  \begin{subfigure}[b]{0.45\linewidth}
    \includegraphics[width=\linewidth]{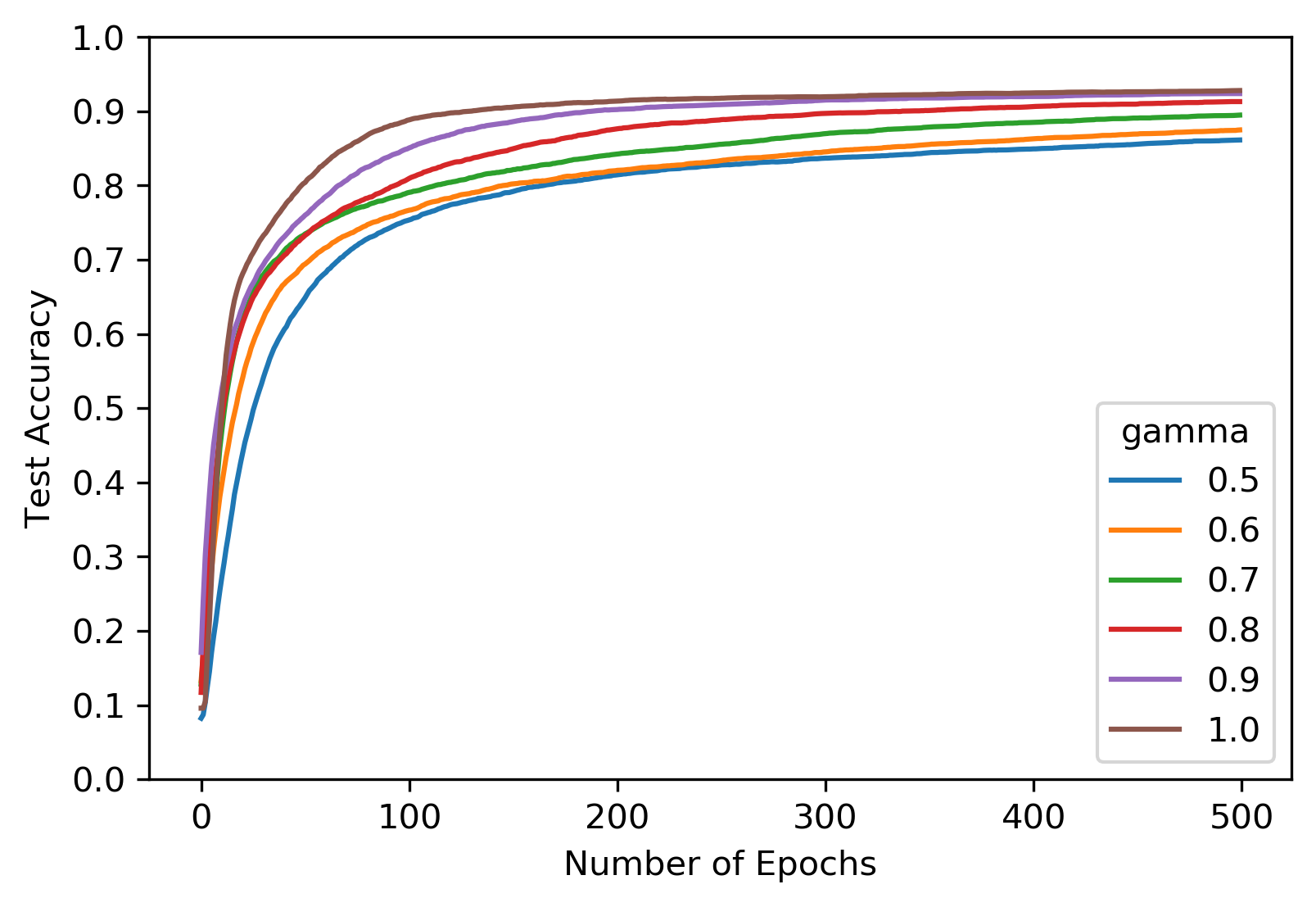}
    \caption{$N=1,000$ hidden units.}
  \end{subfigure}
  \begin{subfigure}[b]{0.45\linewidth}
    \includegraphics[width=\linewidth]{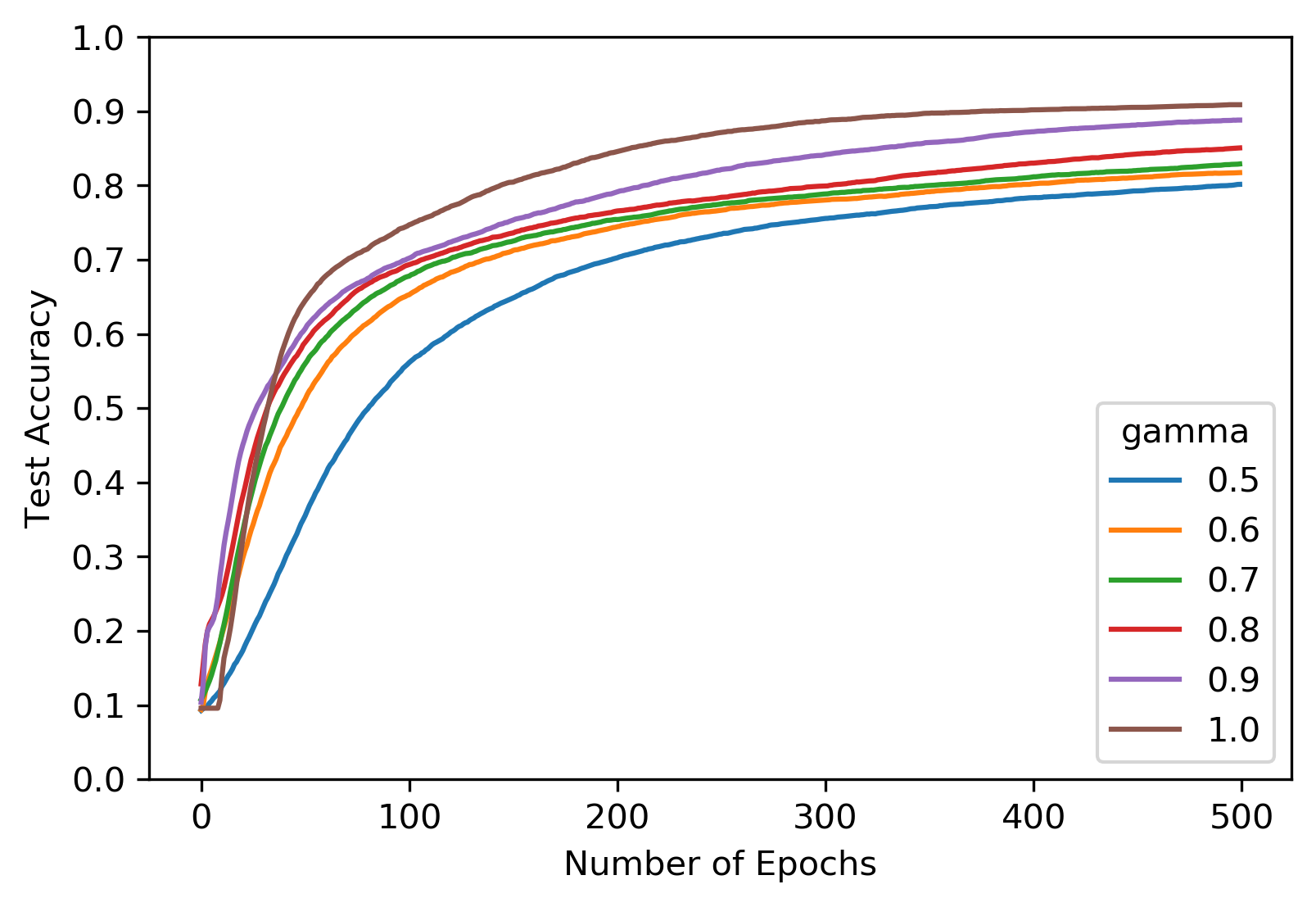}
    \caption{$N=3,000$ hidden units.}
  \end{subfigure}
  \caption{Performance of scaled neural networks on MNIST test dataset (cross entropy loss).}
  \label{Fig:numeric_mnist_ce}
\end{figure}

The result of the second numerical study is presented in Figure \ref{Fig:numeric_mnist_mse}. Scaled single layer neural networks are trained via mean squared error loss function to classify images from the MNIST dataset. Specifically, we apply the standard one-hot encoding to the image labels and apply the MSE loss to the encoded labels and the model output. The learning rate is taken to be $\alpha^N = 1/N^{2-2\gamma}$ as in the first numerical study. We train scaled neural networks for each $\gamma$ and hidden units $N$, and the test accuracy for each network as the number of epochs increases are shown in Figure \ref{Fig:numeric_mnist_mse}. As for the models trained via cross entropy loss, the test accuracy increases as $\gamma$ increases from 0.5 to 1, and the dispersion in test accuracy for different values of $\gamma$ increases as the number of hidden units increases. As expected though, the test accuracy is consistently worse for the models trained via mean squared error loss than for those trained via cross entropy loss.
\begin{figure}[ht!]
  \centering
  \begin{subfigure}[b]{0.45\linewidth}
    \includegraphics[width=\linewidth]{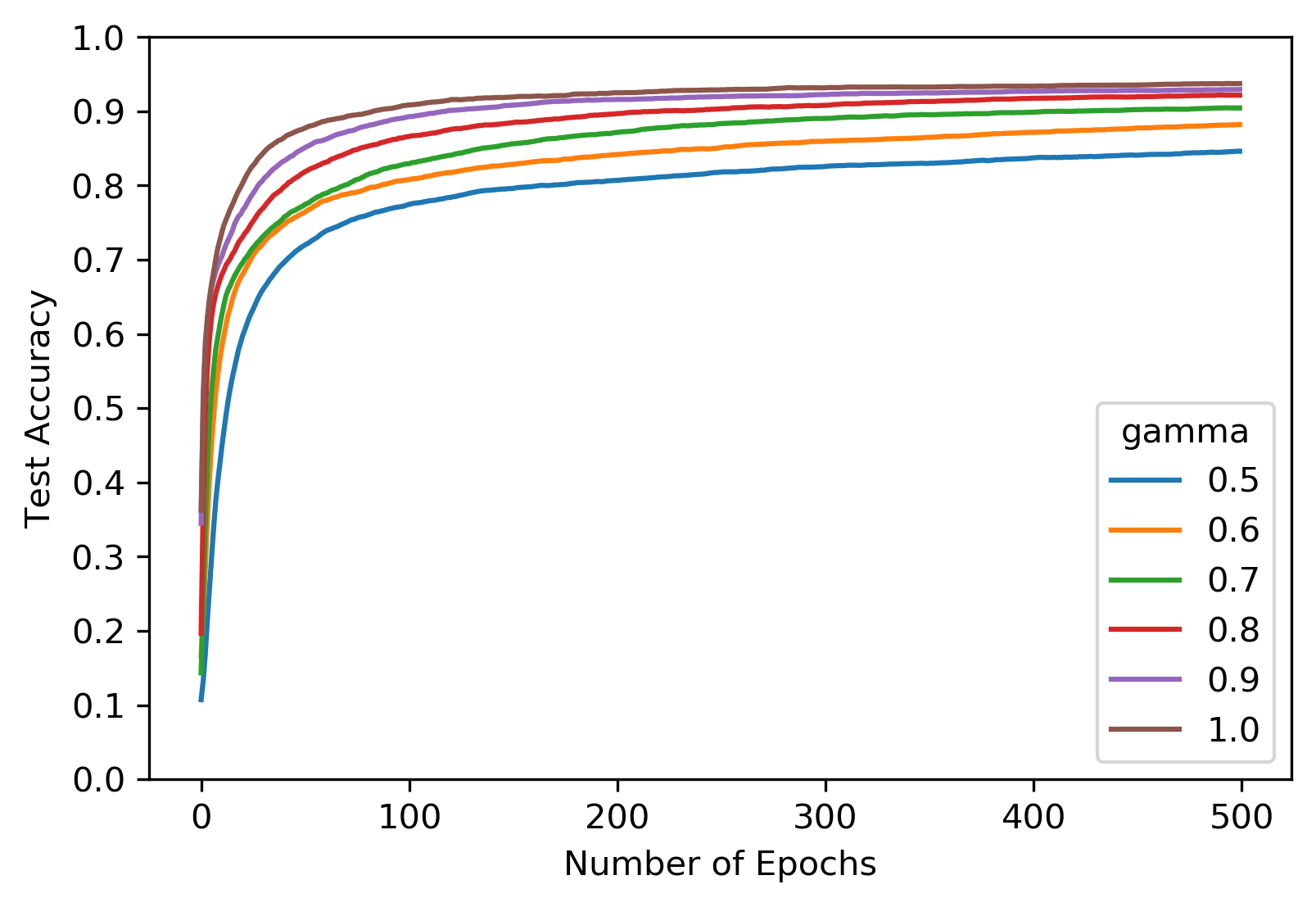}
     \caption{$N=100$ hidden units.}
  \end{subfigure}
  \begin{subfigure}[b]{0.45\linewidth}
    \includegraphics[width=\linewidth]{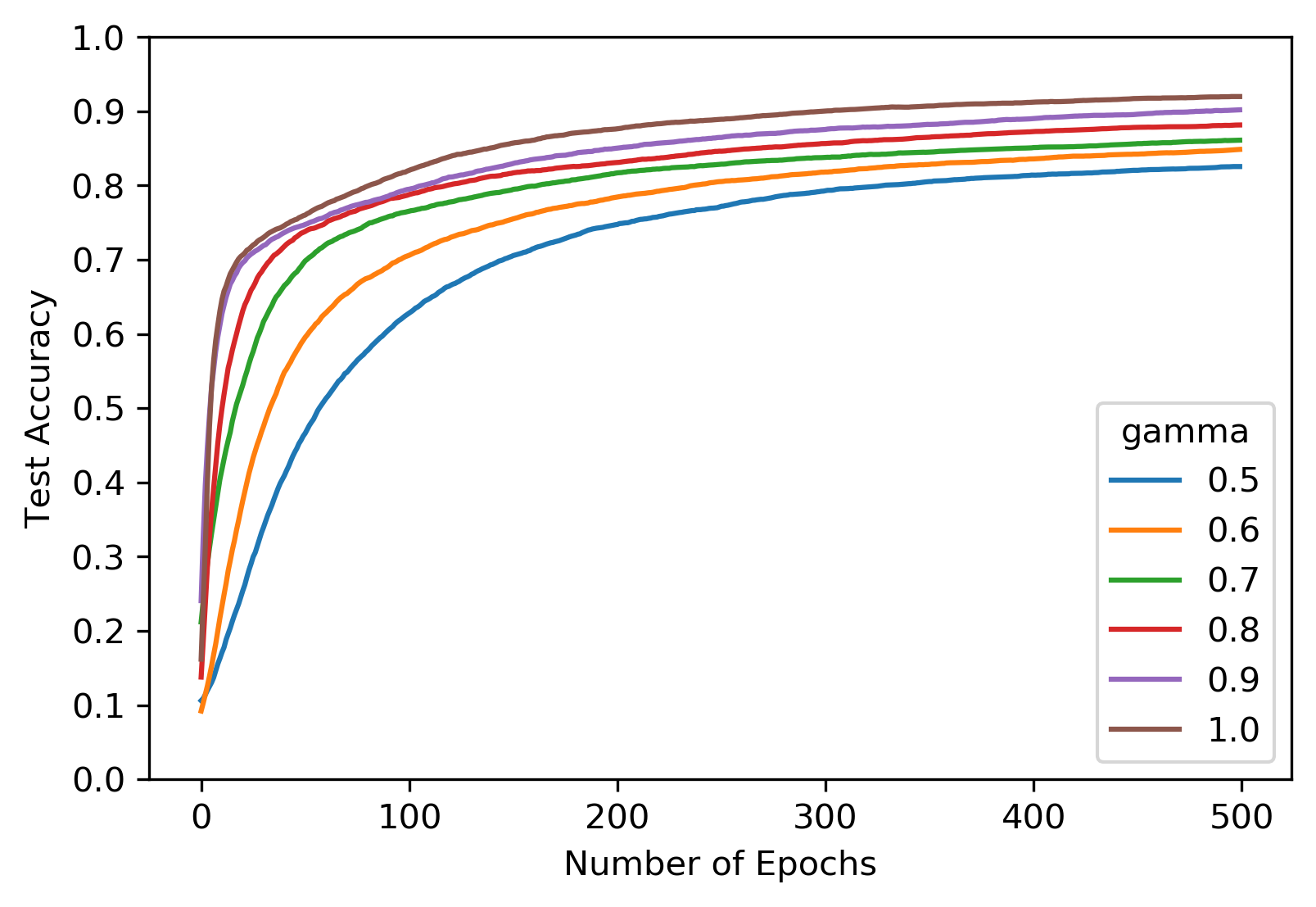}
    \caption{$N=500$ hidden units.}
  \end{subfigure}
  \begin{subfigure}[b]{0.45\linewidth}
    \includegraphics[width=\linewidth]{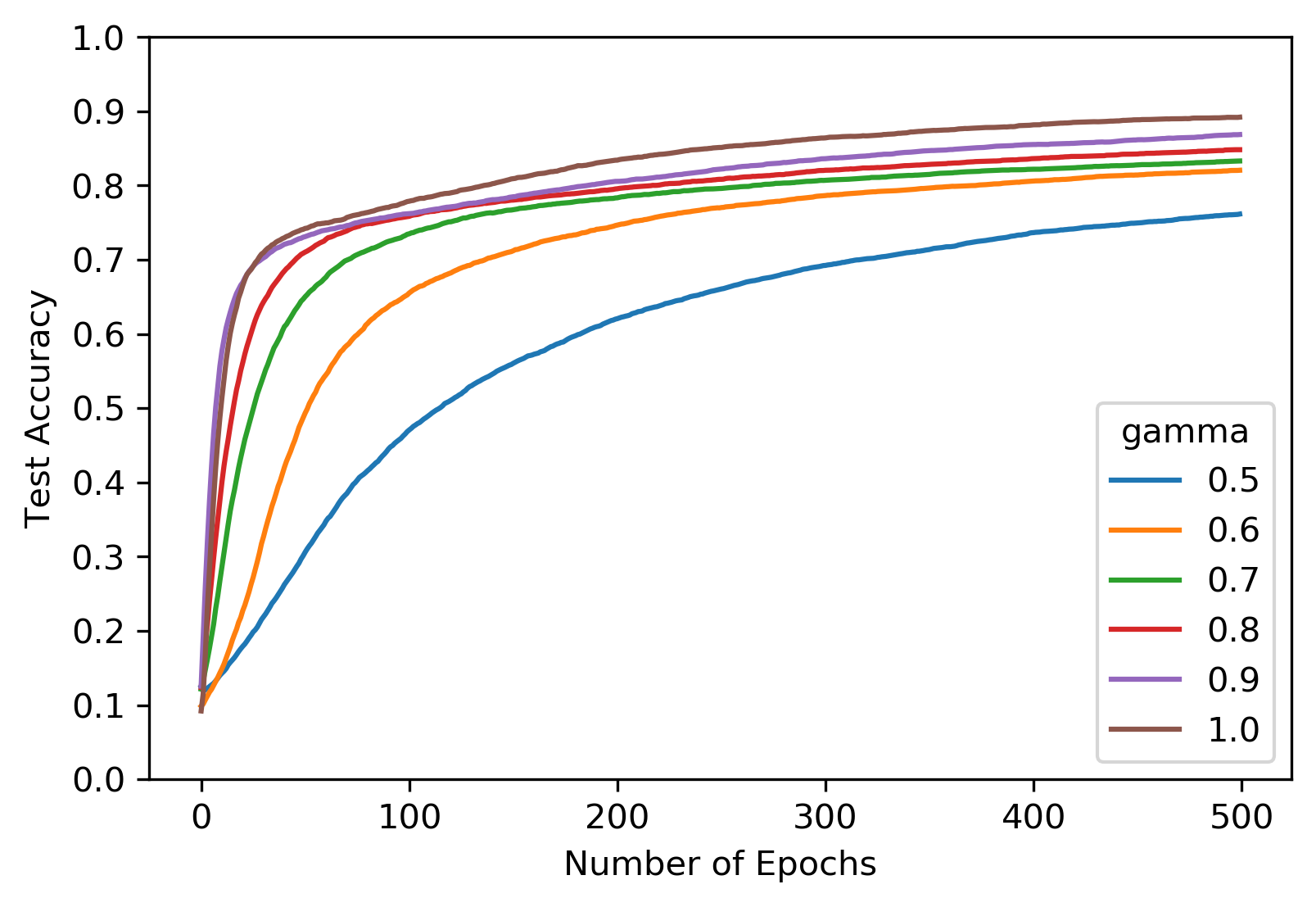}
    \caption{$N=1,000$ hidden units.}
  \end{subfigure}
  \begin{subfigure}[b]{0.45\linewidth}
    \includegraphics[width=\linewidth]{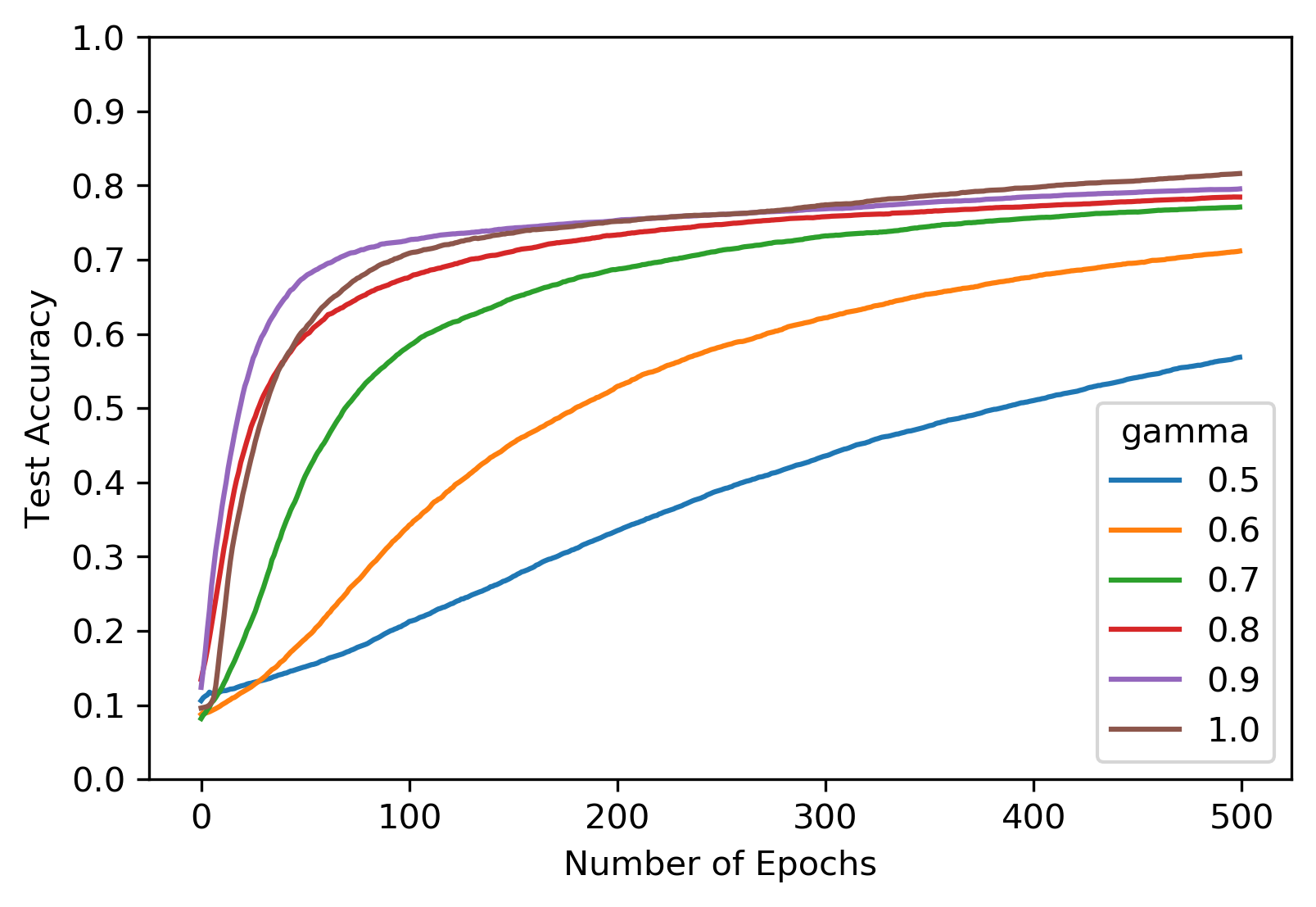}
    \caption{$N=3,000$ hidden units.}
  \end{subfigure}
  \caption{Performance of scaled neural networks on MNIST test dataset (MSE loss).}
  \label{Fig:numeric_mnist_mse}
\end{figure}

For the third numerical study, scaled neural networks are trained to classify images from the CIFAR10 dataset  \cite{Krizhevsky}, which contains 60,000 color images in 10 classes (airplane, automobile, bird, cat, deer, dog, frog, horse, ship, and truck). The dataset is divided into 50,000 training images and 10,000 test images, and each image has $32 \times 32 \times 3$ pixels. Neural networks are trained to correctly classify each image using image pixels as the input. We train scaled convolutional neural networks for the CIFAR10 datatset, and the results are shown in Figure \ref{Fig:numeric_cnn_cifar10}. For each of the networks, there are 8 convolution layers which each have 64 channels. A fully-connected layer then follows. We note that for all cases shown in Figure  \ref{Fig:numeric_cnn_cifar10}, the test accuracy is improved  as $\gamma$ gets closer to 1. The dispersion in test accuracy for different values of $\gamma$ does not seem to change significantly as the number of hidden units increases. 
\begin{figure}[ht!]
  \centering
  \begin{subfigure}[b]{0.45\linewidth}
    \includegraphics[width=\linewidth]{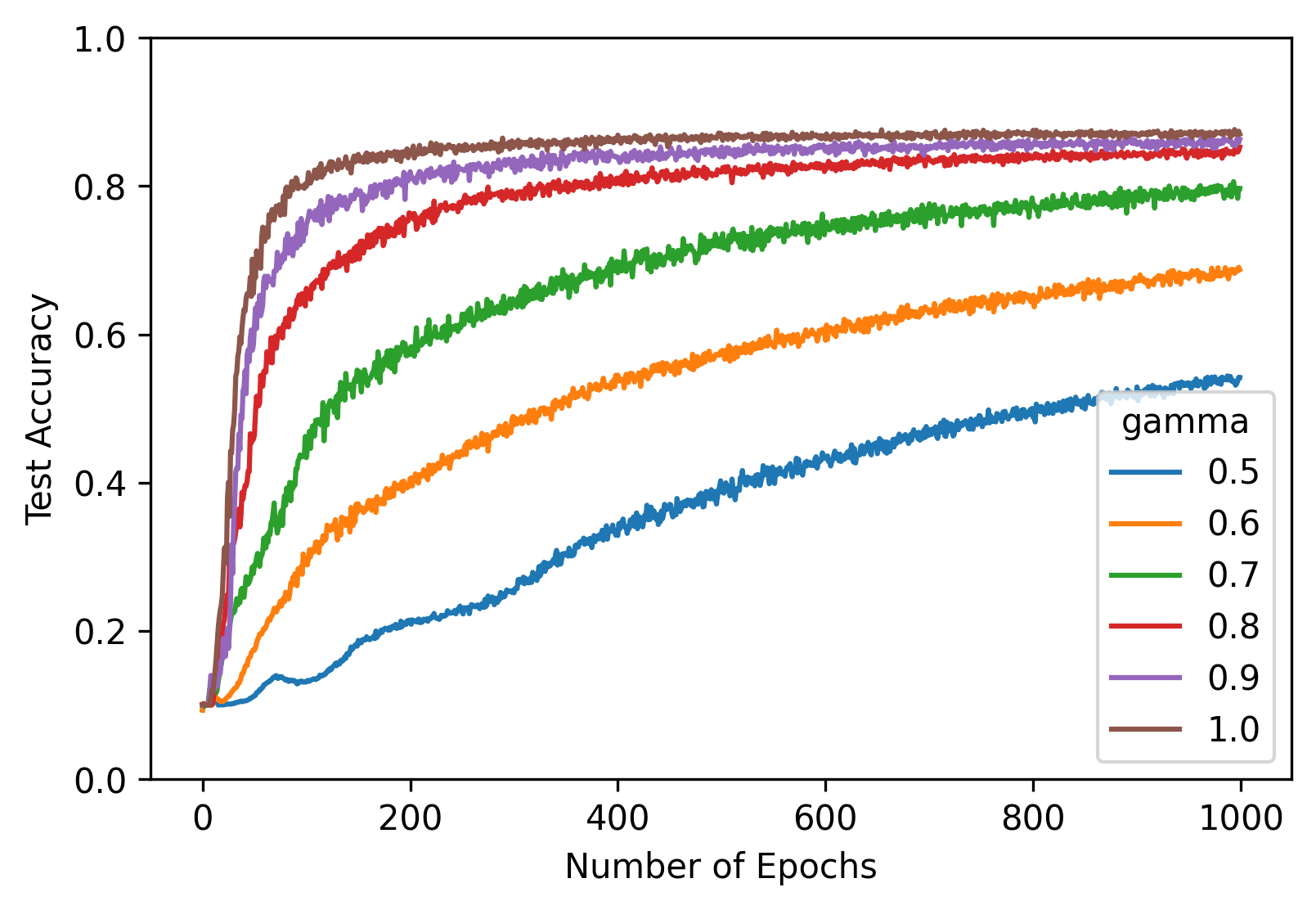}
     \caption{$N=100$ hidden units.}
  \end{subfigure}
  \begin{subfigure}[b]{0.45\linewidth}
    \includegraphics[width=\linewidth]{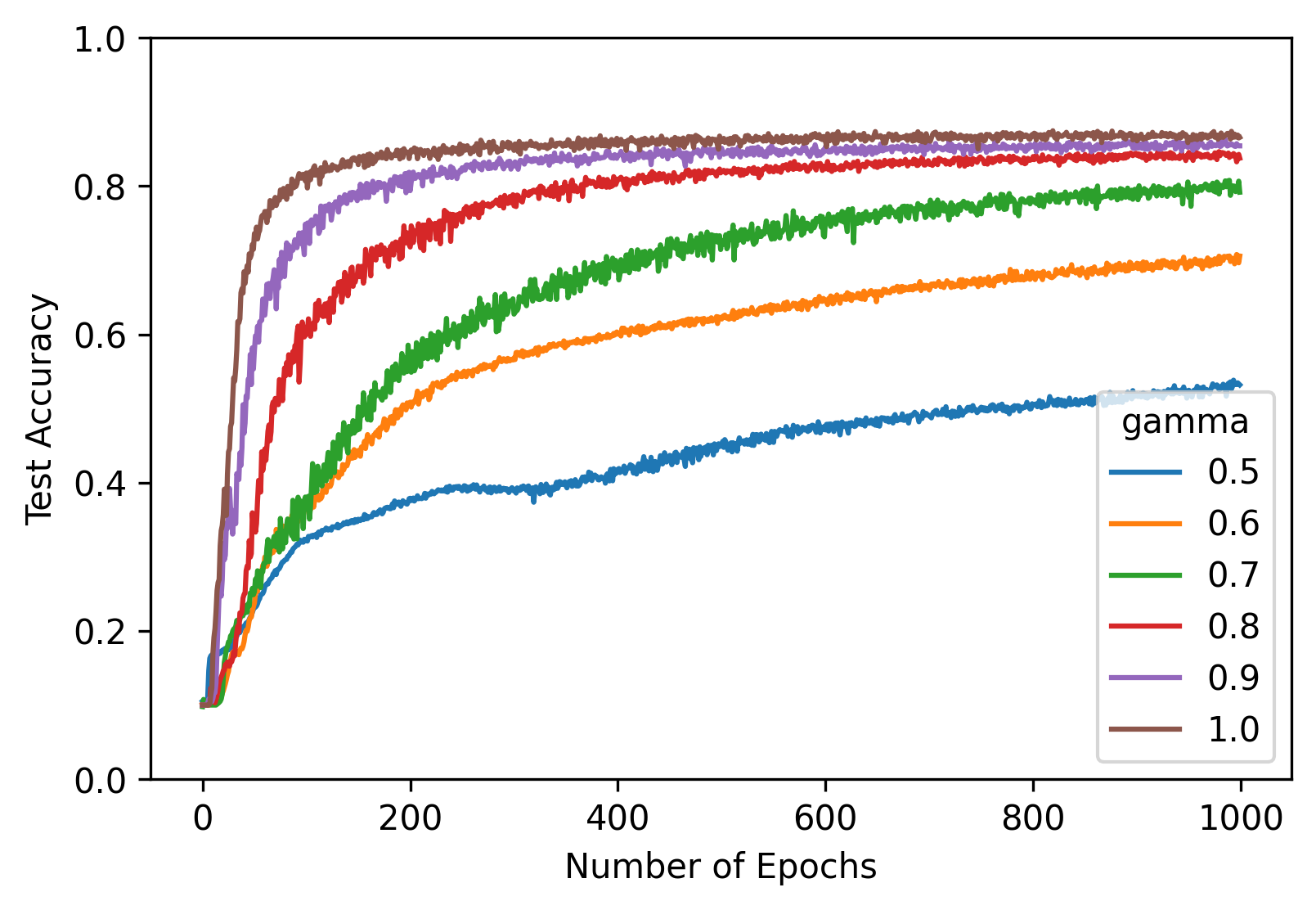}
    \caption{$N=500$ hidden units.}
  \end{subfigure}
  \begin{subfigure}[b]{0.45\linewidth}
    \includegraphics[width=\linewidth]{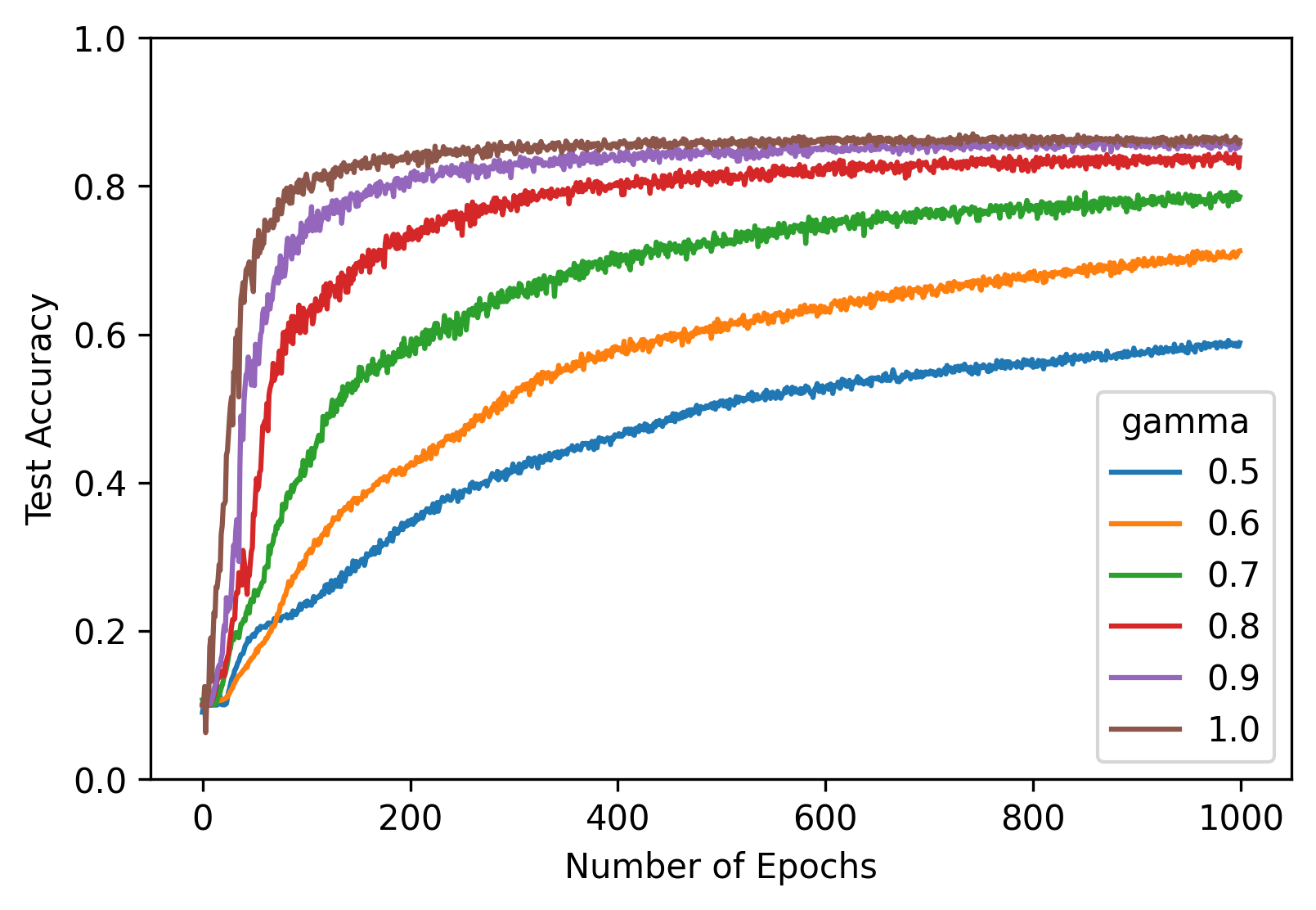}
    \caption{$N=1,000$ hidden units.}
  \end{subfigure}
  \begin{subfigure}[b]{0.45\linewidth}
    \includegraphics[width=\linewidth]{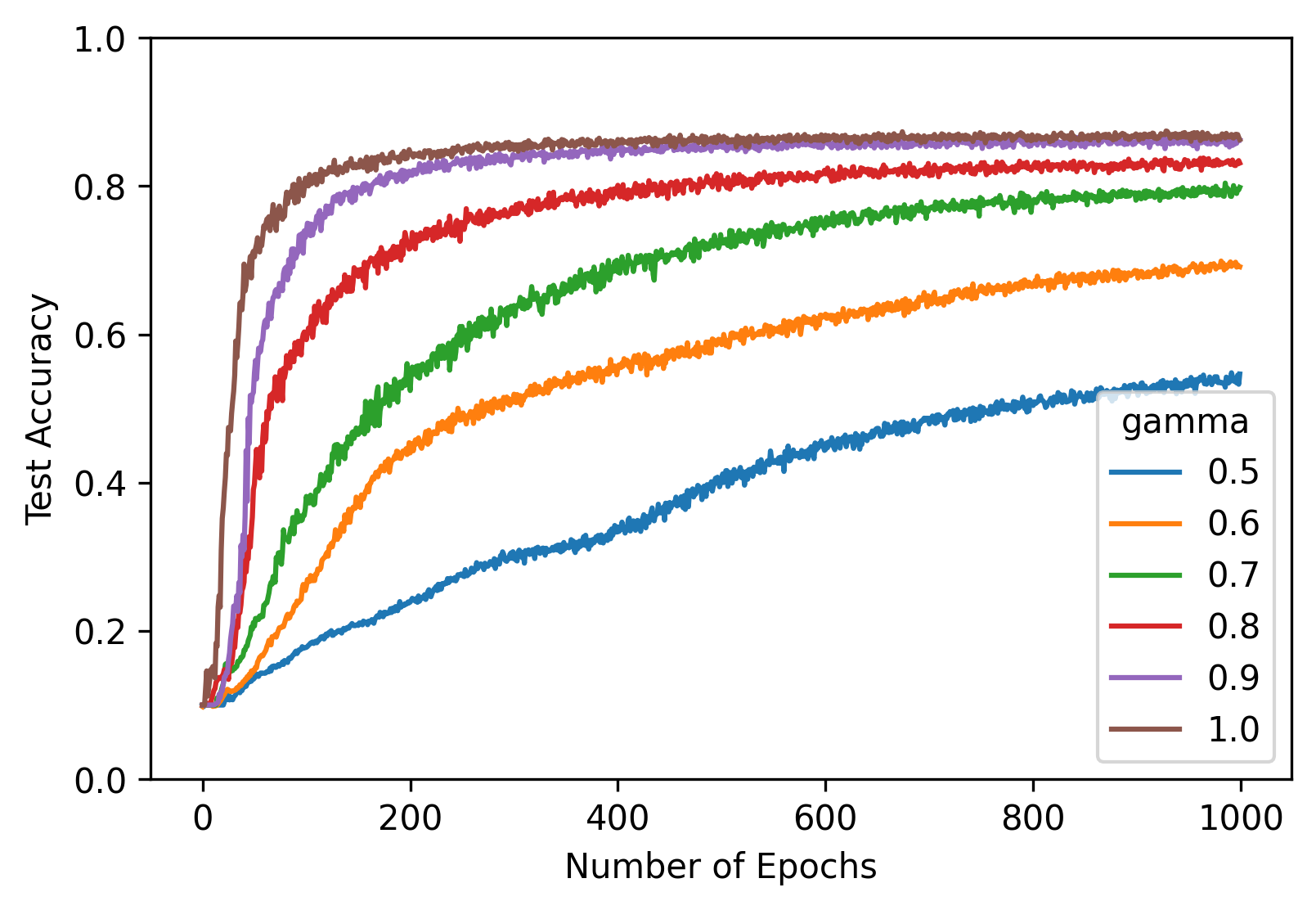}
    \caption{$N=3,000$ hidden units.}
  \end{subfigure}
  \caption{Performance of scaled convolutional neural networks on CIFAR10 test dataset (cross entropy loss).}
  \label{Fig:numeric_cnn_cifar10}
\end{figure}

Finally, we considered in-sample training accuracy for all models. In each case, the performance showed the same pattern as for the out-of-sample data. Namely, the in-sample accuracy increases monotonically in $\gamma\in[1/2,1]$. Two examples are shown in Figure \ref{Fig:numeric_mnist_mse_train} for the models trained using the mean squared error loss for the MNIST dataset.

\begin{figure}[ht!]
  \centering
  \begin{subfigure}[b]{0.45\linewidth}
    \includegraphics[width=\linewidth]{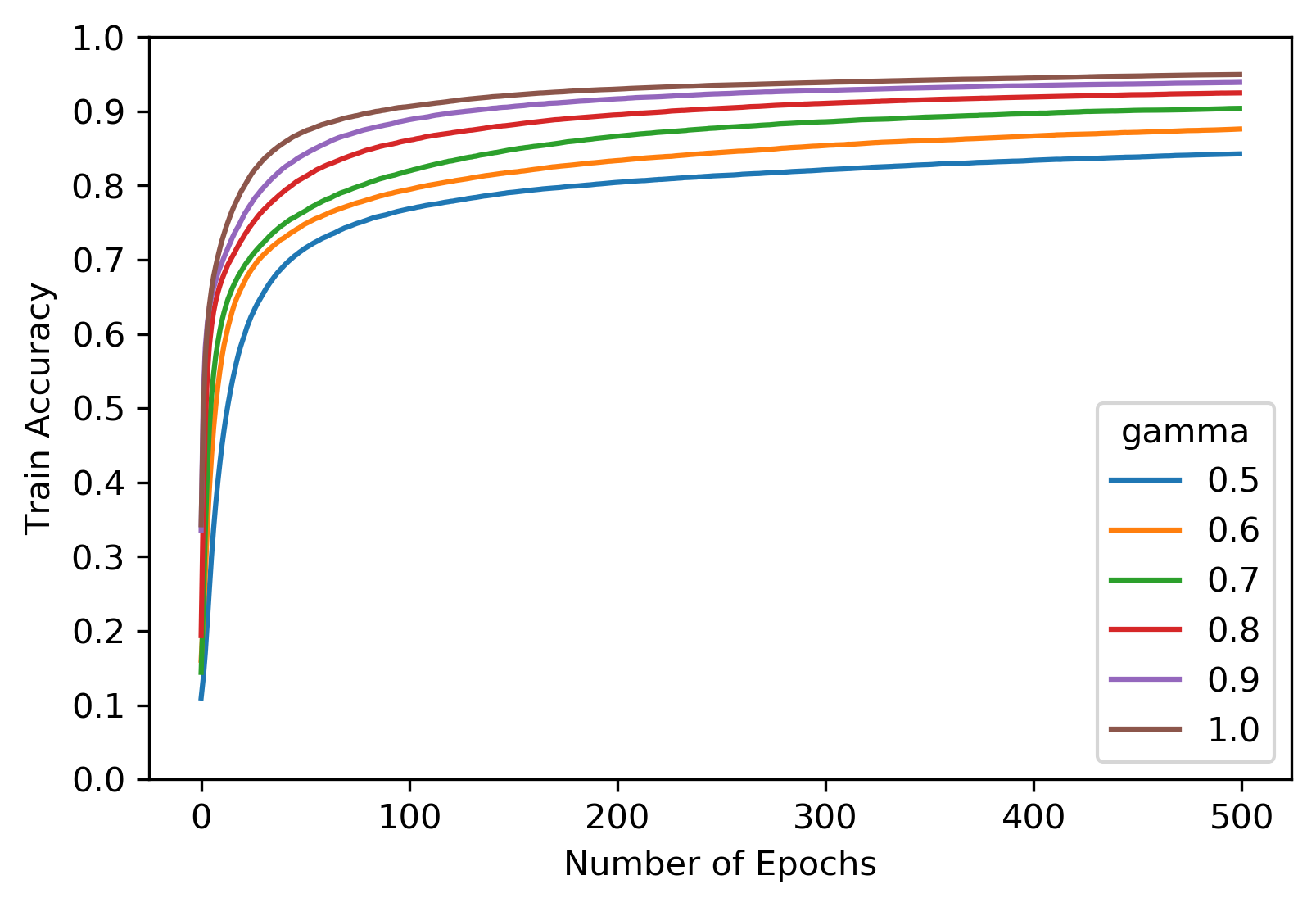}
     \caption{$N=100$ hidden units.}
  \end{subfigure}
  \begin{subfigure}[b]{0.45\linewidth}
    \includegraphics[width=\linewidth]{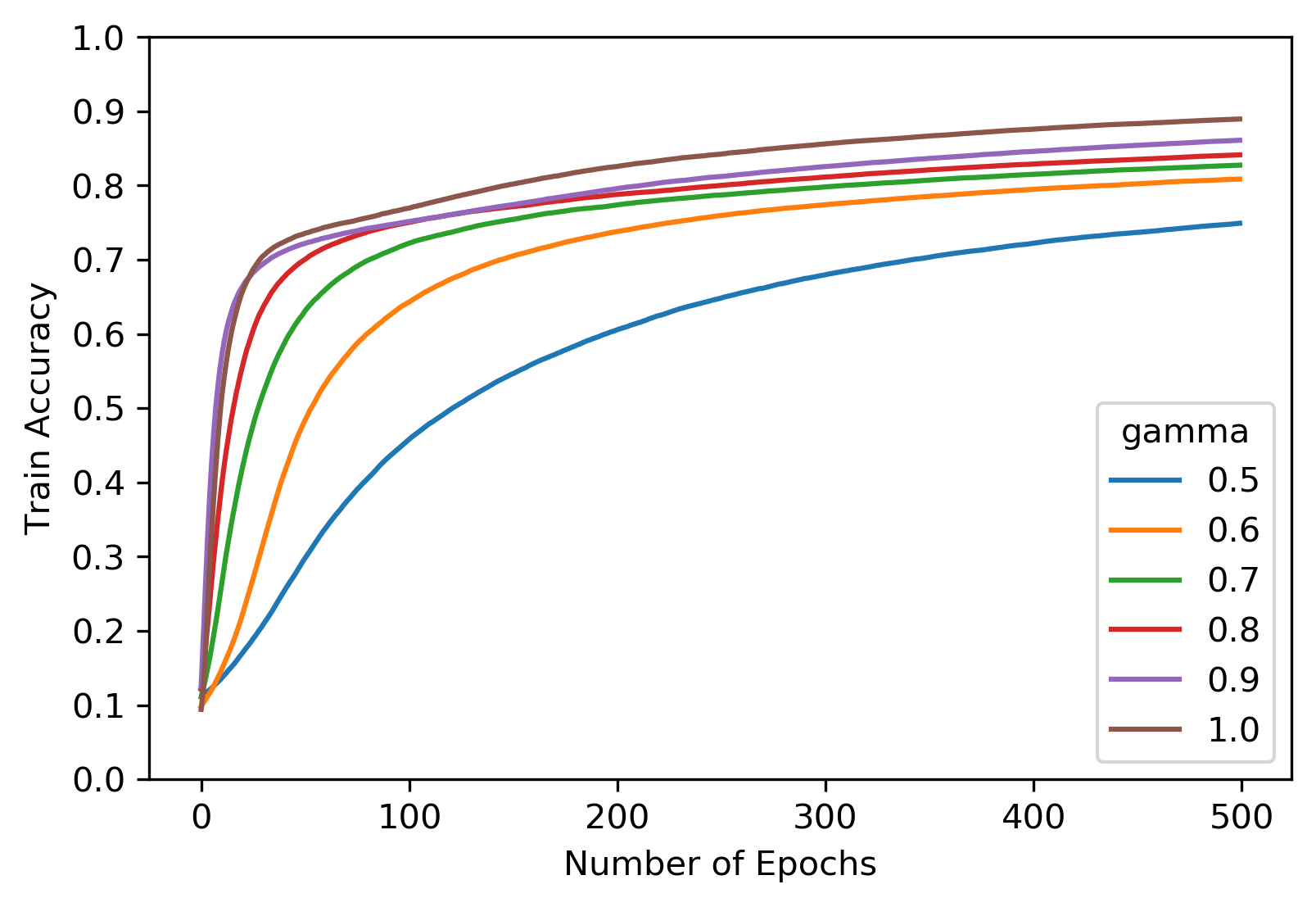}
    \caption{$N=1000$ hidden units.}
  \end{subfigure}
  \caption{Performance of scaled neural networks on MNIST training dataset (MSE loss).}
  \label{Fig:numeric_mnist_mse_train}
\end{figure}

\section{Conclusions}\label{S:Conclusions}
In this work we studied the behavior of shallow neural networks as the number of hidden units $N$ and gradient descent steps grow to infinity. We provided a stochastic Taylor kind of expansion up to second order of the neural network's output around its limit as the number of hidden units grow to infinity. The asymptotic expansion reveals what the main contribution to the variance and the bias of the neural network's output is, depending on the normalization as this ranges from the square-root normalization to the mean-field normalization. We found that, to leading order in $N$, there is no variance-bias trade-off. Also, we found empirically for the MNIST and CIFAR10 data sets that both train and test accuracy improve
 monotonically as the normalization approaches the mean-field normalization.


\appendix

\section{Proof of Theorem \ref{LLN:theorem}: Convergence of the Network Output}\label{sec::LLN}
In this section, we will study convergence in distribution of the random process $(\mu_t^N,h_t^N)$ as $N \to \infty$ in $D_E([0,T])$ where $E = \CM(\R^{1+d}) \times \R^M$. After making the appropriate identifications, the proofs in this section follow as in \cite{SirignanoSpiliopoulosNN1}. Instead of repeating the proofs here, we will provide sufficient motivation for the results and  mention what is needed for the later sections. Our focus in this section is to recall the main arguments needed for the proof of Theorem \ref{LLN:theorem}, but mainly to highlight the different terms in the expansions for $h_t^{N}$ and $\left<f,\mu^{N}_{t}\right>$ as those terms will play a crucial role in the proofs that follow in the later sections \ref{sec::CLT} and \ref{sec::Psi}.

Recall that $\alpha_N = {\alpha}/{N^{\beta}}$ for $\beta=2-2\gamma$. A useful a-priori bound for the parameters $(C^i_k,W^i_k)$ given by \eqref{SGD_iteration} is given in Lemma \ref{lemma_bound}, whose proof follows along the same lines as that of Lemma 2.1 in \cite{SirignanoSpiliopoulosNN1}.

\begin{lemma}\label{lemma_bound}
For $k \le \floor{TN}$, $i \in \mathbb{N}$ and $p \ge 1$, there exists a constant $C(T,p)<\infty$  such that
\begin{align*}
\sup_{N \in \mathbb{N},k \le \floor{TN}}  \left[\abs{C^i_k}^p +\E\paren{\norm{W^i_k}}\right] \le C(T,p).
\end{align*}
In addition, we have that
\begin{align*}
\sup_{i,k\in\mathbb{N}}\abs{C^i_{k+1} - C^i_k} &= O \paren{N^{-(2\gamma + \beta -1)}} \textrm{ and }
\sup_{i,k\in\mathbb{N}} \norm{W^i_{k+1} - W^i_k} = O \paren{N^{-(2\gamma + \beta -1)}}.
\end{align*}
\end{lemma}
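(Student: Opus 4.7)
The whole argument hinges on the exponent identity
\[
2\gamma+\beta-1 \;=\; 2\gamma+(2-2\gamma)-1 \;=\; 1,
\]
which reflects the fact that the learning rate $\alpha^{N}=\alpha/N^{2-2\gamma}$ has been chosen precisely so that every SGD step, once the parameters are known to be bounded, moves each coordinate by $O(1/N)$. My plan is a standard two-stage bootstrap: first obtain an almost-sure, $N$-uniform a priori bound on $\sup_{i}\abs{C^{i}_{k}}$ for $k\leq\floor{TN}$; then plug this bound into the update equations to read off the two $O(N^{-(2\gamma+\beta-1)})=O(N^{-1})$ per-step estimates and finally the $\E\norm{W^{i}_{k}}$ bound. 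The $i\in\mathbb{N}$ in the statement is handled uniformly since the estimate is the same for every coordinate.

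\textbf{Bootstrap for the $C$'s.} Set $M^{N}_{k}:=\sup_{i,\,j\leq k}\abs{C^{i}_{j}}$; by Assumption~\ref{assumption}(iv), $M^{N}_{0}$ is a.s.\ bounded by a deterministic constant. Boundedness of $\sigma$ and finiteness of $\CX\times\CY$ give $\abs{g^{N}_{k}(x)}\leq \norm{\sigma}_{\infty}N^{1-\gamma}M^{N}_{k}$ and $\abs{y_{k}}\leq C_{y}$ a.s.. Substituting into the $C$-update \eqref{SGD_iteration} and combining the prefactor $\alpha^{N}/N^{\gamma}=\alpha/N^{2-\gamma}$ with the $N^{1-\gamma}$ from $g^{N}_{k}$ yields
\[
\abs{C^{i}_{k+1}-C^{i}_{k}} \;\leq\; \frac{C_{1}}{N}\paren{1+M^{N}_{k}},
\]
uniformly in $i$, with $C_{1}=C_{1}(\alpha,\sigma,C_{y})$. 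Taking the supremum over $i$ and $j\leq k$ gives the discrete Gronwall recursion $M^{N}_{k+1}\leq (1+C_{1}/N)M^{N}_{k}+C_{1}/N$, which iterates to $M^{N}_{k}\leq (M^{N}_{0}+1)e^{C_{1}T}$ for $k\leq\floor{TN}$, uniformly in $N$. This delivers the bound $\sup_{N,\,k\leq\floor{TN}}\abs{C^{i}_{k}}^{p}\leq C(T,p)$, and substituting back into the display above gives the asserted $\abs{C^{i}_{k+1}-C^{i}_{k}}=O(N^{-1})$.

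\textbf{The $W$-bound.} With the uniform bound on $M^{N}_{k}$ in hand, I insert it into the $W$-update in \eqref{SGD_iteration} and use boundedness of $\sigma'$ together with $\norm{x_{k}}\leq \max_{j}\norm{x^{(j)}}$ to get
\[
\norm{W^{i}_{k+1}-W^{i}_{k}} \;\leq\; \frac{\alpha}{N^{2-\gamma}}\bigl(C_{y}+\norm{\sigma}_{\infty}N^{1-\gamma}C(T,1)\bigr)\,C(T,1)\,\norm{\sigma'}_{\infty}\max_{j}\norm{x^{(j)}} \;\leq\; \frac{C_{3}(T)}{N},
\]
which is the second increment estimate. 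Telescoping over $k\leq\floor{TN}$ and taking expectations, $\E\norm{W^{i}_{k}}\leq\E\norm{W^{i}_{0}}+C_{3}(T)T$; Assumption~\ref{assumption}(iv) gives $\E\norm{W^{i}_{0}}<\infty$, finishing the $\E\norm{W^{i}_{k}}\leq C(T)$ bound.

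\textbf{Main obstacle.} The one nontrivial point is closing the bootstrap loop for $M^{N}_{k}$: the per-step estimate needs a bound on $g^{N}_{k}$, which in turn requires simultaneous control of \emph{every} $C^{j}_{k}$; the sup-then-Gronwall step handles this cleanly. What is essential, and in a sense structural, is that the per-step multiplicative factor $1+C_{1}/N$ compounded over $\floor{TN}$ iterations yields the $N$-independent constant $e^{C_{1}T}$; this works only because the exponent identity $2\gamma+\beta-1=1$ forces the increment to be exactly of order $1/N$, which is precisely the reason the learning rate is calibrated as $\alpha/N^{2-2\gamma}$.
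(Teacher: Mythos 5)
Your proof is correct and follows essentially the same route the paper indicates (it cites Lemma 2.1 of \cite{SirignanoSpiliopoulosNN1} for the $\gamma=1$ case and says the present lemma ``follows along the same lines''): bound $\sup_i|C^i_k|$ by a discrete Gronwall bootstrap using $|g^N_k|\le N^{1-\gamma}\norm{\sigma}_\infty \sup_i|C^i_k|$ and the calibration $\alpha^N/N^\gamma\cdot N^{1-\gamma}=\alpha/N$, then feed this into the updates to get the $O(N^{-1})=O(N^{-(2\gamma+\beta-1)})$ increments and the telescoped $\E\norm{W^i_k}$ bound. Your identification of the exponent identity $2\gamma+\beta-1=1$ as the structural reason the argument closes is exactly the right observation, and the use of the compact support of $C^i_0$ from Assumption \ref{assumption}(iv) to seed the bootstrap is correct.
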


\subsection{Evolution of the Pre-limit Process}\label{LLN:sec:pre_evolution}

Let us next consider the evolution of the network's output $g^N_k(x)$. Using a Taylor expansion, we have
\begin{equation}\label{network_evolution}
\begin{aligned}
g^N_{k+1}(x) - g^N_{k}(x) &= \frac{1}{N^{\gamma}} \sum_{i=1}^N C^i_{k+1} \sigma\paren{W^i_{k+1}  x} - \frac{1}{N^{\gamma}} \sum_{i=1}^N C^i_{k} \sigma\paren{W^i_{k} x}\\
& = \frac{1}{N^{\gamma}} \sum_{i=1}^N \paren{C^i_{k+1} - C^i_{k}} \left[  \sigma(W^i_{k} x) +  \sigma'(W^{i*}_{k}  x) \paren{W^i_{k+1} - W^i_{k}}  x\right]\\
&\quad + \frac{1}{N^{\gamma}} \sum_{i=1}^N \brac{\sigma'(W^{i*}_k x) \paren{W^i_{k+1} - W^i_k}x + \frac{1}{2} \sigma''(W^{i**}_k x) \left[\paren{W^i_{k+1} - W^i_k}x\right]^2}C^i_k,
\end{aligned}
\end{equation}
for some points $W^{i*}_{k}$ and $W^{i**}_k$ in the line segment connecting $W^i_k$ and $W^i_{k+1}$. Substituting \eqref{SGD_iteration} in \eqref{network_evolution}, we have
\begin{equation}\label{g_evolution}
\begin{aligned}
g^N_{k+1}(x)- g^N_{k}(x)  &=  \frac{\alpha}{N^{2\gamma+\beta}} \sum_{i=1}^N \paren{y_k-g^N_k(x_k)}  \sigma(W^i_{k} x_k) \sigma(W^i_{k} x)\\
&\quad +\frac{\alpha}{N^{2\gamma+\beta}} \sum_{i=1}^N \sigma'(W^i_{k} x)\paren{y_k-g^N_k(x_k)} (C^i_k)^2 \sigma'(W^i_{k} x_k)x_k x + \frac{G^N_k(x)}{N^{5\gamma + 2\beta -3}},
\end{aligned}
\end{equation}
where
\begin{align*}
G^N_k (x) = N^{4\gamma+2\beta-3} \sum_{i=1}^N \left\lbrace\paren{C^i_{k+1} - C^i_k}\sigma'(W^{i*}_k x)\paren{W^i_{k+1} - W^i_{k}}x +\frac{1}{2} C^i_k\sigma''(W^{i**}_k x) \left[\paren{W^i_{k+1} - W^i_k}x\right]^2\right\rbrace.
\end{align*}

Due to Lemma \ref{lemma_bound}, we have $\sup_{x\in\mathcal{X}}\abs{G^N_k (x)} < C<\infty$. Rewriting the evolution of network in terms of the empirical measure $\nu_k^N$  gives
\begin{equation*}
\begin{aligned}
g^N_{k+1}(x)- g^N_{k}(x)   &= \frac{\alpha}{N^{2\gamma+\beta-1}} \paren{y_k-g^N_k(x_k)}  \ip{\sigma(w x_k) \sigma(w x),\nu^N_k}\\
&\quad +\frac{\alpha}{N^{2\gamma+\beta-1}} \paren{y_k-g^N_k(x_k)}x_k x \ip{\sigma'(w x)  \sigma'(w x_k)c^2,\nu^N_k} +  \frac{G^N_k (x)}{N^{5\gamma + 2\beta -3}}
\end{aligned}
\end{equation*}
We can then write the evolution of $h_t^N(x)$ for $t\in[0,T]$ as
\begin{equation*}
\begin{aligned}
h^N_{t}(x)- h^N_{0}(x) &= \sum_{k=0}^{\floor{Nt}-1} \paren{g^N_{k+1}(x) -g^N_{k}(x)}= \frac{\alpha}{N^{2\gamma+\beta-1}} \sum_{k=0}^{\floor{Nt}-1}\paren{y_k-g^N_k(x_k)}  \ip{\sigma(w x_k) \sigma(w x),\nu^N_k}\\
&\quad +\frac{\alpha}{N^{2\gamma+\beta-1}}\sum_{k=0}^{\floor{Nt}-1} \paren{y_k-g^N_k(x_k)}x_k x \ip{\sigma'(w x)  \sigma'(w x_k)c^2,\nu^N_k}  + \frac{1}{N^{5\gamma + 2\beta -3}} \sum_{k=0}^{\floor{Nt}-1} G^N_{k}(x)
\end{aligned}
\end{equation*}

The summations in the above equation can be decomposed into a drift and martingale component:
\begin{equation*}
\begin{aligned}
&h^N_{t}(x) -h^N_{0}(x) =\frac{\alpha}{N^{2\gamma+\beta-1}} \sum_{k=0}^{\floor{Nt}-1} \int_{\CX \times \CY} \paren{y-g^N_k(x')}  \ip{\sigma(w x') \sigma(w x),\nu^N_k} \pi(dx',dy)\\
&\quad +\frac{\alpha}{N^{2\gamma+\beta-1}}\sum_{k=0}^{\floor{Nt}-1}\int_{\CX \times \CY} \paren{y-g^N_k(x')}x' x \ip{\sigma'(w x)  \sigma'(w x')c^2,\nu^N_k} \pi(dx',dy)\\
&\quad +   \frac{\alpha}{N^{2\gamma+\beta-1}}\left \lbrace \sum_{k=0}^{\floor{Nt}-1}\paren{y_k-g^N_k(x_k)}  \ip{\sigma(w x_k) \sigma(w x),\nu^N_k}\right.\\
&\qquad \qquad \qquad \qquad - \int_{\CX \times \CY} \paren{y-g^N_k(x')}  \ip{\sigma(w x') \sigma(w x),\nu^N_k} \pi(dx',dy)  \\
&\qquad \qquad \qquad \qquad +  \sum_{k=0}^{\floor{Nt}-1} \paren{y_k-g^N_k(x_k)}x_k x \ip{\sigma'(w x)  \sigma'(w x_k)c^2,\nu^N_k}\\
&\qquad \qquad \qquad \qquad\left. -\int_{\CX \times \CY} \paren{y-g^N_k(x')}x' x \ip{\sigma'(w x)  \sigma'(w x')c^2,\nu^N_k} \pi(dx',dy)  \right \rbrace\\
&\quad + \frac{1}{N^{5\gamma + 2\beta -3}} \sum_{k=0}^{\floor{Nt}-1} G^N_{k}(x).
\end{aligned}
\end{equation*}

Since $2\gamma + \beta -1 =1$, we can rewrite the equation for $h^N_t(x)$ in terms of a Riemann integral and the scaled measure $\mu^N_t$. In particular, we get
\begin{equation}\label{LLN:evolution_h}
\begin{aligned}
h^N_{t}(x) - h^N_{0}(x) &= \alpha \int_0^t  \int_{\CX \times \CY} \paren{y-h_s^N(x')}  \ip{B_{x,x'}(c,w),\mu^N_s} \pi(dx',dy) ds\\
&\quad  + V_t^{N}(x) + M_t^{N}(x) + \frac{1}{N^{\gamma+1}} \sum_{k=0}^{\floor{Nt}-1} G^N_{k} (x),
\end{aligned}
\end{equation}
where
$M_t^{N}(x) = M_t^{1,N}(x) + M_t^{2,N}(x)$,
\begin{align*}
V_t^{N}(x) &= - \int_{\frac{\floor{Nt}}{N}}^t \int_{\CX \times \CY} \alpha \paren{y-h^N_s(x')}\ip{B_{x,x'}(c,w),\mu^N_s}\pi(dx',dy)ds,\\
M_t^{1,N}(x) &=  \frac{\alpha}{N}\sum_{k=0}^{\floor{Nt}-1}\paren{y_k-g^N_k(x_k)}  \ip{\sigma(w x_k) \sigma(w x),\nu^N_k} \\
&\quad - \int_{\CX \times \CY} \paren{y-g^N_k(x')}  \ip{\sigma(w x') \sigma(w x),\nu^N_k} \pi(dx',dy), \\
M_t^{2,N}(x) &= \frac{\alpha}{N}\sum_{k=0}^{\floor{Nt}-1} \paren{y_k-g^N_k(x_k)}x_k x \ip{\sigma'(w x)  \sigma'(w x_k)c^2,\nu^N_k}\\
&\quad -\int_{\CX \times \CY} \paren{y-g^N_k(x')}x' x \ip{\sigma'(w x)  \sigma'(w x')c^2,\nu^N_k} \pi(dx',dy).\\
\end{align*}

Since $h^N_s(x) = g^N_{\floor{Ns}}(x) = \ip{c\sigma(wx), N^{1-\gamma}\mu^N_s}$, by Assumption \ref{assumption}, Lemma \ref{lemma_bound}, and Lemma \ref{lemma_g}, we have  $\sup_{t\in[0,T]}\E\paren{|V_{t}^{N}|^2}\leq \frac{C}{N^2}$ for some $C<\infty$.
We also note that  $\frac{1}{N^{\gamma+1}} \sum_{k=0}^{\floor{Nt}-1} G^N_k = O(N^{-\gamma})$.  As in Lemma 5.6 of \cite{SirignanoSpiliopoulosNN4}, we can obtain the following uniform bound for the network.
\begin{lemma}\label{lemma_g}
For any $k\le \floor{TN}$ and any $x \in \mathcal{X}$,
\[\sup_{N \in \mathbb{N}, k\le \floor{TN}}\E \left(\abs{g^N_k(x)}^2\right) \le \sup_{N \in \mathbb{N}, k\le \floor{TN}} \sum_{x\in\mathcal{X}}\E \paren{\abs{g^N_k(x)}^2} < C,\]
for some constant $C<\infty$.
\end{lemma}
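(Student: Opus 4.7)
The plan is to establish the uniform bound through a discrete Gr\"onwall argument applied to the aggregate quantity $S^N_k := \sum_{x \in \CX} \abs{g^N_k(x)}^2$. Starting from the one-step evolution that is already assembled in the excerpt, namely
\[
g^N_{k+1}(x) = g^N_k(x) + \frac{\alpha}{N}\paren{y_k - g^N_k(x_k)}\ip{B_{x,x_k}(c,w), \nu^N_k} + \frac{G^N_k(x)}{N^{\gamma+1}},
\]
I would square both sides using $(a+b)^2 = a^2 + 2ab + b^2$, so that $\abs{g^N_{k+1}(x)}^2$ equals $\abs{g^N_k(x)}^2$ plus a linear-in-$\Delta^N_k(x)$ cross term plus $\abs{\Delta^N_k(x)}^2$.

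By Assumption \ref{assumption} the activation $\sigma$ is bounded and the dataset is finite (so $\abs{y_k}$ is bounded by a deterministic constant), while Lemma \ref{lemma_bound} provides uniform bounds on $\abs{C^i_k}$; together these imply $\sup_{N,k,x,x'}\abs{\ip{B_{x,x'}(c,w), \nu^N_k}} \le C$ and $\sup_{N,k,x}\abs{G^N_k(x)} \le C$ for a finite constant $C$. Applying the elementary inequality $2\abs{ab} \le a^2 + b^2$ to the cross term and using $\abs{g^N_k(x_k)}^2 \le S^N_k$ (since $x_k \in \CX$), I would take $\CF_k$-conditional expectations---exploiting the independence of the SGD sample $(x_k,y_k)$ from $\CF_k$---and sum over $x \in \CX$ to arrive at a recursion of the form
\[
\E\bigl[S^N_{k+1}\mid\CF_k\bigr] \le \paren{1 + \frac{C}{N}}S^N_k + \frac{C}{N},
\]
for some finite $C$ independent of $N$ and $k$, where the $O(1/N^2)$ contribution from $\abs{\Delta^N_k(x)}^2$ and the $O(1/N^{\gamma+1})$ contribution from $G^N_k/N^{\gamma+1}$ are both absorbed into $C/N$.

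Iterating this recursion for $k \le \floor{TN}$ yields $\E[S^N_k] \le (1+C/N)^k \E[S^N_0] + CT \le e^{CT}\E[S^N_0] + CT$, so it remains to bound $\E[S^N_0]$ uniformly in $N$. Expanding the double sum in $\abs{g^N_0(x)}^2$ and using that $(C^i_0, W^i_0)$ are i.i.d.\ with $\E[C^i_0]=0$ and $C^i_0$ independent of $W^i_0$ (parts (iii)--(iv) of Assumption \ref{assumption}), the off-diagonal contributions vanish in expectation, leaving only the diagonal terms: $\E[\abs{g^N_0(x)}^2] = N^{1-2\gamma}\, \E[\abs{C^1_0 \sigma(W^1_0 x)}^2]$, which is bounded uniformly in $N$ for any $\gamma \ge 1/2$ thanks to the compact support of $C^1_0$ and the boundedness of $\sigma$. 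Summing over the finite set $\CX$ then bounds $\E[S^N_0]$ by a finite constant.

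The main subtlety is that the cross term $2g^N_k(x)\Delta^N_k(x)$ involves $(y_k - g^N_k(x_k))$ evaluated at a \emph{random} data point $x_k$, so a pointwise recursion on $\E[\abs{g^N_k(x)}^2]$ alone does not close: one needs the deterministic bound $\abs{g^N_k(x_k)}^2 \le S^N_k$, which is precisely why the right-hand side of the displayed inequality in the lemma is stated as the sum over $\CX$ rather than as a single coordinate. Once this is recognized, the remainder and quadratic-increment terms are easily controlled via Lemma \ref{lemma_bound}, and the uniform initialization bound benefits from the $1/N^\gamma$ scaling together with the mean-zero assumption on $C^i_0$.
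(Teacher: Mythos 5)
Your discrete Gr\"onwall argument is correct and matches the approach the paper delegates to Lemma 5.6 of \cite{SirignanoSpiliopoulosNN4}: square the one-step evolution, bound the cross and quadratic-increment terms via Lemma \ref{lemma_bound} and the $O(1/N)$ scaling of the update, sum over $x\in\CX$ so that $\abs{g^N_k(x_k)}^2 \le S^N_k$ closes the recursion, and iterate $(1+C/N)$ over $k\le\floor{TN}$ together with the $N^{1-2\gamma}$ bound on $\E[S^N_0]$ from the mean-zero, independent initialization. One small remark: the key estimate $\abs{g^N_k(x_k)}^2\le S^N_k$ already holds pathwise, so passing to $\CF_k$-conditional expectations is harmless but not actually required to obtain the recursion for $\E[S^N_k]$.
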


By using conditional independence of the terms in $M_t^{1,N}$ and $M_t^{2,N}$ and the results from Lemmas \ref{lemma_bound} and \ref{lemma_g}, we can also derive the following lemma. The proof for this lemma follows that of Lemma 3.1 in \cite{SirignanoSpiliopoulosNN1}, and thus is omitted here.
\begin{lemma}\label{LLN:lemma:Mt_bound}
For any $N \in \mathbb{N}$, $t \in [0,T]$ and $x\in\mathcal{X}$, there exists positive constant $C<\infty$ such that
\begin{align*}
&\E \left[\paren{M_t^{N,1}(x)}^2 +\paren{M_t^{N,2}(x)}^2\right] \le \frac{C}{N}.
\end{align*}
\end{lemma}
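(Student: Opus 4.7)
The plan is to exploit the discrete-time martingale structure of $M^{1,N}_t$ and $M^{2,N}_t$ combined with the uniform moment bounds already at our disposal. Let $\mathcal{F}_k=\sigma\{(C^i_0,W^i_0):\, i\le N\}\vee\sigma\{(x_j,y_j):\, j<k\}$. Since $(x_k,y_k)$ is sampled independently of $\mathcal{F}_k$ from $\pi$, while $g^N_k(\cdot)$ and $\nu^N_k$ are $\mathcal{F}_k$-measurable, the centered summand defining each $M^{j,N}_t$ at step $k$ has conditional mean zero given $\mathcal{F}_k$; hence $M^{1,N}_t(x)$ and $M^{2,N}_t(x)$ are discrete-time $\mathcal{F}_k$-martingales.

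First, orthogonality of martingale increments gives
\[
\E\bigl[(M^{j,N}_t(x))^2\bigr]\;=\;\frac{\alpha^2}{N^2}\sum_{k=0}^{\floor{Nt}-1}\E\bigl[(\Delta^{j,N}_k(x))^2\bigr],\qquad j=1,2,
\]
where $\Delta^{j,N}_k(x)$ is the centered bracketed expression at step $k$. By conditional Jensen the squared centered term is dominated by the second moment of the uncentered sample-based piece, so it suffices to bound, uniformly in $N$ and $k\le\floor{TN}$,
\[
\E\bigl[(y_k-g^N_k(x_k))^2\,\ip{\sigma(wx_k)\sigma(wx),\nu^N_k}^2\bigr]\quad\text{and}\quad \E\bigl[(y_k-g^N_k(x_k))^2\,|x_k x|^2\,\ip{\sigma'(wx)\sigma'(wx_k)c^2,\nu^N_k}^2\bigr].
\]

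Second, I would bound each factor using the standing hypotheses and the preceding lemmas. Since $\sigma\in C_b^{\infty}(\R)$, the first displayed quantity is deterministically at most $\|\sigma\|_\infty^4(y_k-g^N_k(x_k))^2$, whose expectation is uniformly bounded because $y_k$ is bounded (part (ii) of Assumption \ref{assumption}) and $\E[g^N_k(x_k)^2]$ is controlled by Lemma \ref{lemma_g}. For the second, I would use $\ip{\sigma'(wx)\sigma'(wx_k)c^2,\nu^N_k}\le \|\sigma'\|_\infty^2\cdot\frac{1}{N}\sum_{i=1}^N(C^i_k)^2$, apply Jensen to move the square inside the empirical average, and then invoke the $L^p$ moment bound on $C^i_k$ from Lemma \ref{lemma_bound} together with the $y_k$-boundedness and Lemma \ref{lemma_g} to conclude $\E[(\Delta^{j,N}_k(x))^2]\le C$ uniformly. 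Summing the $\floor{Nt}\le NT$ terms then yields
\[
\E\bigl[(M^{1,N}_t(x))^2+(M^{2,N}_t(x))^2\bigr]\;\le\; \frac{C\,\floor{Nt}}{N^2}\;\le\; \frac{CT}{N}.
\]

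The main (rather mild) technical obstacle is the interaction between the $c^2$-weighted inner product appearing in $M^{2,N}$ and the squared residual $(y_k-g^N_k(x_k))^2$: a direct bound would require a joint higher moment. This is resolved by splitting $(y_k-g^N_k(x_k))^2\le 2y_k^2+2g^N_k(x_k)^2$ and noting that, by Cauchy--Schwarz on the definition of $g^N_k$, one has $g^N_k(x_k)^2\le \|\sigma\|_\infty^2\cdot N^{1-2\gamma}\cdot\frac{1}{N}\sum_i (C^i_k)^2$, which is deterministically dominated by $\|\sigma\|_\infty^2\cdot\frac{1}{N}\sum_i(C^i_k)^2$ for $\gamma\ge 1/2$. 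After this substitution the bound reduces to moments of $\frac{1}{N}\sum_i(C^i_k)^2$, which are uniformly controlled via Jensen and Lemma \ref{lemma_bound}. This is precisely the manipulation carried out in the proof of Lemma 3.1 of \cite{SirignanoSpiliopoulosNN1}, which transfers to our setting after the notational identification $\alpha^N=\alpha/N^{2-2\gamma}$.
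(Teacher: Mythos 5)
Your overall strategy is the standard one and matches what the paper delegates to (Lemma 3.1 of \cite{SirignanoSpiliopoulosNN1}): recognize $M^{1,N}_t$ and $M^{2,N}_t$ as discrete-time martingales because each summand is centered conditionally on the past, invoke orthogonality of increments to reduce the problem to bounding $\E[(\Delta^{j,N}_k)^2]$ uniformly, and then sum the $\floor{Nt}\le NT$ terms against the $\alpha^2/N^2$ prefactor to obtain the $C/N$ rate. The martingale set-up, the Jensen step to drop the conditional centering, and the final accounting are all correct.

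There is, however, an arithmetic error in the final paragraph that makes the proposed resolution of the ``joint higher moment'' issue fail as written. Cauchy--Schwarz applied to $g^N_k(x)=N^{-\gamma}\sum_{i=1}^{N}C^i_k\sigma(W^i_k x)$ gives
\[
g^N_k(x)^2 \le \frac{1}{N^{2\gamma}}\Bigl(\sum_i (C^i_k)^2\Bigr)\Bigl(\sum_i \sigma(W^i_k x)^2\Bigr)\le \|\sigma\|_\infty^2\, N^{\,2-2\gamma}\cdot \frac{1}{N}\sum_i (C^i_k)^2,
\]
not $N^{\,1-2\gamma}$ as you wrote; the right-hand side therefore grows like $N^{2-2\gamma}$, which is \emph{not} uniformly bounded for $\gamma\in(1/2,1)$. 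So the claimed deterministic domination by $\|\sigma\|_\infty^2\cdot\frac{1}{N}\sum_i(C^i_k)^2$ does not hold. Fortunately, the obstacle you were trying to circumvent is not actually present: Lemma \ref{lemma_bound} gives a \emph{deterministic}, uniform-in-$(N,k\le\floor{TN},i)$ bound on $|C^i_k|$ (because $C^i_0$ has compact support and the increments are $O(N^{-1})$ over $\floor{TN}$ steps), so $\frac{1}{N}\sum_i(C^i_k)^2$ and hence $\ip{\sigma'(wx)\sigma'(wx_k)c^2,\nu^N_k}$ are bounded pointwise in $\omega$. One can therefore pull the $c^2$-weighted inner product out of the expectation as an almost-sure constant and bound the remaining factor directly by $\E[(y_k-g^N_k(x_k))^2]\le C$ via Lemma \ref{lemma_g}. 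No splitting of $(y_k-g^N_k)^2$ and no Cauchy--Schwarz on the definition of $g^N_k$ is needed. With that correction the argument goes through.
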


Next, we analyze the evolution of the empirical measure $\nu_k^N$ in terms of test functions $f\in C^2_b(\R^{1+d})$. First order Taylor expansion gives that
\begin{equation}\label{tfn_evolution}
\begin{aligned}
\ip{f, \nu^N_{k+1}} - \ip{f, \nu^N_{k}} &= \frac{1}{N}\sum_{i=1}^N \left[ f(C^i_{k+1},W^i_{k+1}) - f(C^i_{k},W^i_{k})\right]\\
&=\frac{1}{N} \sum_{i=1}^N \partial_c f(C^i_{k},W^i_{k}) \paren{C^i_{k+1} - C^i_k} + \frac{1}{N} \sum_{i=1}^N \nabla_w f(C^i_{k},W^i_{k}) \paren{W^i_{k+1} - W^i_k}\\
&\quad + \frac{1}{N} \sum_{i=1}^N \partial^2_c f(\tilde{C}^i_{k},\tilde{W}^i_{k}) \paren{C^i_{k+1} - C^i_k}^2 + \frac{1}{N} \sum_{i=1}^N \paren{W^i_{k+1} - W^i_k}^T\nabla^2_w f(\hat{C}^i_{k},\hat{W}^i_{k}) \paren{W^i_{k+1} - W^i_k}\\
&\quad +\frac{1}{N} \sum_{i=1}^N \paren{C^i_{k+1} - C^i_k}\nabla_{cw} f(\bar{C}^i_{k},\bar{W}^i_{k}) \paren{W^i_{k+1} - W^i_k},
\end{aligned}
\end{equation}
for some points $(\tilde{C}^i_{k},\tilde{W}^i_{k})$, $(\hat{C}^i_{k},\hat{W}^i_{k}) $ and $(\bar{C}^i_{k},\bar{W}^i_{k})$ in the line segments connecting $C^i_{k}$ and $C^i_{k+1}$ or $W^i_{k}$ and $W^i_{k+1}$, respectively. Similar to the analysis for the evolution of the network output, we substitute \eqref{SGD_iteration} into \eqref{tfn_evolution} and rewrite the evolution in terms of the empirical measure $\nu^N_k$,
\begin{equation*}
\begin{aligned}
\ip{f, \nu^N_{k+1}} - \ip{f, \nu^N_{k}} &= \frac{\alpha}{N^{\beta + \gamma}} \paren{y_k - g^N_k(x_k)}\ip{\partial_cf(c,w)\sigma(wx_k), \nu^N_k}\\
&\quad + \frac{\alpha}{N^{\beta + \gamma}} \paren{y_i - g^N_k(x_k)}\ip{c\sigma'(wx_k)\nabla_w f(c,w)x_k, \nu^N_k} +  O\paren{N^{-2(2\gamma+\beta-1)}}\\
&= \frac{\alpha}{N^{2- \gamma}} \paren{y_k - g^N_k(x_k)}\ip{\partial_cf(c,w)\sigma(wx_k), \nu^N_k}\\
&\quad + \frac{\alpha}{N^{2- \gamma}} \paren{y_k - g^N_k(x_k)}\ip{c\sigma'(wx_k)\nabla_w f(c,w)x_k, \nu^N_k} +  O\paren{N^{-2}},
\end{aligned}
\end{equation*}
where the last equality holds because $\gamma + \beta = 2-\gamma$. To write the evolution in terms of the scaled measure $\mu_t^N$, for $t \in [0,T]$, we have for large $N$
\begin{equation}\label{LLN:evolution_mu}
\begin{aligned}
\ip{f,\mu^N_t} - \ip{f,\mu^N_0} &= \sum_{k=0}^{\floor{Nt}-1}\paren{\ip{f, \nu^N_{k+1}} - \ip{f, \nu^N_{k}} }\\
&= \frac{\alpha}{N^{2- \gamma}} \sum_{k=0}^{\floor{Nt}-1} \paren{y_k - g^N_k(x_k)}\ip{\partial_cf(c,w)\sigma(wx_k), \nu^N_k}\\
&\quad + \frac{\alpha}{N^{2- \gamma}}\sum_{k=0}^{\floor{Nt}-1} \paren{y_k - g^N_k(x_k)}\ip{c\sigma'(wx_k)\nabla_w f(c,w)x_k, \nu^N_k} +  O\paren{N^{-1}}.
\end{aligned}
\end{equation}

By similar analysis as for the network, one can decompose the above equation into a drift and martingale component and rewrite the equation for $\ip{f,\mu^N_t}$ in terms of a Riemann integral and the scaled measure $\mu^N_t$.
\begin{equation}\label{LLN:evolution_mu_integral}
\begin{aligned}
\ip{f,\mu^N_t} - \ip{f,\mu^N_0}
&= \frac{\alpha}{N^{1- \gamma}} \int_0^t \int_{\CX\times \CY} \paren{y - h^N_s(x')}\ip{C^f_{x'}(c,w), \mu^N_s} \pi(dx',dy)ds\\
 &+  O_p\paren{N^{\gamma -\frac{3}{2}}}+  O\paren{N^{-1}}.
\end{aligned}
\end{equation}

\subsection{Proof of Theorem \ref{LLN:theorem}}
 We note now that as in Lemma 5.10 in \cite{SirignanoSpiliopoulosNN4} one obtains the relative compactness of the processes $\{\mu^N,h^N\}_{N\in\mathbb{N}}$. The proof is similar to that of Lemma 5.10 in \cite{SirignanoSpiliopoulosNN1}, and thus it is omitted.
\begin{lemma}\label{LLN:lemma:relative_compact}
The sequence of processes $\{\mu^N,h^N\}_{N\in\mathbb{N}}$ is relatively compact in $D_E([0,T])$.
\end{lemma}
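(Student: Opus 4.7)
The plan is to verify relative compactness separately for the two coordinates and then combine them, since relative compactness in $D_E([0,T])$ with $E=\mathcal{M}(\mathbb{R}^{1+d})\times\mathbb{R}^M$ follows once one has relative compactness of $\{\mu^N\}$ in $D_{\mathcal{M}(\mathbb{R}^{1+d})}([0,T])$ and of $\{h^N\}$ in $D_{\mathbb{R}^M}([0,T])$ (the joint sequence inherits tightness because the coordinates are functionals of the same underlying parameter process). For the measure-valued coordinate I would invoke Jakubowski's criterion: it suffices to exhibit a compact containment condition for $\{\mu^N_t:t\in[0,T]\}$ and to verify tightness of each real-valued process $\{\ip{f,\mu^N}\}_{N\in\mathbb{N}}$ in $D_{\mathbb{R}}([0,T])$ for $f$ ranging over a countable point-separating subset of $C_b^2(\mathbb{R}^{1+d})$.

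For tightness of $\{\ip{f,\mu^N}\}$ I would apply the Aldous–Kurtz criterion using the decomposition (\ref{LLN:evolution_mu_integral}). Under Assumption \ref{assumption} and the uniform bound of Lemma \ref{lemma_bound}, the drift contribution is of order $O(N^{\gamma-1})$, the martingale-type fluctuation is of order $O_p(N^{\gamma-3/2})$, and the remainder is $O(N^{-1})$; all three vanish uniformly in $t\in[0,T]$ for $\gamma\in(1/2,1)$. Standard $L^2$ estimates on the martingale term together with the Lipschitz-in-$t$ character of the drift integrand then yield, for every bounded stopping time $\tau\le T$ and $\delta>0$, an estimate of the form $\E\bigl[\bigl(\ip{f,\mu^N_{\tau+\delta}}-\ip{f,\mu^N_\tau}\bigr)^2\bigr]\le C(f)\delta + o_N(1)$, which is precisely the Aldous condition.

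For $\{h^N\}$ in $D_{\mathbb{R}^M}([0,T])$ I would apply the Aldous–Kurtz criterion coordinatewise using (\ref{LLN:evolution_h}). The main drift integral is $O(1)$ and Lipschitz in $t$ because its integrand is uniformly bounded, using $\ip{B_{x,x'},\mu^N_s}\le C$ from Assumption \ref{assumption}(i) and Lemma \ref{lemma_bound} together with the $L^2$ bound on $h^N_s(x')$ from Lemma \ref{lemma_g}. The correction $V^N_t$ satisfies $\E\abs{V^N_t}^2\le C/N^2$; the martingale $M^N_t$ has bounded quadratic variation by Lemma \ref{LLN:lemma:Mt_bound}; and the remainder $N^{-\gamma-1}\sum_{k=0}^{\floor{Nt}-1} G^N_k$ is deterministic of order $O(N^{-\gamma})$. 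Combining these controls gives $\E\bigl[(h^N_{\tau+\delta}(x)-h^N_\tau(x))^2\mid \mathcal{F}_\tau\bigr]\le C\delta + o_N(1)$ uniformly in stopping times $\tau$, which verifies the Aldous condition and closes the argument for the Euclidean coordinate.

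The main technical obstacle is the compact containment for $\{\mu^N\}$: the supports of the $W^i$ are unbounded in $\mathbb{R}^d$, so the measures do not live in a single compact subset of $\mathcal{M}(\mathbb{R}^{1+d})$. I would handle this by using the uniform moment bound $\sup_{N,k\le\floor{TN}}\E\norm{W^i_k}^p\le C(T,p)$ from Lemma \ref{lemma_bound}, together with the compact support of $C_0^i$ from Assumption \ref{assumption}(iv), to deduce that $\sup_{t\in[0,T]}\E\ip{1+\abs{c}^p+\norm{w}^p,\mu^N_t}\le C$. A Markov/Prokhorov argument then produces, for every $\epsilon>0$, a compact set $K_\epsilon\subset\mathbb{R}^{1+d}$ with $\sup_N\sup_{t\le T}\E\,\mu^N_t(K_\epsilon^c)<\epsilon$, which is the compact containment condition required by Jakubowski's criterion. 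This step is where the moment control from the a priori estimates is essential; everything else is the routine bookkeeping that the authors reference from \cite{SirignanoSpiliopoulosNN1}.
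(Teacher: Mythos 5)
Your proposal is essentially sound and follows the same route the paper itself uses (explicitly, for the later fluctuation processes $K^N$ and $\Psi^N$): compact containment from a priori moment bounds together with an Aldous/Kurtz oscillation estimate, concluding via Theorem 8.6 of Chapter 3 of \cite{EthierAndKurtz}. Two small corrections are needed. First, Lemma \ref{lemma_bound} does not give $\sup_{N,k}\E\norm{W^i_k}^p\le C(T,p)$; the exponent $p$ there is attached only to $\abs{C^i_k}^p$, and for $W$ the lemma yields just the first moment $\E\norm{W^i_k}\le C(T,p)$. This is harmless, since Markov's inequality with a first-moment bound is all that compact containment of $\{\mu^N_t\}$ requires (indeed, since $\sup_{i,k}\norm{W^i_{k+1}-W^i_k}=O(N^{-1})$ and there are $O(N)$ steps, one even has a pathwise bound $\sup_{t\le T}\ip{\norm{w},\mu^N_t}\le \frac{1}{N}\sum_i\norm{W^i_0}+C$, whose expectation is finite). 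Second, the justification that the joint family is tight ``because the coordinates are functionals of the same underlying parameter process'' is not the right reason; what one actually needs for joint relative compactness in $D_{E}([0,T])$ with $E=\CM(\R^{1+d})\times\R^M$ is that the metric on $E$ is dominated by the sum of the coordinate metrics, so the coordinate-wise Aldous estimates combine by the triangle inequality, and compact containment in $E$ follows from compact containment in each factor. With those two points fixed, your bookkeeping (the $O(N^{\gamma-1})$, $O_p(N^{\gamma-3/2})$, and $O(N^{-1})$ terms from \eqref{LLN:evolution_mu_integral} vanishing for $\gamma\in(1/2,1)$, and the decomposition \eqref{LLN:evolution_h} controlled by Lemmas \ref{lemma_bound}, \ref{lemma_g}, \ref{LLN:lemma:Mt_bound}) gives exactly the needed estimates, matching what the cited Lemma 5.10 of \cite{SirignanoSpiliopoulosNN1} does.
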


It remains to identify the limit in order to prove Theorem \ref{LLN:theorem}. We denote $\pi^N \in \mathcal{M}\paren{D_E([0,T])}$ the probability measure corresponding to $(\mu^N,h^N)$. Relative compactness  implies that there is a subsequence $\pi^{N_k}$ that converges weakly. In the following lemma, we show that the limit point $\pi$ of any convergent subsequence $\pi^{N_k}$ will satisfy the evolution equation \eqref{LLN:limit_evolution}.
\begin{lemma}\label{LLN:lemma:id_limit}
The limit point $\pi$ of a convergent subsequence $\pi^{N_k}$ is a Dirac measure concentrated on $(\mu,h) \in D_E([0,T])$, which satisfies equation \eqref{LLN:limit_evolution}. Furthermore, for any $t \in [0,T]$ and $f \in C_b^2(\R^{1+d})$, $\ip{f,\mu_t} = \ip{f,\mu_0}$.
\end{lemma}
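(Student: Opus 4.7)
The plan is the standard identification-and-uniqueness scheme: relative compactness from Lemma \ref{LLN:lemma:relative_compact} reduces the task to showing that every subsequential weak limit $\pi$ of $\{\pi^N\}$ is concentrated on trajectories satisfying both asserted equations, and then to verifying that the limiting system has a unique solution so that $\pi$ must be a Dirac measure. To identify the limit I would encode each target identity as a bounded functional on $D_E([0,T])$, show the pre-limit expectation vanishes using the decompositions of Section \ref{LLN:sec:pre_evolution}, and then pass to the weak limit.

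For the measure component, fix $f\in C^2_b(\R^{1+d})$ and $t\in[0,T]$, and consider
\begin{equation*}
F^f_t(\mu,h) = |\ip{f,\mu_t}-\ip{f,\mu_0}|\wedge 1.
\end{equation*}
The pre-limit identity \eqref{LLN:evolution_mu_integral}, combined with Lemmas \ref{lemma_bound} and \ref{lemma_g}, gives
\begin{equation*}
\E\!\left[F^f_t(\mu^N,h^N)\right] \le C\paren{N^{\gamma-1}+N^{\gamma-3/2}+N^{-1}},
\end{equation*}
which vanishes as $N\to\infty$ since $\gamma<1$. Because any limit trajectory is continuous in $t$, $F^f_t$ is $\pi$-a.s.\ continuous on $D_E([0,T])$, and bounded continuous maps pass to the weak limit, so $\E^\pi[F^f_t]=0$. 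Applying this to a countable dense family of $f$ and $t$ and extending by density yields $\ip{f,\mu_t}=\ip{f,\mu_0}$ $\pi$-a.s.\ for all $t\in[0,T]$ and $f\in C^2_b(\R^{1+d})$.

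For the network component, define analogously
\begin{equation*}
F^x_t(\mu,h) = \left|h_t(x)-\alpha\int_0^t\int_{\CX\times\CY}\paren{y-h_s(x')}\ip{B_{x,x'}(c,w),\mu_s}\pi(dx',dy)\,ds\right|\wedge 1.
\end{equation*}
The pre-limit decomposition \eqref{LLN:evolution_h}, together with (i) $h^N_0(x)=\mathcal{O}_p(N^{-(\gamma-1/2)})\to 0$ since $\gamma>1/2$ by \eqref{limit_gaussian}, (ii) the $L^2$ bound $\sup_{t\le T}\E|V^N_t|^2 = O(N^{-2})$, (iii) the martingale estimate of Lemma \ref{LLN:lemma:Mt_bound}, and (iv) the $O(N^{-\gamma})$ Taylor residual $N^{-\gamma-1}\sum_{k}G^N_k$, gives $\E[F^x_t(\mu^N,h^N)]\to 0$. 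Passing to the weak limit and combining with the measure identification $\mu_s=\mu_0$ already established, the limit $h$ satisfies \eqref{LLN:limit_evolution} with $h_0=0$.

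Uniqueness then completes the argument: \eqref{LLN:limit_evolution} is a linear Volterra equation in $h\in C([0,T];\R^M)$ with bounded kernel $\alpha A$, so Gronwall delivers a unique solution, and the measure part is pinned to $\mu_0$; hence $\pi$ is the Dirac mass at this $(\mu_0,h)$. The main subtlety I anticipate is verifying continuity of $F^x_t$ on $D_E([0,T])$, since $B_{x,x'}(c,w)$ contains a $c^2$ factor that is not bounded on all of $\R^{1+d}$. This is handled by invoking part (iv) of Assumption \ref{assumption} (compact support of $C^i_0$) together with Lemma \ref{lemma_bound} (which gives $\sup_{i,k\le\floor{TN}}|C^i_k-C^i_0| = O(T)$), so that the measures $\mu^N_t$ are effectively supported on a common compact set in the $c$-direction; a standard cutoff $\chi_M(c)$ reduces the pairings $\ip{B_{x,x'},\mu^N_s}$ to pairings against bounded continuous test functions where weak convergence of measures directly applies, with a correction that vanishes once $M$ is taken large enough.
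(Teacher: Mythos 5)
Your proof is correct and follows the same identification-and-uniqueness scheme as the paper, but with two differences worth flagging. First, the paper encodes both identities in a single functional
\[
F(\mu,h)=\abs{(\ip{f,\mu_t}-\ip{f,\mu_0})\prod_i\ip{g_i,\mu_{s_i}}}+\sum_{x}\abs{(h_t(x)-h_0(x)-\cdots)\prod_i q_i(h_{s_i})},
\]
i.e.\ it multiplies each identity by a product of bounded test functions evaluated at earlier time points $s_1<\cdots<s_p\le t$, which is the standard Ethier--Kurtz template for characterizing the full law of the limiting process. You drop these products and use the truncated absolute values directly. This is a genuine economy: since you are proving the limit point is a Dirac mass, it suffices to know the identity holds $\pi$-a.s.\ at each $t$, extend over a countable dense set of $t$ using path continuity, and then invoke uniqueness of the deterministic linear Volterra solution --- the products buy you nothing here, and in fact the paper's proof effectively reduces to your case by taking $g_i\equiv q_i\equiv 1$. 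Second, you explicitly address a continuity subtlety that the paper silently passes over: the map $\mu\mapsto\ip{B_{x,x'},\mu_s}$ is not continuous for the topology of weak convergence because $B_{x,x'}$ contains an unbounded $c^2$ factor. Your fix --- invoking the compact support of $C^i_0$ from Assumption \ref{assumption} together with Lemma \ref{lemma_bound}'s uniform bound on $|C^i_k|$ to confine the measures' $c$-support to a common compact set, then using a cutoff $\chi_M(c)$ --- is the correct way to make the continuity claim rigorous. One small cosmetic difference is that your $F^x_t$ drops the $-h_0(x)$ term that the paper keeps, so the pre-limit estimate must absorb $h^N_0(x)$; your $O_p(N^{1/2-\gamma})$ bound does this since $\gamma>1/2$, and the $L^2$ version $\E|h^N_0(x)|^2 = O(N^{1-2\gamma})$ holds by the compact support of $C^i_0$ and boundedness of $\sigma$, so this is fine.
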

\begin{proof}For any $t \in [0,T]$, $f \in C^2_b(\R^{1+d})$, $g_1,\ldots,g_p \in C_b(\R^{1+d})$, $q_1, \ldots, q_p \in C_b(\R^{M})$, and $0 \le s_1 < \cdots < s_p \le t$, we define $F(\mu,h): D_E([0,T]) \to \R_+$ as
\begin{equation}\label{LNN:identify_limit_eq}
\begin{aligned}
F(\mu,h) &= \abs{\paren{\ip{f,\mu_t}-\ip{f,\mu_0}} \times \ip{g_1,\mu_{s_1}}\times \cdots \times \ip{g_p,\mu_{s_p}}}\\
& \quad + \sum_{x\in \mathcal{X}} \left\vert \left(h_t(x) - h_0(x) - \alpha \int_0^t  \int_{\CX \times \CY} \paren{y-h_s(x')}  \ip{\sigma(w x') \sigma(w x),\mu_s}  \pi(dx',dy) ds \right.\right.\\
& \qquad \left.\left.-\alpha\int_0^t \int_{\CX \times \CY} \paren{y-h_s(x')}x' x \ip{\sigma'(w x)  \sigma'(w x')c^2,\mu_s} \pi(dx',dy)ds \right) \times q_1(h_{s_1}) \times \cdots \times q_p(h_{s_p}) \right \vert.
\end{aligned}
\end{equation}

By equations \eqref{LLN:evolution_h}, \eqref{LLN:evolution_mu}, Lemma \ref{LLN:lemma:Mt_bound} and the Cauchy-Schwartz inequality, we have
\begin{align*}
\E_{\pi^N} \left[F(\mu,h)\right] &= \E\left[F(\mu^N,h^N)\right]\\
&= \E\left[\abs{O\paren{N^{-(1-\gamma)}} \times \prod_{i=1}^{p}\ip{g_i,\mu_{s_i}^N}}\right] + \E\left[\abs{ \paren{M_t^{N,1} + M_t^{N,2} + O\paren{N^{-\gamma}}}\times \prod_{i=1}^p q_i\paren{h^N_{s_i}}}\right]\\
&\le C \paren{\E\left[\abs{M_t^{N,1}}^2\right]^{\frac{1}{2}}+\E\left[\abs{M_t^{N,1}}^2\right]^{\frac{1}{2}}} + O\paren{N^{-(1-\gamma)}}\\
&\le C\paren{\frac{1}{N^{1/2}} + \frac{1}{N^{1-\gamma}}}.
\end{align*}

Therefore, $\lim_{N\to \infty} \E_{\pi^N} \left[F(\mu,h)\right] = 0$. Since $F(\cdot)$ is continuous and $F(\mu^N,h^N)$ is uniformly bounded, we have $\E_{\pi} \left[F(\mu,h)\right] = 0$. Hence, $(\mu,h)$ satisfies equation \eqref{LLN:limit_evolution} and $\ip{f,\mu_t} = \ip{f,\mu_0}$ for any $f \in C_b^2(\R^{1+d})$.
\end{proof}

We now finish the proof for Theorem \ref{LLN:theorem}.  Relative compactness proved in Lemma \ref{LLN:lemma:relative_compact} implies that every subsequence $\pi^{N_{k}}$ has a further sub-subsequence $\pi^{N_{k_m}}$ which weakly converges. By Lemma \ref{LLN:lemma:id_limit}, the limit point $\pi$ of any convergent subsequence $\pi^{N_{k_m}}$ satisfies the evolution stated in Theorem \ref{LLN:theorem}. Since equation \eqref{LLN:limit_evolution} is a finite-dimensional, linear equation, it has a unique solution. By Prokhorov's theorem, $\pi^N$ converges weakly to $\pi$, which is the distribution of $(\mu,h)$, the unique solution of \eqref{LLN:limit_evolution}. Hence, $(\mu^N,h^N)$ converges in distribution to $(\mu,h)$. Recall that $h^N_0(x) = g^N_0(x)\xrightarrow{d} 0$, which implies $h_0(x)=0$ for any $x\in \mathcal{X}$.

\section{Proof of Theorem \ref{CLT:theorem}: Convergence of the First Order Fluctuation Process}\label{sec::CLT}
Recall from Theorem \ref{LLN:theorem}, for any $t\in [0,T]$, the limit $\mu_t$ of the empirical measure $\mu^N_t$ satisfies the equation $\ip{f,\mu_t} = \ip{f,\mu_0}$, for any $f\in C_b^2(\R^{1+d})$. In addition, since $h_0(x) = 0$, \eqref{LLN:limit_evolution} can be reduced to (\ref{LLN:limit_evolution2}).

In this section, we prove a central limit theorem for the scaled one-layer neural network as the width of the network and the number of training steps become large. In particular, we study the convergence of the first order fluctuation process of the network's output
$K^N_t = N^{\varphi} \left(h^N_t - h_t\right)$,
where the parameter $\varphi$ is dependent on the scaling parameter $\gamma$. For our analysis in this section, we also denote $\eta^N_t = N^{\varphi}\paren{\mu^N_t - \mu_0}$.
\subsection{Evolution of the Fluctuation Process}\label{CLT:sec::evo}
 Using the evolution analysis of the network in Section \ref{LLN:sec:pre_evolution}, for $t \in [0,T]$ and $x\in\mathcal{X}$, the evolution of $K^N_t(x)$ can be written as
\begin{equation}
\label{CLT:network_fluc_evol}
\begin{aligned}
K^N_t(x)&= N^{\varphi} \left(h^N_t(x) - h_t(x)\right) = N^{\varphi} \left[\paren{h^N_t(x) - h_0^N(x)} - h_t(x) + h^N_0(x)\right]\\
&= \alpha\int^t_0 \int_{\CX \times \CY} \paren{y-h_s(x')} \ip{B_{x,x'}(c,w),\eta^N_s} \pi(dx',dy) ds -  \alpha\int^t_0 \int_{\CX \times \CY}  K^N_s(x') A_{x,x'} \pi(dx',dy) ds\\
&\quad + \Gamma^N_{t}(x) + K^N_0(x) + N^{\varphi}V_t^{N}(x) + N^{\varphi}M_t^{N}(x) + \frac{1}{N^{\gamma+1-\varphi}} \sum_{k=0}^{\floor{Nt}-1} G^N_{k}(x),
\end{aligned}
\end{equation}
where $K^N_0(x) = N^{\varphi}h^N_0(x) =  N^{1-\gamma + \varphi}\ip{c\sigma(wx),\mu^N_0}$ and
\begin{align*}
\Gamma^{N}_{t}(x) &= -\frac{\alpha}{N^{\varphi}}\int_0^t \int_{\CX \times \CY} K^N_s(x')\ip{B_{x,x'}(c,w),\eta^N_s}\pi(dx',dy)ds.
\end{align*}

We also consider the evolution of $l_t^N(f) = \ip{f,\eta^N_t}$ for a fixed function $f\in C^2_b(\R^{1+d})$. By the analysis of the evolution of the empirical measure $\mu^N_t$ in Section \ref{LLN:sec:pre_evolution}, we have for $N$ large enough
\begin{equation}\label{CLT:eta_evolution}
\begin{aligned}
\ip{f,\eta^N_t} - \ip{f,\eta^N_0}
&= \frac{\alpha}{N^{2- \gamma -\varphi}} \sum_{k=0}^{\floor{Nt}-1} \paren{y_k - g^N_k(x_k)}\ip{\partial_cf(c,w)\sigma(wx_k), \nu^N_k}\\
&\quad + \frac{\alpha}{N^{2- \gamma-\varphi}}\sum_{k=0}^{\floor{Nt}-1} \paren{y_k - g^N_k(x_k)}\ip{c\sigma'(wx_k)\nabla_w f(c,w)x_k, \nu^N_k} +  O\paren{N^{-(1-\varphi)}}.
\end{aligned}
\end{equation}

\subsection{Proof of Proposition \ref{prop::l_t}: Convergence of $\ip{f,\eta^N_t}$}\label{CLT:sec:eta_properties}
In this section, we establish the convergence of the process $\ip{f,\eta^N_t}$ as $N \to \infty$ in $D_{\R}([0,T])$ for any fixed $f\in C^2_b(\R^{1+d})$.

To show that relative compactness holds, we need to show that the compact containment condition and appropriate regularity for the process holds. Then, we will be able to conclude by Theorem 8.6 of Chapter 3 of \cite{EthierAndKurtz} that the relative compactness claim holds. The following lemma implies compact containment of the process $\{\ip{f,\eta^N_t}\}$.
\begin{lemma}\label{CLT:lemma:eta_compact_contatinment}
For any fixed $f\in C^2_b(\R^{1+d})$, when $\varphi \le 1-\gamma$, there exist a constant $C<\infty$, such that
\[\sup_{N \in \mathbb{N}, 0\le t\le T} \E \left[\abs{\ip{f,\eta^N_t}}^2 \right] < C.\]
Furthermore, for any $\epsilon>0$, there exist a compact subset $U \subset \R$ such that
$\sup_{N\in \mathbb{N}, 0\le t\le T} \P\paren{\ip{f,\eta^N_t} \notin U} < \epsilon. $
\end{lemma}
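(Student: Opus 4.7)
The plan is to start from the pre-limit evolution equation \eqref{CLT:eta_evolution} and decompose the SGD sum by adding and subtracting conditional expectations with respect to the filtration $\CF_k=\sigma(\{(x_j,y_j):j<k\})$, thereby splitting it into a Riemann-integral drift, an orthogonal martingale $\bar M^N_t$, and a Taylor/discretization remainder $R^N_t$. This is exactly the procedure already used in Section \ref{LLN:sec:pre_evolution} to pass from \eqref{LLN:evolution_mu} to \eqref{LLN:evolution_mu_integral}; multiplying the resulting identity by $N^{\varphi}$ gives
\begin{equation*}
\ip{f,\eta^N_t}=\ip{f,\eta^N_0}+\frac{\alpha}{N^{1-\gamma-\varphi}}\int_0^t\int_{\CX\times\CY}\paren{y-h^N_s(x')}\ip{C^f_{x'}(c,w),\mu^N_s}\pi(dx',dy)ds+\bar M^N_t+R^N_t,
\end{equation*}
with $R^N_t=O(N^{\varphi-1})$ collecting the higher-order Taylor contributions from \eqref{tfn_evolution} and the step-boundary correction. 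I will bound each of the four summands in $L^2$ uniformly in $N$ and $t\in[0,T]$, then obtain compact containment via Markov's inequality.

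For the initial term, parts (iii)-(iv) of Assumption \ref{assumption} together with $f\in C^2_b(\R^{1+d})$ give $\E[(\ip{f,\eta^N_0})^2]=N^{2\varphi}\,\text{Var}_{\mu_0}(f)/N\le CN^{2\varphi-1}$, which is uniformly bounded because $\varphi\le 1-\gamma<1/2$. For the drift, Lemma \ref{lemma_g} controls $\E[|h^N_s(x')|^2]$, while Lemma \ref{lemma_bound} together with the boundedness of $\partial_c f,\nabla_w f,\sigma,\sigma'$ and the finiteness of $\CX$ controls $\E[|\ip{C^f_{x'}(c,w),\mu^N_s}|^2]$ (the latter is needed because $C^f_{x'}$ is affine in $c$ and hence unbounded, but $\sup_{s\le T}\E[(C^i_s)^2]\le C$); combined with the prefactor bound $\alpha/N^{1-\gamma-\varphi}\le \alpha$ coming from $\varphi\le 1-\gamma$, Cauchy-Schwarz in time yields $\E[(\text{drift})^2]\le CT^2$. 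For the martingale, the orthogonal increments $\Delta_k$ satisfy $\E[\Delta_k\mid\CF_k]=0$ and, by the same ingredients, $\E[\Delta_k^2\mid\CF_k]\le C/N^{2(2-\gamma-\varphi)}$, so $\E[(\bar M^N_t)^2]\le CTN/N^{2(2-\gamma-\varphi)}=CT/N^{3-2\gamma-2\varphi}\le CT/N$ since $\varphi\le 1-\gamma$ gives $3-2\gamma-2\varphi\ge 1$. Finally, $R^N_t=O(N^{\varphi-1})$ deterministically by Lemma \ref{lemma_bound}.

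Combining these four bounds via $(a+b+c+d)^2\le 4(a^2+b^2+c^2+d^2)$ and taking expectations yields $\sup_{N\in\mathbb{N},\,0\le t\le T}\E[|\ip{f,\eta^N_t}|^2]\le C$, which is the first claim. Markov's inequality then gives $\P(|\ip{f,\eta^N_t}|>K)\le C/K^2$; choosing $K=\sqrt{C/\epsilon}$ and $U=[-K,K]$ delivers the required compact set. The main technical subtlety is the bookkeeping of exponents: one must verify simultaneously $1-\gamma-\varphi\ge 0$ (drift), $3-2\gamma-2\varphi\ge 1$ (martingale variance), and $2\varphi-1<0$ (initial CLT bound), all under the single hypothesis $\varphi\le 1-\gamma$ with $\gamma\in(1/2,1)$. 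These inequalities hold precisely because the learning rate was calibrated as $\alpha^N=\alpha/N^{2-2\gamma}$; without this scaling the drift coefficient would diverge and the whole argument would break.
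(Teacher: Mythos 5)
Your proof is correct, but it organizes the estimate differently from the paper. The paper works directly with the pre-limit identity \eqref{CLT:eta_evolution}: it bounds the entire SGD sum in absolute value (using that $|\ip{\partial_c f\,\sigma(wx_k),\nu^N_k}|$, $|\ip{c\sigma'(wx_k)\nabla_w f\cdot x_k,\nu^N_k}|$ are uniformly bounded by Lemma \ref{lemma_bound} and $\sigma,f\in C^2_b$), arrives at $|\ip{f,\eta^N_t}|\le|\ip{f,\eta^N_0}|+\frac{C}{N^{1-\gamma-\varphi}}(1+|g^N_k(x_k)|)$ after absorbing the $\le NT$ summands, squares, and takes expectations using Lemma \ref{lemma_g} and the CLT-type bound $\E[|\ip{f,\eta^N_0}|^2]\le CN^{2\varphi-1}$. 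No drift/martingale separation is made; the triangle inequality on the raw sum is all that is used. You instead pass through the time-continuous representation \eqref{LLN:evolution_mu_integral}, split into initial term, Riemann drift, martingale, and remainder, and control each in $L^2$ separately --- the orthogonality-of-increments estimate for $\bar M^N_t$ and the Cauchy--Schwarz-in-time bound for the drift. Your route uses more machinery and is a bit sharper on the stochastic term (giving $\E[(\bar M^N_t)^2]\lesssim T/N$ rather than subsuming it in an absolute bound), while the paper's route is shorter and avoids having to verify the martingale property; both are powered by exactly the same ingredients (Lemmas \ref{lemma_bound}, \ref{lemma_g}, the i.i.d.\ initialization, Markov's inequality), and both exponent checks reduce to the single hypothesis $\varphi\le 1-\gamma$. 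One small remark: the paper's intermediate line with the unindexed $|g^N_k(x_k)|$ implicitly requires the Cauchy--Schwarz-in-time step you make explicit (since Lemma \ref{lemma_g} controls $\sup_k\E[\cdot]$, not $\E[\sup_k\cdot]$), so your more explicit bookkeeping is a welcome clarification rather than a deviation in substance.
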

\begin{proof}
By equation \eqref{CLT:eta_evolution}, we have
\begin{align*}
\abs{\ip{f,\eta^N_t}}  &\le \abs{\ip{f,\eta^N_0}} + \frac{C}{N^{2- \gamma -\varphi}}\sum_{k=0}^{\floor{Nt}-1} \abs{y_k - g^N_k(x_k)} + \frac{C}{N^{1-\varphi}}\\
&\le  \abs{\ip{f,\eta^N_0}} + \frac{C}{N^{1- \gamma -\varphi}} \abs{g^N_k(x_k)} + \frac{C}{N^{1- \gamma -\varphi}}.
\end{align*}

Squaring both sides, we get
\begin{equation}\label{eta_compact}
\begin{aligned}
\abs{\ip{f,\eta^N_t}}^2  &\le 3 \paren{\abs{\ip{f,\eta^N_0}}^2 + \frac{C}{N^{2(1- \gamma -\varphi)}} \abs{g^N_k(x_k)}^2 + \frac{C}{N^{2(1- \gamma -\varphi})}}.
\end{aligned}
\end{equation}

Since $\ip{f,\eta^N_0} = N^{\varphi - \frac{1}{2}} \ip{f, \sqrt{N} \paren{\mu^N_0 - \mu_0}}$,
$\E\left[\abs{\ip{f,\eta^N_0}}^2\right] \le C(N^{2\varphi-1})$.
Taking expectation on both sides of equation \eqref{eta_compact}, by Lemma \ref{lemma_g}, we have
\[\sup_{N \in \mathbb{N}, 0\le t\le T} \E \left[\abs{\ip{f,\eta^N_t}}^2 \right] < C,\]
for some $C<\infty$. By Markov's inequality, the compact containment condition of $\ip{f,\eta^N_t}$ follows.
\end{proof}

We now establish the regularity of $\ip{f,\eta^N_t}$. For the following lemma, we define the function $q(z_1,z_2) = \min\{\abs{z_1 - z_2},1\}$, where $z_1,z_2 \in \R$.
\begin{lemma}\label{CLT:eta_regularity}
For $f\in C^2_b(\R^{1+d})$, $\delta \in (0,1)$, there exist a constant $C <\infty$ such that for any $0\le u \le \delta$, $0\le v\le \delta \wedge t$, and $t \in [0,T]$,
\[\E \left[q\paren{\ip{f,\eta^N_{t+u}},\ip{f,\eta^N_{t}}}q\paren{\ip{f,\eta^N_{t}},\ip{f,\eta^N_{t-v}}}\vert \CF^N_t\right]\le \frac{C\delta}{N^{1-\gamma-\varphi}}+\frac{C}{N^{2-\gamma-\varphi}},\]
where $\varphi \le  1-\gamma$.
\end{lemma}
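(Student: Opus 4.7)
The plan is to express the forward increment $A := \ip{f,\eta^N_{t+u}} - \ip{f,\eta^N_t}$ via the pre-limit evolution \eqref{CLT:eta_evolution} together with its drift/martingale decomposition, and then combine this with the two elementary bounds $q(z_1,z_2) \le 1$ and $q(z_1,z_2) \le \abs{z_1-z_2}$. Denote also the backward increment $B := \ip{f,\eta^N_t} - \ip{f,\eta^N_{t-v}}$, which is $\CF^N_t$-measurable.

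First I would use $q(B)\le 1$ together with the measurability of $B$ to reduce the problem to estimating a conditional moment of $A$:
\begin{equation*}
\E\!\left[q(A)\,q(B)\,\middle|\,\CF^N_t\right] = q(B)\,\E\!\left[q(A)\,\middle|\,\CF^N_t\right] \le \E\!\left[\abs{A}\,\middle|\,\CF^N_t\right] \le \left(\E\!\left[A^2\,\middle|\,\CF^N_t\right]\right)^{1/2}.
\end{equation*}
Next, using \eqref{CLT:eta_evolution} and the per-step Taylor expansion \eqref{tfn_evolution}, I would split $A = A_{\text{drift}} + A_{\text{mart}} + A_{\text{rem}}$, where $A_{\text{drift}}$ is the $\CF^N_k$-conditional-mean part of the telescoping sum over $k\in\{\floor{Nt},\ldots,\floor{N(t+u)}-1\}$, $A_{\text{mart}}$ is the corresponding martingale-difference sum, and $A_{\text{rem}}$ collects the second-order Taylor remainders.

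Using Cauchy--Schwarz on the drift sum together with Lemma \ref{lemma_g} to control $\E\abs{g^N_k(x)}^2$ uniformly in $(N,k)$, and the fact that $C^f_{x'}(c,w)$ is at most linear in $(c,w)$ because $f\in C^2_b$, one obtains $\E[A_{\text{drift}}^2\,|\,\CF^N_t]\le C\delta^2/N^{2(1-\gamma-\varphi)}$. Orthogonality of the martingale differences with the same Lemma \ref{lemma_g} bound gives $\E[A_{\text{mart}}^2\,|\,\CF^N_t]\le C\delta/N^{3-2\gamma-2\varphi}$. Finally, summing the per-step $O(N^{-2})$ Taylor remainders from \eqref{tfn_evolution} over at most $N\delta$ steps and scaling by $N^\varphi$ yields the deterministic bound $\abs{A_{\text{rem}}}\le C\delta/N^{1-\varphi}+C/N^{2-\varphi}$, which since $\gamma>0$ is dominated by $C\delta/N^{1-\gamma-\varphi}+C/N^{2-\gamma-\varphi}$. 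Combining the three pieces gives
\begin{equation*}
\left(\E\!\left[A^2\,\middle|\,\CF^N_t\right]\right)^{1/2} \le \frac{C\delta}{N^{1-\gamma-\varphi}} + \frac{C\sqrt{\delta}}{N^{3/2-\gamma-\varphi}} + \frac{C}{N^{2-\gamma-\varphi}}.
\end{equation*}

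The main technical point, and what I expect to be the only real obstacle, is that the martingale gives a $\sqrt{\delta}$-rate while the lemma requires a linear-in-$\delta$ bound plus a $\delta$-free remainder. This is resolved by the elementary AM--GM inequality $\sqrt{\delta/N}\le \tfrac12(\delta+1/N)$, which produces
\begin{equation*}
\frac{C\sqrt{\delta}}{N^{3/2-\gamma-\varphi}} = \frac{C}{N^{1-\gamma-\varphi}}\sqrt{\tfrac{\delta}{N}} \le \frac{C\delta}{N^{1-\gamma-\varphi}} + \frac{C}{N^{2-\gamma-\varphi}}.
\end{equation*}
Assembling everything yields the desired bound, with the hypothesis $\varphi\le 1-\gamma$ entering exactly to guarantee that the exponents $1-\gamma-\varphi$ and $2-\gamma-\varphi$ are non-negative so that the estimate is meaningful and matches the form stated in the lemma.
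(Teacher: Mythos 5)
Your proof is correct but self-contained in a way the paper's is not. The reduction step is the same in both: the backward factor is $\CF^N_t$-measurable and $q\le 1$, so the product reduces to $\E[q(A)\,|\,\CF^N_t]\le\E[\abs{A}\,|\,\CF^N_t]$ for the forward increment $A=\ip{f,\eta^N_{t+u}}-\ip{f,\eta^N_t}=N^\varphi\paren{\ip{f,\mu^N_{t+u}}-\ip{f,\mu^N_t}}$. At that point the paper simply cites Lemma~5.8 of \cite{SirignanoSpiliopoulosNN4}, which already supplies the conditional $L^1$ regularity estimate $\E\left[\abs{\ip{f,\mu^N_{t+u}}-\ip{f,\mu^N_t}}\,|\,\CF^N_t\right]\le C\delta N^{\gamma-1}+C N^{\gamma-2}$, and multiplies by $N^\varphi$. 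You instead re-derive that estimate directly from the SGD dynamics \eqref{tfn_evolution} via the drift/martingale/Taylor-remainder split, the uniform moment bounds of Lemmas \ref{lemma_bound} and \ref{lemma_g}, and the AM--GM step $\sqrt{\delta/N}\le\tfrac12\paren{\delta+1/N}$ that converts the martingale's $\sqrt{\delta}$ rate into the $\delta$-linear-plus-$\delta$-free form the lemma requires. This is essentially what the cited lemma is doing under the hood, so the two arguments are consistent in substance; the paper's is shorter but relies on an external reference, yours is longer but verifiable within this paper. One minor slip: your stated drift bound $\E[A_{\mathrm{drift}}^2\,|\,\CF^N_t]\le C\delta^2/N^{2(1-\gamma-\varphi)}$ should also carry a $\delta$-free piece of order $N^{-2(2-\gamma-\varphi)}$ coming from the ``$+1$'' in $\floor{N(t+u)}-\floor{Nt}\le N\delta+1$, but this is harmlessly absorbed by the $N^{-(2-\gamma-\varphi)}$ term already present in your final bound.
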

\begin{proof}Recall that $\ip{f,\mu_t} = \ip{f,\mu_0}$ for any $t \in [0,T]$ and $f\in C^2_b(\R^{1+d})$. For any $0 \le s < t \le T$, we have
\[\abs{\ip{f,\eta^N_t} - \ip{f,\eta^N_s}} = \abs{\ip{f,N^{\varphi}\paren{\mu^N_t - \mu^N_s}}} = N^{\varphi} \abs{\paren{\ip{f,\mu^N_t} - \ip{f,\mu^N_s}}}.\]

By the regularity result for $\mu^N_t$ proved in Lemma 5.8 of \cite{SirignanoSpiliopoulosNN4}, we have
\begin{align*}
\E \left[\abs{\ip{f,\eta^N_t} - \ip{f,\eta^N_s}}\vert \CF^N_s\right] = N^{\varphi}\E \left[\abs{\ip{f,\mu^N_t} - \ip{f,\mu^N_s}}\vert \CF^N_s\right] \le \frac{C\delta}{N^{1-\gamma-\varphi}}+\frac{C}{N^{2-\gamma-\varphi}},
\end{align*}
for $0 <s <t \le T$ with $0 < t-s \le \delta <1$. If $\varphi \le 1-\gamma$, both terms in the last inequality above are bounded as $N$ grows. The statement of the lemma follows.
\end{proof}
Using Lemmas \ref{CLT:lemma:eta_compact_contatinment} and \ref{CLT:eta_regularity}, we are now ready to present the proof of Proposition \ref{prop::l_t}. We first show the convergence of $l^N_t(f)$ for the case when $\varphi < 1-\gamma$.
\begin{proof}[Proof of Proposition \ref{prop::l_t} Case 1: $\varphi < 1-\gamma$]\

For fixed $f\in C^2_b(\R^{1+d})$, when $\varphi \le 1-\gamma $, the family of processes $\{\ip{f,\eta^N_t}, t\in[0,T]\}_{N\in \mathbb{N}}$ is relatively compact in $D_{\R}([0,T])$ due to Lemmas \ref{CLT:lemma:eta_compact_contatinment}, \ref{CLT:eta_regularity}, and Theorem 8.6 of Chapter 3 of \cite{EthierAndKurtz}.

 For simplicity, denote $l^N_t = \ip{f,\eta^N_t}$.  Let $\pi^N \in \mathcal{M}\paren{D_{\R}([0,T]}$ be the probability measure corresponding to $l^N_t$. Relative compactness implies that there is a subsequence $\pi^{N_k}$ that converges weakly to a limit point $\pi$. We show that $\pi$ is a Dirac measure concentrated on $l = 0 \in D_{\R}([0,T])$ when $\varphi < 1-\gamma$. For $t \in [0,T]$, $g_1, \ldots, g_p \in C_b(\R)$, and $0 \le s_1< \cdots < s_p \le t$, define a map $F(l):D_{\R}([0,T]) \to \R_+$ as
\[F(l) = \abs{\paren{l_t - 0} \times g_1(l_{s_1})\times \cdots \times g_p(l_{s_p})}.\]
By equation \eqref{CLT:eta_evolution} and the fact that $\ip{f,\eta^N_0} = N^{\varphi - \frac{1}{2}} \ip{f, \sqrt{N} \paren{\mu^N_0 - \mu_0}} = O_p(N^{\varphi -\frac{1}{2}})$, we have
\begin{align*}
\E_{\pi^N} \left[F(l)\right] &= \E\left[ F(l^N) \right] \\
&= \E \left[ \abs{\paren{\ip{f,\eta^N_t} - \ip{f,\eta^N_0} + \ip{f,\eta^N_0}} \times \prod_{i=1}^p g_i(l^N_{s_i})}\right]\\
&\le \E \left[ \abs{\paren{\ip{f,\eta^N_t} - \ip{f,\eta^N_0}} \times \prod_{i=1}^p g_i(l^N_{s_i})}\right] + \E \left[ \abs{ \ip{f,\eta^N_0} \times \prod_{i=1}^p g_i(l^N_{s_i})}\right]\\
&\le C \paren{\frac{1}{N^{1-\gamma - \varphi}} + \frac{1}{N^{1-\varphi}} + \frac{1}{N^{\frac{1}{2}-\varphi}}}.
\end{align*}
Note that  $\gamma > \frac{1}{2}$ implies that $\varphi < 1-\gamma<\frac{1}{2}$ and $1-\gamma - \varphi >0$. Since $F(\cdot)$ is continuous and $F(l^N)$ is uniformly bounded, we have
\[\lim_{N\to \infty} \E_{\pi^N} \left[F(l) \right] = \E_{\pi} \left[F(l)\right] = 0,\]
where $\pi$ is the Dirac measure concentrated on 0.
We have shown that the limit point $\pi$ of any convergence subsequence (which exists due to relative compactness) is the Dirac measure concentrated on 0. Therefore, by Prokhorov's theorem, $\pi^N$ weakly converges to 0. That is, $l^N = \ip{f,\eta^N} \xrightarrow{d} 0$ and thus the limit is in probability. This concludes the proof of Proposition \ref{prop::l_t} in  case 1: $\varphi < 1-\gamma$.
\end{proof}

The proof of Proposition \ref{prop::l_t} in  case 2: $\varphi = 1-\gamma$ is more subtle and is given in different steps below. We see that equation \eqref{CLT:eta_evolution} becomes
\begin{equation}
\begin{aligned}
\ip{f,\eta^N_t} - \ip{f,\eta^N_0} &= \ip{f,N^{\varphi}\paren{\mu^N_t - \mu^N_0}} = N^{\varphi} \paren{\ip{f,\mu^N_t} - \ip{f,\mu^N_0}}\\
&= \frac{\alpha}{N}\sum_{k=0}^{\floor{Nt}-1} \int_{\CX \times \CY} \paren{y-g^N_k(x')}  \ip{ \partial_cf(c,w)\sigma(w x'),\nu^N_k} \pi(dx',dy)\\
&\quad +\frac{\alpha}{N}\sum_{k=0}^{\floor{Nt}-1}\int_{\CX \times \CY} \paren{y-g^N_k(x')}\ip{c\sigma'(wx')\nabla_w f(c,w)x',\nu^N_k} \pi(dx',dy)\\
&\quad + M^{1,N}_{f,t} + M^{2,N}_{f,t}+ O\paren{N^{-\gamma}},
\end{aligned}
\end{equation}
where
\begin{align*}
M^{1,N}_{f,t} &=  \frac{\alpha}{N}\left \lbrace \sum_{k=0}^{\floor{Nt}-1}\paren{y_k-g^N_k(x_k)}  \ip{\partial_cf(c,w)\sigma(wx_k), \nu^N_k}\right.\\
&\qquad \quad \left. - \int_{\CX \times \CY} \paren{y-g^N_k(x')}  \ip{\partial_cf(c,w) \sigma(w x') ,\nu^N_k} \pi(dx',dy) \right \rbrace, \\
M^{2,N}_{f,t} &=  \frac{\alpha}{N} \left \lbrace \sum_{k=0}^{\floor{Nt}-1} \paren{y_k-g^N_k(x_k)} \ip{c\sigma'(wx_k)\nabla_w f(c,w)x_k, \nu^N_k}\right.\\
&\qquad \qquad \left. -\int_{\CX \times \CY} \paren{y-g^N_k(x')} \ip{c\sigma'(wx')\nabla_w f(c,w)x',\nu^N_k} \pi(dx',dy)  \right \rbrace.
\end{align*}

As $N$ grows, we can rewrite this equation in terms of Riemann integrals and scaled measure $\mu^N_t$,
\begin{equation}\label{CLT:eta_evolution2}
\begin{aligned}
\ip{f,\eta^N_t} - \ip{f,\eta^N_0}
&= \int_0^t \int_{\CX \times \CY} \alpha \paren{y-h^N_s(x')}  \ip{ C_{x'}^{f}(c,w),\mu^N_s} \pi(dx',dy) ds\\
&\quad + M^{1,N}_{f,t} + M^{2,N}_{f,t}+ O\paren{N^{-\gamma}},
\end{aligned}
\end{equation}

Fir any fixed $f \in C^2_b(\R^{1+d})$, similar analysis as in Lemma 3.1 in \cite{SirignanoSpiliopoulosNN1}, we have the following bound for terms $ M^{1,N}_{f,t}, M^{2,N}_{f,t}$.
\begin{lemma}\label{CLT:lemma:martingale2}
For any $N \in \mathbb{N}$, there is a constant $C<\infty$ such that
\begin{align*}
\E\left[\sup_{t\in [0,T]} \abs{M^{1,N}_{f,t}}^2 +\sup_{t\in [0,T]} \abs{ M^{2,N}_{f,t}}^2 \right] \le \frac{C}{N}.
\end{align*}
\end{lemma}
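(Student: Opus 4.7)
The plan is to recognize that both $M^{1,N}_{f,t}$ and $M^{2,N}_{f,t}$ are discrete-time square-integrable martingales with respect to the natural filtration $\{\mathcal{F}^N_k\}$ generated by the SGD dynamics (where $\mathcal{F}^N_k = \sigma(\theta_0, (x_j,y_j)_{j<k})$), and then to apply Doob's maximal inequality together with the orthogonality of martingale differences. Concretely, writing
\begin{equation*}
M^{1,N}_{f,t} = \frac{\alpha}{N}\sum_{k=0}^{\floor{Nt}-1} D^{1,N}_{f,k}, \qquad D^{1,N}_{f,k} = \paren{y_k - g^N_k(x_k)}\ip{\partial_c f(c,w)\sigma(wx_k),\nu^N_k} - \E\left[\,\cdot\, \big| \mathcal{F}^N_k\right],
\end{equation*}
the key observation is that conditional on $\mathcal{F}^N_k$ the parameters $\nu^N_k$ and $g^N_k$ are measurable, while $(x_k,y_k)$ is independent of $\mathcal{F}^N_k$ with law $\pi$, so $\E[D^{1,N}_{f,k}\mid \mathcal{F}^N_k]=0$.

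Next I would apply Doob's $L^2$ inequality to bound $\E[\sup_{t\in[0,T]}|M^{1,N}_{f,t}|^2]\le 4\,\E[|M^{1,N}_{f,T}|^2]$, and then use the orthogonality of martingale differences to rewrite
\begin{equation*}
\E\left[|M^{1,N}_{f,T}|^2\right] = \frac{\alpha^2}{N^2}\sum_{k=0}^{\floor{NT}-1}\E\left[|D^{1,N}_{f,k}|^2\right].
\end{equation*}
Each summand is then estimated by $(a-\E a)^2\le \E[a^2]$ conditionally, so it suffices to bound $\E[(y_k-g^N_k(x_k))^2 \ip{\partial_c f(c,w)\sigma(wx_k),\nu^N_k}^2]$ uniformly in $k\le \floor{NT}$ and $N$. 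Since $f\in C^2_b$ and $\sigma\in C^\infty_b$ by Assumption \ref{assumption}(i), the inner product is bounded by $\|\partial_c f\|_\infty\|\sigma\|_\infty$ independent of $\nu^N_k$, and $\E[(y_k-g^N_k(x_k))^2]$ is bounded by Lemma \ref{lemma_g} together with the finiteness of the dataset. This yields $\E[|D^{1,N}_{f,k}|^2]\le C$ uniformly, so that the sum is at most $C\alpha^2\floor{NT}/N^2 \le C/N$, as desired.

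The argument for $M^{2,N}_{f,t}$ is structurally identical, the only difference being that the relevant integrand $\ip{c\sigma'(wx_k)\nabla_w f(c,w)x_k,\nu^N_k}$ involves $c$ and $x_k$, which are not uniformly bounded a priori. For this piece I would use that $x_k$ lies in the finite dataset $\mathcal{X}$ (Assumption \ref{assumption}(ii)) and hence $\|x_k\|\le C$ deterministically, and that $\int |c|^2\,\nu^N_k(dc,dw)=\frac{1}{N}\sum_i |C^i_k|^2\le C$ by the moment bound in Lemma \ref{lemma_bound}, so by Cauchy-Schwarz $|\ip{c\sigma'(wx_k)\nabla_w f(c,w)x_k,\nu^N_k}|^2$ has a uniform second-moment bound. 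Combined with $\E[(y_k-g^N_k(x_k))^2]\le C$ from Lemma \ref{lemma_g}, the same Doob plus orthogonality argument produces the $C/N$ bound. The main (very mild) obstacle is ensuring that the uniform integrability needed for these second-moment estimates really does follow from Lemmas \ref{lemma_bound} and \ref{lemma_g}, rather than requiring any boundedness of $c$ or $w$; once that is checked, the calculation is routine and directly parallels Lemma 3.1 of \cite{SirignanoSpiliopoulosNN1}.
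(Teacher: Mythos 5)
Your approach is correct and is essentially the same as the paper's, which simply cites the analogue of Lemma 3.1 in \cite{SirignanoSpiliopoulosNN1}: identify $M^{i,N}_{f,t}$ as a martingale by conditional independence of $(x_k,y_k)$ given $\mathcal{F}^N_k$, apply Doob's $L^2$ maximal inequality to handle the supremum, and use orthogonality of the increments together with Lemmas \ref{lemma_bound} and \ref{lemma_g} to bound each term. One small simplification you could note is that the Cauchy--Schwarz step for $M^{2,N}_{f,t}$ is unnecessary, since Lemma \ref{lemma_bound} gives a deterministic bound $\sup_{N,k\le\floor{TN}}|C^i_k|\le C(T)$ (the $C$-particles have compact support and bounded increments), so $\ip{c\sigma'(wx_k)\nabla_w f(c,w)x_k,\nu^N_k}$ is already uniformly bounded pointwise.
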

Denote $l^N_t(f) = \ip{f,\eta^N_t}$. From equation \eqref{CLT:eta_evolution2}, we see that the evolution of $l^N_t(f)$ involves the evolution of $\mu^N_t$ and $h^N_t$. In the next lemma, we prove the convergence of the processes $(\mu^N_t, h^N_t,l^N_t(f))$ in distribution in the space $D_{E'}([0,T])$, where $E' = \CM(\R^{1+d}) \times \R^M \times \R$.

The result for Proposition \ref{prop::l_t} Case 2: $\varphi = 1-\gamma$ then follows from Lemma \ref{CLT:lemma:lt_1-gamma}.
\begin{lemma}\label{CLT:lemma:lt_1-gamma}
For any fixed $f\in C^2_b(\R^{1+d})$, if $\varphi = 1-\gamma$, the processes $(\mu^N_t, h^N_t,l^N_t(f))$ converges in distribution in $D_{E'}([0,T])$ to $(\mu_0,h_t,l_{t}(f))$, where $h_t$ satisfies equation \eqref{LLN:limit_evolution} and $l_t(f)$ is given by
\begin{equation}\label{CLT:evolution_l}
\begin{aligned}
l_{t}(f)
&= \int_0^t \int_{\CX \times \CY} \alpha \paren{y-h_s(x')}  \ip{ C^f_{x'}(c,w),\mu_0} \pi(dx',dy) ds.
\end{aligned}
\end{equation}
\end{lemma}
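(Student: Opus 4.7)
\textbf{Proof plan for Lemma \ref{CLT:lemma:lt_1-gamma}.} The plan is to follow the standard relative-compactness-plus-identification-of-the-limit strategy that was used for Theorem \ref{LLN:theorem}, now upgraded to the triple $(\mu^N_t, h^N_t, l^N_t(f))$. First, I would combine the relative compactness of $\{(\mu^N, h^N)\}_{N\in\mathbb{N}}$ from Lemma \ref{LLN:lemma:relative_compact} with the compact containment from Lemma \ref{CLT:lemma:eta_compact_contatinment} and the regularity from Lemma \ref{CLT:eta_regularity} (applied at $\varphi = 1-\gamma$, which makes both error terms uniformly bounded in $N$). Invoking Theorem 8.6 of Chapter 3 of Ethier--Kurtz component-wise then yields relative compactness of the joint process $(\mu^N_t, h^N_t, l^N_t(f))$ in $D_{E'}([0,T])$.

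Next I would identify the limit. Let $\pi^N$ denote the law of $(\mu^N, h^N, l^N(f))$ on $D_{E'}([0,T])$ and let $\pi$ be the weak limit of a convergent subsequence. Theorem \ref{LLN:theorem} already pins down the $(\mu, h)$-marginal as $(\mu_0, h_t)$ with $h_t$ solving \eqref{LLN:limit_evolution2}, so it only remains to identify the $l$-component. For test functions $q_1,\dots,q_p \in C_b(\R)$ and times $0\le s_1 < \cdots < s_p \le t$, define the functional
\begin{align*}
F(\mu, h, l) &= \biggl| \Bigl( l_t - l_0 - \int_0^t \int_{\CX\times\CY} \alpha(y-h_s(x'))\ip{C^f_{x'}(c,w),\mu_s}\pi(dx',dy)ds \Bigr)\\
&\qquad \times q_1(l_{s_1})\cdots q_p(l_{s_p}) \biggr|.
\end{align*}
Using the pre-limit evolution \eqref{CLT:eta_evolution2}, the martingale $L^2$-bound in Lemma \ref{CLT:lemma:martingale2}, and the deterministic error $O(N^{-\gamma})$, a Cauchy--Schwarz argument gives $\E_{\pi^N}[F(\mu^N,h^N,l^N(f))] \to 0$. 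Since $F$ is continuous on the Skorokhod space (with respect to paths at continuity points of the limit) and $F$ is uniformly bounded along the sequence, this yields $\E_{\pi}[F] = 0$ and so the limit $l_t$ satisfies equation \eqref{CLT:evolution_l} provided we also identify the initial value $l_0 = 0$. For the latter, note
\[
\ip{f,\eta^N_0} = N^{\varphi - 1/2}\cdot \sqrt{N}(\ip{f,\mu^N_0} - \ip{f,\mu_0}),
\]
and since $\varphi = 1-\gamma < 1/2$, the CLT at initialization gives $\ip{f,\eta^N_0} \to 0$ in probability, so $l_0 = 0$.

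Finally, since \eqref{CLT:evolution_l} is a linear integral equation driven by the already-identified deterministic $(\mu_0, h_s)$, it has a unique solution. Therefore every convergent subsequence of $\pi^N$ has the same limit, and Prokhorov's theorem gives convergence in distribution of the whole sequence to the Dirac mass on $(\mu_0, h, l(f))$.

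\textbf{Main obstacle.} The routine part is the martingale and error bookkeeping; the slightly delicate step is making sure that in the integrand $\int_0^t \int (y-h^N_s(x'))\ip{C^f_{x'}(c,w),\mu^N_s}\pi(dx',dy)ds$ one can jointly pass to the limit in both $h^N_s$ and $\mu^N_s$ using only convergence in distribution rather than in probability. This is handled by noting that the $(\mu,h)$-marginal limit is deterministic (by Theorem \ref{LLN:theorem}), which upgrades joint convergence in distribution of $(\mu^N, h^N, l^N(f))$ to convergence in distribution of $l^N(f)$ alone with $(\mu^N_s, h^N_s)$ effectively replaceable by their deterministic limits inside the functional $F$.
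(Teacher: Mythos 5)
Your proposal is correct and follows essentially the same route as the paper: relative compactness of the triple via the containment and regularity lemmas plus Ethier--Kurtz, then identification of the limit through a nonnegative functional built from the pre-limit evolution \eqref{CLT:eta_evolution2}, the martingale bound of Lemma \ref{CLT:lemma:martingale2}, and the observation that $\ip{f,\eta^N_0}=O_p(N^{\varphi-1/2})\to 0$, followed by uniqueness and Prokhorov. The only cosmetic difference is that the paper folds the $(\mu,h)$-identification into the same functional (defining $F_1=F(\mu,h)+\cdots$ with $F$ from the LLN proof), while you identify the $(\mu,h)$-marginal by citing Theorem \ref{LLN:theorem} directly and then test only the $l$-component; both are valid and of equivalent content.
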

\begin{proof}By Lemmas \ref{CLT:lemma:eta_compact_contatinment} and \ref{CLT:eta_regularity}, $\{l^N(f)\}_{N\in\mathbb{N}}$ is relatively compact in $D_{\R}([0,T])$. By Lemma \ref{LLN:lemma:relative_compact}, $\{\mu^N,h^N\}_{N \in \mathbb{N}}$ is relatively compact in $D_E([0,T])$, where $E= \CM(\R^{1+d}) \times \R^M $. Since relative compactness is equivalent to tightness, we have that the probability measures of the family of processes $\{l^N(f)\}_{N\in\mathbb{N}}$ and the probability measures of the family of processes $\{\mu^N,h^N\}_{N \in \mathbb{N}}$ are tight. Therefore, $\{\mu^N,h^N,l^N(f)\}_{N \in \mathbb{N}}$ is tight. Hence, $\{\mu^N,h^N,l^N(f)\}_{N \in \mathbb{N}}$ is also relatively compact.

Denote $\pi^N \in \CM(D_{E'}([0,T])$ the probability measure corresponding to $(\mu^N,h^N,l^N(f))$. 
We now show that any limit point $\pi$ of a convergent subsequence (existing due to relative compactness) $\pi^{N_k}$ is a Dirac measure concentrated on $(\mu,h,l(f))\in D_{E'}([0,T])$, where $(\mu,h,l(f))$ satisfies equations \eqref{LLN:limit_evolution} and \eqref{CLT:evolution_l}. We define the map $F_1(\mu,h,l(f)):D_{E'}([0,T]) \to \R_+$ for each $t \in [0,T]$, $m_1,\ldots,m_p \in C_b(\R)$, and $0 \le s_1 < \cdots < s_p \le t$.
\begin{equation}\label{F_1}
\begin{aligned}
F_1(\mu,h,l(f)) &= F(\mu,h) + \left\vert \left(l_t(f) - \int_0^t \int_{\CX \times \CY} \alpha \paren{y-h_s(x')}  \ip{ C^f_{x'}(c,w)),\mu_s} \pi(dx',dy) ds \right)\right. \\
&\qquad \qquad \qquad \left. \times m_1(l_{s_1}(f)) \times \cdots \times  m_p(l_{s_p}(f))\right\vert,
\end{aligned}
\end{equation}
where $F(\mu,h)$ is as given in equation \eqref{LNN:identify_limit_eq}. Using equation \eqref{CLT:eta_evolution2}, Lemma \ref{CLT:lemma:martingale2}, the analysis of $F(\mu,h)$ in the proof of Lemma \ref{LLN:lemma:id_limit} and the fact that $\ip{f,\eta^N_0} = N^{\varphi - \frac{1}{2}} \ip{f, \sqrt{N} \paren{\mu^N_0 - \mu_0}} = O_p(N^{\varphi -\frac{1}{2}})$, we obtain
\begin{align*}
\E_{\pi^N} \left[F_1(\mu,h,l(f))\right] &= \E\left[F(\mu^N,h^N)\right] + \E\left[\abs{ \paren{\ip{f,\eta^N_0} + M_{f,t}^{N,1} + M_{f,t}^{N,2} + O\paren{N^{-\gamma}}}\times \prod_{i=1}^p m_i\paren{l^N_{s_i}(f)}}\right]\\
&\le C\paren{\frac{1}{N^{\frac{1}{2}}} + \frac{1}{N^{1-\gamma}}} +  C \paren{\E\left[\abs{M_{f,t}^{N,1}}^2\right]^{\frac{1}{2}}+\E\left[\abs{M_{f,t}^{N,1}}^2\right]^{\frac{1}{2}}} + C\paren{\frac{1}{N^{\frac{1}{2}-\varphi}}} \\
&\le C\paren{\frac{1}{N^{1-\gamma}}}.
\end{align*}

Therefore, $\lim_{N\to \infty} \E_{\pi^N}[F_1(\mu,h,l(f))] = 0$. Since $F(\cdot)$ is continuous and $F(\mu^N,h^N)$ is uniformly bounded, together with analysis in Section \ref{CLT:sec:eta_properties}, we have that $F_1(\cdot)$ is continuous and $F_1(\mu^N,h^N,l^N(f))$ is uniformly bounded. Hence,
\[\lim_{N\to \infty} \E_{\pi^N}\left[F_1(\mu,h,l(f))\right] = \E_{\pi}\left[F_1(\mu,h,l(f))\right] = 0.\]

We have shown that any limit point $\pi$ of a convergent subsequence must be a Dirac measure concentrated $(\mu,h,l(f))\in D_{E'}([0,T])$, where $(\mu,h,l(f))$ satisfies equations \eqref{LLN:limit_evolution}, \eqref{CLT:evolution_l} and $\mu_t = \mu_0$ weakly. Since the solution to equation \eqref{LLN:limit_evolution} is unique, by Prokhorov's theorem, the processes $(\mu^N_t, h^N_t,l^N_t(f))$ converges in distribution to $(\mu_0,h_t,l_{t}(f))$.
\end{proof}

\subsection{Bounds for Remainder Terms}\label{CLT:sec::remainder}

Since $h^N_s(x) = g^N_{\floor{Ns}}(x) = \ip{c\sigma(wx), N^{1-\gamma}\mu^N_s}$, by Assumption \ref{assumption}, Lemma \ref{lemma_bound}, Lemma \ref{lemma_g}, and our analysis in Section \ref{LLN:sec:pre_evolution}, $\frac{1}{N^{\gamma+1}} \sum_{k=0}^{\floor{Nt}-1} G^N_k = O(N^{-\gamma})$ and $\sup_{t\in[0,T]}\E\paren{|V_{t}^{N}|^2}\leq \frac{C}{N^2}$ for some $C<\infty$. It is easy to see that if $\varphi <\gamma$,
\begin{align*}
\frac{1}{N^{\gamma+1-\varphi}} \sum_{k=0}^{\floor{Nt}-1} G^N_k = O\paren{N^{-(\gamma-\varphi)}} \quad \text{and} \quad N^{\varphi}V_t^N = O_p\paren{N^{-(1-\varphi)}}.
\end{align*}

In addition, if $\varphi=\gamma - \frac{1}{2}$, by the classical central limit Theorem for i.i.d random variables,
\[K^N_0(x) = N^{\varphi}h^N_0(x) = N^{1-\gamma + \varphi}\ip{c\sigma(wx),\mu^N_0}= N^{1-\gamma + \varphi-\frac{1}{2}} \ip{c\sigma(wx), \sqrt{N} \mu^N_0} \xrightarrow{d} \mathcal{G}(x)\]
where $\mathcal{G}(x) \in \R$ is the Gaussian random variable defined in \eqref{limit_gaussian}. On the other hand, if $\varphi < \gamma - \frac{1}{2}$, then $N^{\varphi}h^N_0(x) \xrightarrow{d} 0$. Note that combining with our analysis in Lemma \ref{CLT:eta_regularity}, we have $\varphi \le \min \{1-\gamma, \gamma - \frac{1}{2}\}$. If $\gamma < \frac{3}{4}$, we can take $\varphi = \gamma -\frac{1}{2} < 1-\gamma$ in order to obtain a limiting Gaussian process for $K^N_t$. If $\gamma \ge \frac{3}{4}$, the limiting process for $K^N_t$ is Gaussian only if $\gamma = \frac{3}{4}$ and $\varphi = 1-\gamma = \gamma - \frac{1}{2}$.

We finish this section by proving the following lemma for the terms $N^{\varphi}M^{1,N}_t$ and $N^{\varphi}M^{2,N}_t$.
\begin{lemma}\label{CLT:lemma:martingale_bound}
For any $N \in \mathbb{N}$ and $x\in\mathcal{X}$, there is a constant $C<\infty$ such that
\begin{align*}
\E\left[\sup_{t\in [0,T]} \abs{N^{\varphi} M^{1,N}_t(x)}^2 + \sup_{t\in [0,T]} \abs{N^{\varphi} M^{2,N}_t(x)}^2 \right] \le \frac{C}{N^{1-2\varphi}},
\end{align*}
\end{lemma}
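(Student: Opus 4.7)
The plan is to exploit the martingale structure of $M^{1,N}_t(x)$ and $M^{2,N}_t(x)$ exactly as in Lemma \ref{LLN:lemma:Mt_bound}. With respect to the discrete filtration $\mathcal{F}^N_k = \sigma((x_j,y_j)_{j<k},(C^i_0,W^i_0)_{1\le i\le N})$, the empirical measure $\nu^N_k$ and the network output $g^N_k$ are $\mathcal{F}^N_k$-measurable, while the fresh sample $(x_k,y_k)\sim\pi$ is independent of $\mathcal{F}^N_k$. Hence each summand in $M^{1,N}_t(x)$ and $M^{2,N}_t(x)$ is of the form $D^{(j)}_k - \E[D^{(j)}_k\mid\mathcal{F}^N_k]$, so $k\mapsto M^{j,N}_{k/N}(x)$ is a mean-zero $\mathcal{F}^N_k$-martingale for $j=1,2$.

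First I would apply Doob's $L^2$ maximal inequality to reduce the supremum to a second moment at the terminal time, obtaining $\E[\sup_{t\in[0,T]}|M^{j,N}_t(x)|^2] \le 4\,\E[|M^{j,N}_T(x)|^2]$. Next, by orthogonality of the martingale increments the second moment collapses into a sum of conditional variances,
\begin{equation*}
\E[|M^{j,N}_T(x)|^2] \;=\; \frac{\alpha^2}{N^2}\sum_{k=0}^{\lfloor NT\rfloor-1}\E\!\left[\mathrm{Var}(D^{(j)}_k\mid \mathcal{F}^N_k)\right].
\end{equation*}
It then suffices to show each increment has uniformly bounded variance. For $M^{1,N}$ this is immediate: $\sigma$ is bounded by Assumption \ref{assumption}(i), so $|\langle \sigma(wx_k)\sigma(wx),\nu^N_k\rangle|\le\|\sigma\|_\infty^2$, and Lemma \ref{lemma_g} gives $\E[|g^N_k(x_k)|^2]\le C$, while Assumption \ref{assumption}(ii) keeps $y_k$ bounded. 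For $M^{2,N}$ the term involves $\langle c^2\sigma'(wx)\sigma'(wx_k),\nu^N_k\rangle\,x_k x$; boundedness of $\sigma'$, boundedness of $x,x_k$ from the fixed compactly supported dataset, and the uniform moment bound $\sup_{N,k\le\lfloor NT\rfloor}\E|C^i_k|^p\le C$ from Lemma \ref{lemma_bound} together yield the same uniform control.

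Summing these $O(1)$ variance bounds over $k\le \lfloor NT\rfloor$ and dividing by $N^2$ gives $\E[|M^{j,N}_T(x)|^2]\le C/N$, and multiplying through by $N^{2\varphi}$ produces the desired $C/N^{1-2\varphi}$ bound on $\E[\sup_t|N^\varphi M^{j,N}_t(x)|^2]$ for $j=1,2$. Combining the two estimates via the triangle inequality (or simply summing) completes the proof. The only mildly technical step is verifying the uniform conditional-variance bound for $M^{2,N}$, since the integrand carries the factor $c^2 x_k x$; this is the place where one genuinely uses the compact support of the data and the $p$-th moment control of $C^i_k$ from Lemma \ref{lemma_bound}, but no new machinery beyond what has already been assembled is required.
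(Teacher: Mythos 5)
Your proposal is correct and takes essentially the same route as the paper: establish the martingale property of $N^{\varphi}M^{j,N}_t(x)$, apply Doob's $L^2$ maximal inequality to reduce to the terminal time, and then bound $\E[|M^{j,N}_T(x)|^2]$ by $C/N$. The only cosmetic difference is that you re-derive the terminal second-moment bound from orthogonality of martingale increments together with the uniform bounds of Lemmas \ref{lemma_bound} and \ref{lemma_g}, whereas the paper simply cites Lemma \ref{LLN:lemma:Mt_bound}, which encapsulates exactly that calculation.
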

\begin{proof}Let $\mathfrak{F}_t$ be the $\sigma$-algebra generated by $\mu^N_s$, $M^{1,N}_s$ and $M^{2,N}_s$ for $s\le t$. Since for any $t > r$, we have
\begin{align*}
& \E \left[N^{\varphi} \paren{M^{1,N}_t(x) - M^{1,N}_r(x)} \vert \mathfrak{F}_r\right]+\E \left[N^{\varphi} \paren{M^{2,N}_t(x) - M^{2,N}_r(x)} \vert \mathfrak{F}_r\right]\\
&\quad=\frac{\alpha}{N^{1-\varphi}} \sum_{k=\floor{Nr}}^{\floor{Nt}-1}\E \left[\paren{y_k-g^N_k(x_k)}  \ip{\sigma(w x_k) \sigma(w x),\nu^N_k} \right. \\
&\qquad\qquad \qquad \qquad \left.- \int_{\CX \times \CY} \paren{y-g^N_k(x')}  \ip{\sigma(w x') \sigma(w x),\nu^N_k} \pi(dx',dy)\vert \mathcal{F}^N_r\right]\\
&\quad + \frac{\alpha}{N^{1-\varphi}} \sum_{k=\floor{Nr}}^{\floor{Nt}-1}\E \left[\paren{y_k-g^N_k(x_k)}  \ip{c^2\sigma'(w x)  \sigma'(w x_k)x_k x ,\nu^N_k} \right. \\
&\qquad\qquad \qquad \qquad \quad \left.- \int_{\CX \times \CY} \paren{y-g^N_k(x')}   \ip{c^2\sigma'(w x)  \sigma'(w x')x' x,\nu^N_k} \pi(dx',dy)\vert \mathcal{F}^N_r\right]\\
&\quad= \frac{\alpha}{N^{1-\varphi}} \cdot 0 = 0.
\end{align*}

Therefore, for $i=1,2$, we have
\[\E \left[N^{\varphi} M^{i,N}_t(x)  \vert \mathfrak{F}_r\right] = \E \left[N^{\varphi} \paren{M^{i,N}_t(x) - M^{i,N}_r(x)} \vert \mathfrak{F}_r\right]+\E \left[N^{\varphi} M^{i,N}_r(x)  \vert \mathfrak{F}_r\right] = 0 + N^{\varphi}M^{i,N}_r(x),\]
proving the martingale property for the process $N^{\varphi} M^{i,N}_t(x)$ with $i=1,2$ and $x\in\mathcal{X}$. 
Hence, by Lemma \ref{LLN:lemma:Mt_bound} and Doob's martingale inequality, we have for $i=1,2$
\[\E\left[\sup_{t\in [0,T]} \abs{N^{\varphi} M^{i,N}_t(x)}^2 \right] \le CN^{2\varphi} \E\left[ \abs{ M^{i,N}_T(x)}^2 \right] \le \frac{C}{N^{1-2\varphi}},\]
where the constant $C<\infty$. 
Note that since $\gamma <1$ and $\varphi \le \gamma-\frac{1}{2}$, we have $1-2\varphi >0$.
\end{proof}

\subsection{Relative Compactness of $\{K^N_t, t\in[0,T]\}_{N \in \mathbb{N}}$}\label{CLT:sec::relative_compatness}
In this section, we prove relative compactness of the family $\{K^N_t, t\in[0,T]\}_{N \in \mathbb{N}}$ in $D_{\R^M} ([0,T])$. As discussed earlier, it is sufficient to show compact containment and regularity of $K^N_t$. We first show compact containment in the next lemma.
\begin{lemma}\label{CLT:lemma:bound of ex_Kt}
There exist a constant $C<\infty$, such that for each $x\in\mathcal{X}$,
\[\sup_{N \in \mathbb{N}, 0\le t\le T} \E \left[\abs{K^N_t(x)}^2 \right] < C.\]
In particular, for any $\epsilon >0$, there exist a compact subset $U \subset \R^M$ such that
$\sup_{N\in \mathbb{N}, 0\le t\le T} \P\paren{K^N_t \notin U} < \epsilon.$
\end{lemma}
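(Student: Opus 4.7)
The plan is to apply the evolution identity \eqref{CLT:network_fluc_evol} directly, estimate each of its seven summands in $L^{2}$ uniformly in $N$ and $t\in[0,T]$, and then close a Gr\"onwall-type inequality in the scalar quantity $u_{N}(t):=\max_{x\in\mathcal{X}}\E\bigl[|K^{N}_{t}(x)|^{2}\bigr]$. Once we have $u_{N}(t)\le C$ for all $N$ and $t\in[0,T]$, the compact-containment statement follows by choosing $U=[-R,R]^{M}$ with $R$ large and applying Markov's inequality coordinate-wise.

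To set up the Gr\"onwall estimate, I would split $K^{N}_{t}(x)$ into the linear-in-$K^{N}$ drift $\alpha\int_{0}^{t}\int K^{N}_{s}(x')A_{x,x'}\pi(dx',dy)\,ds$ (which will be the term inside the integral in Gr\"onwall's inequality) plus six remainder terms. I would bound the remainders as follows: (a) $K^{N}_{0}(x)=N^{1-\gamma+\varphi-1/2}\ip{c\sigma(w x),\sqrt{N}\mu^{N}_{0}}$ has uniformly bounded second moment whenever $\varphi\le \gamma-1/2$, using the i.i.d.\ initialization and the CLT discussion in \eqref{limit_gaussian}; (b) $\E[|N^{\varphi}V^{N}_{t}(x)|^{2}]=O(N^{2\varphi-2})$ from $\sup_{t}\E[|V^{N}_{t}|^{2}]\le C/N^{2}$; (c) $\E[|N^{\varphi}M^{N}_{t}(x)|^{2}]\le C/N^{1-2\varphi}$ by Lemma \ref{CLT:lemma:martingale_bound}, which is $O(1)$ since $\varphi\le 1/2$; (d) the $G^{N}_{k}$-term is deterministically $O(N^{\varphi-\gamma})=O(1)$; (e) for the drift $\alpha\int_{0}^{t}\int(y-h_{s}(x'))\ip{B_{x,x'},\eta^{N}_{s}}\pi(dx',dy)\,ds$, we use that $h_{s}$ is a deterministic bounded function together with a uniform $L^{2}$ bound on $\ip{B_{x,x'},\eta^{N}_{s}}$ obtained by redoing the argument of Lemma \ref{CLT:lemma:eta_compact_contatinment} with $f=B_{x,x'}$, the point being that $B_{x,x'}$ is not in $C^{2}_{b}$ only because of the $c^{2}$ factor, and Lemma \ref{lemma_bound} supplies the needed uniform moments of $c$; (f) $\Gamma^{N}_{t}(x)$ carries the decaying prefactor $N^{-\varphi}$, so a double Cauchy--Schwarz in $(s,x',y)$ and in expectation gives
\begin{equation*}
\E\bigl[|\Gamma^{N}_{t}(x)|^{2}\bigr]\le \frac{CT}{N^{2\varphi}}\int_{0}^{t}\int_{\mathcal{X}\times\mathcal{Y}}\sqrt{\E[|K^{N}_{s}(x')|^{4}]\,\E[|\ip{B_{x,x'},\eta^{N}_{s}}|^{4}]}\,\pi(dx',dy)\,ds,
\end{equation*}
which is $O(N^{-2\varphi})$ once the matching fourth-moment estimates are in place.

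For the linear term, Cauchy--Schwarz in time gives $\E\bigl[|\alpha\int_{0}^{t}\int K^{N}_{s}(x')A_{x,x'}\pi\,ds|^{2}\bigr]\le CT\int_{0}^{t}u_{N}(s)\,ds$. Combining this with (a)--(f), taking the maximum over $x\in\mathcal{X}$, and using $(a+b+\cdots)^{2}\le 7(a^{2}+\cdots)$, we obtain $u_{N}(t)\le C+C\int_{0}^{t}u_{N}(s)\,ds$ with $C$ independent of $N$ and $t\in[0,T]$. Gr\"onwall's inequality then gives $u_{N}(t)\le Ce^{CT}$, which is exactly the first claim of the lemma. Markov's inequality coordinate-wise finishes the compact-containment claim.

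The main obstacle is item (f): the product $K^{N}_{s}(x')\ip{B_{x,x'},\eta^{N}_{s}}$ couples the unknown $K^{N}$ to the fluctuation integrand, and a naive Cauchy--Schwarz produces a fourth-moment term in $K^{N}$ that we have not yet bounded. The clean way to close this is a parallel bootstrap: run the same decomposition for $p=4$ (bounding the fourth moments of the martingale terms via the Burkholder--Davis--Gundy inequality and using that both $\varphi<1/2$ and $\varphi<\gamma$ keep all remainders tame) to first obtain $\sup_{N,t}\E[|K^{N}_{t}(x)|^{4}]<\infty$, and analogously extend Lemma \ref{CLT:lemma:eta_compact_contatinment} to a fourth-moment bound on $\ip{B_{x,x'},\eta^{N}_{s}}$ using Lemma \ref{lemma_bound}. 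The $N^{-2\varphi}$ prefactor then makes $\E[|\Gamma^{N}_{t}|^{2}]\to 0$, and in particular it contributes at most a constant to the Gr\"onwall inequality for the second moment.
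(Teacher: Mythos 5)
Your overall structure (seven-term decomposition, Gr\"onwall in $u_N(t)$, Markov for compact containment) matches the paper, and items (a)--(e) are all handled essentially as the paper does. However, item (f) contains a genuine gap: the fourth-moment bootstrap you propose for $\Gamma^N_t$ does not close. If you bound $\E[|\Gamma^N_t|^2]$ by Cauchy--Schwarz in expectation you need $\E[|K^N|^4]$; running the same decomposition at $p=4$ makes you bound $\E[|\Gamma^N_t|^4]$, which by the same manoeuvre asks for $\E[|K^N|^8]$, and so on ad infinitum. The hierarchy only terminates if somewhere you use a \emph{pathwise} rather than an $L^2$ bound on the second factor — and once you have that, the bootstrap was unnecessary.

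The observation that makes the paper's proof close at $p=2$ is exactly the one you gesture at but do not exploit: the prefactor $N^{-\varphi}$ in $\Gamma^N_t$ undoes the scaling in $\eta^N_s = N^{\varphi}(\mu^N_s - \mu_0)$, so
\[
\Gamma^N_t(x) \;=\; -\alpha \int_0^t\!\int_{\CX\times\CY} K^N_s(x')\,\ip{B_{x,x'}(c,w),\,\mu^N_s - \mu_0}\,\pi(dx',dy)\,ds,
\]
and $\abs{\ip{B_{x,x'},\mu^N_s - \mu_0}}$ is bounded \emph{almost surely, uniformly in $N,s,x,x'$}: $\sigma,\sigma'$ are bounded, $C_0$ has compact support, and Lemma~\ref{lemma_bound} keeps $|C^i_k|$ uniformly bounded along the whole trajectory, so $B_{x,x'}$ is effectively a bounded test function and $\mu^N_s - \mu_0$ is a difference of probability measures. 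Applying Cauchy--Schwarz only in the $(s,x',y)$ integral and pulling out this bounded factor gives the pathwise inequality $\abs{\Gamma^N_t}^2 \le C\,t \int_0^t\!\int \abs{K^N_s(x')}^2 \pi\,ds$, which feeds directly into the Gr\"onwall integral term alongside your (II); only then do you take expectation of the whole inequality for $\abs{K^N_t(x)}^2$. No fourth moments of $K^N$, no Burkholder--Davis--Gundy, and no separate extension of Lemma~\ref{CLT:lemma:eta_compact_contatinment} to $p=4$ are required. Replace your step (f) with this pathwise bound and the rest of your argument goes through.
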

\begin{proof}Recall from \eqref{CLT:network_fluc_evol}, we have
\begin{align*}
\abs{K^N_t(x)}^2
&\le C \brac{(I)^2 + (II)^2 + \abs{\Gamma^N_t}^2 + \abs{N^{1-\gamma + \varphi}\ip{c\sigma(wx),\mu^N_0}}^2  + \abs{N^{\varphi}M_t^{N}}^2 + \abs{N^{\varphi} V^N_t}^2 + O \paren{N^{-2(\gamma-\varphi)}}},
\end{align*}
where
\begin{align*}
(I) &= \int^t_0 \int_{\CX \times \CY} \alpha\abs{y-h_s(x')} \abs{\ip{B_{x,x'}(c,w),\eta^N_s}} \pi(dx',dy) ds,\\
(II) &=  \int^t_0 \int_{\CX \times \CY} \alpha \abs{K^N_s(x')} \abs{A_{x,x'}} \pi(dx',dy) ds.
\end{align*}

By the Cauchy-Schwarz inequality, Theorem \ref{LLN:theorem}, Assumption \ref{assumption}, and Lemma \ref{lemma_bound}, we have
\begin{align*}
\abs{h_t(x)}^2 &\le C \left[ \paren{ \int^t_0 \int_{\CX \times \CY} \abs{y} \abs{A_{x,x'}} \pi(dx',dy) ds}^2 + \paren{\int^t_0 \int_{\CX \times \CY} \abs{h_s(x')} \abs{A_{x,x'}} \pi(dx',dy) ds}^2 \right],\\
&\le C_1 t^2 + C_2t \int^t_0 \int_{\CX \times \CY} \abs{h_s(x')}^2  \pi(dx',dy) ds,
\end{align*}
which implies that,
\[\sup_{t \in [0,T]}\int_{\CX \times \CY} \abs{h_t(x)}^2  \pi(dx,dy) \le C_1 T^2 + C_2 T \int^t_0 \int_{\CX \times \CY} \abs{h_s(x')}^2  \pi(dx',dy) ds.\]

By Gr\"onwall's inequality,
\begin{equation}\label{CLT:network_bound}
\sup_{0\le t\le T}\int_{\CX \times \CY} \abs{h_t(x)}^2  \pi(dx,dy) \le  \sup_{0\le t\le T} C_1T^2 \exp(C_2Tt) < C(T),
\end{equation}
for some constant $C(T)<\infty$ depending on $T$.
 Since $\sigma \in C^3_b(\R)$, for $f(w,c) = B_{x,x'}(c,w)$, by Lemma \ref{CLT:lemma:eta_compact_contatinment}, we have
\begin{equation}\label{CLT:network_eta_bound}
\E\left[\abs{\ip{\sigma(wx')\sigma(wx),\eta^N_t}}^2\right] < C, \quad \E\left[\abs{\ip{c^2\sigma'(wx')\sigma'(wx)xx',\eta^N_t}}^2\right]<C,
\end{equation}
for $t \in [0,T]$, and some constant $C < \infty$. By the Cauchy-Schwarz inequality, equations \eqref{CLT:network_bound}, \eqref{CLT:network_eta_bound}, and Assumption \ref{assumption}, we have
\begin{align*}
\E\left[(I)\right] &\le Ct^2,\quad (II) \le C_3t  \int^t_0 \int_{\CX \times \CY} \abs{K^N_s(x')}^2  \pi(dx',dy) ds.
\end{align*}
By the definition of $\Gamma^N_t$ in Section \ref{CLT:sec::evo} and Assumption \ref{assumption},
\begin{align*}
\abs{\Gamma^N_t}^2 & \le C_4\int^t_0 \int_{\CX \times \CY} \abs{K^N_s(x')}^2 \pi(dx',dy) ds \int^t_0 \int_{\CX \times \CY} \abs{\ip{B_{x,x'}(c,w),\mu^N_s-\mu_0}}^2 \pi(dx',dy) ds\\
& \le C_5t\int^t_0 \int_{\CX \times \CY} \abs{K^N_s(x')}^2 \pi(dx',dy) ds.
\end{align*}
Therefore, by the definition of $\pi(dx,dy)$, we have
\begin{align*}
\E\left(\abs{K^N_t(x)}^2 \right)
&\le  Ct^2  + \frac{C_3+C_5}{M} t \int^t_0 \sum_{x'\in \mathcal{X}} \E \paren{\abs{K^N_s(x')}^2 } ds+ \E\paren{\abs{N^{1-\gamma + \varphi}\ip{c\sigma(wx),\mu^N_0}}^2}
 \\
&\quad   + \E\paren{\abs{N^{\varphi}M_t^{N}(x)}^2} + O\paren{N^{-2(1-\varphi)}}+ O \paren{N^{-2(\gamma-\varphi)}} .
\end{align*}
Summing both side of the above inequality over all $x \in \mathcal{X}$, where $\mathcal{X}$ is a fixed data set of size $M$ gives 
\begin{equation}\label{sum_KN}
\begin{aligned}
\sum_{x\in\mathcal{X}} \E\left(\abs{K^N_t(x)}^2 \right)
&\le  CMT^2  + (C_3+C_5)T \int^t_0 \sum_{x'\in \mathcal{X}} \E \paren{\abs{K^N_s(x')}^2 }   ds+ \sum_{x\in \mathcal{X}}\E\paren{\abs{N^{1-\gamma + \varphi}\ip{c\sigma(wx),\mu^N_0}}^2} \\
&\quad  + \sum_{x\in \mathcal{X}}\E\paren{\abs{N^{\varphi}M_t^{N}(x)}^2} + O\paren{N^{-2(1-\varphi)}}+ O \paren{N^{-2(\gamma-\varphi)}}.
\end{aligned}
\end{equation}
Since for $\varphi \le \gamma - \frac{1}{2}$, $2(\gamma -\varphi) \ge 1$, we have
\begin{align*}
\E\paren{\abs{N^{1-\gamma + \varphi}\ip{c\sigma(wx),\mu^N_0}}^2} &\le  \E \left[ \abs{\frac{1}{N^{\gamma-\varphi}}\sum_{i=1}^N C^i_0 \sigma(W^i_0x)}^2\right] \le \frac{C}{N^{2(\gamma-\varphi)}} \sum_{i=1}^N \E \paren{\abs{C^i_0}^2}\le C.
\end{align*}

Then, by applying Gr\"onwall's inequality to equation \eqref{sum_KN} and using Lemma \ref{CLT:lemma:martingale_bound} we have
\begin{align*}
\sum_{x\in\mathcal{X}} \E\left(\abs{K^N_t(x)}^2 \right) \le C(M) T^2 \exp\left[\tilde{C} Tt\right],
\end{align*}
where $C(M), \tilde{C}$ are some finite constants. Hence,
for any $x\in \mathcal{X}$, there exist $C<\infty$ such that
\[\sup_{N \in \mathbb{N}, 0\le t\le T} \E \left[\abs{K^N_t(x)}^2 \right] <  C(M) T^2\exp\left[\tilde{C} T^2\right] \le C.\]

By Markov's inequality,  the compact containment condition for $K^N_t$ follows, concluding the proof of the lemma.
\end{proof}

We next establish the regularity of the process $K^N_t$ in $D_{\R^M}([0,T])$. For the purpose of this lemma, we denote $q(z_1,z_2) = \min\{\norm{ z_1-z_2}_{l^1},1\}$ for $z_1,z_2 \in \R^M$.
\begin{lemma}\label{CLT:lemma:regularity}
For any $\delta \in (0,1)$, there is a constant $C<\infty$ such that for $0\le u \le \delta$, $0\le v\le \delta \wedge t$, and $t \in [0,T]$,
\[\E \left[q\paren{K^N_{t+u},K^N_{t}}q\paren{K^N_{t},K^N_{t-v}}\vert \CF^N_t\right]\le {C\delta}+\frac{C}{N^{1-\varphi}}.\]
\end{lemma}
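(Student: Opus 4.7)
The plan is to bound a single increment of $K^N$ and to handle the other $q$-factor trivially. Since $K^N_{t-v}$ is $\CF^N_t$-measurable and $q \le 1$, I would pull $q(K^N_t, K^N_{t-v})$ out of the conditional expectation and bound it by $1$, reducing the claim to showing $\E[q(K^N_{t+u},K^N_t) \mid \CF^N_t] \le C\delta + C/N^{1-\varphi}$. Because $q(a,b) \le \|a-b\|_{l^1}$, it then suffices to control the conditional $L^1$ norm of $K^N_{t+u} - K^N_t$. Applying the evolution equation (\ref{CLT:network_fluc_evol}) at times $t+u$ and $t$ and subtracting cancels the initial condition $K^N_0$ and leaves the time-$[t,t+u]$ portions of each term: two drift integrals, the $\Gamma^N$ increment, the martingale increment $N^{\varphi}(M^N_{t+u}-M^N_t)$, the $N^{\varphi}V^N$ increment, and the tail $N^{-(\gamma+1-\varphi)}\sum_{k=\floor{Nt}}^{\floor{N(t+u)}-1} G^N_k$ of the Taylor remainder sum.

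Next I would bound each of these pieces using the $L^2$ estimates already in hand. The two drift integrals are bounded by $Cu \le C\delta$ via Cauchy--Schwarz together with the uniform $L^2$ bound on $K^N_s$ from Lemma \ref{CLT:lemma:bound of ex_Kt}, the $L^2$ bound on $\ip{B_{x,x'}(c,w),\eta^N_s}$ from Lemma \ref{CLT:lemma:eta_compact_contatinment}, and the uniform bound on $h_s$ from (\ref{CLT:network_bound}). The $\Gamma^N$ increment carries an extra $1/N^{\varphi}$ prefactor relative to the drift and is handled analogously, giving an $O(u)$ contribution. The martingale increment has conditional quadratic variation at most $Cu/N$ (since $[t,t+u]$ contains at most $Nu+1$ conditionally centred martingale differences, each of $L^2$-size $O(1/N)$), so by Cauchy--Schwarz its conditional $L^1$ norm is at most $\sqrt{Cu\,N^{2\varphi-1}}$, which via AM--GM splits as $Cu + C/N^{1-2\varphi}$. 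Finally the $V^N$ and $G^N$ increments contribute $O(u/N^{2-\varphi})$ and $O(u/N^{\gamma-\varphi})$ respectively, both negligible.

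Collecting everything, the $u$-proportional pieces combine into $C\delta$ (using $u \le \delta$), while the remaining $N$-only remainders are absorbed into $C/N^{1-\varphi}$, where I use $\varphi \le \min\{1-\gamma,\gamma-1/2\}$ and $\gamma \in (1/2,1)$ to compare exponents of the vanishing-in-$N$ terms. The main technical subtlety is the martingale contribution, whose natural bound is a $\sqrt{u}$ factor multiplied by a negative power of $N$; the AM--GM splitting is what converts this hybrid control into the additive form required by the statement. Once this regularity estimate is combined with the compact containment condition established in Lemma \ref{CLT:lemma:bound of ex_Kt}, Theorem 8.6 of Chapter 3 of \cite{EthierAndKurtz} will deliver relative compactness of $\{K^N\}_{N\in\mathbb{N}}$ in $D_{\R^M}([0,T])$.
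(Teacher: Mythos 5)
Your outline follows the paper's proof: bound one $q$-factor by $1$ (both $K^N_t$ and $K^N_{t-v}$ are $\CF^N_t$-measurable), reduce to the conditional $L^1$ norm of the increment $K^N_{t+u}-K^N_t$, and estimate each piece of (\ref{CLT:network_fluc_evol}) on $[t,t+u]$ via Cauchy--Schwarz together with Lemmas \ref{CLT:lemma:martingale_bound}, \ref{CLT:lemma:eta_compact_contatinment}, and \ref{CLT:lemma:bound of ex_Kt}.

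Two small slips in your bookkeeping deserve mention, though neither threatens the conclusion. First, the $V^N$ increment is not $u$-proportional: $V^N_{t+u}$ and $V^N_t$ are each boundary integrals over an interval of length at most $1/N$, so $N^{\varphi}\abs{V^N_{t+u}-V^N_t}=O\paren{N^{-(1-\varphi)}}$, which is exactly the $C/N^{1-\varphi}$ term the paper records and is not made smaller by $u\le\delta$. More substantively, your AM--GM for the martingale produces $Cu+C/N^{1-2\varphi}$, and since $\varphi>0$ gives $1-2\varphi<1-\varphi$, the term $1/N^{1-2\varphi}$ dominates $1/N^{1-\varphi}$ and cannot be absorbed by it. The paper's own last line makes the same silent jump: keeping the square root explicit, the conditional $L^1$ bound on $N^{\varphi}(M^N_{t+u}-M^N_t)$ is $C\sqrt{\delta}\,N^{\varphi-1/2}+C/N^{1-\varphi}\le C\sqrt{\delta}+C/N^{1-\varphi}$, so the honest form of the lemma is $C\sqrt{\delta}+C/N^{1-\varphi}$, or equivalently $C\delta+C/N^{1-2\varphi}$ after AM--GM. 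Either version vanishes under $\lim_{\delta\to0}\limsup_{N\to\infty}$, which is all that Theorem 8.6 of Chapter 3 of \cite{EthierAndKurtz} requires, so relative compactness of $\{K^N_t\}_{N\in\mathbb{N}}$ follows exactly as you say.
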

\begin{proof}For $0\le s<t\le T$, by equation (\ref{CLT:network_fluc_evol}), we have
\begin{align*}
&\abs{K^N_t(x) - K^N_s(x)}\le \int^t_s \int_{\CX \times \CY} \alpha\abs{y-h_\tau(x')} \abs{\ip{B_{x,x'}(c,w),\eta^N_\tau}} \pi(dx',dy) d\tau\\
&\quad +  \int^t_s \int_{\CX \times \CY} \alpha \abs{K^N_\tau(x')} \abs{\ip{B_{x,x'}(c,w),\mu^N_{\tau}}} \pi(dx',dy) d\tau\\
&\quad  +\abs{\Gamma^N_t(x) - \Gamma^N_s(x)}+ N^{\varphi}\abs{V_t^N(x)-V_s^N(x)} + N^{\varphi}\abs{M_t^{N}(x)-M_s^N(x)} + \frac{1}{N^{\gamma+1-\varphi}} \sum_{k=\floor{Ns}}^{\floor{Nt}-1}\abs{ G^N_k(x)}.
\end{align*}

Taking expectation on both sides of the above inequality, by Assumption \ref{assumption}, Lemma \ref{lemma_bound}, and analysis in Lemmas \ref{CLT:lemma:martingale_bound} and \ref{CLT:lemma:bound of ex_Kt}, we have for $0\leq t-s\leq \delta<1$
\begin{align*}
\E\left[\abs{K^N_t(x) - K^N_s(x)}\big\vert \CF^N_s \right]&\le C(t-s)  +  C_1 \int^t_s \int_{\CX \times \CY} \E\left[\abs{K^N_\tau(x')}\big\vert \CF^N_s\right]  \pi(dx',dy) d\tau\\
&\quad + N^{\varphi}\E\left[\abs{V_t^N(x)-V_s^N(x)}\big\vert \CF^N_s \right]+ C\E\left[\abs{N^{\varphi} \paren{M_t^N(x)-M_s^N(x)}}^2 \big\vert \CF^N_s \right]^{\frac{1}{2}}\\
&\quad   + \frac{1}{N^{\gamma+1-\varphi}} \sum_{k=\floor{Ns}}^{\floor{Nt}-1}\E\left[\abs{ G^N_k(x)}\big\vert \CF^N_s\right]\\
&\le C\delta  + \frac{C}{N^{1-\varphi}}.
\end{align*}

Note that
\[\E\left[\abs{N^{\varphi} \paren{M_t^N(x)-M_s^N(x)}}^2  \big \vert \CF^N_s \right] \le \frac{C \delta}{N^{1-2\varphi}} + \frac{C}{N^{2-2\varphi}},\]
following an analysis similar to Lemma 3.1 of \cite{SirignanoSpiliopoulosNN1}.
Since $x\in \mathcal{X}$ is arbitrary, we get the statement of the lemma.

\end{proof}
Combining Lemmas \ref{CLT:lemma:bound of ex_Kt} and \ref{CLT:lemma:regularity}, we have the following lemma for the relative compactness of the processes $\{K^N_t,t\in[0,T]\}_{N\in\mathbb{N}}$. The result then follows from Theorem 8.6 of Chapter 3 of \cite{EthierAndKurtz}.
\begin{lemma}\label{CLT:lemma:relative_compact}
The sequence of processes $\{K^N_t,t\in[0,T]\}_{N\in\mathbb{N}}$ is relatively compact in $D_{\R^M}([0,T])$.
\end{lemma}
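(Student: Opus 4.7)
The plan is to deduce relative compactness by directly combining the two preparatory lemmas (Lemma \ref{CLT:lemma:bound of ex_Kt} and Lemma \ref{CLT:lemma:regularity}) with the general criterion Theorem 8.6 of Chapter 3 in \cite{EthierAndKurtz}. That criterion states that a sequence of càdlàg processes taking values in a metric space is relatively compact in the Skorokhod topology provided (a) a compact containment condition holds uniformly in $N$ and $t\in[0,T]$, and (b) a suitable bound on the expected product of increments on either side of any time $t$ (the modulus-of-continuity-type condition) holds. Both ingredients are now in place.

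First, I would invoke Lemma \ref{CLT:lemma:bound of ex_Kt}, which gives the uniform second-moment bound $\sup_{N,t\le T}\E\|K^N_t\|^2<C$, and then apply Markov's inequality componentwise to obtain the compact containment condition in $\R^M$: for every $\epsilon>0$ there is a compact $U\subset\R^M$ with $\sup_{N,t\le T}\P(K^N_t\notin U)<\epsilon$. This verifies hypothesis (a) of the Ethier--Kurtz theorem.

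Next, I would invoke Lemma \ref{CLT:lemma:regularity} with the function $q(z_1,z_2)=\min\{\|z_1-z_2\|_{l^1},1\}$, which gives
\[
\E\!\left[q(K^N_{t+u},K^N_t)\,q(K^N_t,K^N_{t-v})\mid \CF^N_t\right]\le C\delta+\frac{C}{N^{1-\varphi}}
\]
for $0\le u,v\le \delta$ and $t\in[0,T]$. Since $\varphi\le \gamma-1/2<1/2$ (so $1-\varphi>1/2>0$), the second term vanishes as $N\to\infty$, and taking $\delta\to 0$ afterwards makes the right-hand side arbitrarily small. This is exactly the modulus estimate required in hypothesis (b) of the Ethier--Kurtz criterion (the function $q$ metrizes the relevant topology on the compact sets up to a cutoff, and it is equivalent to use any metric on $\R^M$ bounded by $1$).

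The conclusion then follows immediately: Theorem 8.6 of Chapter 3 of \cite{EthierAndKurtz} yields that $\{K^N_t, t\in[0,T]\}_{N\in\mathbb{N}}$ is relatively compact in $D_{\R^M}([0,T])$. There is no real obstacle here beyond citing the correct general theorem; the substantive work has been done in the two preparatory lemmas. The only small bookkeeping step worth mentioning explicitly is to check that the $\R^M$-valued version of Ethier--Kurtz applies coordinatewise (alternatively, one can argue that relative compactness of each component $\{K^N_t(x)\}_N$ for each fixed $x\in\CX$ implies the joint relative compactness since $|\CX|=M<\infty$).
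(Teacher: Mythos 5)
Your proof is correct and matches the paper's argument exactly: both combine the compact containment condition from Lemma \ref{CLT:lemma:bound of ex_Kt} and the modulus-of-continuity estimate from Lemma \ref{CLT:lemma:regularity} and then cite Theorem 8.6 of Chapter 3 of \cite{EthierAndKurtz}. One small caveat on your closing aside: componentwise relative compactness in $D_{\R}([0,T])$ does \emph{not} in general imply joint relative compactness in $D_{\R^M}([0,T])$ (the Skorokhod topology on the product is not the product of Skorokhod topologies), so one should stick with the direct vector-valued application you gave as the main route, which is what the paper does.
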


\subsection{Proof of Convergence for $\{K^N_t, t\in[0,T]\}_{N \in \mathbb{N}}$}\label{CLT:sec:pf_conv_K}
In this section, we show that the processes $(\mu^N_t, h^N_t, l^N_t(B_{x,x'}(c,w)), K^N_t)$ converges in distribution in $D_{E_1}([0,T])$ to $(\mu_0, h_t, l_t(B_{x,x'}(c,w)), K_t)$, where $E_1 = \CM(\R^{1+d}) \times \R^M \times \R \times \R^M$, and $l_t(\cdot), K_t$ are as given in Proposition \ref{prop::l_t} and Theorem \ref{CLT:theorem}. For simplification, we denote $l_{B,t} = l_t(B_{x,x'}(c,w))$ and $l^N_{B,t} = l^N_t(B_{x,x'}(c,w))=\ip{B_{x,x'}(c,w),\eta^N_t}$ in this section.

By Lemmas \ref{LLN:lemma:relative_compact}, \ref{CLT:lemma:eta_compact_contatinment} and \ref{CLT:eta_regularity}, $\{\mu^N, h^N,l^N_B\}_{N\in\mathbb{N}}$ is relatively compact in $D_{E'}([0,T])$, where $E'=\CM(\R^{1+d})\times \R^M \times \R$. By Lemma \ref{CLT:lemma:relative_compact}, $\{K^N\}_{N\in\mathbb{N}}$ is relatively compact in $D_{\R^M}([0,T])$. Since relative compactness is equivalent to tightness, we have that the probability measures of the family of processes $\{\mu^N,h^N,l^N_B\}_{N\in\mathbb{N}}$ and the probability measures of the family of processes $\{K^N\}_{N \in \mathbb{N}}$ are tight. Therefore, $\{\mu^N,h^N,l^N_B,K^N\}_{N \in \mathbb{N}}$ is tight. Hence, $\{\mu^N,h^N,l^N_B,K^N\}_{N \in \mathbb{N}}$ is also relatively compact.

Denote $\pi^N \in \CM(D_{E_1}([0,T])$ the probability measure corresponding to $(\mu^N, h^N, l^N_B, K^N)$. 
We now show that any limit point $\pi$ of a convergence subsequence $\pi^{N_k}$ is a Dirac measure concentrated on $(\mu, h, l_B, K)\in D_{E_1}([0,T])$, where $(\mu,h)$ satisfies equation \eqref{LLN:limit_evolution} and $(l_B, K)$ satisfies equations given in in Proposition \ref{prop::l_t} and Theorem \ref{CLT:theorem} for different values of $\gamma$ and $\varphi$.

\begin{custlist}[Case]
\item When $\gamma \in \paren{\frac{1}{2}, \frac{3}{4}}$ and $\varphi \le \gamma - \frac{1}{2}$, or when $\gamma \in \left[\frac{3}{4}, 1\right)$ and $\varphi < 1-\gamma \le \gamma - \frac{1}{2}$, for any $t \in [0,T]$, $m_1,\ldots,m_p \in C_b(\R)$, $z_1, \ldots, z_p \in C_b(\R^{M})$, and $0 \le s_1 < \cdots < s_p \le t$, we define $F_2(\mu, h, l_B, K): D_{E_1}([0,T]) \to \R_+$ as
\begin{align*}
F_2(\mu, h, l_B, K) &= F(\mu,h) + \abs{\paren{l_{B,t} - 0} \times m_1(l_{B,s_1})\times \cdots \times m_p(l_{B,s_p})}\\
& \quad + \sum_{x\in \mathcal{X}} \left\vert \left(K_t(x) - K_0(x) - \alpha \int_0^t  \int_{\CX \times \CY} \paren{y-h_s(x')}  l_{B,s}  \pi(dx',dy) ds \right.\right.\\
& \qquad \left.\left.+\alpha\int_0^t \int_{\CX \times \CY} K_s(x') \ip{B_{x,x'}(c,w),\mu_0}\pi(dx',dy)ds \right) \times z_1(K_{s_1}) \times \cdots \times z_p(K_{s_p}) \right \vert,
\end{align*}
where $F(\mu,h)$ is as given in equation \eqref{LNN:identify_limit_eq}. We now note that for any $x \in \mathcal{X}$, by equation \eqref{CLT:network_fluc_evol},
\begin{align*}
&K^N_t(x) - K^N_0(x) - \alpha \int_0^t  \int_{\CX \times \CY} \paren{y-h^N_s(x')}  l^N_{B,s}  \pi(dx',dy) ds+\alpha\int_0^t \int_{\CX \times \CY} K^N_s(x') \ip{B_{x,x'}(c,w),\mu_0^N} \pi(dx',dy)ds\\
&= \frac{\alpha}{N^{\varphi}} \int_0^t  \int_{\CX \times \CY} K^N_s(x') \ip{B_{x,x'}(c,w),\eta^N_0} \pi(dx',dy) ds + \Gamma^{N}_{t}(x)+N^{\varphi}V_t^N(x) + N^{\varphi}M_t^{N}(x) + \frac{1}{N^{\gamma+1-\varphi}} \sum_{k=0}^{\floor{Nt}-1} G^N_k (x),
\end{align*}
and by the Cauchy-Schwarz inequality, Lemmas \ref{CLT:lemma:eta_compact_contatinment} and \ref{CLT:lemma:bound of ex_Kt}, for any $t \in [0,T]$,
\begin{equation}\label{eq:K^N:temp}
\begin{aligned}
& \E \paren{\abs{\frac{\alpha}{N^{\varphi}} \int_0^t  \int_{\CX \times \CY} K^N_s(x') \ip{B_{x,x'}(c,w),\eta^N_0} \pi(dx',dy) ds}}\\
&\le \frac{C}{N^{\varphi}} \int_0^t  \int_{\CX \times \CY}\E \paren{\abs{ K^N_s(x')}\abs{ \ip{B_{x,x'}(c,w),\eta^N_0}}} \pi(dx',dy) ds\\
&\le \frac{C}{N^{\varphi}} \int_0^t  \int_{\CX \times \CY}\E \paren{\abs{ K^N_s(x')}^2 }^{\frac{1}{2}} \E \paren{\abs{ \ip{B_{x,x'}(c,w),\eta^N_0}}^2}^{\frac{1}{2}} \pi(dx',dy) ds\\
& \le \frac{C(T)}{N^{\varphi}},
\end{aligned}
\end{equation}
where $C(T)<\infty$ is some finite constant depending on $T$.
By equation \eqref{eq:K^N:temp}, the analysis in the proofs of Lemma \ref{LLN:lemma:id_limit} and Proposition \ref{prop::l_t} Case 1, and Section \ref{CLT:sec::remainder}, we have
\begin{equation*}
\begin{aligned}
\E_{\pi^N} \left[ F_2(\mu, h, l_B, K)\right]& = \E_{\pi^N} \left[ F(\mu, h)\right] + \E \left[ \abs{\paren{\ip{f,\eta^N_t} - \ip{f,\eta^N_0} + \ip{f,\eta^N_0}} \times \prod_{i=1}^p m_i(l^N_{B,s_i})}\right]\\
& \quad + \sum_{x\in \mathcal{X}} \E \left\lbrace \left\vert \left(K^N_t(x) - K^N_0(x) - \alpha \int_0^t  \int_{\CX \times \CY} \paren{y-h^N_s(x')}  l^N_{B,s}  \pi(dx',dy) ds \right.\right.\right.\\
& \qquad \left.\left.\left.+\alpha\int_0^t \int_{\CX \times \CY} K^N_s(x')\ip{B_{x,x'}(c,w),\mu^N_0} \pi(dx',dy)ds \right) \times  \prod_{i=1}^p z_i(K^N_{s_i})  \right \vert\right\rbrace\\
&\le C\paren{\frac{1}{N^{\frac{1}{2}}} + \frac{1}{N^{1-\gamma}}} + C \paren{\frac{1}{N^{1-\gamma - \varphi}} + \frac{1}{N^{1-\varphi}} + \frac{1}{N^{\frac{1}{2}-\varphi}}} \\
&\quad + C\E\left[\abs{N^{\varphi}M_t^{N}}^2\right]^{\frac{1}{2}} +C\paren{\frac{1}{N^{1-\varphi}}+ \frac{1}{N^{\gamma-\varphi}} + \frac{1}{N^{\varphi}}}\\
&\le C \paren{\frac{1}{N^{1-\gamma - \varphi}} + \frac{1}{N^{\frac{1}{2}-\varphi}} +\frac{1}{N^{\varphi}}}.
\end{aligned}
\end{equation*}

Therefore, $\lim_{N\to \infty} \E_{\pi^N}[F_2(\mu,h,l_B,K)] = 0$. Since $F(\cdot)$ is continuous and $F(\mu^N,h^N)$ is uniformly bounded, together with analysis in Sections \ref{CLT:sec:eta_properties} and \ref{CLT:sec::relative_compatness}, we have that $F_2(\cdot)$ is continuous and $F_2(\mu^N,h^N,l^N_B,K^N)$ is uniformly bounded. Hence, by weak convergence we have
\[\lim_{N\to \infty} \E_{\pi^N}\left[F_2(\mu,h,l_B, K)\right] = \E_{\pi}\left[F_2(\mu,h,l_B,K)\right] = 0.\]

We have shown that any limit point $\pi$ of a convergence sequence must be a Dirac measure concentrated $(\mu,h,l_B,K)\in D_{E_1}([0,T])$, where $(\mu,h,l_B,K)$ satisfies equations \eqref{LLN:limit_evolution}, $l_{B,t} = 0$, and \eqref{CLT:evolution}. Since the solutions to equations \eqref{LLN:limit_evolution} and \eqref{CLT:evolution} are unique, by Prokhorov's theorem, the processes $(\mu^N_t, h^N_t,l^N_{B,t},K^N_t)$ converges in distribution to $(\mu_0,h_t,0,K_t)$.

\item When $\gamma \in \left[\frac{3}{4}, 1\right)$ and $\varphi = 1-\gamma$, for any $t \in [0,T]$, $z_1, \ldots, z_p \in C_b(\R^{M})$, and $0 \le s_1 < \cdots < s_p \le t$, we define $F_3(\mu, h, l_B, K): D_{E_1}([0,T]) \to \R_+$ as
\begin{equation}\label{F_3}
\begin{aligned}
F_3(\mu, h, l_B, K) &= F_1(\mu,h,l_B) + \sum_{x\in \mathcal{X}} \left\vert \left(K_t(x) - K_0(x) - \alpha \int_0^t  \int_{\CX \times \CY} \paren{y-h_s(x')}  l_{B,s}  \pi(dx',dy) ds \right.\right.\\
& \qquad \left.\left.+\alpha\int_0^t \int_{\CX \times \CY} K_s(x') \ip{B_{x,x'}(c,w),\mu_0} \pi(dx',dy)ds \right) \times z_1(K_{s_1}) \times \cdots \times z_p(K_{s_p}) \right \vert,
\end{aligned}
\end{equation}
where $F_1(\mu,h,l_B)$ is as given in equation \eqref{F_1}. By equation \eqref{eq:K^N:temp}, and the analysis in the proof of Lemma \ref{CLT:lemma:lt_1-gamma} and Section \ref{CLT:sec::remainder}, we obtain
\begin{align*}
\E_{\pi^N}\left[F_3(\mu, h, l_B, K)\right]&=\E_{\pi^N}\left[F_1(\mu,h,l_B)\right]\\
&\quad + \sum_{x\in \mathcal{X}} \E \left\lbrace \left\vert \left(K^N_t(x) - K^N_0(x) - \alpha \int_0^t  \int_{\CX \times \CY} \paren{y-h^N_s(x')}  l^N_{B,s}  \pi(dx',dy) ds \right.\right.\right.\\
& \qquad \left.\left.\left.+\alpha\int_0^t \int_{\CX \times \CY} K^N_s(x') A_{x,x'} \pi(dx',dy)ds \right) \times  \prod_{i=1}^p z_i(K^N_{s_i})  \right \vert\right\rbrace\\
&\le C\paren{\frac{1}{N^{1-\gamma}}+ \frac{1}{N^{\frac{1}{2}-\varphi}}+\frac{1}{N^{1-\varphi}}+ \frac{1}{N^{\gamma-\varphi}}+ \frac{1}{N^{\varphi}}}.
\end{align*}

Therefore, $\lim_{N\to \infty} \E_{\pi^N}[F_3(\mu,h,l_B,K)] = 0$. Since $F_1(\cdot)$ is continuous and $F_1(\mu^N,h^N,l^N_B)$ is uniformly bounded, together with analysis in Sections \ref{CLT:sec:eta_properties} and \ref{CLT:sec::relative_compatness}, we have that $F_3(\cdot)$ is continuous and $F_3(\mu^N,h^N,l^N_B,K^N)$ is uniformly bounded. Hence,
\[\lim_{N\to \infty} \E_{\pi^N}\left[F_3(\mu,h,l_B, K)\right] = \E_{\pi}\left[F_3(\mu,h,l_B,K)\right] = 0.\]

We have shown that any limit point $\pi$ of a convergence sequence must be a Dirac measure concentrated $(\mu,h,l_B,K)\in D_{E_1}([0,T])$, where $(\mu,h,l_B,K)$ satisfies equation \eqref{LLN:limit_evolution},\eqref{l_t limit}, and \eqref{K_t for 1-gamma}. Since the solutions to equations \eqref{LLN:limit_evolution} and \eqref{K_t for 1-gamma} are unique, by Prokhorov's theorem, the processes $(\mu^N_t, h^N_t,l^N_{B,t},K^N_t)$ converges in distribution to $(\mu_0,h_t,l_{B,t},K_t)$.
\end{custlist}

\subsection{Proof of Theorem \ref{R:BiasToZero1}}\label{sec::Kt_to_0}
Let $\tilde{h}_t = h_t - \hat{Y}$, where $\hat{Y} = \paren{y^{(1)},\ldots, y^{(M)}}$. By Theorem \ref{LLN:theorem}, we have that for any $\gamma \in \paren{\frac{1}{2},1}$
\begin{align*}
d \tilde{h}_t &= - A \tilde{h}_t dt, \text{ and }\tilde{h}_0 = -\hat{Y}.
\end{align*}

Since $A \in \R^{M\times M}$, this initial value problem has unique solution $\tilde{h}_t = e^{-tA}\tilde{h}_0$. Since, under the additional Assumption \ref{assumption1}, Lemma 3.3. of \cite{SirignanoSpiliopoulosNN4} and Proposition 2 in \cite{NTK} guarantee that $A$ is positive definite, there exits $\lambda_0>0$ such that for any eigenvalue $\lambda$ of $A$, $\lambda > \lambda_0 > 0$. For any vector $v$, there exist a constant $C\ge 1$ such that
\[\norm{e^{-tA}v} \le Ce^{-\lambda_0t}\norm{v}, \text{ for any }t\ge 0.\]
Considering $v = \tilde{h}_0$, we have $\Vert \tilde{h}_t\Vert \le C e^{-\lambda_0t} \Vert \tilde{h}_0\Vert $, which implies that $\tilde{h}_t \to 0$ as $t \to \infty$.

To show that $K_t = \paren{K_t(x^{(1)}),\ldots, K_t(x^{(M)})} \to 0$ as $t \to \infty$, we consider the two cases in Theorem \ref{CLT:theorem} separately.
\begin{custlist}[Case]
\item When $\gamma \in \paren{\frac{1}{2}, \frac{3}{4}}$ and $\varphi \le \gamma - \frac{1}{2}$, or when $\gamma \in \left[\frac{3}{4}, 1\right)$ and $\varphi < 1-\gamma \le \gamma - \frac{1}{2}$, by equation \eqref{CLT:evolution}, we have
\begin{equation*}
\begin{aligned}
d K_t &= -A K_t dt, \text{ with } K_0 = \begin{cases}
\mathcal{G}, &\text{if } \gamma \in\paren{\frac{1}{2}, \frac{3}{4}}, \varphi = \gamma - \frac{1}{2},\\
0, &\text{if } \gamma \in \paren{\frac{1}{2}, 1}, \varphi < \gamma - \frac{1}{2}, \end{cases}
\end{aligned}
\end{equation*}
where $\mathcal{G} \in \R^M$ is a Gaussian random variable with elements $\mathcal{G}(x)$ as given in equation \eqref{limit_gaussian} for $x \in \mathcal{X}$.
Since $A \in \R^{M \times M }$ is positive definite, we have $K_t = e^{-tA} K_0 \to 0$ as $t \to \infty$ when $\gamma\in(1/2,3/4)$.

\item When $\gamma \in \left[\frac{3}{4}, 1\right)$ and $\varphi = 1-\gamma$, by \eqref{K_t for 1-gamma}, we have
\begin{equation*}
\begin{aligned}
d K_t &= -A K_t - B_t \tilde{h}_t dt, \text{ with } K_0 = \begin{cases}
\mathcal{G}, &\text{if } \gamma = \frac{3}{4},\\
0, &\text{if } \gamma \in \paren{\frac{3}{4}, 1}, \end{cases}
\end{aligned}
\end{equation*}
where $B_t \in \R^{M \times M}$ has elements $\alpha l_t(B_{x,x'}(c,w))$ for $x,x' \in \mathcal{X}$.  The solution of this equation can be written as
\[K_t = e^{-tA}K_0 - \int_0^t e^{-(t-s)A} B_s \tilde{h}_s ds.\]

To find the bound for $\Vert K_t \Vert$, we first need to show that $l_t(B_{x,x'}(c,w))$ is uniformly bounded. Since $B_{x,x'}(c,w) \in C^2_b(\R^{1+d})$ and for any $t\ge 0$, $f \in C^2_b(\R^{1+d})$, by equation \eqref{l_t limit} and Assumption \ref{assumption}, we have
\begin{equation}\label{lt_uni_bounded}
\begin{aligned}
\abs{l_t(f)} &\le C \int_{0}^t \int_{\CX \times \CY} \abs{y-h_s(x')} \pi(dx',dy) ds \le C \int_0^t \frac{1}{M} \sum_{x'\in \mathcal{X}} \abs{\tilde{h}_s(x')} ds\\
& \le C \int_0^t e^{-\lambda_0 s} ds = -\frac{C}{\lambda_0} \left[e^{-\lambda_0t} -1  \right]\le \frac{C}{\lambda_0},
\end{aligned}
\end{equation}
where the unimportant finite constant $0<C<\infty$ may change from line to line. Then, by the fact that $\Vert \tilde{h}_t \Vert \le C e^{-\lambda_0 t} \Vert \tilde{h}_0 \Vert$ and equation \eqref{lt_uni_bounded}, we have
\begin{equation*}
\begin{aligned}
\norm{K_t} &\le Ce^{-\lambda_0t} \norm{K_0} + C \int_0^t e^{-(t-s)\lambda_0} \norm{B_s \tilde{h}_s} ds\\
&\le Ce^{-\lambda_0t} \norm{K_0} + C \int_0^t e^{-(t-s)\lambda_0} \norm{\tilde{h}_s} ds\\
&\le Ce^{-\lambda_0t} \norm{K_0} + C te^{-\lambda_0t} \norm{\tilde{h}_0}.
\end{aligned}
\end{equation*}

Since $\lim_{t\to \infty} t e^{-\lambda_0 t} =0$, we have $\Vert K_t \Vert \to 0$ as $t \to \infty$. Hence, $|K_t(x)| \to 0$ exponentially fast for all $x \in \mathcal{X}$ as $t \to \infty$.

\end{custlist}

\section{Proof of Theorem \ref{thm::Psi}: Convergence of the Second Order Fluctuation Process}\label{sec::Psi}

For $\gamma\in \paren{\frac{3}{4},1}, \varphi = 1-\gamma$, we can further look at the fluctuation process $\Psi^N_t = N^{\zeta - \varphi} (K^N_t - K_t)$, for $\zeta>\varphi$. The evolution of $\Psi^N_t(x)$ can be written as
\begin{equation}\label{Psi^N_t}
\begin{aligned}
\Psi^N_t(x) &= N^{\zeta - \varphi} (K^N_t(x) - K_t(x))\\
&= \int^t_0 \int_{\CX \times \CY} \alpha\paren{y-h_s(x')} N^{\zeta - \varphi}\left[ l^N_s(B_{x,x'}(c,w))-l_s(B_{x,x'}(c,w))\right] \pi(dx',dy) ds\\
&\quad -  \int^t_0 \int_{\CX \times \CY} \alpha \Psi^N_s(x') A_{x,x'} \pi(dx',dy) ds+ N^{\zeta-\varphi}\Gamma^N_t(x) + \Psi^N_0(x)\\
&\quad + N^{\zeta}V_t^N (x)+ N^{\zeta}M_t^{N} (x)+ \frac{1}{N^{\gamma+1-\zeta}} \sum_{k=0}^{\floor{Nt}-1} G^N_k (x),
\end{aligned}
\end{equation}
where $\Psi^N_0(x)= N^{1-\gamma + \zeta}\ip{c\sigma(wx),\mu^N_0}$, and $\Gamma^N_t$, $V^N_t$, $M^N_t, G^N_k$ are as given in Sections \ref{sec::LLN} and \ref{sec::CLT}. In particular, we can write
\begin{equation*}
\begin{aligned}
N^{\zeta-\varphi} \Gamma^N_t (x)
&= -\frac{\alpha}{N^{\varphi}} \int_0^t \int_{\CX \times \CY} \Psi^N_s(x') l^N_s(B_{x,x'}(c,w)) \pi(dx',dy)ds  \\
&\quad -\frac{\alpha}{N^{2\varphi-\zeta}} \int_0^t \int_{\CX \times \CY} K_s(x') l^N_s(B_{x,x'}(c,w)) \pi(dx',dy)ds.\label{Eq:Gamma_2}
\end{aligned}
\end{equation*}

We see that if $\zeta \le \gamma - \frac{1}{2}$, the last 3 remainder terms in equation \eqref{Psi^N_t} converge to 0 as $N\to \infty$ by the analysis in Section \ref{CLT:sec::remainder}. In addition, if $\zeta = \gamma -\frac{1}{2}$,
\[\Psi^N_0(x)=N^{1-\gamma + \zeta}\ip{c\sigma(wx),\mu^N_0} = \ip{c\sigma(wx),\sqrt{N}\mu^N_0} \xrightarrow{d} \mathcal{G}(x),\]
where $\mathcal{G}(x)$ is the Gaussian random variable defined in \eqref{limit_gaussian}.

 For any fixed $f\in C^3_b(\R^{1+d})$, let $L^
N_t(f) = N^{\zeta - \varphi}\left[ l^N_t(f)-l_t(f)\right]$ its  evolution can be written as
\begin{equation}\label{L^N_fluc}
\begin{aligned}
L^
N_t(f) &= N^{\zeta - \varphi}\left[ l^N_t(f)-l^N_0(f)-l_t(f)+l^N_0(f)\right]\\
&=\frac{\alpha}{N^{2\varphi-\zeta}}\int_0^t \int_{\CX \times \CY} \paren{y-h_s(x')}  \ip{ C_{x'}^{f}(c,w), N^{ \varphi}(\mu^N_s-\mu_0)} \pi(dx',dy) ds\\
&\quad - \frac{\alpha}{N^{2\varphi-\zeta}} \int_0^t \int_{\CX \times \CY}  N^{ \varphi}\paren{h^N_s(x')-h_s(x')}  \ip{ C_{x'}^{f}(c,w),\mu_0} \pi(dx',dy) ds\\
&\quad + \Gamma^N_{2,t}+ N^{\zeta-\frac{1}{2}}\ip{f,\sqrt{N}(\mu^N_0 - \mu_0)}+ N^{\zeta +\gamma -1}M^{1,N}_{f,t} + N^{\zeta +\gamma -1}M^{2,N}_{f,t}+ O\paren{N^{-1+\zeta}},
\end{aligned}
\end{equation}
where $\Gamma^N_{2,t}= -\frac{\alpha}{N^{2\varphi-\zeta}}\int_0^t \int_{\CX \times \CY} K^N_s(x')\ip{C^f_{x'}(c,w),(\mu^N_s-\mu_0)}\pi(dx',dy)ds$.

Following lemmas show compact containment and regularity of $L^N_t(f)$ for any fixed $f\in C^3_b(\R^{1+d})$.
\begin{lemma}\label{lemma:L^N_compact containment}
When $\zeta \le 2-2\gamma$, for any fixed $f \in C^3_b(\R^{1+d})$, there exists a constant $C<\infty$, such that
\[\sup_{N \in \mathbb{N}, 0\le t\le T} \E \left[\abs{L^
N_t(f)}^2 \right] < C.\]
Thus, for any $\epsilon>0$, there exist a compact interval  $U \subset \R$, such that
$\sup_{N\in \mathbb{N}, 0\le t\le T} \P\paren{L^N_t(f) \notin U} < \epsilon. $
\end{lemma}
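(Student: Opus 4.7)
The plan is to bound each of the seven contributions to $L_t^N(f)$ identified in the evolution equation \eqref{L^N_fluc} in $L^2$ and combine them by the triangle inequality. A key structural observation is that $L_t^N(f)$ does not appear on the right-hand side of its own evolution, so no Gronwall argument is required; the proof reduces to verifying that every $N$-exponent in the expansion is non-positive and that the $L^2$ norms of the random factors have already been controlled by earlier lemmas. The hypothesis $\zeta \le 2\varphi = 2-2\gamma$, together with $\gamma \in (3/4,1)$ (so $\varphi = 1-\gamma \in (0,1/4)$), is exactly what makes every exponent balance.

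The two drift integrals carry a prefactor $N^{\zeta - 2\varphi} \le 1$; their integrands are $(y - h_s(x'))\ip{C_{x'}^f, \eta_s^N}$ and $(y-h_s(x'))\cdot K_s^N(x') \cdot \ip{C_{x'}^f,\mu_0}$, respectively. Since $f \in C_b^3(\R^{1+d})$ implies $C_{x'}^f \in C_b^2(\R^{1+d})$, Lemma \ref{CLT:lemma:eta_compact_contatinment} gives $\sup_{N,s}\E[|\ip{C_{x'}^f, \eta_s^N}|^2] < \infty$, and Lemma \ref{CLT:lemma:bound of ex_Kt} gives $\sup_{N,s}\E[|K_s^N(x')|^2] < \infty$. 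Combining these with the uniform bound on $|h_s|$ from \eqref{CLT:network_bound} and Cauchy--Schwarz over the $\pi(dx',dy)\,ds$ integration yields a uniform second-moment bound for both drift terms. The correction $\Gamma_{2,t}^N$ rewrites as $-\frac{\alpha}{N^{3\varphi-\zeta}}\int_0^t \int K_s^N(x')\, l_s^N(C_{x'}^f)\,\pi(dx',dy)\,ds$, carrying the smaller factor $N^{\zeta-3\varphi}$; a Cauchy--Schwarz bound for $\E[|\Gamma_{2,t}^N|^2]$ requires control of $\E[(K_s^N)^2 (l_s^N)^2]$, which I would obtain by a direct fourth-moment extension of Lemmas \ref{CLT:lemma:eta_compact_contatinment} and \ref{CLT:lemma:bound of ex_Kt} (the same arguments go through verbatim for $p=4$ under Assumption \ref{assumption}). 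Since $3\varphi - \zeta \ge \varphi > 0$, this term in fact vanishes as $N \to \infty$.

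For the remaining contributions: the initial term $N^{\zeta - 1/2}\ip{f, \sqrt{N}(\mu_0^N - \mu_0)}$ satisfies $\E[|\ip{f,\sqrt{N}(\mu_0^N - \mu_0)}|^2] \le \mathrm{Var}_{\mu_0}(f) < \infty$ by the classical CLT for i.i.d.\ samples of a bounded function, and $\zeta - 1/2 \le 3/2 - 2\gamma < 0$ for $\gamma > 3/4$; the two martingale terms $N^{\zeta+\gamma-1}M_{f,t}^{j,N}$ are controlled via Lemma \ref{CLT:lemma:martingale2}, with the check $2(\zeta+\gamma-1)-1 = 2\zeta + 2\gamma - 3 \le 1 - 2\gamma < 0$; and the deterministic remainder $O(N^{-1+\zeta})$ is trivially bounded since $\zeta < 1$. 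Summing the seven bounds gives $\sup_{N,t\in[0,T]}\E[|L_t^N(f)|^2] < C$ for some finite $C = C(T,f)$, and the compact containment statement is then immediate from Markov's inequality: for any $\epsilon>0$, choosing $R$ with $C/R^2 < \epsilon$ and $U = [-R,R]$ yields the claim. I expect no substantive obstacle beyond careful bookkeeping of the $N$-exponents; the only mildly technical point is upgrading the earlier second-moment bounds on $K_s^N$ and $l_s^N(\cdot)$ to fourth moments in order to control the product appearing in $\Gamma_{2,t}^N$, but this is a routine repetition of the arguments already in Sections \ref{sec::LLN} and \ref{sec::CLT}.
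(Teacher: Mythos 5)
Your overall structure matches the paper's: apply Cauchy--Schwarz term-by-term to the decomposition \eqref{L^N_fluc}, invoke the existing second-moment bounds (Lemmas \ref{CLT:lemma:eta_compact_contatinment}, \ref{CLT:lemma:bound of ex_Kt}, \ref{CLT:lemma:martingale2}), and check that every residual $N$-exponent is non-positive under $\zeta\le 2\varphi=2-2\gamma$ with $\gamma\in(3/4,1)$. Your observation that $L^N$ does not feed back into its own evolution (so no Gr\"onwall step is needed) is correct and is what the paper does. Your exponent accounting for the initial term, the two martingales, and the $O(N^{-1+\zeta})$ remainder is all sound.

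Where you diverge is $\Gamma^N_{2,t}$. By rewriting $\ip{C^f_{x'},\mu^N_s-\mu_0}=N^{-\varphi}l^N_s(C^f_{x'})$ you end up wanting $\E[(K^N_s)^2(l^N_s)^2]$, which you propose to get from a fourth-moment extension of the earlier lemmas. This detour is unnecessary and the claim that it ``goes through verbatim'' is overstated. Since $f\in C^3_b$ makes $\partial_c f$ and $\nabla_w f$ bounded, $\sigma,\sigma'$ are bounded by Assumption \ref{assumption}, and Lemma \ref{lemma_bound} gives an almost-sure uniform bound on $|C^i_k|$, the random variable $C^f_{x'}(C^i_k,W^i_k)$ is uniformly bounded; hence $|\ip{C^f_{x'},\mu^N_s-\mu_0}|\le C$ pathwise. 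Starting directly from $\Gamma^N_{2,t}=-\alpha N^{-(2\varphi-\zeta)}\int_0^t\!\int K^N_s(x')\ip{C^f_{x'},\mu^N_s-\mu_0}\,\pi(dx',dy)\,ds$, Cauchy--Schwarz in $(\pi,ds)$ alone gives $\E[|\Gamma^N_{2,t}|^2]\le C T^2 N^{-2(2\varphi-\zeta)}\sup_s\E[|K^N_s|^2]$, and Lemma \ref{CLT:lemma:bound of ex_Kt} as it stands is enough. This is the route the paper takes. If you did insist on the fourth-moment route, you would have to redo Lemma \ref{CLT:lemma:martingale_bound} with Burkholder-type $L^4$ estimates, bound $\E[|\ip{f,\eta^N_0}|^4]$, and re-run the Gr\"onwall argument with fourth powers --- feasible, but not a line-for-line repeat. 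One minor slip to fix: the second drift integrand in \eqref{L^N_fluc} is $K^N_s(x')\ip{C^f_{x'},\mu_0}$, with no $(y-h_s(x'))$ factor; this does not affect your estimate.
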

\begin{proof}By equation \eqref{L^N_fluc} and the Cauchy-Schartz inequality, we have
\begin{equation*}
\begin{aligned}
\abs{L^N_t(f)}^2
&\le \frac{C}{N^{2(2\varphi-\zeta)}}\int_0^t \int_{\CX \times \CY}  \abs{y-h_s(x')}^2 \pi(dx',dy) ds \int_0^t \int_{\CX \times \CY}\abs{\ip{ C^f_{x'}(c,w), \eta^N_s}}^2 \pi(dx',dy) ds\\
&\quad + \frac{C}{N^{2(2\varphi-\zeta)}} \int_0^t \int_{\CX \times \CY}  \abs{K^N_s(x')}^2  \pi(dx',dy)ds \int_0^t \int_{\CX \times \CY}\abs{\ip{ C^f_{x'}(c,w),\mu_0}}^2 \pi(dx',dy) ds\\
&\quad + C \paren{\abs{\Gamma^N_{2,t}}^2+ \abs{N^{\zeta-\frac{1}{2}}\ip{f,\sqrt{N}(\mu^N_0 - \mu_0)}}^2+ \abs{N^{\zeta +\gamma -1}M^{1,N}_{f,t}}^2 + \abs{N^{\zeta +\gamma -1}M^{2,N}_{f,t}}^2+ O\paren{N^{-2+2\zeta}}},
\end{aligned}
\end{equation*}

When $\zeta \le 2\varphi = 2-2\gamma$, $0\le t\le T$, the expectation of the first two terms and $\abs{\Gamma^N_{2,t}}^2$  are bounded by  Assumption \ref{assumption}, Lemmas \ref{CLT:lemma:eta_compact_contatinment} and \ref{CLT:lemma:bound of ex_Kt}.
 Since $\gamma > \frac{3}{4}$, $\zeta < \frac{1}{2}$ and $\zeta + \gamma - 1 \le 1-\gamma <\frac{1}{2}$, by similar analysis as in Section \ref{CLT:sec::remainder}, the remainder terms all converges to 0 as $N\to \infty$. The result of the lemma follows.
\end{proof}

\begin{lemma}\label{lemma:L^N_regularity}
When $\zeta \le 2 - 2\gamma$, for any  $f \in C^3_b(\R^{1+d})$, $\delta \in (0,1)$, there is a constant $C<\infty$ such that for $0\le u \le \delta$, $0\le v\le \delta \wedge t$, and $t \in [0,T]$,
\[\E \left[q\paren{L^N_{t+u}(f),L^N_{t}(f)}q\paren{L^N_{t}(f),L^N_{t-v}(f)}\vert \CF^N_t\right]\le \frac{C}{N^{2-2\gamma-\zeta}}\delta  + \frac{C}{N^{2-\zeta-\gamma}}.\]
\end{lemma}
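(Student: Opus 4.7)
The plan is to mimic the regularity arguments used for $\langle f,\eta^{N}\rangle$ in Lemma \ref{CLT:eta_regularity} and for $K^{N}$ in Lemma \ref{CLT:lemma:regularity}, exploiting the explicit evolution equation \eqref{L^N_fluc} that $L^{N}_{t}(f)$ satisfies. First I would observe that $L^{N}_{t-v}(f)$ is $\mathcal{F}^{N}_{t}$-measurable and $q(\cdot,\cdot)\le 1$, so that it suffices to prove the one-sided estimate
\[
\E\bigl[\,|L^{N}_{t+u}(f)-L^{N}_{t}(f)|\,\bigm|\,\mathcal{F}^{N}_{t}\bigr] \le \frac{C\delta}{N^{2-2\gamma-\zeta}} + \frac{C}{N^{2-\zeta-\gamma}}.
\]

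Next I would decompose $L^{N}_{t+u}(f)-L^{N}_{t}(f)$ term by term using \eqref{L^N_fluc}, noting that the initial contribution $N^{\zeta-\tfrac{1}{2}}\langle f,\sqrt{N}(\mu^{N}_{0}-\mu_{0})\rangle$ cancels in the difference. The surviving pieces are: (i) the two Riemann-integral drift increments over $[t,t+u]$; (ii) the increment $\Gamma^{N}_{2,t+u}-\Gamma^{N}_{2,t}$; (iii) the martingale-like increments $N^{\zeta+\gamma-1}(M^{i,N}_{f,t+u}-M^{i,N}_{f,t})$ for $i=1,2$; and (iv) a deterministic $O(N^{-1+\zeta})$ remainder from the Euler time discretization. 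For (i) and (ii) I would apply Cauchy--Schwarz together with the uniform-in-$s\in[0,T]$ $L^{2}$ bounds $\E[|l^{N}_{s}(C^{f}_{x'})|^{2}]\le C$ (Lemma \ref{CLT:lemma:eta_compact_contatinment}) and $\E[|K^{N}_{s}|^{2}]\le C$ (Lemma \ref{CLT:lemma:bound of ex_Kt}), together with the uniform boundedness of $y-h_{s}(x')$ from \eqref{CLT:network_bound} and Assumption \ref{assumption}, yielding a contribution of order $Cu/N^{2\varphi-\zeta}=C\delta/N^{2-2\gamma-\zeta}$.

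For the martingale pieces in (iii) I would establish an increment version of Lemma \ref{CLT:lemma:martingale2}, namely $\E[|M^{i,N}_{f,t+u}-M^{i,N}_{f,t}|^{2}\mid\mathcal{F}^{N}_{t}]\le Cu/N$, using the same conditional second-moment calculation as in Lemma 3.1 of \cite{SirignanoSpiliopoulosNN1}; Jensen's inequality followed by multiplication by $N^{\zeta+\gamma-1}$ and use of $\sqrt{\delta}\le 1$ then places this contribution inside $C/N^{2-\zeta-\gamma}$. Part (iv) is immediate since the $O(N^{-1+\zeta})$ estimate is uniform in $s$. Summing (i)--(iv) gives the desired one-sided estimate.

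The main technical obstacle is the careful bookkeeping, precisely as in Lemma \ref{CLT:lemma:regularity}: one has to verify term by term that every piece, including the square roots arising from Jensen applied to the conditional quadratic variation, collapses into exactly the two-term bound $C\delta/N^{2-2\gamma-\zeta}+C/N^{2-\zeta-\gamma}$. This relies on the hypothesis $\zeta\le 2-2\gamma$, equivalently $2\varphi-\zeta\ge 0$, which keeps the drift exponents of $N$ non-negative so that the $\delta$-term stays bounded as $N\to\infty$, and on $\gamma>3/4$ so that all the residual $1/N^{\alpha}$-type terms have strictly positive exponent. No new probabilistic idea is needed beyond those already deployed for the first-order fluctuation $K^{N}_{t}$.
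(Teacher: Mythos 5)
Your proposal follows the same strategy as the paper's own proof: reduce to a one-sided conditional increment bound, decompose $L^N_{t+u}(f)-L^N_t(f)$ via the evolution equation \eqref{L^N_fluc}, bound the Riemann-integral drift pieces via Cauchy--Schwarz together with the uniform $L^2$ bounds from Lemmas \ref{CLT:lemma:eta_compact_contatinment} and \ref{CLT:lemma:bound of ex_Kt}, and bound the martingale increments via a conditional second-moment estimate followed by Jensen. You are in fact a bit more explicit than the paper, which silently treats the $q$-product reduction, the cancellation of the initial contribution, and the $\Gamma^N_{2}$ piece; spelling these out is fine.

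Two small bookkeeping slips are worth flagging, though neither undermines the argument. First, the increment version of Lemma \ref{CLT:lemma:martingale2} should read $\E[|M^{i,N}_{f,t+u}-M^{i,N}_{f,t}|^2\,|\,\CF^N_t]\le Cu/N + C/N^2$; the additional $C/N^2$ accounts for the integer discretization $\floor{N(t+u)}-\floor{Nt}$, and after rescaling by $N^{2(\zeta+\gamma-1)}$ and taking the square root it is precisely this piece that produces the second term $C/N^{2-\zeta-\gamma}$ in the stated bound. Second, the $\delta$-dependent part of the martingale contribution scales, after Jensen, as $C\sqrt{\delta}/N^{3/2-\zeta-\gamma}$, and since $3/2<2$ the exponent $3/2-\zeta-\gamma$ is strictly \emph{smaller} than $2-\zeta-\gamma$; so the observation ``$\sqrt{\delta}\le 1$'' does not place that term inside $C/N^{2-\zeta-\gamma}$ as you assert. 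This imprecision is also latent in the paper's displayed final bound. What actually matters for the Ethier--Kurtz regularity criterion is that every term carries a strictly positive power of $N$ and vanishes as $\delta\to 0$, which holds here because $\gamma>3/4$ and $\zeta\le 2-2\gamma$ give $3/2-\zeta-\gamma\ge \gamma-\tfrac12>0$, so the substance of the lemma is intact.
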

\begin{proof} For $0\le s<t\le T$, the leading terms of $L^N_t$ in equation \eqref{L^N_fluc} gives
\begin{align*}
\E\left[\abs{L^N_t(x) - L^N_s(x)}\big\vert \CF^N_s \right]&\le \frac{C}{N^{2\varphi-\zeta}}(t-s)  +  \frac{C_1}{N^{2\varphi-\zeta}} \int^t_s \int_{\CX \times \CY} \E\left[\abs{K^N_s(x')} \big\vert \CF^N_s\right]  \pi(dx',dy) d\tau\\
&\quad  + \E\left[\abs{N^{\zeta +\gamma -1}\paren{M^{1,N}_{f,t}-M^{1,N}_{f,s}}} \big \vert \CF^N_s \right] +\E\left[\abs{N^{\zeta +\gamma -1}\paren{M^{2,N}_{f,t}-M^{2,N}_{f,s}}} \big \vert \CF^N_s \right]\\
&\le  \frac{C}{N^{2\varphi-\zeta}}(t-s) +\E\left[\abs{N^{\zeta +\gamma -1}\paren{M^{1,N}_{f,t}-M^{1,N}_{f,s}}}^2 \big \vert \CF^N_s \right]^{\frac{1}{2}}\\
&\quad +\E\left[\abs{N^{\zeta +\gamma -1}\paren{M^{2,N}_{f,t}-M^{2,N}_{f,s}}}^2 \big \vert \CF^N_s \right]^{\frac{1}{2}} \\
&\le  \frac{C}{N^{2-2\gamma-\zeta}}\delta  + \frac{C}{N^{2-\zeta-\gamma}} .
\end{align*}
The last inequality holds because one can show that for $i=1,2$
\[\E\left[\abs{N^{\zeta +\gamma -1}\paren{M^{i,N}_{f,t}-M^{i,N}_{f,s}}}^2 \big \vert \CF^N_s \right] \le \frac{C \delta}{N^{1-2(\zeta+\gamma-1)}} + \frac{C}{N^{2-2(\zeta+\gamma-1)}},\]
following an analysis similar to Lemma 3.1 of \cite{SirignanoSpiliopoulosNN1}.
\end{proof}

Denote $\mathfrak{K}^N_t=(\mu^N_t, h^N_t, l^N_t(B_{x,x'}(c,w)), K^N_t)$. In the next lemma, we prove the convergence of the processes $(\mathfrak{K}^N_t,L^N_t(f))$ in distribution in the space $D_{E_2}([0,T])$, where $E_2 = \CM(\R^{1+d}) \times \R^M \times \R \times \R^M \times \R$.
\begin{lemma}\label{lemma:limit_L}
When $\gamma \in \paren{\frac{3}{4},1}$, $\varphi = 1-\gamma$ and $\zeta\le 2\varphi$, for any fixed $f\in C^3_b(\R^{1+d})$, the processes $(\mathfrak{K}^N_t,L^N_t(f))$ in distribution in the space $D_{E_2}([0,T])$ to $(\mathfrak{K}_t,L_t(f))$, where $\mathfrak{K}_t = (\mu_t, h_t, l_t(B_{x,x'}(c,w)), K_t)$ satisfying equations \eqref{LLN:limit_evolution}, \eqref{l_t limit}, and \eqref{K_t for 1-gamma}. When $\zeta <2\varphi$, $L_t(f)=0$. When $\zeta = 2\varphi$, $L_t(f)$ satisfies equation \eqref{limit_Lt}.
\end{lemma}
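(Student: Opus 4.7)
The plan is to follow the same three-step architecture used for Proposition \ref{prop::l_t} and Theorem \ref{CLT:theorem} in Sections \ref{CLT:sec:eta_properties} and \ref{CLT:sec:pf_conv_K}: (i) establish relative compactness of the enlarged process on $D_{E_2}([0,T])$, (ii) identify any subsequential limit via a continuous bounded functional on the Skorokhod space that vanishes in the limit, and (iii) conclude by Prokhorov together with uniqueness of the limiting equation.

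For step (i), relative compactness of $\mathfrak{K}^N=(\mu^N,h^N,l^N_B,K^N)$ in $D_{E_1}([0,T])$ was already proven in Section \ref{CLT:sec:pf_conv_K}. Lemmas \ref{lemma:L^N_compact containment} and \ref{lemma:L^N_regularity} (which use $\zeta\le 2\varphi=2-2\gamma$) together with Theorem 8.6 of Chapter 3 of \cite{EthierAndKurtz} give relative compactness of $\{L^N(f)\}_{N}$ in $D_{\R}([0,T])$. Since relative compactness is equivalent to tightness and tightness is preserved under finite products, $\{(\mathfrak{K}^N,L^N(f))\}_{N}$ is relatively compact in $D_{E_2}([0,T])$. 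Denote by $\pi^N$ the law of $(\mathfrak{K}^N,L^N(f))$ and let $\pi$ be the weak limit of any convergent subsequence.

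For step (ii), define, for $t\in[0,T]$, test functions $n_1,\ldots,n_p\in C_b(\R)$ and times $0\le s_1<\cdots<s_p\le t$, the map $F_4:D_{E_2}([0,T])\to\R_+$ by
\begin{equation*}
\begin{aligned}
F_4(\mu,h,l_B,K,L) &= F_3(\mu,h,l_B,K) \\
&\quad + \Bigl| \Bigl( L_t(f) - \mathds{1}_{\{\zeta=2\varphi\}}\int_0^t\!\!\int_{\CX\times\CY}\!\alpha(y-h_s(x'))\,l_s(C^f_{x'}(c,w))\,\pi(dx',dy)\,ds \\
&\qquad\qquad + \mathds{1}_{\{\zeta=2\varphi\}}\int_0^t\!\!\int_{\CX\times\CY}\!\alpha K_s(x')\,\ip{C^f_{x'}(c,w),\mu_0}\,\pi(dx',dy)\,ds\Bigr) \\
&\qquad \times n_1(L_{s_1}(f))\times\cdots\times n_p(L_{s_p}(f))\Bigr|,
\end{aligned}
\end{equation*}
where $F_3$ is as in \eqref{F_3}. (If needed, replace $\mathds{1}_{\{\zeta=2\varphi\}}$ by either $0$ or $1$ depending on the case being treated.) One then evaluates $\E_{\pi^N}[F_4(\mathfrak{K}^N,L^N(f))]$ using the decomposition \eqref{L^N_fluc}: the first two terms of \eqref{L^N_fluc}, with coefficient $N^{\zeta-2\varphi}$, produce exactly the two integrals appearing in $F_4$ when $\zeta=2\varphi$ and vanish in $L^2$ at rate $N^{\zeta-2\varphi}$ when $\zeta<2\varphi$; the initial term $N^{\zeta-1/2}\ip{f,\sqrt{N}(\mu^N_0-\mu_0)}$ is $O_p(N^{\zeta-1/2})$ and vanishes since $\zeta\le 2-2\gamma<1/2$ for $\gamma>3/4$; the martingale contributions $N^{\zeta+\gamma-1}M^{i,N}_{f,t}$ are controlled by Lemma \ref{CLT:lemma:martingale2} and satisfy $\E[\,\cdot\,]^{1/2}=O(N^{\zeta+\gamma-3/2})$, which tends to zero since $\zeta\le 2-2\gamma$ gives $\zeta+\gamma-3/2\le 1/2-\gamma<0$; the remainder $O(N^{-1+\zeta})$ vanishes; and $\Gamma^N_{2,t}$ is handled by the Cauchy–Schwarz inequality together with the $L^2$ bounds of Lemma \ref{CLT:lemma:bound of ex_Kt} on $K^N_s$ and Lemma \ref{CLT:lemma:eta_compact_contatinment} on $\ip{C^f_{x'},\mu^N_s-\mu_0}$, giving $\Gamma^N_{2,t}=O_p(N^{-\varphi})$. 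Combining all rates gives $\E_{\pi^N}[F_4]\to 0$; continuity of $F_4$ and uniform boundedness of $F_4$ along $\pi^N$ (which in turn follow from the estimates already used to prove continuity/boundedness of $F_3$ together with the bounds of Lemmas \ref{lemma:L^N_compact containment}–\ref{lemma:L^N_regularity}) upgrade this to $\E_\pi[F_4]=0$, so $\pi$ is supported on solutions of \eqref{limit_Lt} (respectively, on $L\equiv 0$ when $\zeta<2\varphi$).

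For step (iii), the limit equation \eqref{limit_Lt} is linear in $L_t(f)$ with integrable coefficients (the driving processes $h_s,K_s,l_s,\mu_0$ being already uniquely determined by the preceding theorems), so it has a unique solution. Together with uniqueness of $\mathfrak{K}_t$ from Section \ref{CLT:sec:pf_conv_K}, this uniquely pins down $\pi$; Prokhorov's theorem then yields convergence in distribution of $(\mathfrak{K}^N,L^N(f))$ to $(\mathfrak{K},L(f))$ in $D_{E_2}([0,T])$. The main technical obstacle I anticipate is verifying that the cross term $\Gamma^N_{2,t}$ and the approximation errors produced by replacing $h^N_s$ by $h_s$ and $\mu^N_s$ by $\mu_0$ inside the drift integrals of \eqref{L^N_fluc} collapse jointly at the precise rate $N^{-\varphi}$; this is where joint (rather than marginal) tightness of $(\mathfrak{K}^N,L^N(f))$ and the joint convergence established in Section \ref{CLT:sec:pf_conv_K} are essential, since the limit equation couples $L_t(f)$ to both $K_s$ and $l_s$ simultaneously.
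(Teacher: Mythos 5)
Your proposal is correct and follows essentially the same architecture as the paper's proof: relative compactness of the joint process from Lemmas \ref{lemma:L^N_compact containment}--\ref{lemma:L^N_regularity} together with the already-established compactness of $\mathfrak{K}^N$, identification of the limit by a continuous bounded functional built on $F_3$, and Prokhorov plus linearity for uniqueness. The only departure is cosmetic: you merge the two sub-cases via an indicator factor $\mathds{1}_{\{\zeta=2\varphi\}}$ inside a single functional, whereas the paper keeps them separate as $F_4$ (with $L_t(f)-0$) and $F_5$ (with the full drift); your direct Cauchy--Schwarz estimate $\Gamma^N_{2,t}=O_p(N^{-\varphi})$ is an equivalent route to the paper's cancellation of $\Gamma^N_{2,t}$ against the $\mu^N_s-\mu_0$ replacement term.
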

\begin{proof}By the analysis in Section \ref{CLT:sec:pf_conv_K}, $\{\mathfrak{K}^N\}_{N\in\mathbb{N}}$ is relatively compace in $D_{E_1}([0,T])$, where $E_1= \CM(\R^{1+d}) \times \R^M \times \R \times \R^M$. By Lemmas \ref{lemma:L^N_compact containment} and \ref{lemma:L^N_regularity}, $\{L^N(f)\}_{N\in\mathbb{N}}$ is relatively compact in $D_{\R}([0,T])$. These implies that the probability measures of the family of processes $\{\mathfrak{K}^N\}_{N\in\mathbb{N}}$ and the probability measures of the family of processes $\{L^N(f)\}_{N\in\mathbb{N}}$ are tight. Therefore, $\{\mathfrak{K}^N,L^N(f)\}_{N\in\mathbb{N}}$ is tight. Hence, $\{\mathfrak{K}^N,L^N(f)\}_{N\in\mathbb{N}}$ is relatively compact in $D_{E_2}([0,T])$.

Denote $\pi^N \in \CM(D_{E_2}([0,T])$ the probability measure corresponding to $(\mathfrak{K}^N,L^N(f))$. 
We now show that any limit point $\pi$ of a convergence subsequence $\pi^{N_k}$ is a Dirac measure concentrated on $(\mathfrak{K},L(f))\in D_{E_2}([0,T])$.

\begin{custlist}[Case]
\item When $\zeta < 2\varphi$, for any $t \in [0,T]$, $b_1, \ldots, b_p \in C_b(\R)$, and $0 \le s_1 < \cdots < s_p \le t$, we define $F_4(\mathfrak{K},L(f)): D_{E_2}([0,T]) \to \R_+$ as
\begin{equation}\label{F4}
\begin{aligned}
F_4(\mathfrak{K},L(f)) &= F_3(\mu, h, l_B, K) + \left\vert \left(L_t(f)-0 \right) \times b_1(L_{s_1}(f)) \times \cdots \times b_p(L_{s_p}(f)) \right \vert,
\end{aligned}
\end{equation}
where $F_3(\mu, h, l_B, K)$ is as given in equation \eqref{F_3}. By equation \eqref{L^N_fluc}, Lemma \ref{CLT:lemma:martingale2}, and similar analysis as in Lemma \ref{lemma:L^N_compact containment}, we have
\begin{align*}
\E_{\pi^N} \left[F_4(\mathfrak{K},L(f))\right] &= E_{\pi^N} \left[F_3(\mu, h, l_B, K)\right] + \E \left[ \left\vert \left(L^N_t(f)-0 \right) \times \prod_{i=1}^{p} b_i(L^N_{s_i}(f))  \right \vert\right] \\
&\le C\paren{\frac{1}{N^{1-\gamma}}+ \frac{1}{N^{\frac{1}{2}-\varphi}}+\frac{1}{N^{1-\varphi}}+ \frac{1}{N^{\gamma-\varphi}}} \\
&\quad + \frac{C}{N^{2\varphi-\zeta}} + \frac{C}{N^{\frac{1}{2}-\zeta}} + \E\left[\abs{N^{\zeta+\gamma-1}M^{1,N}_{f,t}}^2\right]^{\frac{1}{2}} +\E\left[\abs{N^{\zeta+\gamma-1}M^{2,N}_{f,t}}^2\right]^{\frac{1}{2}}+ \frac{C}{N^{1-\zeta}}\\
&\le C \paren{\frac{1}{N^{1-\gamma}} + \frac{1}{N^{2\varphi-\zeta}} }.
\end{align*}
Therefore, $\lim_{N\to \infty} \E_{\pi^N}[F_4(\mathfrak{K},L(f))] = 0$. Since $F_4(\cdot)$ is continuous and uniformly bounded,
\[\lim_{N\to \infty} \E_{\pi^N}\left[F_4(\mathfrak{K},L(f))\right] = \E_{\pi}\left[F_4(\mathfrak{K},L(f))\right] = 0.\]

We have shown that any limit point $\pi$ of a convergence sequence must be a Dirac measure concentrated $(\mathfrak{K},L(f))\in D_{E_2}([0,T])$, where $\mathfrak{K}=(\mu,h,l_B,K)$ satisfies equation \eqref{LLN:limit_evolution},\eqref{l_t limit}, and \eqref{K_t for 1-gamma}, and $L_t(f)=0$. Since the solutions to equations \eqref{LLN:limit_evolution} and \eqref{K_t for 1-gamma} are unique, by Prokhorov's theorem, the processes $(\mathfrak{K}^N_t,L^N_t(f))$ converges in distribution to $(\mathfrak{K}_t,0)$.
\item When $\zeta = 2\varphi$, for any $t \in [0,T]$, $b_1, \ldots, b_p \in C_b(\R)$, and $0 \le s_1 < \cdots < s_p \le t$, we define $F_4(\mathfrak{K},L(f)): D_{E_2}([0,T]) \to \R_+$ as
\begin{equation}\label{F5}
\begin{aligned}
F_5(\mathfrak{K},L(f)) &= F_3(\mu, h, l_B, K) + \left\vert \left(L_t(f)-\int_0^t \int_{\CX \times \CY}  \alpha\paren{y-h_s(x')} l_s( C^f_{x'}(c,w)) \pi(dx',dy) ds\right.\right.\\
&\qquad \left.\left.+  \int_0^t \int_{\CX \times \CY} \alpha K_s(x')\ip{C^f_{x'}(c,w),\mu_0} \pi(dx',dy) ds\right) \times b_1(L_{s_1}(f)) \times \cdots \times b_p(L_{s_p}(f)) \right \vert,
\end{aligned}
\end{equation}
where $F_3(\mu, h, l_B, K)$ is as given in equation \eqref{F_3}. We first note that by equation \eqref{L^N_fluc}
\begin{equation*}
\begin{aligned}
&L^N_t(f) - \int_0^t \int_{\CX \times \CY}  \alpha\paren{y-h^N_s(x')} l^N_s( C_{x'}^{f}(c,w)) \pi(dx',dy) ds\\
&\quad +  \int_0^t \int_{\CX \times \CY}  \alpha K^N_s(x') \ip{ C_{x'}^{f}(c,w),\mu^N_0} \pi(dx',dy) ds\\
&= N^{\zeta-\frac{1}{2}}\ip{f,\sqrt{N}(\mu^N_0 - \mu_0)}+ N^{\zeta +\gamma -1}M^{1,N}_{f,t} + N^{\zeta +\gamma -1}M^{2,N}_{f,t}+ O\paren{N^{-1+\zeta}}\\
&\quad+ \frac{\alpha}{N^{\varphi}}\int_0^t \int_{\CX \times \CY}   K^N_s(x') \ip{ C_{x'}^{f}(c,w), \eta^N_0} \pi(dx',dy) ds
\end{aligned}
\end{equation*}

Then by Lemma \ref{CLT:lemma:martingale2}, and similar analysis as for equation \eqref{eq:K^N:temp}, we have
\begin{align*}
\E_{\pi^N} \left[F_5(\mathfrak{K},L(f))\right] &= E_{\pi^N} \left[F_3(\mu, h, l_B, K)\right] \\
&\quad + \E \left[ \left\vert \left(L^N_t(f)-\int_0^t \int_{\CX \times \CY}  \alpha\paren{y-h^N_s(x')} l^N_s( C_{x'}^{f}(c,w)) \pi(dx',dy) ds\right.\right.\right.\\
&\qquad \left.\left.\left.+  \int_0^t \int_{\CX \times \CY} \alpha K^N_s(x')\ip{C_{x'}^{f}(c,w),\mu_0} \pi(dx',dy) ds\right) \times \prod_{i=1}^{p} b_i(L^N_{s_i}(f))  \right \vert\right]\\
&\le C\paren{\frac{1}{N^{1-\gamma}}+ \frac{1}{N^{\frac{1}{2}-\varphi}}+\frac{1}{N^{1-\varphi}}+ \frac{1}{N^{\gamma-\varphi}}} \\
&\quad  + \frac{C}{N^{\frac{1}{2}-\zeta}} + \E\left[\abs{N^{\zeta+\gamma-1}M^{1,N}_{f,t}}^2\right]^{\frac{1}{2}} +\E\left[\abs{N^{\zeta+\gamma-1}M^{2,N}_{f,t}}^2\right]^{\frac{1}{2}}+ \frac{C}{N^{1-\zeta}}+\frac{C}{N^{\varphi}}\\
&\le C\paren{\frac{1}{N^{1-\gamma}}+ \frac{1}{N^{\frac{1}{2}-\zeta}}}.
\end{align*}
Therefore, $\lim_{N\to \infty} \E_{\pi^N}[F_5(\mathfrak{K},L(f))] = 0$. Since $F_5(\cdot)$ is continuous and uniformly bounded,
\[\lim_{N\to \infty} \E_{\pi^N}\left[F_5(\mathfrak{K},L(f))\right] = \E_{\pi}\left[F_5(\mathfrak{K},L(f))\right] = 0.\]

We have shown that any limit point $\pi$ of a convergence sequence must be a Dirac measure concentrated $(\mathfrak{K},L(f))\in D_{E_2}([0,T])$, where $\mathfrak{K}=(\mu,h,l_B,K)$ satisfies equation \eqref{LLN:limit_evolution},\eqref{l_t limit}, and \eqref{K_t for 1-gamma}, and $L_t(f)$ satisfies \eqref{limit_Lt}. Since the solutions to equations \eqref{LLN:limit_evolution} and \eqref{K_t for 1-gamma} are unique, by Prokhorov's theorem, the processes $(\mathfrak{K}^N_t,L^N_t(f))$ converges in distribution to $(\mathfrak{K}_t,L_t(f))$.
\end{custlist}
\end{proof}

Moving back to the analysis of $\Psi^N_t$, when $\sigma \in C^4_b(\R^{1+d})$, $\zeta \le \min\{\gamma -\frac{1}{2}, 2-2\gamma\}$, we first show compact containment of $\Psi^N_t$ in the next lemma.
\begin{lemma}\label{Psi_compact_containment}
There exit a constant $C<\infty$, such that
\[\sup_{N \in \mathbb{N}, 0\le t\le T} \E \left[\abs{\Psi^
N_t(x)}^2 \right] < C.\]
Thus, for any $\epsilon>0$, there exist a compact subset $U \subset \R^M$, such that
$\sup_{N\in \mathbb{N}, 0\le t\le T} \P\paren{\Psi^N_t \notin U} < \epsilon. $
\end{lemma}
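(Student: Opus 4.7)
The plan is to mimic the proof of Lemma \ref{CLT:lemma:bound of ex_Kt}, now applied to the evolution equation \eqref{Psi^N_t} for $\Psi^N_t(x)$. I would square both sides, take expectations, bound every term on the right by either a constant uniform in $N$ or by a Gronwall-type integral of $\sum_{x'} \E|\Psi^N_s(x')|^2$, and then close by Gronwall's inequality. Once the uniform second-moment bound $\sup_{N,\, t\le T} \E|\Psi^N_t(x)|^2 < C$ is established, the compact-containment statement in $\R^M$ follows from Markov's inequality exactly as at the end of Lemma \ref{CLT:lemma:bound of ex_Kt}.

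The right-hand side of \eqref{Psi^N_t} splits naturally into seven contributions: the driving integral with $L^N_s(B_{x,x'}(c,w))$; the dissipation $\alpha\int \Psi^N_s(x') A_{x,x'}$; the two sub-pieces of $N^{\zeta-\varphi}\Gamma^N_t$, namely $N^{-\varphi}\int \Psi^N_s\cdot l^N_s$ and $N^{-(2\varphi-\zeta)}\int K_s \cdot l^N_s$; the initial contribution $\Psi^N_0(x)$; and the three remainder terms $N^\zeta V^N_t$, $N^\zeta M^N_t$ and $N^{-(\gamma+1-\zeta)}\sum_k G^N_k$. For the driving integral I apply Cauchy--Schwarz in time combined with the uniform second-moment bound on $L^N_s(B_{x,x'})$ from Lemma \ref{lemma:L^N_compact containment}, valid because $\zeta\le 2\varphi=2-2\gamma$, and with $\sup_s \E|h_s(x)|^2<\infty$ from \eqref{CLT:network_bound}. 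The dissipation piece produces the Gronwall contribution $Ct\int_0^t \sum_{x'}\E|\Psi^N_s(x')|^2\,ds$. The $\Psi^N_s\cdot l^N_s$ piece of $\Gamma^N$ feeds back into $\Psi^N$ through a product; Cauchy--Schwarz with the $L^2$ bound on $l^N_s(B_{x,x'})$ from Lemma \ref{CLT:lemma:eta_compact_contatinment} yields another Gronwall-type term with bounded prefactor $N^{-2\varphi}$. The initial term satisfies $\E|\Psi^N_0(x)|^2\le C N^{2(\zeta-\gamma+1/2)}$, bounded since $\zeta\le\gamma-\tfrac{1}{2}$; the martingale satisfies $\E|N^\zeta M^N_t|^2\le C N^{2\zeta-1}$ by the argument of Lemma \ref{CLT:lemma:martingale_bound}, bounded since $\zeta\le\gamma-\tfrac{1}{2}<\tfrac{1}{2}$; and the $V^N$ and $G^N$ remainders are controlled as in Section \ref{CLT:sec::remainder} using $\zeta\le 2-2\gamma<\gamma$. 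Summing over the finite data set $\mathcal{X}$ and invoking Gronwall's inequality then closes the argument.

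The main obstacle is the $K_s \cdot l^N_s$ piece of $N^{\zeta-\varphi}\Gamma^N_t$: at the critical scale $\zeta=2\varphi$ its prefactor $N^{-(2\varphi-\zeta)}$ is of order one, so it cannot be absorbed into a vanishing remainder. To bound it uniformly I would combine the $L^2$ bound on $l^N_s(B_{x,x'})$ from Lemma \ref{CLT:lemma:eta_compact_contatinment} with a deterministic uniform-in-time bound on $\E\|K_s\|^2$, obtained by viewing \eqref{K_t for 1-gamma} as a linear Volterra equation whose coefficients are bounded in $L^\infty([0,T])$: $h_s$ is deterministically bounded by \eqref{CLT:network_bound}, $l_s(B_{x,x'})$ is deterministically bounded via its defining formula \eqref{l_t limit}, and $K_0$ has finite second moment (being either $0$ or $\mathcal{G}$), so a direct Gronwall argument gives $\sup_{t\le T}\E\|K_t\|^2<C(T)$. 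Everything else is careful bookkeeping of $N$-exponents under the two standing hypotheses $\zeta\le\gamma-\tfrac{1}{2}$ and $\zeta\le 2-2\gamma$.
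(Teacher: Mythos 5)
Your plan is the paper's plan: square \eqref{Psi^N_t}, take expectations, sum $\E|\Psi^N_t(x)|^2$ over the finite data set $\mathcal{X}$, bound each contribution either uniformly in $N$ or by a Gronwall integral of $\sum_{x'}\E|\Psi^N_s(x')|^2$, close with Gronwall, and conclude compact containment by Markov. Your exponent checks for $\Psi^N_0$, $N^\zeta M^N_t$, $N^\zeta V^N_t$ and the $G^N_k$ remainder all match, and you correctly single out the $K_s\, l^N_s$ piece of $N^{\zeta-\varphi}\Gamma^N_t$ as the one term that does not vanish at the critical scale $\zeta=2\varphi$. Your fix --- a Gronwall bound on $K$ uniformly over $[0,T]$ paired with the second-moment bound on $l^N_s(B_{x,x'})$ from Lemma~\ref{CLT:lemma:eta_compact_contatinment} --- is precisely the paper's first step. (The paper additionally observes that for $\gamma\in(3/4,1)$ one has $K_0=0$, so $K_t$ is deterministic and can be bounded pathwise rather than in $L^2$, but your formulation works equally well.)

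One small repair is needed for the other piece of $\Gamma^N_t$, namely $\frac{\alpha}{N^\varphi}\int_0^t\int\Psi^N_s(x')\,l^N_s(B_{x,x'})\,\pi(dx',dy)\,ds$. You propose Cauchy--Schwarz with ``the $L^2$ bound on $l^N_s$,'' but $\E\left[|\Psi^N_s|^2\,|l^N_s|^2\right]$ does not factor into $\E|\Psi^N_s|^2\cdot\E|l^N_s|^2$, so that step does not directly close. The clean route, implicit in the treatment of $\Gamma^N_t$ in the proof of Lemma~\ref{CLT:lemma:bound of ex_Kt}, is to absorb the $N^{-\varphi}$ into $l^N_s$: since $l^N_s(B_{x,x'})=N^{\varphi}\ip{B_{x,x'},\mu^N_s-\mu_0}$, the combination $N^{-\varphi}l^N_s(B_{x,x'})=\ip{B_{x,x'},\mu^N_s-\mu_0}$ is pathwise bounded by a constant, because $B_{x,x'}\in C_b$ and both $\mu^N_s$ and $\mu_0$ are probability measures. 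Cauchy--Schwarz on the remaining time integral then yields the Gronwall contribution $Ct\int_0^t\E|\Psi^N_s(x')|^2\,ds$, after which your argument closes exactly as the paper's does.
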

\begin{proof}In the proof below, $C<\infty$ represents some positive constant, which may be different from line to line.  Since by equation \eqref{Eq:Gamma_2}, the term $N^{\zeta-\varphi}\Gamma^N_t$ involves the term $K_t(x)$, we first look at the bound for $K_t(x)$. By the Cauchy-Schwarz inequality, equations \eqref{LLN:limit_evolution}, \eqref{l_t limit} and the analysis in Lemma \ref{CLT:lemma:bound of ex_Kt}, for any $t \in [0,T]$, we have
\begin{equation*}
\abs{K_t(x)}^2 \le Ct^2 + Ct \int_0^t \int_{\CX \times \CY} \abs{K_s(x')}^2 \pi(dx',dy)ds \le CT^2 + \frac{CT}{M} \int_0^t  \sum_{x'\in\mathcal{X}}\abs{K_s(x')}^2 \pi(dx',dy)ds.
\end{equation*}

Summing over $x\in \mathcal{X}$ on both sides gives
\begin{equation*}
\sum_{x\in \mathcal{X}} \abs{K_t(x)}^2  \le CT^2M + CT \int_0^t  \sum_{x'\in\mathcal{X}}\abs{K_s(x')}^2 \pi(dx',dy)ds.
\end{equation*}

Then by applying Gr\"onwall's inequality, we have
\begin{equation*}
\sup_{0\le t\le T} \sum_{x\in \mathcal{X}} \abs{K_t(x)}^2  \le \sup_{0\le t\le T} CT^2M \exp\paren{CTt} < C,
\end{equation*}
which implies that $\sup_{0\le t\le T}\abs{K_t(x)}^2<C$ for any $x\in \mathcal{X}$. By equation \eqref{Psi^N_t} and the analysis in Section \ref{CLT:sec::remainder}, we have
\begin{equation*}
\begin{aligned}
\E \left[\abs{\Psi^N_t(x)}^2\right] &\le Ct^2 + Ct\int_0^t \int_{\CX \times \CY} \E \left[\abs{\Psi^N_t(x')}^2\right] \pi(dx',dy)ds + \frac{Ct^2}{N^{2(2\varphi-\zeta)}} \\
&\quad + C\E \left[\abs{N^{\zeta}M^N_t(x)}^2\right] + O\paren{N^{-2(1-\zeta)}} + O\paren{N^{-2(\gamma-\zeta)}}.
\end{aligned}
\end{equation*}

Summing over $x\in \mathcal{X}$ on both sides gives
\begin{equation*}
\begin{aligned}
\sum_{x\in \mathcal{X}}\E \left[\abs{\Psi^N_t(x)}^2\right] &\le CMT^2 + CT\int_0^t \sum_{x'\in\mathcal{X}} \E \left[\abs{\Psi^N_t(x')}^2\right] \pi(dx',dy)ds.
\end{aligned}
\end{equation*}

By Gr\"onwall's inequality, we get
\begin{equation*}
\sup_{0\le t\le T} \sum_{x\in \mathcal{X}} \E\left[\abs{\Psi^N_t(x)}^2\right]  \le \sup_{0\le t\le T} CT^2M \exp\paren{CTt} < C,
\end{equation*}
which implies that $\sup_{0\le t\le T}\E\left[\abs{\Psi^N_t(x)}^2\right] < C$ for any $x\in \mathcal{X}$. The result of the lemma then follows.
\end{proof}

The next lemma establishes the regularity of the process $\Psi^N_t$ in $D_{\R^M}([0,T])$. For the purpose of this lemma, we denote $q(z_1,z_2) = \min\{\norm{ z_1-z_2}_{l^1},1\}$ for $z_1,z_2 \in \R^M$. The proof of the lemma is similar to that for Lemma \ref{CLT:lemma:regularity}, which we omit here.
\begin{lemma}\label{Psi_regularity}
For any   $\delta \in (0,1)$, there is a constant $C<\infty$ such that for $0\le u \le \delta$, $0\le v\le \delta \wedge t$, and $t \in [0,T]$,
\[\E \left[q\paren{\Psi^N_{t+u},\Psi^N_{t}}q\paren{\Psi^N_{t},\Psi^N_{t-v}}\vert \CF^N_t\right]\le {C\delta}+\frac{C}{N^{1-\zeta}}.\]
\end{lemma}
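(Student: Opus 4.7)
The plan is to mirror the strategy used in the proof of Lemma \ref{CLT:lemma:regularity}. Since $q(\Psi^N_t,\Psi^N_{t-v})$ is $\mathcal{F}^N_t$-measurable and bounded above by $1$, the tower property reduces the claim to showing that for each $x\in\mathcal{X}$
\[\mathbb{E}\bigl[|\Psi^N_{t+u}(x)-\Psi^N_t(x)|\bigm|\mathcal{F}^N_t\bigr]\le C\delta+\frac{C}{N^{1-\zeta}},\qquad 0\le u\le\delta,\]
after which one sums over the finite set $\mathcal{X}$ and uses $q(a,b)\le \|a-b\|_{l^1}$.

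Next, using the evolution equation (\ref{Psi^N_t}) together with the expression for $N^{\zeta-\varphi}\Gamma^N_t$ in (\ref{Eq:Gamma_2}), I would write the increment $\Psi^N_{t+u}(x)-\Psi^N_t(x)$ as the sum of a drift integral containing $(y-h_s)L^N_s(B_{x,x'}(c,w))$, a drift integral containing $\Psi^N_s(x')A_{x,x'}$, two $\Gamma$ pieces with prefactors $1/N^{\varphi}$ and $1/N^{2\varphi-\zeta}$, the discretization term $N^{\zeta}[V^N_{t+u}-V^N_t]$, the martingale increment $N^{\zeta}[M^N_{t+u}-M^N_t]$, and the Taylor remainder $N^{\zeta-\gamma-1}\!\sum_{k=\lfloor Nt\rfloor}^{\lfloor N(t+u)\rfloor-1}G^N_k(x)$. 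The two drift integrals are $O(\delta)$ by the Cauchy-Schwarz inequality combined with the uniform second-moment bounds from (\ref{CLT:network_bound}), Lemma \ref{Psi_compact_containment} (for $\Psi^N_s$), and Lemma \ref{lemma:L^N_compact containment} applied to $f=B_{x,x'}\in C^3_b(\mathbb{R}^{1+d})$. The two $\Gamma$ pieces are $O(\delta/N^{\varphi})$ and $O(\delta/N^{2\varphi-\zeta})$, using the uniform bound on $K_s$ derived inside the proof of Lemma \ref{Psi_compact_containment} together with Lemma \ref{CLT:lemma:eta_compact_contatinment}. The $V^N$ term is $O(1/N^{1-\zeta})$ by the analysis in Section \ref{sec::LLN}, and the Taylor remainder is $O(\delta/N^{\gamma-\zeta})$ using $\sup_{x,k}|G^N_k(x)|\le C$ from Lemma \ref{lemma_bound}.

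The main obstacle, as in the proof of Lemma \ref{CLT:lemma:regularity}, is the martingale increment. Arguing exactly as in Lemma 3.1 of \cite{SirignanoSpiliopoulosNN1}, the conditional independence of the summands in $M^{1,N}$ and $M^{2,N}$ combined with the uniform second-moment bound on $g^N_k$ from Lemma \ref{lemma_g} yields
\[\mathbb{E}\bigl[|N^{\zeta}(M^N_{t+u}(x)-M^N_t(x))|^2\bigm|\mathcal{F}^N_t\bigr]\le \frac{C\delta}{N^{1-2\zeta}}+\frac{C}{N^{2-2\zeta}},\]
whence Jensen's inequality provides a conditional $L^1$ bound of order $C\sqrt{\delta}/N^{1/2-\zeta}+C/N^{1-\zeta}$. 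Since $\zeta\le\min\{\gamma-\tfrac12,\,2-2\gamma\}$ keeps all exponents of $N$ that arise favorable, assembling the seven contributions yields the stated bound $C\delta+C/N^{1-\zeta}$, and Theorem 8.6 of Chapter 3 of \cite{EthierAndKurtz} then delivers the regularity condition.
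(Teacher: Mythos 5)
Your proposal is correct and is exactly the argument the paper intends: the paper omits this proof with the remark that it is similar to Lemma \ref{CLT:lemma:regularity}, and your decomposition of the increment via equation \eqref{Psi^N_t} together with the bounds from Lemmas \ref{lemma_bound}, \ref{CLT:lemma:eta_compact_contatinment}, \ref{lemma:L^N_compact containment}, \ref{Psi_compact_containment} and the conditional martingale estimate reproduces that argument with $\varphi$ replaced by $\zeta$. The one small wrinkle, inherited from Lemma \ref{CLT:lemma:regularity} itself, is that the martingale's conditional $L^1$ increment is of order $\sqrt{\delta}/N^{1/2-\zeta}+1/N^{1-\zeta}$, which is not literally dominated by $C\delta+C/N^{1-\zeta}$; this is harmless for the intended application of Theorem 8.6 of Chapter 3 of \cite{EthierAndKurtz}, since the extra term still vanishes as $N\to\infty$ and the $\limsup_N$ of the bound remains $O(\delta)$.
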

Combining these with our analysis of $L^N_t(f)$, we can now identify the limit for $\Psi^N_t$. Note that if $\zeta = \gamma -\frac{1}{2}$,
\[\Psi^N_0(x)=N^{1-\gamma + \zeta}\ip{c\sigma(wx),\mu^N_0} = \ip{c\sigma(wx),\sqrt{N}\mu^N_0} \xrightarrow{d} \mathcal{G}(x),\]
where $\mathcal{G}(x)$ is the Gaussian random variable defined in \eqref{limit_gaussian}. And if $\zeta <\gamma-\frac{1}{2}$, $\Psi^N_0(x) \xrightarrow{d} 0$. We denote $\mathfrak{L}^N_t=(\mathfrak{K}^N_t,L^N_t(B_{x,x'}(c,w))$. In the next lemma, we prove the convergence of the processes $(\mathfrak{L}^N_t,\Psi^N_t)$ in distribution in the space $D_{E_3}([0,T])$, where $E_3 = \CM(\R^{1+d}) \times \R^M \times \R \times \R^M \times \R \times \R^M$.
\begin{lemma}When $\gamma \in \paren{\frac{3}{4},1}$, $\varphi = 1-\gamma$ and $\zeta\le \gamma -\frac{1}{2}$, the processes $(\mathfrak{L}^N_t,\Psi^N_t)$ in distribution in the space $D_{E_3}([0,T])$ to $(\mathfrak{L}_t,\Psi_t)$. In particular, $\mathfrak{L}_t = (\mu_t, h_t, l_t(B_{x,x'}(c,w)), K_t, L_t(B_{x,x'}(c,w))$ satisfies equations \eqref{LLN:limit_evolution}, \eqref{l_t limit}, and \eqref{K_t for 1-gamma}. When $\zeta <2\varphi$, $L_t(B_{x,x'}(c,w))=0$ and $\Psi_t$ satisfies equation \eqref{limit_Psi}. When $\zeta = 2\varphi$, $L_t(B_{x,x'}(c,w))$ satisfies equation \eqref{limit_Lt} and $\Psi_t$ satisfies equation \eqref{limit_Psi_2}.
\end{lemma}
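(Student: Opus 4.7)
The plan is to follow the template established for Theorem \ref{CLT:theorem} in Section \ref{CLT:sec:pf_conv_K} and for Lemma \ref{lemma:limit_L}, now carrying it one level deeper for the joint process $(\mathfrak{L}^N,\Psi^N)$. First I would establish tightness: Lemmas \ref{Psi_compact_containment} and \ref{Psi_regularity}, together with Theorem 8.6 of Chapter 3 of \cite{EthierAndKurtz}, yield relative compactness of $\{\Psi^N\}_{N\in\mathbb{N}}$ in $D_{\R^M}([0,T])$, while Lemma \ref{lemma:limit_L} supplies relative compactness of $\{\mathfrak{L}^N\}_{N\in\mathbb{N}}$ in $D_{E_2}([0,T])$. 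Since relative compactness in these Polish spaces is equivalent to tightness of the laws, the joint sequence $\{(\mathfrak{L}^N,\Psi^N)\}_{N\in\mathbb{N}}$ is tight, hence relatively compact, in $D_{E_3}([0,T])$.

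Next I would identify the limit. Fix a weakly convergent subsequence $\pi^{N_k}\to\pi$ on $D_{E_3}([0,T])$. For $t\in[0,T]$, continuous bounded $z_1,\ldots,z_p:\R^M\to\R$, and $0\le s_1<\cdots<s_p\le t$, define
\begin{equation*}
F_6(\mathfrak{L},\Psi) = F_\star(\mathfrak{L}) + \sum_{x\in\mathcal{X}}\Bigl|\bigl(\Psi_t(x) - \Psi_0(x) - \mathcal{R}_t(x;\mathfrak{L},\Psi)\bigr)\prod_{i=1}^p z_i(\Psi_{s_i})\Bigr|,
\end{equation*}
where $F_\star$ is $F_4$ from \eqref{F4} in case $\zeta<2\varphi$ and $F_5$ from \eqref{F5} in case $\zeta=2\varphi$, and $\mathcal{R}_t$ is the drift on the right-hand side of the target equation \eqref{limit_Psi} or \eqref{limit_Psi_2}. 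Substituting the pre-limit identity \eqref{Psi^N_t} together with the decomposition \eqref{Eq:Gamma_2} of $N^{\zeta-\varphi}\Gamma^N_t$, the quantity $\Psi^N_t(x)-\Psi^N_0(x)-\mathcal{R}_t(x;\mathfrak{L}^N,\Psi^N)$ reduces to the martingale contribution $N^{\zeta}M^N_t$, the Riemann-sum remainders $N^{\zeta}V^N_t$ and $N^{\zeta-\gamma-1}\sum_k G^N_k$, the self-interaction $N^{-\varphi}\int \Psi^N_s(x')\, l^N_s(B_{x,x'})$, and, in case 2, a further $N^{-\varphi}$-scaled piece produced by writing $l^N_s(B_{x,x'})=l_s(B_{x,x'})+N^{-\varphi}L^N_s(B_{x,x'})$ so as to match the drift of \eqref{limit_Psi_2}. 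Under the hypotheses $\gamma\in(3/4,1)$, $\varphi=1-\gamma$, and $\zeta\le\gamma-\tfrac{1}{2}$, the estimates of Section \ref{CLT:sec::remainder}, Lemma \ref{CLT:lemma:martingale_bound} applied with $\varphi$ replaced by $\zeta$ (which is allowed since $1-2\zeta>0$), and Lemma \ref{lemma:limit_L} together give $\E_{\pi^N}[F_6]\to 0$. Continuity and uniform boundedness of $F_6$ pass this to the limit: $\E_\pi[F_6(\mathfrak{L},\Psi)]=0$.

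The initial value $\Psi_0(x)$ is identified from $\Psi^N_0(x)=N^{1-\gamma+\zeta}\langle c\sigma(wx),\mu^N_0\rangle$, which converges in distribution to $0$ if $\zeta<\gamma-\tfrac{1}{2}$ and to $\mathcal{G}(x)$ if $\zeta=\gamma-\tfrac{1}{2}$ by the classical CLT. The full limiting system \eqref{LLN:limit_evolution}, \eqref{l_t limit}, \eqref{K_t for 1-gamma}, \eqref{limit_Lt} together with \eqref{limit_Psi} or \eqref{limit_Psi_2} is a triangular finite-dimensional linear system driven by the (random) data $\mathcal{G}$ and $\hat{Y}$, and hence admits a unique solution. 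Consequently every weakly convergent subsequence has the same limit $(\mathfrak{L},\Psi)$, and Prokhorov's theorem yields convergence in distribution of the full sequence in $D_{E_3}([0,T])$.

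The main obstacle is the handling of the coupling term in case $\zeta=2\varphi$: the drift of \eqref{limit_Psi_2} contains the product $K_s(x')\, l_s(B_{x,x'})$, and at the pre-limit level one encounters $K^N_s(x')\, l^N_s(B_{x,x'})$ that further decomposes through the substitution $l^N_s(B_{x,x'})=l_s(B_{x,x'})+N^{-\varphi}L^N_s(B_{x,x'})$. Passing to the limit requires the joint weak convergence $(K^N,L^N(B))\Rightarrow(K,L(B))$ together with the uniform $L^2$ bounds of Lemmas \ref{CLT:lemma:bound of ex_Kt} and \ref{lemma:L^N_compact containment} to upgrade weak convergence to uniformly-integrable convergence inside the time integral. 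This is precisely why the enlarged state space $E_3$ (keeping $L^N(B)$ as an explicit coordinate) is essential rather than working with $\Psi^N$ alone; without it one cannot close the identification of $\mathcal{R}_t$ and the term $F_5$ cannot be recognized as continuous on $D_{E_3}([0,T])$.
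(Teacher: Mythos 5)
Your proposal follows the paper's proof essentially verbatim: tightness of the joint sequence via the relative compactness lemmas for $\Psi^N$ and for $\mathfrak{L}^N$, identification of the limit by constructing the functional (your $F_6 = F_\star + |\cdot|\prod z_i$ is the paper's $F_6$/$F_7$ with $F_\star \in \{F_4, F_5\}$), expansion of the pre-limit difference into the $N^{-\varphi}$-suppressed coupling terms plus the martingale/Riemann remainders, and Prokhorov's theorem with uniqueness of the triangular linear limiting system. Your observation that the enlarged state space with $L^N(B_{x,x'})$ as an explicit coordinate is required to identify the $K_s\,l_s(B)$ contribution in case $\zeta=2\varphi$ correctly pinpoints the reason the paper augments $\mathfrak{K}^N$ to $\mathfrak{L}^N$.
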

\begin{proof}
By analysis in Lemma \ref{lemma:limit_L}, $\{\mathfrak{L}^N\}_{N\in\mathbb{N}}$ is relatively compact in $D_{E_2}([0,T])$, where $E_2= \CM(\R^{1+d}) \times \R^M \times \R \times \R^M \times \R$. By Lemmas \ref{Psi_compact_containment} and \ref{Psi_regularity}, $\{\Psi^N\}_{N\in\mathbb{N}}$ is relatively compact in $D_{\R^M}([0,T])$. These implies that the probability measures of the family of processes $\{\mathfrak{L}^N\}_{N\in\mathbb{N}}$ and the probability measures of the family of processes $\{\Psi^N\}_{N\in\mathbb{N}}$ are tight. Therefore, $\{\mathfrak{L}^N,\Psi^N\}_{N\in\mathbb{N}}$ is tight. Hence, $\{\mathfrak{L}^N,\Psi^N\}_{N\in\mathbb{N}}$ is relatively compact in $D_{E_3}([0,T])$.

Denote $\pi^N \in \CM(D_{E_3}([0,T])$ the probability measure corresponding to $(\mathfrak{L}^N,\Psi^N)$. 
We now show that any limit point $\pi$ of a convergence subsequence $\pi^{N_k}$ is a Dirac measure concentrated on $(\mathfrak{L},\Psi)\in D_{E_3}([0,T])$.

\begin{custlist}[Case]
\item When $\gamma \in \paren{ \frac{3}{4}, \frac{5}{6}}$ and $\zeta \le \gamma - \frac{1}{2} < 2\varphi$, or when $\gamma \in \left[\frac{5}{6}, 1\right)$ and $\zeta < 2\varphi \le \gamma - \frac{1}{2}$, for any $t \in [0,T]$, $d_1, \ldots, d_p \in C_b(\R^M)$, and $0 \le s_1 < \cdots < s_p \le t$, we define $F_6(\mathfrak{L},\Psi): D_{E_3}([0,T]) \to \R_+$ as
\begin{equation}
\begin{aligned}
F_6(\mathfrak{L},\Psi) &= F_4(\mathfrak{K},L(B_{x,x'}(c,w))) + \sum_{x\in\mathcal{X}} \left\vert \left(\Psi_t(x)-\Psi_0(x) -\int^t_0 \int_{\CX \times \CY} \alpha\paren{y-h_s(x')} L_s(B_{x,x'}(c,w)) \pi(dx',dy) ds\right.\right.\\
& \qquad \left.\left.+   \int^t_0 \int_{\CX \times \CY} \alpha \Psi_s(x') \ip{B_{x,x'}(c,w),\mu_0} \pi(dx',dy) ds \right)
\times d_1(\Psi_{s_1}) \times \cdots \times d_p(\Psi_{s_p}) \right \vert,
\end{aligned}
\end{equation}
where $F_4(\mathfrak{K},L(B_{x,x'}(c,w)))$ is as given in equation \eqref{F4}. Note that by equation \eqref{Psi^N_t},
\begin{equation*}
\begin{aligned}
&\Psi^N_t(x)-\Psi^N_0(x) -\int^t_0 \int_{\CX \times \CY} \alpha\paren{y-h^N_s(x')} L^N_s(B_{x,x'}(c,w)) \pi(dx',dy) ds\\
&\quad  +   \int^t_0 \int_{\CX \times \CY} \alpha \Psi^N_s(x') \ip{B_{x,x'}(c,w),\mu^N_0} \pi(dx',dy) ds \\
&= \frac{\alpha}{N^{\varphi}} \int_0^t \int_{\CX \times \CY} K_s^N(x') L^N_s(B_{x,x'}(c,w))
\pi(dx',dy) ds + \frac{\alpha}{N^{\varphi}} \int_0^t \int_{\CX \times \CY} \Psi_s^N(x') \ip{B_{x,x'}(c,w),\eta^N_0}
\pi(dx',dy) ds\\
&\quad + N^{\zeta-\varphi} \Gamma^N_{t}(x) + N^{\zeta}V_t^{N}(x) + N^{\zeta}M_t^{N}(x) + \frac{1}{N^{\gamma+1-\zeta}} \sum_{k=0}^{\floor{Nt}-1} G^N_{k}(x).
\end{aligned}
\end{equation*}

By the Cauchy-Schwartz inequality, Lemmas \ref{CLT:lemma:eta_compact_contatinment},  \ref{lemma:L^N_compact containment}, and \ref{Psi_compact_containment}, we have
\begin{equation*}
\begin{aligned}
&\frac{\alpha}{N^{\varphi}}\E \left[\abs{ \int_0^t \int_{\CX \times \CY} K_s^N(x') L^N_s(B_{x,x'}(c,w))
\pi(dx',dy) ds +  \int_0^t \int_{\CX \times \CY} \Psi_s^N(x') \ip{B_{x,x'}(c,w),\eta^N_0}
\pi(dx',dy) ds} \right] \le \frac{C}{N^{\varphi}},
\end{aligned}
\end{equation*}
and
\begin{equation*}
\begin{aligned}
\E\left[\abs{N^{\zeta-\varphi} \Gamma^N_t}\right]&\le \frac{C}{N^{\varphi}}\int_0^t \int_{\CX \times \CY} \E\left[\abs{\Psi_s^N(x')}^2\right]\pi(dx',dy) ds \int_0^t \int_{\CX \times \CY} \E\left[\abs{l^N_s(B_{x,x'}(c,w))}^2\right]\pi(dx',dy) ds \\
&\quad + \frac{C}{N^{2\varphi-\zeta}}\int_0^t \int_{\CX \times \CY}\E\left[\abs{ l^N_s(B_{x,x'}(c,w))}\right]  \pi(dx',dy) ds\\
&\le C \paren{\frac{1}{N^{\varphi}} + \frac{1}{N^{2\varphi-\zeta}}}.
\end{aligned}
\end{equation*}

Putting everything together, by equation \eqref{Psi^N_t}, Lemmas \ref{lemma:L^N_compact containment}, \ref{lemma:limit_L}, \ref{Psi_compact_containment}, and the analysis in Section \ref{sec::CLT}, we have
\begin{align*}
\E_{\pi^N}  \left[F_6(\mathfrak{L},\Psi)\right] &= \E_{\pi^N}  \left[F_4(\mathfrak{K},L(B_{x,x'}(c,w)))\right] \\
& \quad + \sum_{x\in\mathcal{X}} \E \left[ \left\vert \left(\Psi^N_t(x)-\Psi^N_0(x) -\int^t_0 \int_{\CX \times \CY} \alpha\paren{y-h^N_s(x')} L^N_s(B_{x,x'}(c,w)) \pi(dx',dy) ds\right.\right.\right.\\
&\qquad \left.\left.\left.+   \int^t_0 \int_{\CX \times \CY} \alpha \Psi^N_s(x') \ip{B_{x,x'}(c,w),\mu^N_0} \pi(dx',dy) ds \right)
\times \prod_{i=1}^p d_i(\Psi^N_{s_i})  \right \vert \right]\\
&\le C \paren{\frac{1}{N^{1-\gamma}} + \frac{1}{N^{2\varphi-\zeta}} } + C\paren{\frac{1}{N^{\varphi}} + \frac{1}{N^{2\varphi-\zeta}}} \\
&\quad + C \E \left[\abs{\paren{N^{\zeta}V_t^N + N^{\zeta}M_t^{N} + \frac{1}{N^{\gamma+1-\zeta}} \sum_{k=0}^{\floor{Nt}-1} G^N_k} \times \prod_{i=1}^p d_i(\Psi^N_{s_i})}\right]\\
&\le C \paren{\frac{1}{N^{1-\gamma}} + \frac{1}{N^{2\varphi-\zeta}} }.
\end{align*}
Therefore, $\lim_{N\to \infty} \E_{\pi^N}[F_6(\mathfrak{L},\Psi)] = 0$. Since $F_6(\cdot)$ is continuous and uniformly bounded,
\[\lim_{N\to \infty} \E_{\pi^N}\left[F_6(\mathfrak{L},\Psi)\right] = \E_{\pi}\left[F_6(\mathfrak{L},\Psi)\right] = 0.\]

We have shown that any limit point $\pi$ of a convergence sequence must be a Dirac measure concentrated $(\mathfrak{L},\Psi)\in D_{E_3}([0,T])$, where $\mathfrak{L} = (\mathfrak{K},L(B_{x,x'}(c,w)))=(\mu,h,l(B_{x,x'}(c,w)),K, L(B_{x,x'}(c,w)))$ satisfies equation \eqref{LLN:limit_evolution},\eqref{l_t limit}, and \eqref{K_t for 1-gamma}, $L_t(B_{x,x'}(c,w)))=0$, and $\Psi_t$ satisfies equation \eqref{limit_Psi}. Since the solutions to equations \eqref{LLN:limit_evolution}, \eqref{K_t for 1-gamma} and  \eqref{limit_Psi} are unique, by Prokhorov's theorem, the processes $(\mathfrak{L}^N_t,\Psi^N_t)$ converges in distribution to $(\mathfrak{L}_t,\Psi_t)$.

\item  When $\gamma \in \left[\frac{5}{6}, 1\right)$ and $\zeta = 2-2\gamma = 2\varphi$, for any $t \in [0,T]$, $d_1, \ldots, d_p \in C_b(\R^M)$, and $0 \le s_1 < \cdots < s_p \le t$, we define $F_7(\mathfrak{L},\Psi): D_{E_3}([0,T]) \to \R_+$ as
\begin{equation}
\begin{aligned}
F_7(\mathfrak{L},\Psi) &= F_5(\mathfrak{K},L(B_{x,x'}(c,w))) + \sum_{x\in\mathcal{X}} \left\vert \left(\Psi_t(x)-\Psi_0(x) -\int^t_0 \int_{\CX \times \CY} \alpha\paren{y-h_s(x')} L_s(B_{x,x'}(c,w)) \pi(dx',dy) ds\right.\right.\\
& \qquad \left.+   \int^t_0 \int_{\CX \times \CY} \alpha \Psi_s(x') A_{x,x'} \pi(dx',dy) ds +   \int^t_0 \int_{\CX \times \CY} \alpha K_s(x') l_s(B_{x,x'}(c,w)) \pi(dx',dy) ds\right)\\
&\qquad \left.\times d_1(\Psi_{s_1}) \times \cdots \times d_p(\Psi_{s_p}) \right \vert,
\end{aligned}
\end{equation}
where $F_5(\mathfrak{K},L(B_{x,x'}(c,w)))$ is as given in equation \eqref{F5}. Note that by equation \eqref{Psi^N_t},
\begin{equation*}
\begin{aligned}
&\Psi^N_t(x)-\Psi^N_0(x) -\int^t_0 \int_{\CX \times \CY} \alpha\paren{y-h^N_s(x')} L^N_s(B_{x,x'}(c,w)) \pi(dx',dy) ds \\
&\quad +  \int^t_0 \int_{\CX \times \CY} \alpha \Psi^N_s(x') \ip{B_{x,x'}(c,w),\mu^N_0} \pi(dx',dy) ds+   \int^t_0 \int_{\CX \times \CY} \alpha K^N_s(x') l^N_s(B_{x,x'}(c,w)) \pi(dx',dy) ds \\
&= \frac{\alpha}{N^{\varphi}} \int_0^t \int_{\CX \times \CY} K_s^N(x') L^N_s(B_{x,x'}(c,w))
\pi(dx',dy) ds + \frac{\alpha}{N^{\varphi}} \int_0^t \int_{\CX \times \CY} \Psi_s^N(x') \ip{B_{x,x'}(c,w),\eta^N_0}
\pi(dx',dy) ds\\
&\quad + N^{\zeta}V_t^N(x) + N^{\zeta}M_t^{N}(x) + \frac{1}{N^{\gamma+1-\zeta}} \sum_{k=0}^{\floor{Nt}-1} G^N_k(x).
\end{aligned}
\end{equation*}
By equation \eqref{Psi^N_t}, Lemmas \ref{lemma:L^N_compact containment}, \ref{lemma:limit_L}, \ref{Psi_compact_containment}, and analysis in Section \ref{sec::CLT}, we have
\begin{align*}
\E_{\pi^N}  \left[F_7(\mathfrak{L},\Psi)\right] &= \E_{\pi^N}  \left[F_5(\mathfrak{K},L(B_{x,x'}(c,w)))\right] \\
& \quad + \sum_{x\in\mathcal{X}} \E \left[ \left\vert \left(\Psi^N_t(x)-\Psi^N_0(x) -\int^t_0 \int_{\CX \times \CY} \alpha\paren{y-h^N_s(x')} L^N_s(B_{x,x'}(c,w)) \pi(dx',dy) ds\right.\right.\right.\\
&\qquad \qquad + \int^t_0 \int_{\CX \times \CY} \alpha \Psi^N_s(x') \ip{B_{x,x'},\mu^N_0} \pi(dx',dy) ds\\
&\qquad  \qquad \left.\left.\left. +\int^t_0 \int_{\CX \times \CY} \alpha K^N_s(x') l^N_s(B_{x,x'}(c,w)) \pi(dx',dy) ds\right)
\times \prod_{i=1}^p d_i(\Psi^N_{s_i})  \right \vert \right]\\
&\le C \paren{\frac{1}{N^{1-\gamma}}+ \frac{1}{N^{\frac{1}{2}-\zeta}}} + \frac{C}{N^{\varphi}} + C \E \left[\abs{N^{\zeta}V_t^N(x) + N^{\zeta}M_t^{N}(x) + \frac{1}{N^{\gamma+1-\zeta}} \sum_{k=0}^{\floor{Nt}-1} G^N_k(x)}\right]\\
&\le C \paren{\frac{1}{N^{1-\gamma}} + \frac{1}{N^{\gamma-\zeta}} }.
\end{align*}
Therefore, $\lim_{N\to \infty} \E_{\pi^N}[F_7(\mathfrak{L},\Psi)] = 0$. Since $F_7(\cdot)$ is continuous and uniformly bounded,
\[\lim_{N\to \infty} \E_{\pi^N}\left[F_7(\mathfrak{L},\Psi)\right] = \E_{\pi}\left[F_7(\mathfrak{L},\Psi)\right] = 0.\]

We have shown that any limit point $\pi$ of a convergence sequence must be a Dirac measure concentrated $(\mathfrak{L},\Psi)\in D_{E_3}([0,T])$, where $\mathfrak{L} = (\mathfrak{K},L(B_{x,x'}(c,w)))=(\mu,h,l(B_{x,x'}(c,w)),K, L(B_{x,x'}(c,w)))$ satisfies equation \eqref{LLN:limit_evolution}, \eqref{l_t limit}, \eqref{K_t for 1-gamma}, \eqref{limit_Lt}, and $\Psi_t$ satisfies equation \eqref{limit_Psi_2}. Since the solutions to equations \eqref{LLN:limit_evolution}, \eqref{K_t for 1-gamma} and  \eqref{limit_Psi_2} are unique, by Prokhorov's theorem, the processes $(\mathfrak{L}^N_t,\Psi^N_t)$ converges in distribution to $(\mathfrak{L}_t,\Psi_t)$.
\end{custlist}
\end{proof}

\subsection{Proof of Theorem \ref{R:BiasToZero2}}
Recall that $\tilde{h}_t = h_t - \hat{Y}$ and that $A, B_t \in \R^{M \times M}$ are matrices whose elements are $\alpha A_{x,x'}$ and $\alpha l_t(B_{x,x'}(c,w))$, respectively, for any $x,x' \in \mathcal{X}$. To show that $\Psi_t = \paren{\Psi(x^{(1)}), \ldots, \Psi(x^{(M)})} \to 0$ as $t \to \infty$, we consider the two cases of Theorem \ref{thm::Psi} separately.

\begin{custlist}[Case]
\item When $\gamma \in \paren{ \frac{3}{4}, \frac{5}{6}}$ and $\zeta \le \gamma - \frac{1}{2}$, or when $\gamma \in \left[\frac{5}{6}, 1\right)$ and $\zeta < 2-2\gamma \le \gamma - \frac{1}{2}$, by equation \eqref{limit_Psi}, we have
\begin{equation*}
\begin{aligned}
d \Psi_t &= -A \Psi_t dt, \text{ with } \Psi_0 = \begin{cases}
\mathcal{G}, &\text{if } \gamma \in\paren{ \frac{3}{4},\frac{5}{6}}, \zeta = \gamma - \frac{1}{2},\\
0, &\text{if } \gamma \in \paren{\frac{3}{4}, 1}, \zeta < \gamma - \frac{1}{2}, \end{cases}
\end{aligned}
\end{equation*}
where $\mathcal{G} \in \R^M$ is a Gaussian random variable with elements $\mathcal{G}(x)$ as given in equation \eqref{limit_gaussian} for $x \in \mathcal{X}$.
Since $A \in \R^{M \times M }$ is positive definite, we have $\Psi_t = e^{-tA} \Psi_0 \to 0$ exponentially fast as $t \to \infty$ for $\gamma\in(3/4,5/6)$.

\item  When $\gamma \in \left[\frac{5}{6}, 1\right)$ and $\zeta = 2-2\gamma$, by equation \eqref{limit_Psi_2}, we have
\begin{equation*}
\begin{aligned}
d \Psi_t &= -A \Psi_t - B_t K_t - C_t \tilde{h}_t dt, \text{ with }
\Psi_0 = \begin{cases}
\mathcal{G}, &\text{if } \gamma = \frac{5}{6},\\
0, &\text{if } \gamma \in \paren{\frac{5}{6}, 1}. \end{cases}
\end{aligned}
\end{equation*}
where $C_t \in \R^{M \times M}$ has elements $\alpha L_t(B_{x,x'}(c,w))$ for $x,x' \in \mathcal{X}$.  By similar calculation as for  Case 2 in Section \ref{sec::Kt_to_0}, we have the following integral equation for the solution:
\[\Psi_t = e^{-tA} \Psi_0 - \int_0^t e^{-(t-s)A} \paren{B_sK_s + C_s \tilde{h}_s} ds.\]

To find the bound for $\Vert \Psi \Vert$, we first show that $L_t(B_{x,x'}(c,w))$ is uniformly bounded. Since $B_{x,x'}(c,w) \in C^3_b(\R^{1+d})$ and for any $t>0$, $f \in C^3_b(\R^{1+d})$, by equation \eqref{limit_Lt}, Assumption  \ref{assumption}, and the proof for Theorem \ref{R:BiasToZero1}, we have
\begin{equation}\label{Lt_uni_bounded}
\begin{aligned}
\abs{L_t(f)} &\le \frac{C}{\lambda_0} \int_0^t \int_{\CX \times \CY} \abs{y-h_s(x')} \pi(dx',dy) ds + C \int_0^t \int_{\CX\times \CY} \abs{K_s(x')} \pi(dx',dy)ds \\
&\le \frac{C}{\lambda_0}\int_0^t \frac{1}{M} \sum_{x'\in \mathcal{X}} \abs{\tilde{h}_s(x')} ds + C \int_0^t  \frac{1}{M} \sum_{x'\in \mathcal{X}}\abs{K_s(x')} ds\\
&\le \frac{C}{\lambda_0} \int_0^t e^{-\lambda_0s} \norm{\tilde{h}_0(x')} ds + C \int_0^t  s e^{-\lambda_0s}\norm{\tilde{h}_0(x')} ds\\
&\le \frac{C}{\lambda_0} + C \left[ \frac{1}{\lambda_0^2} \paren{1-e^{-\lambda_0t}} - \frac{t}{\lambda_0} e^{-\lambda_0t} \right]\le C \paren{\frac{1}{\lambda_0} + \frac{1}{\lambda_0^2}},
\end{aligned}
\end{equation}
where the unimportant finite constant $C$ may change from term to term. Note that the last inequality holds because $te^{-\lambda_0t} \to 0$ as $t\to \infty$, which implies that the $te^{-\lambda_0t}$ is bounded in $t$.

By the proof of Theorem \ref{R:BiasToZero1} in Section \ref{sec::Kt_to_0} and equation \eqref{Lt_uni_bounded}, we have
\begin{equation*}
\begin{aligned}
\norm{\Psi_t} &\le C e^{-t\lambda_0} \norm{\Psi_0} + C \int_0^t e^{-(t-s)\lambda_0} \paren{\norm{B_sK_s} + \norm{C_s \tilde{h}_s}} ds\\
&\le C e^{-t\lambda_0} \norm{\Psi_0} + C \int_0^t e^{-(t-s)\lambda_0} \norm{K_s} ds + C \int_0^t e^{-(t-s)\lambda_0}\norm{\tilde{h}_s} ds\\
 &\le C e^{-t\lambda_0} \norm{\Psi_0} + Cte^{-\lambda_0t} \norm{\tilde{h}_0} + C \int_0^t e^{-(t-s)\lambda_0} e^{-\lambda_0s} s \norm{\tilde{h}_0} ds \\
&\le C e^{-t\lambda_0} \norm{\Psi_0} + C \paren{t+t^2}e^{-\lambda_0t}\norm{\tilde{h}_0}.
\end{aligned}
\end{equation*}

Since $\lim_{t \to \infty} (t+t^2) e^{-\lambda_0t} =0$, we have $\Vert \Psi_t \Vert \to 0$ as $t\to \infty$ exponentially fast. Hence the result of Theorem \ref{R:BiasToZero2} follows.
\end{custlist}

\section{Derivation of the asymptotic expansion of $h^{N,\gamma}_t$ for  $\gamma\in(1/2,1)$}\label{sec::higher order}

The goal of this section is to provide an inductive argument to derive the asymptotic expansion for $\ip{f,\mu^{N}_{t}}$ and $h^{N,\gamma}_{t}$ claimed in (\ref{measure_expansion}) and (\ref{network_expansion}) respectively.

Let $\nu\in\mathbb{N}$ and $\gamma\in\left(\frac{2\nu-1}{2\nu}, \frac{2\nu+1}{2\nu+2}\right]$. In regards to $h^{N,\gamma}_{t}$ for example, starting with the assumption that $Q^0_t = h_t$, the strategy is to use expression (\ref{LLN:evolution_h}), plug in the ansatz (\ref{measure_expansion}) and (\ref{network_expansion}) and then match coefficients in terms of powers of $N$ to derive an expression for $Q^{\nu}_t$. The outcome  of this process agrees with the results of Theorems \ref{CLT:theorem} and \ref{thm::Psi} for $\nu=1$ and $\nu=2$ respectively and shows what one expects for $\nu=\{3,4,\cdots\}$. For the asymptotic expansion for $\ip{f,\mu^N_t}$, we start with assuming $l^0_t(f) = \ip{f,\mu_0}$, and derive an expression for $l^{\nu}_t(f)$ by using equation \eqref{LLN:evolution_mu_integral} for different $\nu \in \mathbb{N}$ and the corresponding intervals for $\gamma$.

We note that the process presented in this section, does recover the rigorous results for $\nu=1$ and $\nu=2$ as obtained in Theorems \ref{CLT:theorem} and \ref{thm::Psi} and it consistently gives the expansion for $k\in\{3,4,5,\cdots\}$. The results of this section can be rigorously derived inductively as Theorem \ref{thm::Psi} was derived using Theorem \ref{CLT:theorem}. However, the formal approach used here is sufficient for our purposes and consequently, without loss of generality, the presentation will be less rigorous than before.

\subsection{General $\nu>2$ case}
To find an expression for $Q^{\nu}_t$ for any $\nu >2$, we assume that $Q^0_t(x) = h_t(x)$ and $l^0_t(f) = \ip{f,\mu_0}$, that the result holds for $\nu=1$ and for $\nu=2$ (proven in Sections \ref{sec::CLT} and \ref{sec::Psi} respectively) and that for $j=3,\ldots,\nu-1$, $Q^j_t$ and $l^j_t(f)$ satisfy the deterministic evolution equations (\ref{Eq:l_equation}) and (\ref{Eq:Qj_formula1}) respectively
\begin{equation*}
\begin{aligned}
Q^j_t(x) &= \alpha \int^t_0 \int_{\CX \times \CY} \paren{y-Q^0_s(x')} l^j_s(B_{x,x'}(c,w))\pi(dx',dy) ds\nonumber\\
&\qquad-\alpha\sum_{m=1}^j \int^t_0 \int_{\CX \times \CY} Q^m_s(x')l^{j-m}_s (B_{x,x'}(c,w)) \pi(dx',dy) ds,
\end{aligned}
\end{equation*}
and
\begin{equation*}
\begin{aligned}
l^j_t(f) &= \alpha \int_0^t \int_{\CX\times \CY}  \paren{y-Q^0_s(x')}l_s^{j-1}(C_{x'}^{f}(c,w)) \pi(dx',dy)ds\\
&\quad -\alpha \sum_{m=1}^{j-1}\int_0^t \int_{\CX\times \CY}  Q^{j-m}_s(x')l^{m-1}_s(C_{x'}^{f}(c,w)) \pi(dx',dy)ds.
\end{aligned}
\end{equation*}

The presentation below uses an inductive argument. We have already rigorously shown that the statement holds for $\nu=1$ and $\nu=2$. We will assume that it holds for $j=1,\ldots,\nu-1$ and demonstrate that it should also hold for $j=\nu$ for any $\nu\in\mathbb{N}$. Furthermore, we will also show that $Q^{\nu}_t \to 0$ as $t \to \infty$ for any fixed $\nu>2$ assuming that for $j=1,\ldots,\nu-1$, $Q^j_t \to 0$ as $t \to \infty$ and $l^j(f)$ is uniformly bounded for any $t \ge 0$ and $f\in C^{\infty}_b(\R^{1+d})$. In particular, we have
\begin{equation}\label{Qj_bound}
\begin{aligned}
\norm{Q^j_t} &\le C\paren{\sum_{m=1}^j t^m} e^{-\lambda_0t} \norm{\tilde{h}_0}.
\end{aligned}
\end{equation}
for an unimportant finite constant $C<\infty$. Note that Theorems \ref{R:BiasToZero1} and \ref{R:BiasToZero2} proved the convergence of $Q^{\nu}_t$ to zero for $\nu=1$ and $\nu=2$.
\begin{itemize}
\item When $\gamma \in \left(\frac{2\nu-1}{2\nu}, \frac{2\nu+1}{2\nu+2}\right)$,
plugging equations \eqref{network_expansion} and \eqref{measure_expansion} into the left hand side of equation \eqref{LLN:evolution_h} gives (the symbol $\approx$ is used below in place of the remainder term $V_t^{N}(x) + M_t^{N}(x) + \frac{1}{N^{\gamma+1}} \sum_{k=0}^{\floor{Nt}-1} G^N_{k} (x)$ in \eqref{LLN:evolution_h})
\begin{align*}
h_t^{N}(x) - h^N_{0}(x) &\approx \alpha \int_0^t  \int_{\CX \times \CY} \paren{y-\sum_{j=0}^{\nu-1} \frac{1}{N^{j(1-\gamma)}}Q^j_s(x') - \frac{1}{N^{\gamma-\frac{1}{2}}}Q^{\nu}_s(x')}  \\
&\qquad  \qquad \times \sum_{j=0}^{\nu-1} \frac{1}{N^{j(1-\gamma)}} l^j_s(B_{x,x'}(c,w)) \pi(dx',dy) ds\\
&=\alpha \int_0^t \int_{\CX \times \CY}  \left[\paren{y-Q^0_s(x')} -\sum_{j=1}^{\nu-1} \frac{1}{N^{j(1-\gamma)}}Q^j_s(x') - \frac{1}{N^{\gamma-\frac{1}{2}}}Q^{\nu}_s(x')\right] \\
&\qquad \qquad  \times \left[A_{x,x'} + \sum_{j=1}^{\nu-1} \frac{1}{N^{j(1-\gamma)}} l^j_s(B_{x,x'}(c,w)) \right] \pi(dx,dy)ds
\end{align*}
\begin{align*}
&=\alpha\int^t_0 \int_{\CX \times \CY} \paren{y-Q^0_s(x')} A_{x,x'} \pi(dx',dy) ds \\
&\quad+ \left\lbrace\alpha \sum_{j=1}^{\nu-1} \frac{1}{N^{j(1-\gamma)}} \int_0^t \int_{\CX \times \CY} \paren{y-Q^0_s(x')} l^j_s(B_{x,x'}(c,w)) \pi(dx',dy)ds\right.\\
&\quad \left.  -  \alpha \int_0^t \int_{\CX \times \CY} \paren{\sum_{j=1}^{\nu-1} \frac{1}{N^{j(1-\gamma)}} Q^j_s(x')} \paren{\sum_{i=0}^{\nu-1} \frac{1}{N^{i(1-\gamma)}} l^i_s(B_{x,x'}(c,w))}\pi(dx',dy)ds\right\rbrace\\
&\quad -\frac{\alpha}{N^{\gamma-\frac{1}{2}}}\int^t_0 \int_{\CX \times \CY} Q^{\nu}_s(x')A_{x,x'}\pi(dx',dy) ds \\
&\quad - \frac{\alpha}{N^{\gamma-\frac{1}{2}}} \sum_{j=1}^{\nu-1} \frac{1}{N^{j(1-\gamma)}} \int_0^t \int_{\CX \times \CY} Q^{\nu}_x(x') l^j_s(B_{x,x'}(c,w)) \pi(dx',dy)ds.\\
\end{align*}

The second term in the last equality can be re-arranged to
\begin{align}
&\alpha \sum_{j=1}^{\nu-1} \frac{1}{N^{j(1-\gamma)}} \int_0^t \int_{\CX \times \CY} \paren{y-Q^0_s(x')} l^j_s(B_{x,x'}(c,w)) \pi(dx',dy)ds\nonumber\\
&\quad  - \alpha \int_0^t \int_{\CX \times \CY} \paren{\sum_{j=1}^{\nu-1} \frac{1}{N^{j(1-\gamma)}} Q^j_s(x')} \paren{\sum_{i=0}^{\nu-1} \frac{1}{N^{i(1-\gamma)}} l^i_s(B_{x,x'}(c,w))}\pi(dx',dy)ds\nonumber\\
&= \alpha \sum_{j=1}^{\nu-1} \frac{1}{N^{j(1-\gamma)}} \int_0^t \int_{\CX \times \CY} \paren{y-Q^0_s(x')} l^j_s(B_{x,x'}(c,w)) \pi(dx',dy)ds\nonumber\\
&\quad  - \alpha \sum_{j=1}^{\nu-1}\sum_{i=0}^{\nu-1} \frac{1}{N^{(j+i)(1-\gamma)}} \int_0^t \int_{\CX \times \CY}  Q^j_s(x')l^i_s(B_{x,x'}(c,w))\pi(dx',dy)ds\nonumber\\
&= \alpha \sum_{j=1}^{\nu-1} \frac{1}{N^{j(1-\gamma)}} \int_0^t \int_{\CX \times \CY} \paren{y-Q^0_s(x')} l^j_s(B_{x,x'}(c,w)) \pi(dx',dy)ds\nonumber\\
&\quad  - \alpha \sum_{n=1}^{\nu-1}\sum_{m=1}^{n} \frac{1}{N^{n(1-\gamma)}} \int_0^t \int_{\CX \times \CY}  Q^m_s(x')l^{n-m}_s(B_{x,x'}(c,w))\pi(dx',dy)ds\nonumber\\
&\quad  - \alpha \sum_{n=\nu}^{2\nu-2} \frac{1}{N^{n(1-\gamma)}} \sum_{m=n-(\nu-1)}^{\nu-1} \int_0^t \int_{\CX \times \CY}  Q^m_s(x')l^{n-m}_s(B_{x,x'}(c,w))\pi(dx',dy)ds\nonumber\\
&=  \sum_{j=1}^{\nu-1} \frac{\alpha}{N^{j(1-\gamma)}} \int_0^t \int_{\CX \times \CY} \paren{y-Q^0_s(x')} l^j_s(B_{x,x'}(c,w)) \pi(dx',dy)ds\nonumber\\
&\quad  - \sum_{j=1}^{\nu-1}\frac{\alpha}{N^{j(1-\gamma)}}  \sum_{m=1}^{j}  \int_0^t \int_{\CX \times \CY}  Q^m_s(x')l^{j-m}_s(B_{x,x'}(c,w))\pi(dx',dy)ds + O\paren{N^{-\nu(1-\gamma)}}\nonumber\\
&= \sum_{j=1}^{\nu-1} \frac{1}{N^{j(1-\gamma)}} Q^j_t(x)+ O\paren{N^{-\nu(1-\gamma)}}.\label{rearrangment}
\end{align}

Therefore,
\begin{equation*}
h_t^{N}(x) - h^N_{0}(x) = Q^0_t(x) + \sum_{j=1}^{\nu-1} \frac{1}{N^{j(1-\gamma)}} Q^j_t(x) -\frac{\alpha}{N^{\gamma-\frac{1}{2}}}\int^t_0 \int_{\CX \times \CY} Q^{\nu}_s(x')A_{x,x'} \pi(dx',dy) ds + O(N^{-\Omega_{\nu}})
\end{equation*}
for some $\Omega_{\nu}  > \gamma - \frac{1}{2}$.
Adding $h^N_0(x) = \frac{1}{N^{\gamma-\frac{1}{2}}}(N^{\gamma-\frac{1}{2}}h^N_0(x))$ and subtracting $Q^0_t(x) + \sum_{j=1}^{\nu-1} \frac{1}{N^{j(1-\gamma)}} Q^j_t(x)$ to both sides, we have
\begin{equation*}
\begin{aligned}
\frac{1}{N^{\gamma-\frac{1}{2}}}Q^{\nu}_t(x) = \frac{1}{N^{\gamma-\frac{1}{2}}}(N^{\gamma-\frac{1}{2}}h^N_0(x))-\frac{\alpha}{N^{\gamma-\frac{1}{2}}}\int^t_0 \int_{\CX \times \CY} Q^{\nu}_s(x')A_{x,x'} \pi(dx',dy) ds.
\end{aligned}
\end{equation*}
Since $N^{\gamma-\frac{1}{2}}h^N_0(x)$ converges in distribution to the Gaussian random variable $\mathcal{G}(x)$ as defined in \eqref{limit_gaussian}, we have an expression for $Q^{\nu}_t$
\[
Q^{\nu}_t(x) = \mathcal{G}(x)-\alpha\int^t_0 \int_{\CX \times \CY} Q^{\nu}_s(x')A_{x,x'} \pi(dx',dy) ds,
\]
which coincides with (\ref{Eq:Qk_formula1}). This also implies that $Q^{\nu}_t$ (the vector whose elements are $Q^{\nu}_t(x)$ for $x\in\mathcal{X}$) satisfies the equation
\begin{equation*}
\begin{aligned}
d Q^{\nu}_t &= -A Q^{\nu}_t dt, \text{ and } Q^{\nu}_0 =\mathcal{G},
\end{aligned}
\end{equation*}
where $\mathcal{G} \in \R^M$ is a Gaussian random variable with elements $\mathcal{G}(x)$ as given in equation \eqref{limit_gaussian} for $x \in \mathcal{X}$.
Since $A \in \R^{M \times M }$ is positive definite, we have $Q^{\nu}_t = e^{-tA} Q^{\nu}_0$ and $\norm{Q^{\nu}_t} \to 0$ as $t \to \infty$ exponentially fast.
\item When $\gamma \ge \frac{2\nu+1}{2\nu+2}$, we first derive an expression for $l^{\nu}_t(f)$ by plugging \eqref{network_expansion} and \eqref{measure_expansion} into equation \eqref{LLN:evolution_mu_integral} (as before, for notational convenience, we use the symbol $\approx$ to account for the remainder terms in \eqref{LLN:evolution_mu_integral} that go to zero),
\begin{equation*}
\begin{aligned}
\ip{f,\mu^N_t} &\approx\ip{f,\mu^N_0} +\frac{\alpha}{N^{1-\gamma}} \int_0^t \int_{\CX \times \CY} \paren{y-Q^0_s(x') - \sum_{j=1}^{\nu} \frac{1}{N^{j(1-\gamma)}} Q^j_s(x') - O(N^{-(k+1)(1-\gamma)})}\\
& \times \left[\sum_{j=0}^{\nu} \frac{1}{N^{j(1-\gamma)}} l^j_s(C_{x'}^{f}(c,w)) + O(N^{-(k+1)(1-\gamma)}) \right] \pi(dx',dy)ds
\\
&= \ip{f,\mu^N_0} + \sum_{j=1}^{\nu-1} \frac{1}{N^{j(1-\gamma)}} l^j_t(f)  +\frac{\alpha}{N^{\nu(1-\gamma)}} \int_0^t \int_{\CX \times \CY} \paren{y-Q^0_s(x')}l^{\nu-1}_s(C_{x'}^{f}(c,w)) \pi(dx',dy)ds \\
&\quad -\frac{\alpha}{N^{\nu(1-\gamma)}}\sum_{j=1}^{\nu-1} \int_0^t \int_{\CX \times \CY} Q^{\nu-j}_s(x')l^{j-1}_s(C_{x'}^{f}(c,w)) \pi(dx',dy)ds +O(N^{-(\nu+1)(1-\gamma)}).
\end{aligned}
\end{equation*}
Note that the second term in the last equation above can be obtained by manipulating the middle terms following a similar idea as in equation \eqref{rearrangment}. Rearranging terms in the equation above yields
\begin{equation*}
\begin{aligned}
l^{\nu}_t(f)
&= N^{\nu(1-\gamma)}\paren{\ip{f,\mu^N_0}-\ip{f,\mu_0}} \\
&\quad + \alpha \int_0^t \int_{\CX \times \CY} \paren{y-Q^0_s(x')}l^{\nu-1}_s(C_{x'}^{f}(c,w)) \pi(dx',dy)ds \\
&\quad -\alpha\sum_{j=1}^{\nu-1} \int_0^t \int_{\CX \times \CY} Q^{\nu-j}(x')l^{j-1}_s(C_{x'}^{f}(c,w)) \pi(dx',dy)ds +O(N^{-(1-\gamma)}).
\end{aligned}
\end{equation*}
Since $N^{\nu(1-\gamma)}\paren{\ip{f,\mu^N_0}-\ip{f,\mu_0}}$ converges to zero in distribution when $\gamma \ge \frac{2\nu+1}{2\nu+2}$, we can get the following evolution equation for $l^{\nu}_t(f)$,
\begin{align*}
l^{\nu}_t(f)
&= \alpha \int_0^t \int_{\CX \times \CY} \paren{y-Q^0_s(x')}l^{\nu-1}_s(C_{x'}^{f}(c,w)) \pi(dx',dy)ds \\
&\quad -\alpha\sum_{j=1}^{\nu-1} \int_0^t \int_{\CX \times \CY} Q^{\nu-j}_s(x')l^{j-1}_s(C_{x'}^{f}(c,w)) \pi(dx',dy)ds,
\end{align*}
which concludes the inductive step for $l^{\nu}_t(f)$.

Next, we derive $Q^{\nu}_t$ by plugging equations \eqref{network_expansion} and \eqref{measure_expansion} into the left hand side of equation \eqref{LLN:evolution_h} (using the symbol $\approx$ to ignore the same remainder terms as before):
\begin{equation*}
\begin{aligned}
h_t^{N}(x) - h^N_{0}(x) &\approx \alpha \int_0^t  \int_{\CX \times \CY} \paren{y-\sum_{j=0}^{\nu} \frac{1}{N^{j(1-\gamma)}}Q^j_s(x') - O(N^{-(k+1)(1-\gamma)})} \\
&\qquad \times \paren{ \sum_{j=0}^{\nu} \frac{1}{N^{j(1-\gamma)}} l^j_s(B_{x,x'}(c,w))+ O(N^{-(\nu+1)(1-\gamma)}) }\pi(dx',dy) ds\\
&=Q^0_t(x)+ \sum_{j=1}^{\nu-1} \frac{\alpha}{N^{j(1-\gamma)}}Q^j_t(x)\\
&\quad +\frac{\alpha}{N^{\nu(1-\gamma)}} \int^t_0 \int_{\CX \times \CY} \paren{y-Q^0_s(x')} l^{\nu}_s(B_{x,x'}(c,w))\pi(dx',dy) ds\\
&\quad -\frac{\alpha}{N^{\nu(1-\gamma)}}\sum_{j=1}^{\nu} \int^t_0 \int_{\CX \times \CY} Q^j_s(x')l^{\nu-j}_s (B_{x,x'}(c,w)) \pi(dx',dy) ds + O(N^{-\Omega_{\nu 1}}),\\
\end{aligned}
\end{equation*}
where $\Omega_{\nu 1}>(\nu+1)(1-\gamma )$.
Following the same idea as earlier, when $\gamma = \frac{2\nu+1}{2\nu+2}$, we note that $\nu(1-\gamma) = \gamma -\frac{1}{2}=\frac{\nu}{2\nu+2}$, we can obtain an expression for $Q^{\nu}_t$ (which coincides with (\ref{Eq:Qj_formula1})):
\begin{equation*}
\begin{aligned}
Q^{\nu}_t(x) &= \mathcal{G}(x) + \alpha \int^t_0 \int_{\CX \times \CY} \paren{y-Q^0_s(x')} l^{\nu}_s(B_{x,x'}(c,w))\pi(dx',dy) ds\\
&\quad -\alpha\sum_{j=1}^{\nu} \int^t_0 \int_{\CX \times \CY} Q^j_s(x')l^{\nu-j}_s (B_{x,x'}(c,w)) \pi(dx',dy) ds,
\end{aligned}
\end{equation*}
where $\mathcal{G}(x)$ is the Gaussian random variable as in \eqref{limit_gaussian}. When $\gamma >\frac{2\nu+1}{2\nu+2}$, $Q^{\nu}_t$ is driven by the deterministic equation
\begin{equation*}
\begin{aligned}
Q^{\nu}_t(x) &= \alpha \int^t_0 \int_{\CX \times \CY} \paren{y-Q^0_s(x')} l^{\nu}_s(B_{x,x'}(c,w))\pi(dx',dy) ds\\
&\quad -\alpha\sum_{j=1}^{\nu} \int^t_0 \int_{\CX \times \CY} Q^j_s(x')l^{\nu-j}_s (B_{x,x'}(c,w)) \pi(dx',dy) ds.
\end{aligned}
\end{equation*}

This concludes the inductive step for the derivation of $Q^{\nu}_t(x)$. Let us now show that, for fixed $\nu\in\mathbb{N}$, $Q^{\nu}_t(x)$ decay to zero exponentially fast as $t\rightarrow\infty$. For $\gamma \ge \frac{2\nu+1}{2\nu+2}$, we can write that $Q^{\nu}_t$  satisfies the system of equations
\begin{equation*}
\begin{aligned}
d Q^{\nu}_t &= -D^{\nu}_t\tilde{h}_t - \sum_{j=1}^{\nu} D^{\nu-j}_{t}Q^j_t, \text{ with }
Q^{\nu}_0 = \begin{cases} \mathcal{G}, &\text{if } \gamma = \frac{2\nu+1}{2\nu+2}, \\
0, &\text{if } \gamma > \frac{2\nu+1}{2\nu+2},
\end{cases}
\end{aligned}
\end{equation*}
where for $j = 0, \ldots ,\nu$, $D^j_t \in \R^{M\times M}$ whose elements are $\alpha l^j_t(B_{x,x'}(c,w))$ with $x,x' \in \mathcal{X}$. Note that $D^0_t = A$ is positive definite, and $D^1_t = B_t$, $D^2_t = C_t$ as defined in the proofs for Theorems \ref{R:BiasToZero1} and \ref{R:BiasToZero2}. By similar analysis as in Section \ref{sec::LLN}, one can show that solution for this system is
\begin{equation}\label{sol_Qk}
Q^{\nu}_t = e^{-tA}Q^{\nu}_0 - \int_0^t e^{-(t-s)A} \paren{D^{\nu}_s \tilde{h}_s + \sum_{j=1}^{\nu-1}D^{\nu-j}_s Q^j_s} ds.
\end{equation}

To show that $Q^{\nu}_t \to 0$ as $t \to \infty$, for $\gamma \ge \frac{2\nu+1}{2\nu+2}$, we first show that $l^{\nu}_t(f)$ is uniformly bounded. Since $l^j(f)$ is uniformly bounded for $j=1,\ldots, \nu-1$, by equation \eqref{Qj_bound}, we have
\begin{equation*}
\begin{aligned}
\abs{l^{\nu}_t(f)}
&\le C \int_0^t \int_{\CX \times \CY} \abs{y-Q^0_s(x')} \pi(dx',dy)ds  + C \sum_{j=1}^{\nu-1}\int_0^t \int_{\CX \times \CY} \abs{Q^{\nu-j}_s(x')} \pi(dx',dy)ds\\
& \le C \int_0^t \frac{1}{M} \sum_{x'\in \mathcal{X}} \abs{\tilde{h}_s(x')} ds + C \sum_{j=1}^{\nu-1} \int_0^t \frac{1}{M} \sum_{x'\in \mathcal{X}}\abs{Q^{\nu-j}_s(x')} ds\\
& \le C  \int_0^t e^{-\lambda_0 s}\norm{\tilde{h}_0} ds + C \sum_{j=1}^{\nu-1} \int_0^t \paren{\sum_{n=1}^{\nu-j}s^n} e^{-\lambda_0s}\norm{\tilde{h}_0} ds\\
&\le \frac{C}{\lambda_0} + C \sum_{j=1}^{\nu-1}\sum_{n=1}^{\nu-j} \int_0^t s^n e^{-\lambda_0s} ds.
\end{aligned}
\end{equation*}

Since for each $n = 1, \ldots, \nu-j$, as $t\to \infty$, we have
\[\lim_{t\to \infty} \int_0^t s^n e^{-\lambda_0s} ds = \int_0^{\infty} s^n  e^{-\lambda_0s} ds = \frac{\Gamma(n+1)}{\lambda_0^{n+1}} = \frac{n!}{\lambda_0^{n+1}} \le C_n, \]
where $\Gamma(\cdot)$ is the gamma function and $C_n$ is some finite number depending on $n$, this implies that $l^{\nu}_t(f)$ is uniformly bounded.

We shall proceed inductively. Since $Q^j_t \to 0$ as $t \to \infty$, $l^j_t(f)$ is uniformly bounded for $j=1,\ldots, \nu-1$, and $l^{\nu}_t(f)$ is uniformly bounded, by equations \eqref{Qj_bound} and \eqref{sol_Qk}, we have
\begin{equation*}
\begin{aligned}
\norm{Q^{\nu}_t} &\le  C e^{-\lambda_0 t}\norm{Q^{\nu}_0} + C \int_0^t e^{-\lambda_0(t-s)} \paren{\norm{D^{\nu}_s \tilde{h}_s} + \sum_{j=1}^{\nu-1}\norm{D^{\nu-j}_s Q^j_s}} ds\\
&\le C e^{-\lambda_0 t}\norm{Q^{\nu}_0} + C \int_0^t e^{-\lambda_0(t-s)} \norm{\tilde{h}_s} ds + C \sum_{j=1}^{\nu-1} \int_0^t e^{-\lambda_0(t-s)} \norm{ Q^j_s} ds\\
&\le  C e^{-\lambda_0 t}\norm{Q^{\nu}_0} +  C t e^{-\lambda_0 t}\norm{\tilde{h}_0} + C e^{-\lambda_0t} \norm{\tilde{h}_0} \sum_{j=1}^{\nu-1} \sum_{m=1}^j \int_0^t s^m ds\\
&\le C e^{-\lambda_0 t}\norm{Q^{\nu}_0} +  C t e^{-\lambda_0 t}\norm{\tilde{h}_0} + C  e^{-\lambda_0t} \norm{\tilde{h}_0} \frac{\nu-1}{2} \sum_{j=1}^{\nu-1} (\nu-j)  t^{j+1}\\
&\le  C e^{-\lambda_0 t}\norm{Q^{\nu}_0} +   C  \paren{\sum_{j=1}^{\nu}   t^{j}} e^{-\lambda_0 t} \norm{\tilde{h}_0}.
\end{aligned}
\end{equation*}

Therefore, $\norm{Q^{\nu}_t} \to 0$ as $t\to \infty$ exponentially fast, which concludes the inductive step and thus the derivation of the result. 
\end{itemize}

\bibliographystyle{abbrv}

\end{document}